\documentclass[10pt]{article}
\ifdefined\XeTeXgenerateactualtext
\fi

\usepackage[margin=1in]{geometry}
\usepackage{mathptmx}
\usepackage[T1]{fontenc}
\usepackage{microtype}
\usepackage{graphicx}
\usepackage{booktabs}
\usepackage{amsmath,amssymb,amsthm}
\usepackage[numbers,sort&compress]{natbib}
\usepackage[hidelinks]{hyperref}
\hypersetup{
  pdfauthor={Zhicheng Du and Lan Ma},
  pdftitle={Continuous Language Diffusion as a Decoder-Interface Problem},
  pdfsubject={Computation and Language; Machine Learning},
  pdfkeywords={diffusion language models, decoder basin, representation-decoder interface, continuous diffusion, embedded language flows, diagnostic protocol}
}
\usepackage{caption}
\usepackage{subcaption}
\usepackage[section]{placeins}
\usepackage{float}
\usepackage{enumitem}
\usepackage{array}

\title{\vspace{-1.5em}Continuous Language Diffusion as a Decoder-Interface Problem}
\author{%
  Zhicheng Du, Lan Ma\\[0.4em]
  {\normalsize Tsinghua Shenzhen International Graduate School, Tsinghua University}%
}
\date{June 16, 2026}

\newcommand{\secref}[1]{Section~\hyperref[#1]{\ref*{#1}}}
\newcommand{\figref}[1]{Figure~\hyperref[#1]{\ref*{#1}}}
\newcommand{\tabref}[1]{Table~\hyperref[#1]{\ref*{#1}}}
\newcommand{\thmref}[1]{Theorem~\hyperref[#1]{\ref*{#1}}}
\newcommand{\propref}[1]{Proposition~\hyperref[#1]{\ref*{#1}}}
\newcommand{\parhead}[1]{\par\noindent\textbf{#1.}\hspace{0.5em}}
\newcommand{\leadin}[1]{\textbf{#1}\space}
\newcommand{\axishead}[1]{{\bfseries #1}}
\newtheorem{proposition}{Proposition}
\newtheorem{theorem}{Theorem}

\begin{document}
\maketitle

\begin{abstract}
Gaussian-corrupted sentence embeddings have no direct linguistic interpretation, yet continuous diffusion language models can generate fluent text from them. We study this puzzle through Embedded Language Flows (ELF) and identify a decoder-basin mechanism: our evidence suggests that denoising becomes reliable when trajectories reach regions where the native decoder can read stable tokens. We introduce a diagnostic protocol for denoisability, semantic recoverability, order sensitivity, decoder compatibility, and trajectory reliability. It exposes failures hidden by scalar metrics: low mean-squared error can discard linguistic content, low perplexity can reflect low-entropy collapse, and clean latent reconstruction can coexist with a narrow decoder basin. A decoder-margin bound explains why token recovery depends on margin and local decoder sensitivity, not latent error alone. Auditing public ELF checkpoints reveals an interface phase diagram: early predictions are weakly readable, mid-trajectory disagreement marks a competition region, and late predictions enter a high-margin decoder basin. Once inside, token realization is surprisingly simple on generated ELF states: frozen T5 (Text-to-Text Transfer Transformer) token-embedding lookup recovers $93$--$96\%$ of native decoder decisions, and a single linear readout reaches $97.9\%$ agreement at 32k samples, leaving an $\approx1.1$--$1.2$ perplexity gap in a structured residual tail. Under conservative held-out gates, a margin rule exits roughly $17$--$28\%$ earlier in denoising steps under an explicit diagnostic monitor. Boundary checks on LangFlow, BitstreamDiffusion, and the Continuous Latent Diffusion Language Model (Cola-DLM) show that the same interface questions remain meaningful when the state object and decoder change. Continuous and latent diffusion language models should therefore be evaluated as representation-decoder systems.
\end{abstract}

\section{Introduction}
\label{sec:introduction}

Unlike images, where noisy observations can preserve visible spatial structure, noisy sentence embeddings have no direct linguistic interpretation. So why does continuous diffusion work for language at all? Tokens are discrete, word order is combinatorial, and a Gaussian perturbation of contextual embeddings does not correspond to a readable partial sentence. Yet recent continuous and latent diffusion language models (DLMs) report strong generation quality and attractive parallel sampling behavior~\citep{llada2025,fastdllm2025}. Embedded Language Flows (ELF)~\citep{elf2026} is a clean instance of this trend: it denoises in a frozen T5 (Text-to-Text Transfer Transformer)~\citep{t52020} embedding space and maps the final continuous state back to tokens through a nonlinear decoder implemented with shared weights.

This result raises the central question: if Gaussian noise in language embeddings is not linguistically meaningful, why does continuous denoising work? One explanation is the \emph{network hypothesis}: the denoising Transformer learns language generation in a new continuous domain. Another is the \emph{interface hypothesis}: the pretrained representation and its decoder already provide much of the recoverable structure, and the denoiser mainly learns how to traverse this interface over time. These viewpoints are not strictly mutually exclusive, but they emphasize different structural drivers of generation quality and therefore make different diagnostic predictions. If the network is doing most of the work, removing capacity should destroy linguistic recoverability. If the interface matters, simple denoisers should recover some structure, but the recovery should fail in diagnostic ways that reveal what the interface preserves, what it discards, and where its compatibility boundaries lie. We treat the two hypotheses as complementary stress tests: low-capacity denoising and decoder probes stress the interface hypothesis, while trajectory audits measure the transport performed by the denoising network. The resolution is an \emph{asymmetric cooperation}: after successful basin entry, the interface accounts for most final token decisions, but the network still performs the path-dependent transport that brings noisy continuous states into that readable region.

The interface hypothesis is tempting but incomplete. Smooth geometry alone is not enough: a covariance-matched Gaussian can be easy under mean-squared error (MSE) yet linguistically empty; token embeddings can decode while ignoring order; a latent variational autoencoder (VAE)~\citep{vae2014} can reconstruct clean inputs yet expose a narrow noisy-latent decoder basin; and low external perplexity (PPL) can come from low-entropy text that disagrees with the native interface. A diffusion-ready interface must therefore be denoisable, recoverable, order-sensitive, decoder-compatible, and accompanied by trajectory reliability signals.

This work is a controlled diagnostic study of existing checkpoints and interfaces. We remove model capacity with simple denoisers, audit public ELF trajectories, and use the Continuous Latent Diffusion Language Model (Cola-DLM) as a contrasting latent system whose clean reconstruction is strong but noisy decoder basin is narrow. ELF supplies the main mechanism evidence; LangFlow, BitstreamDiffusion, and Cola-DLM are boundary diagnostics rather than parallel benchmark claims. We therefore do not claim that ELF solves the self-bootstrapping problem for continuous language representations. Instead, we ask a more diagnostic question: given a strong pretrained interface, what property makes it usable for diffusion, and how can future learned interfaces be tested? The experiments converge on a ``propose, compare, enter basin'' view: the denoiser proposes continuous states, self-conditioning disagreement peaks in a middle region statistically consistent with candidate revision, and the audited successful trajectories enter a high-margin decoder basin before final tokenization. Here ``compare'' denotes an observed trajectory pattern, not a directly observed internal algorithm. The negative controls calibrate scope, but the contribution is positive: we provide a diagnostic protocol that reveals when common metrics validate the wrong object, identify decoder-basin navigation as a fixed-checkpoint mechanism across ELF samplers and model sizes, and turn basin variables into three minimal probes of timing, token alignment, and linear recoverability. Together, these results are consistent with an asymmetric division of labor: pretrained decoder geometry supplies a readable target region and determines most final token labels after entry, while the denoising network transports Gaussian states into that region.

\figref{fig:overview} outlines the argument. We make the ELF interface explicit (\secref{sec:preliminaries}), define a diagnostic protocol (\secref{sec:diagnostic}), apply it to representations, trajectories, probes, decoder calibration, and boundary systems (\secref{sec:experiments}), and discuss implications for future DLM design (\secref{sec:discussion}).

\begin{figure}[!htbp]
  \centering
  \includegraphics[width=\linewidth]{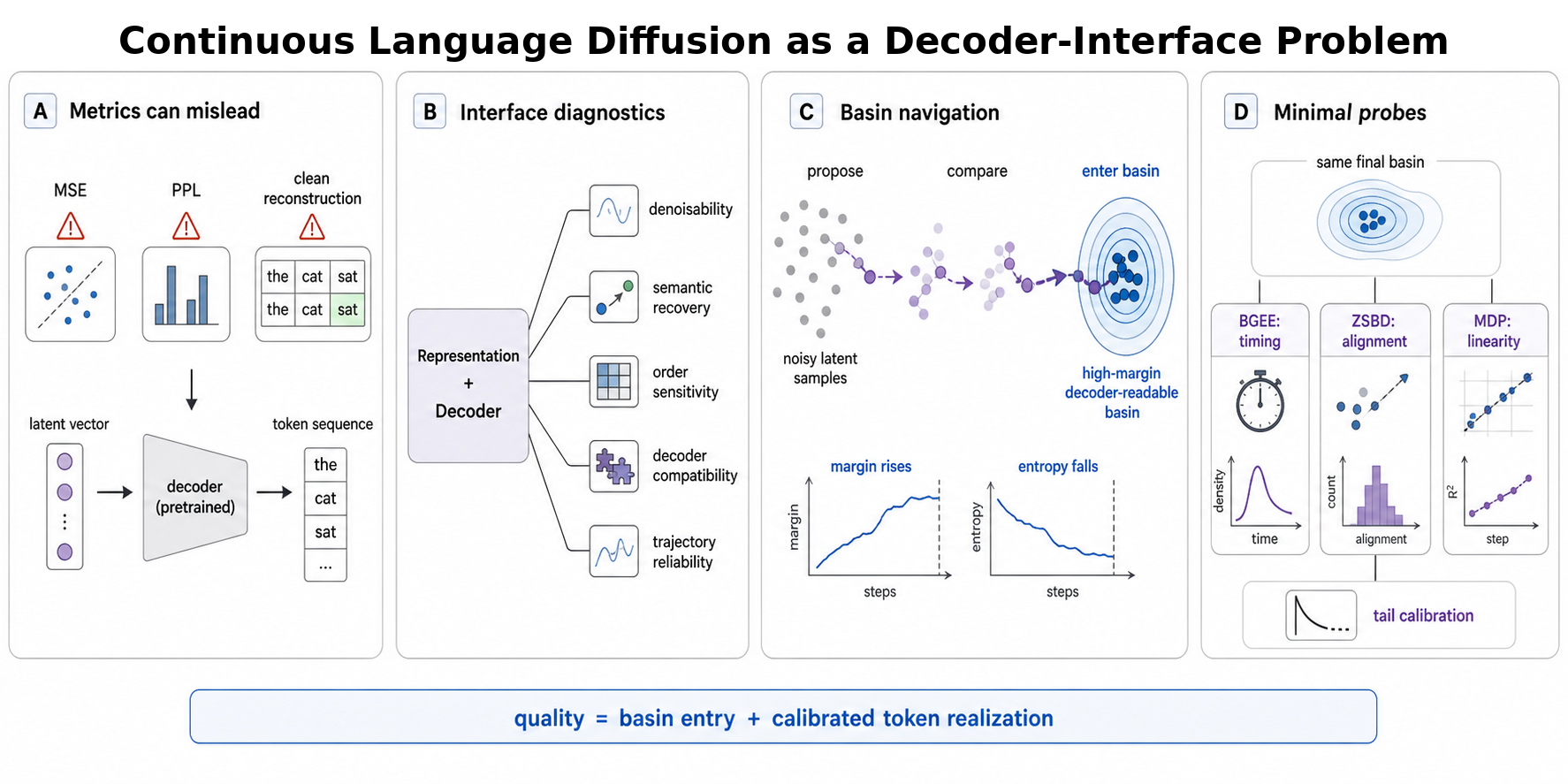}
  \caption{Continuous language diffusion as an interface problem. Unlike noisy images, noisy sentence embeddings do not retain an obvious linguistic interpretation. We therefore diagnose the representation--decoder interface along multiple axes, view denoising as a propose/compare/enter-basin process, and show why MSE, PPL, and clean reconstruction can each be misleading in isolation. The axes are diagnostic measurements, not a claim that any single axis causally determines generation quality. The minimal probes in the final panel do not map one-to-one onto the readiness axes: they jointly evaluate where several axes converge, e.g., BGEE combines decoder-compatibility margins with trajectory timing.}
  \label{fig:overview}
\end{figure}

\parhead{Contributions}
\begin{enumerate}[leftmargin=1.4em,itemsep=0.15em,topsep=0.15em]
\item We introduce a diagnostic protocol for continuous and latent DLM interfaces, separating denoisability, semantic recoverability, order sensitivity, decoder compatibility, and trajectory reliability. Its sensitivity is demonstrated through a principal component analysis (PCA) bottleneck failure: variance can be preserved while decoder margins and generation diversity collapse. A companion decomposition separates contextuality, lexical geometry, scale mismatch, and decoder pairing to explain ELF's embedding ablations operationally rather than as an encoder leaderboard.
\item We provide a theoretical account of why common metrics mislead: linear MSE denoising is governed by second-order statistics, while token recovery depends on decoder margins relative to local Jacobian sensitivity (\thmref{thm:decoder-margin}).
\item We show that ELF denoising trajectories exhibit \emph{basin navigation}---margin grows over denoising time, the delta-margin relation changes sign around basin entry, and the entry phase shifts earlier with same-interface denoiser scale under matched SDE settings. We summarize this as an empirical interface phase diagram with pre-entry, competition, and locked regions. The phenomenon persists across stochastic differential equation (SDE) and ordinary differential equation (ODE) samplers, while preliminary boundary checks expose related interface questions in LangFlow~\citep{langflow2026} and bitstream diffusion~\citep{bitstream2026}.
\item We convert the diagnostic into three minimal probes of the same basin. ``Minimal'' here means minimal new modeling machinery, not identical computational cost. Basin-Guided Early Exit (BGEE) exits roughly $17$--$28\%$ earlier in denoising steps under conservative held-out gates and an explicit monitoring caveat. Zero-Shot Basin Decoding (ZSBD) recovers $93$--$96\%$ of the native decoder's token decisions with no newly trained readout but with the frozen T5 token-embedding table. The Minimal Decoder Protocol (MDP) reaches $97.9\%$ agreement on generated final ELF states at 32k samples, suggesting that the native decoder's remaining role on this generated manifold is concentrated in tail calibration rather than bulk token realization.
\item We add mechanism stress tests around the probes: cross-decoder decoding within the ELF family, paired basin/anti-basin trajectory interventions, token-wise basin-entry timing, ZSBD geometry ablations, residual-tail targeting, and long-form coherence boundary checks. These tests sharpen the boundary of the central claim without recasting the probes as standalone methods.
\end{enumerate}

\section{Related Work}
\label{sec:related-work}

\parhead{Continuous diffusion language models}
Recent surveys on diffusion language models and parallel text generation position this field as part of a broader shift toward iterative and non-autoregressive generation~\citep{dlmsurvey2025,paralleltextsurvey2025}. Diffusion language models began largely in discrete or masked-token spaces~\citep{d3pm2021}, where the state can be interpreted as a partially corrupted token sequence. Early continuous and simplex-based language diffusion systems, including Diffusion-LM, DiffuSeq, SSD-LM, and latent language diffusion~\citep{diffusionlm2022,diffuseq2022,ssdlm2022,latentdiffusionlanguage2022}, showed that iterative denoising can be adapted to text generation and control, while later masked diffusion work such as MDLM and scaled masked DLMs~\citep{mdlm2024,scalingmdm2024} improved the discrete-token recipe. Continuous DLMs move the corruption process into an embedding or latent space instead. Because this area is moving quickly, several 2025--2026 references below are recent arXiv preprints; we cite them for positioning and convergent evidence, not as settled benchmark consensus. ELF~\citep{elf2026} performs flow matching in a frozen T5~\citep{t52020} contextual embedding space and discretizes only at the final step~\citep{flowmatching2022}. This design is intentionally close to the image diffusion recipe~\citep{ddpm2020}: stay continuous during denoising, then decode once. It also makes ELF an especially clean interface test: the representation space is supplied by a pretrained encoder rather than bootstrapped end to end, so the central question becomes why this fixed representation--decoder pair is diffusion-ready. ELF-S2T extends the same ELF backbone to speech recognition and translation by conditioning on a frozen Whisper encoder through a linear projector, providing early external evidence that the fixed-interface question is not limited to unconditional text generation~\citep{elfs2t2026}. Cola-DLM~\citep{cola2026} makes a different choice: it learns a compact Text VAE and trains a block-causal diffusion Transformer (DiT)~\citep{dit2023} prior over VAE latents. TextLDM~\citep{textldm2026} similarly imports latent diffusion ingredients into text, including alignment and latent modeling components. Bitstream diffusion~\citep{bitstream2026} explores another route by removing the $\mathcal{O}(V)$ vocabulary bottleneck and using entropy-gated continuous bitstreams. Tooling work such as dLLM~\citep{dllm2026} reflects the same trend toward reproducible DLM training and evaluation. These systems agree that non-autoregressive denoising can be useful for language, but they leave open a more basic question: what property of the state space makes it useful, and when does the decoder become the bottleneck?

Recent work also tries to remove the frozen-interface limitation rather than only analyze it. Latent Diffusion Language Model (LDLM)~\citep{ldlm2026} jointly trains a latent encoder, diffusion prior, and decoder, and reports that decoder-input noise, MSE decoder loss, warmup, and adaptive timestep sampling are all important for robust latent diffusion. DiHAL~\citep{dihal2026} asks where diffusion should enter a pretrained transformer, using geometry-based layer scores and hidden-state replacement to avoid direct token-level recovery. RePlaid~\citep{replaid2026} shows that likelihood-trained continuous diffusion can scale competitively with discrete diffusion when the architecture and training protocol are modernized, while DiLaDiff~\citep{diladiff2026} adds a distilled latent-augmented prior to accelerate masked diffusion language modeling. The Flow Map Language Models (FMLM) work~\citep{fmlm2026} studies flow-map distillation toward one-step language generation, while Categorical Flow Maps and their scaled variants~\citep{cfm2026,scalingcfm2026} show that Gaussian-to-categorical flow maps can reach few-step text generation at much larger scale. Consistent Diffusion Language Models (CDLM)~\citep{cdlm2026} adapt consistency ideas to discrete diffusion through multi-path training. Coevolutionary Continuous-Discrete Diffusion (CCDD)~\citep{ccdd2025} argues that continuous diffusion has strong expressive power but that practical trainability and token decoding remain obstacles. Together, these papers provide convergent evidence that the central difficulty is not simply ``continuous versus discrete'', but how the latent state, prior dynamics, and token realization are made compatible. \tabref{tab:related-interface} summarizes this literature through the interface lens used in this paper: each family exposes a different part of the state--decoder compatibility problem, and only ELF supplies the fixed-checkpoint trajectory needed for our time-resolved basin audit.

ELF already establishes the system-level Gen. PPL--entropy frontier against discrete and continuous DLM baselines such as MDLM, Duo, flow-map systems, and LangFlow~\citep{elf2026,mdlm2024,duo2025,fmlm2026,cfm2026,scalingcfm2026,langflow2026}. We therefore do not reproduce that frontier or re-benchmark MDLM, Duo, flow-map baselines, or LangFlow here. When LangFlow, BitstreamDiffusion, and Cola-DLM appear later, they are audited boundary systems, not ELF-frontier reproductions. The reason is diagnostic rather than logistical: these systems expose different native state objects and decoder or proxy interfaces. Masked or uniform-token diffusion crosses token boundaries directly, while flow-map, continuous-latent, bitstream, and VAE systems define different state variables and readouts. We therefore apply the same diagnostic axes where an interface or interface proxy can be measured, using external models only as boundary checks rather than benchmark competitors.

\begin{table}[!htbp]
\centering
\footnotesize
\caption{Related systems viewed through the representation--decoder interface. The table is not a benchmark comparison; it separates which part of the interface problem each line primarily exposes and how it is used in this diagnostic study.}
\label{tab:related-interface}
\begin{tabular}{>{\raggedright\arraybackslash}p{0.24\linewidth}>{\raggedright\arraybackslash}p{0.27\linewidth}>{\raggedright\arraybackslash}p{0.39\linewidth}}
\toprule
Work family & Interface question & Role in this paper \\
\midrule
Discrete or masked DLMs~\citep{d3pm2021,mdlm2024,scalingmdm2024,duo2025} &
Token or mask states avoid Gaussian hidden-state corruption, but logits still cross a token boundary. &
Background for trajectory reliability and PPL-only evaluation pitfalls. \\
Early continuous or latent DLMs~\citep{diffusionlm2022,diffuseq2022,ssdlm2022,latentdiffusionlanguage2022} &
Embeddings, simplex states, or latents can be denoised, but often need per-step token losses, rounding, or a separate decoder. &
Motivates separating denoisability from decoder compatibility. \\
ELF~\citep{elf2026} &
Frozen T5 contextual states plus a shared native decoder define a clean fixed interface. &
Main testbed for time-resolved decoder-basin entry, scale/sampler audits, and minimal probes. \\
Learned or co-evolved latent interfaces~\citep{cola2026,textldm2026,ldlm2026,replaid2026} &
Clean reconstruction or latent alignment may not imply robustness under the prior's noisy states. &
Boundary and convergent evidence for evaluating learned representations by basin width, not reconstruction alone. \\
Alternative native interfaces~\citep{langflow2026,bitstream2026} &
Latent-step margins and bit-level recovery use different native units from ELF's T5 decoder. &
External checks showing that the same diagnostic questions remain meaningful without claiming benchmark rank. \\
Decoder-bottleneck and representation-first views~\citep{codar2026,zhang2026latent,tokenopt2025} &
Final projection, decoder readability, or tokenizer geometry can dominate what downstream generation can recover. &
Closest conceptual context; this paper adds fixed-checkpoint, time-resolved basin diagnostics and BGEE/ZSBD/MDP probes. \\
\bottomrule
\end{tabular}
\end{table}

\parhead{Latent interfaces and decoder bottlenecks}
In image latent diffusion, the decoder maps continuous latents to pixels, and small pixel-level errors often remain visually tolerable~\citep{ldm2022}. Text is different. The decoder crosses a discrete boundary, so local changes in latent space can abruptly change token identity, syntactic role, or lexical frequency. CoDAR~\citep{codar2026} studies this issue from an architectural angle, identifying token rounding and final projection as bottlenecks and replacing the final projection with a contextual autoregressive (AR) decoder. Our work studies the same broad interface problem from a diagnostic angle. We do not introduce a stronger decoder; we ask how to measure whether a representation and native decoder are already compatible, how PPL can be misleading, and whether the denoising trajectory exposes reliability before decoding. Our decoder calibration and Cola-DLM boundary experiments are motivated by this difference. They ask whether a low reconstruction error, low PPL, or clean latent recovery actually indicates that the prior is handing states to the decoder in the right region.

A contemporaneous technical report by \citet{zhang2026latent} studies a related decoder-readability failure mode from another angle. Its setting is draft-conditioned latent refinement with a frozen BERT~\citep{bert2019} encoder on ROCStories~\citep{rocstories2016}: a draft latent is refined by a DraftPrior, FlowNet, and MetricNet, and the report observes that high latent cosine similarity need not yield reliable token distributions. It provides supporting context but covers a narrower slice of the problem studied here. It does not audit released ELF checkpoints, full denoising from Gaussian noise, native-margin basin entry over time, cross-scale/sampler phase diagrams, or the residual-tail structure after entry. Our contribution is diagnostic and time-resolved: we define decoder-basin entry through native margins, measure when ELF trajectories enter that basin, and test what becomes simple after entry through BGEE, ZSBD, and MDP.

Several recent results in vision suggest that generative modeling depends strongly on representation geometry. REPA aligns diffusion Transformer representations with pretrained encoders to make training easier~\citep{repa2025}. Dispersive regularization improves image generation by shaping hidden representations~\citep{dispersive2025}. Noise-conditioning ablations and masked-autoencoding-style simplification studies challenge assumptions that once seemed necessary~\citep{noisecond2025}. Drifting Models~\citep{drifting2026} go further by moving iterative distribution evolution from inference to training, naturally yielding one-step generation. These studies share a methodological pattern: simplify the system, identify which representation property matters, and then strip away machinery only when the evidence supports it. Our work follows that representation-first pattern for language diffusion, but with an additional decoder-margin issue created by discrete tokenization.

\parhead{Contextual representation geometry}
The geometry of contextual language representations is itself highly structured. Prior analyses show that contextual embeddings are anisotropic and occupy narrow regions of representation space rather than behaving like isotropic semantic vectors~\citep{ethayarajh2019contextual}. This background motivates, but does not settle, our controls. A covariance spectrum can make denoising easy under MSE, while whitening or covariance matching can preserve second-order structure without preserving linguistic recoverability. Our diagnostic therefore treats covariance and anisotropy as possible confounders to control, rather than as explanations by themselves.

Recent theory also gives a complementary reason to care about latent prediction itself. \citet{korchinski2026latents} show in a hierarchical grammar model that predicting learned latents can recover hidden compositional structure with far fewer samples than token-level prediction. Our experiments are not a sample-complexity proof, but they add a practical constraint to this latent-first view: a latent can be statistically useful and still fail if the denoising trajectory does not enter the native decoder's basin.

TokenOpt~\citep{tokenopt2025} provides a useful analogy from vision: a highly compressed 1D image tokenizer can support generation and editing through simple test-time token manipulations, without training a generative model. The lesson is not that language should copy the same pipeline, but that a good tokenizer or representation can make a crude downstream procedure powerful. Our BGEE, ZSBD, and MDP experiments follow the same diagnostic-to-minimal-probe philosophy in a different domain. They are three probes of one object, the decoder basin: BGEE measures when the trajectory reaches it, ZSBD asks whether final states align with token-embedding Voronoi cells, and MDP asks whether the native token decisions are locally linearly recoverable on the generated-state manifold. The resulting procedures are intentionally modest. Their role is not to define standalone decoding methods, but to expose what the representation-decoder interface has already made easy for a cheap downstream readout.

Representation alignment work shows that pretrained autoregressive models can be adapted to diffusion-style generation without full retraining~\citep{align2026}. This supports the broader idea that existing language representations contain useful structure for non-autoregressive generation. However, reusing a representation is not the same as validating an interface. A representation can be aligned with a source model while still being hard for a target decoder to recover from noisy states. Our diagnostic protocol is therefore complementary: it can be used before training to decide whether a proposed encoder, tokenizer, or VAE latent is a plausible state space for diffusion.

Classifier-free guidance~\citep{classifierfree2022}, entropy-aware reward guidance~\citep{entrgi2026}, and progress-aware policies such as Prophet, SchED, Fast-dLLM, Just on Time, DAWN, DMax, AHD, Dystruct, locally coherent parallel decoding, and DCDM~\citep{prophet2025,sched2025,fastdllm2025,jot2026,dawn2026,dmax2026,ahd2026,dystruct2026,lcpd2026,dcdm2026} all exploit the fact that diffusion trajectories contain information before the final output. Recent discrete or block-wise DLM accelerators make the same pressure explicit: Streaming-dLLM skips converged tokens, Roll Out and Roll Back uses verified denoising orders as efficiency teachers, and BlockBatch uses multi-scale consensus to reduce denoising NFEs~\citep{streamingdllm2026,rolloutrollback2026,blockbatch2026}. Related parallel-generation work such as FReDA and K-Forcing also highlights the value of reducing sequential dependence while preserving token calibration~\citep{freda2026,kforcing2026}. These works often optimize inference behavior directly through confidence thresholds, token freezing, dependency-aware unmasking, history-stable decoding, dynamic structure, semantic chunking, or cache-aware parallel decoding. Our trajectory-reliability experiments are narrower: we ask whether internal signals from the existing checkpoint predict final sample quality and whether a small time-region intervention validates the signal. This distinction matters because an optimized sampler can improve metrics without explaining why a representation-decoder interface works.

\parhead{The PPL illusion in diffusion language models}
Generated-text PPL (Gen. PPL) has known failure modes in diffusion language modeling. In discrete masked diffusion, \citet{zheng2025masked} show that masked diffusion training and sampling can be interpreted as time-agnostic masked modeling, and identify a numerical issue in Gumbel-based categorical sampling: finite-precision truncation effectively lowers sampling temperature, reducing diversity and making PPL-only comparisons unfair. Our setting is different, but the warning is the same. A continuous or latent DLM can also obtain low PPL by moving into a low-entropy decoder region, by repeating high-frequency text, or by changing calibration at the final token boundary. The common structure is that a single scalar can indicate fluency while hiding diversity, fidelity, or interface failure. This is why our evaluation pairs PPL with entropy, repetition, MAUVE score (for measuring the gap between neural and human text distributions)~\citep{mauve2023}, Jensen-Shannon (JS) divergence~\citep{lin1991js}, decoder agreement, and margin recovery.

\parhead{AR-integrated diffusion and self-speculation}
Nemotron-Labs-Diffusion~\citep{fu2026nemotronlabsdiffusion} unifies autoregressive, diffusion, and self-speculation decoding in a single model. Such systems may blur the line between AR and diffusion language modeling, but they do not remove the need for diagnostics. If diffusion becomes an operating mode inside a larger model, one still needs to know when the diffusion state is reliable, when a decoder or verifier is calibrated, and which objective frontier is being optimized. Our results suggest a useful future audit for tri-mode models: test whether their diffusion branch exposes similar trajectory-reliability signals and whether their speculative decoder has a broad or narrow decision basin.

\parhead{Mechanistic diagnostic studies}
Recent work on transformer internals, such as the finding that adjacent layers systematically correct each other's predictions~\citep{tlcm2025}, demonstrates the value of diagnostic studies: identify a surprising phenomenon, rule out trivial explanations, localize where and when it appears, and synthesize the evidence into a compact mechanism. Closest to our trajectory analysis, \citet{temporal2026} measures when token commitment, linguistic probes, confidence, entropy, and re-masking sensitivity emerge in LLaDA~\citep{llada2025} trajectories. Our study asks a complementary interface question in continuous ELF-style latents: when does the trajectory enter a decoder-readable basin, and what does the native decoder still contribute after entry? What the above work leaves open is not which sampler or architecture is best, but a simpler diagnostic question: does the denoising trajectory enter a decoder-readable basin, and how would we know? We next make the ELF interface explicit and turn that question into a measurable protocol.

\section{Preliminaries: The ELF Interface}
\label{sec:preliminaries}

ELF is a useful testbed because it separates the continuous state space from the final token decision. For a token sequence $y$, a frozen T5 encoder produces contextual clean states $x=E(y)$. Flow matching trains a denoiser to recover $x$ from Gaussian-corrupted states,
\[
z_t = t x + (1-t)\epsilon,\qquad \epsilon \sim \mathcal{N}(0,I).
\]
At inference time, the sampler starts from noise, repeatedly predicts clean states $\hat{x}_t$, and maps the final state to token logits through ELF's native decoder. In the released PyTorch checkpoints, this decoder is not a bare unembedding matrix; it is a learned nonlinear readout implemented with shared model weights.

We use three decoder-facing objects throughout the paper. The \emph{native decoder} is the trained ELF readout just described, or the corresponding trained decoder for another state space such as Cola-DLM. The \emph{token-embedding lookup} used by ZSBD is different: it is a nearest-neighbor rule in the frozen T5 token-embedding table with no newly trained readout and is used only as a geometry probe. This lookup is not prior-free: it uses the labeled T5 token table tied to ELF's own interface. The \emph{MDP readout} is a learned token-wise linear classifier trained after generation to imitate the native decoder's argmax labels on generated final states. Keeping these objects separate is important: BGEE monitors the native decoder margin, ZSBD tests token-embedding alignment, and MDP measures how linearly recoverable the native decision boundary is on the generated-state manifold.

This interface view matters for our diagnostics. A predicted clean state can be close to the target under MSE while still lying outside the decoder's decision basin. Conversely, a state can be easy to decode while preserving little diversity or order information. Continuous language generation is therefore a question of interface compatibility: the representation must carry linguistic structure, the trajectory must transport noisy states into the decoder basin, and the decoder must assign stable token decisions once the trajectory arrives. The next section decomposes this compatibility into five measurable axes that together define a diagnostic protocol for evaluating candidate state spaces.

\section{Diagnostic Protocol}
\label{sec:diagnostic}

\subsection{Five Axes of Readiness}
\label{sec:readiness-axes}

Given the interface above, we evaluate the representation-decoder interface along five axes. For a clean representation $x$ and ELF-style corruption
\[
z_t = t x + (1-t)\epsilon,\qquad \epsilon \sim \mathcal{N}(0,I),
\]
we use the following tests. These axes are not assumed to be independent. In practice they can trade off against one another: denoisability and decoder compatibility are connected through margins, and semantic recoverability often changes together with order sensitivity. The protocol identifies which axis becomes the bottleneck for a proposed state space rather than reducing the axes to a single score.

\axishead{Denoisability} asks whether $x$ can be recovered from $z_t$ under simple denoisers. We fit scalar, diagonal, low-rank, and full linear maps and report normalized MSE (NMSE) and cosine similarity. The denoisers are intentionally weak: if a finding requires a large Transformer, it cannot isolate the state-space geometry.

\axishead{Semantic recoverability} asks whether linguistic attributes remain accessible after corruption and recovery. We use token recovery, semantic cosine similarity, and lightweight attribute probes. These probes are not meant to solve language understanding; they test whether a denoised state still carries information that a simple classifier can read.

\axishead{Order sensitivity} asks whether contextual states distinguish real text from word-shuffled controls. This axis is important because a token embedding matrix can be easy to decode while ignoring contextual order. A continuous DLM that relies on contextual embeddings should fail when the same tokens are presented in a shuffled order.

\axishead{Decoder compatibility} asks whether the state remains safely decodable by its intended native decoder. We use \emph{basin} operationally, not as an undefined geometric metaphor. For decoder logits $g(h)\in\mathbb{R}^{V}$ and token $i$, define the token margin
\[
m_i(h)=g_i(h)-\max_{j\neq i}g_j(h),
\]
and the margin-$\tau$ decoder basin for token $i$ as the super-level set
\[
\mathcal{B}_{i,\tau}(g)=\left\{h\in\mathbb{R}^{d}\;:\;m_i(h)\geq\tau\right\}.
\]
For a decoded sequence $y=(y_1,\ldots,y_L)$, we summarize basin entry by lower-tail statistics of $\{m_{y_\ell}(h_\ell)\}_{\ell=1}^L$, especially the 10th-percentile margin. Unless explicitly qualified, \emph{decoder basin} in this paper refers to this operational native-decoder margin super-level set measured along generated trajectories. This definition does not assume the set is convex, bounded, or shared across architectures; it is a measurable proxy for local decoder stability. We measure token agreement to the native decoder, decoder entropy, margin under latent corruption, and calibration against small learned decoders. Nearest-neighbor token lookup is reported separately as a frozen-embedding alignment probe with no newly trained readout, not as the decoder-basin definition.

The later probes are not alternative definitions of the same mathematical set. BGEE measures stability timing by monitoring this margin super-level set. ZSBD measures token-embedding Voronoi proximity by asking whether states in or near the decoder basin align with labeled T5 token vectors. MDP measures linear recoverability by asking whether the native decoder's labels are locally linearly recoverable on the generated-state manifold. These probes are related because they are all decoder-facing, but they are not equivalent. We use \emph{basin navigation} as shorthand for trajectories entering a high-margin decoder-basin region, while reporting the other two probes as geometric shadows of that entry in the token-embedding Voronoi space and on the linear-readout manifold.

This margin-based basin is also high-dimensional and anisotropic. The sufficient radius in \thmref{thm:decoder-margin} is local and direction-agnostic only because it uses a worst-case Lipschitz constant. Directional reverse-basin navigation (RBN) later shows that random, sentiment, PCA, and decoder-gradient directions have very different effective sensitivities. Thus ``wide basin'' should never be read as a spherical low-dimensional picture; it is a decoder-margin super-level set intersected with the generated-state manifold.

\axishead{Trajectory reliability} asks whether the denoising trajectory exposes signals correlated with its final outcome. For ELF we record self-conditioning delta, agreement with zero-self-conditioned predictions, decoder entropy, and effective-rank proxies over time. These signals are correlated with final sample quality and then perturbed through shuffled, delayed, clipped, and reversed-time controls. We call them reliability signals rather than certificates: they are useful diagnostics under the reported metrics, not formal guarantees and not a substitute for task or human evaluation.

\parhead{Protocol}
The protocol follows a fixed order. First, remove capacity by fitting simple denoisers on candidate spaces and controls. Second, add decoder-facing tests, because a low-MSE state may still cross token boundaries. Third, audit the real model trajectory and test whether internal signals predict final quality. Fourth, only after the above signals are present, apply a minimal inference intervention. This order prevents a sampler result from being mistaken for a mechanism.

\parhead{Controls}
Each axis uses controls that preserve one property while breaking another:
\begin{itemize}[leftmargin=1.4em,itemsep=0.12em,topsep=0.12em]
\item \emph{Covariance-matched Gaussian} preserves first- and second-order statistics while removing linguistic content.
\item \emph{Whitened contextual embeddings} test whether recovery depends on anisotropy rather than semantic organization.
\item \emph{Token-shuffled contextual embeddings} preserve the multiset of words while breaking order.
\item \emph{Sequence-shuffled embeddings} preserve marginal embedding statistics while mismatching text and representation.
\item \emph{Random T5 controls} preserve architecture and dimensionality while removing pretrained geometry.
\end{itemize}
These controls are essential because many plausible explanations are too broad: a result that vanishes when the easy statistical property is preserved but the linguistic property is removed is a result that isolates the right mechanism.

\parhead{Readiness scores}
We avoid collapsing the axes into a single score in the main text, but internally each axis can be written as a normalized diagnostic. Denoisability is $1-\mathrm{NMSE}$ under a fixed denoiser family. Recoverability is a probe or nearest-neighbor recovery score relative to clean states. Order sensitivity is the gap between real and shuffled inputs. Decoder compatibility is the fraction of tokens whose native-decoder margin remains positive under corruption. Trajectory reliability is the absolute Spearman correlation between an internal trajectory statistic and final quality under permutation controls. Reporting the vector is more informative than reporting the mean, because the failure modes are qualitatively different and a collapsed score can hide which specific axis has broken.

\subsection{Why MSE Is Not Enough}
\label{sec:mse-not-enough}

Our linear diagnostic is simple on purpose: it separates geometry from model capacity. That same simplicity exposes why MSE can mislead.

\begin{proposition}[Linear MSE denoising is second-order]
\label{prop:linear-mse}
Let $x$ and $\epsilon$ be zero-mean independent random vectors with covariances $\Sigma_x$ and $I$. For $z_t=t x+(1-t)\epsilon$, the population-optimal linear MSE denoiser $A^\star z_t$ is
\[
A^\star = t \Sigma_x \left(t^2\Sigma_x + (1-t)^2 I \right)^{-1}.
\]
\end{proposition}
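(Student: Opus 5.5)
The approach is the standard population least-squares (Wiener filter) argument. The objective is
\[
\mathcal{L}(A)=\mathbb{E}\,\|x-Az_t\|^2=\operatorname{tr}\!\left(\Sigma_x\right)-2\operatorname{tr}\!\left(A\,\mathbb{E}[z_t x^\top]\right)+\operatorname{tr}\!\left(A\,\mathbb{E}[z_t z_t^\top]A^\top\right).
\]
It depends on $A$ only through the two moment matrices $\mathbb{E}[z_t x^\top]$ and $\mathbb{E}[z_t z_t^\top]$. That is already the ``second-order'' content of the proposition: any two source distributions with the same $\Sigma_x$ give the same $\mathcal{L}$ and hence the same $A^\star$. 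This covers the covariance-matched Gaussian control.

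The steps run as follows.

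First, compute the moments from zero mean and independence of $x$ and $\epsilon$. The cross terms $\mathbb{E}[x\epsilon^\top]$ vanish, which gives
\[
C_{xz}:=\mathbb{E}[x z_t^\top]=t\Sigma_x, \qquad \Sigma_z:=\mathbb{E}[z_t z_t^\top]=t^2\Sigma_x+(1-t)^2 I .
\]

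Second, use convexity. $\mathcal{L}$ is a convex quadratic in $A$ because $\Sigma_z\succeq 0$. Setting the matrix gradient $-2C_{xz}+2A\Sigma_z$ to zero gives the normal equations $A\Sigma_z=C_{xz}$.

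Third, solve the normal equations. For $t<1$ we have $\Sigma_z\succeq(1-t)^2I\succ 0$, so $\Sigma_z$ is invertible and the minimizer is unique:
\[
A^\star=C_{xz}\Sigma_z^{-1}=t\Sigma_x\bigl(t^2\Sigma_x+(1-t)^2I\bigr)^{-1}.
\]
An alternative route is the orthogonality principle, $\mathbb{E}[(x-A^\star z_t)z_t^\top]=0$, which yields the same formula without differentiating.

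The only real subtlety is the boundary $t=1$. There $\Sigma_z=\Sigma_x$ may be singular, so the stated inverse need not exist. I would restrict the statement to $t\in[0,1)$, or note that at $t=1$ any $A$ with $A\Sigma_x=\Sigma_x$ is optimal. Under that caveat, the identity $A^\star=I$ is optimal at $t=1$.

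As an optional remark, diagonalizing $\Sigma_x=U\Lambda U^\top$ shows that $A^\star$ acts as per-eigendirection shrinkage by $t\lambda_k/(t^2\lambda_k+(1-t)^2)$. This makes the dependence on the spectrum alone explicit.
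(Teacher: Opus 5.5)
Your proposal is correct and follows the paper's own argument: the paper also obtains $A^\star=\mathbb{E}[xz_t^\top]\,\mathbb{E}[z_tz_t^\top]^{-1}$ from the normal equations, then substitutes $\mathbb{E}[xz_t^\top]=t\Sigma_x$ and $\mathbb{E}[z_tz_t^\top]=t^2\Sigma_x+(1-t)^2I$. Your convexity and uniqueness argument, together with your caveat that the inverse can fail to exist at $t=1$ when $\Sigma_x$ is singular, are sound refinements that the paper leaves implicit.
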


\noindent The proof follows from the normal equations:
$A^\star=\mathbb{E}[xz_t^\top]\mathbb{E}[z_tz_t^\top]^{-1}$, with
$\mathbb{E}[xz_t^\top]=t\Sigma_x$ and
$\mathbb{E}[z_tz_t^\top]=t^2\Sigma_x+(1-t)^2 I$.
Thus a space can be easy to linearly denoise because of its covariance spectrum, even if the recovered state contains little linguistic information. This is the reason our diagnostic always pairs MSE with semantic, order, and decoder tests.

\subsection{Decoder Compatibility as a Basin Property}
\label{sec:decoder-basin-theory}

Token decoding is a margin problem. Let a decoder choose token $i$ when logit $g_i(h)$ exceeds all alternatives. A perturbation $\delta$ preserves the token only if
\[
g_i(h+\delta)-g_j(h+\delta) > 0\quad \forall j\ne i.
\]
For a locally linear decoder this margin is controlled by $(w_i-w_j)^\top \delta$; for a nonlinear decoder the same idea holds locally through the Jacobian. This gives a simple bound.

\begin{theorem}[Decoder margin bound]
\label{thm:decoder-margin}
Let $g:\mathbb{R}^{d}\rightarrow\mathbb{R}^{V}$ be a differentiable decoder logit map, and let $i=\arg\max_k g_k(h)$. Define the clean margin
\[
m(h)=\min_{j\neq i} \left(g_i(h)-g_j(h)\right).
\]
Assume that in a neighborhood of $h$, every pairwise margin function $g_i-g_j$ is $L(h)$-Lipschitz. Then any perturbation $\delta$ with $\|\delta\|_2 < m(h)/L(h)$ preserves the decoded token. If $\delta$ is random, then
\[
\Pr[\mathrm{decode}(h+\delta)\neq i]\leq \Pr\!\left[\|\delta\|_2 \geq \frac{m(h)}{L(h)}\right].
\]
For isotropic $\delta\sim \mathcal{N}(0,\sigma^2 I_d)$, this becomes
\[
\Pr[\mathrm{change}] \leq
\Pr\!\left[\chi_d^2 \geq \left(\frac{m(h)}{L(h)\sigma}\right)^2\right].
\]
\end{theorem}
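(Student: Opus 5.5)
The plan is a direct Lipschitz argument on each pairwise margin function, then a probability step obtained by event inclusion, and finally a distributional identity for isotropic Gaussians.

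\textbf{Deterministic step.} Fix $j\neq i$ and set $\phi_j=g_i-g_j$. By hypothesis $\phi_j$ is $L(h)$-Lipschitz on a neighborhood $U$ of $h$. For any $\delta$ with $h+\delta\in U$,
\[
\phi_j(h+\delta)\geq \phi_j(h)-L(h)\|\delta\|_2\geq m(h)-L(h)\|\delta\|_2 .
\]
This lower bound is strictly positive whenever $\|\delta\|_2<m(h)/L(h)$. Since it holds uniformly in $j$, the index $i$ remains the unique argmax, so the decoded token is unchanged. Differentiability is not actually used; Lipschitzness suffices. If one prefers the Jacobian phrasing, the same inequality follows from the mean value theorem with $L(h)\geq\sup_{U}\|\nabla\phi_j\|_2$.

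\textbf{Main obstacle: the neighborhood.} The Lipschitz assumption is only local, so the ball of radius $m(h)/L(h)$ must lie inside $U$. I will handle this by interpreting the hypothesis as holding on the closed ball $\bar B(h,m(h)/L(h))$, or equivalently by stating that $U$ contains this ball. This is the intended reading, since otherwise the radius claim cannot be certified. I will state it explicitly as part of the proof. The same reading also covers the edge cases: if $m(h)=0$ the claim is vacuous, and if $L(h)=0$ the margins are constant on $U$, so $U$ must be taken as all of $\mathbb{R}^d$ for the radius $+\infty$ to make sense.

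\textbf{Probabilistic step.} For random $\delta$, the deterministic step gives the event inclusion
\[
\{\mathrm{decode}(h+\delta)\neq i\}\subseteq\{\|\delta\|_2\geq m(h)/L(h)\}.
\]
Monotonicity of probability then yields the first bound. This requires only that these events be measurable, which holds because $\mathrm{decode}$ is a measurable function of $\delta$, given that $g$ is continuous.

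\textbf{Gaussian specialization.} If $\delta\sim\mathcal{N}(0,\sigma^2 I_d)$, write $\delta=\sigma\xi$ with $\xi\sim\mathcal{N}(0,I_d)$. Then $\|\delta\|_2^2/\sigma^2=\|\xi\|_2^2\sim\chi^2_d$. Squaring the nonnegative quantities inside the event and dividing by $\sigma^2$ gives
\[
\Pr\!\left[\|\delta\|_2\geq \frac{m(h)}{L(h)}\right]=\Pr\!\left[\chi^2_d\geq\left(\frac{m(h)}{L(h)\sigma}\right)^2\right],
\]
which is the stated bound. Optionally, I would remark that a Laurent--Massart tail bound turns this into an explicit exponential rate once $m(h)/(L(h)\sigma)>\sqrt{d}$. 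That rate shows the bound is informative only when the margin exceeds roughly $L(h)\sigma\sqrt{d}$, which motivates the paper's later emphasis on directional sensitivities.
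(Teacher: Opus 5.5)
Your proposal is correct and follows the paper's proof: a pairwise Lipschitz lower bound on each $g_i-g_j$, event inclusion (the complement of the deterministic sufficient condition) for the probability bound, and $\|\delta\|_2^2/\sigma^2\sim\chi^2_d$ for isotropic Gaussian noise. Your explicit requirement that the Lipschitz neighborhood contain the ball of radius $m(h)/L(h)$ fills in a detail the paper leaves implicit.
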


\begin{proof}
For any $j\ne i$, Lipschitz continuity gives
\[
(g_i-g_j)(h+\delta)\geq (g_i-g_j)(h)-L(h)\|\delta\|_2.
\]
If $\|\delta\|_2<m(h)/L(h)$, every pairwise margin remains positive, so token $i$ remains the argmax. The probability bound follows by taking the complement of this deterministic sufficient condition. It is an upper bound on a sufficient failure event, not a calibrated decoding-probability model. The Gaussian statement follows from $\|\delta\|_2^2/\sigma^2\sim \chi_d^2$.
\end{proof}

\thmref{thm:decoder-margin} explains why decoder compatibility is not reducible to MSE. Two states with the same latent error can have different recovery probabilities if their margins or local Jacobians differ. It also motivates a measurable diagnostic: as a final latent is corrupted away from the decoder-compatible basin, positive-margin fraction and token recovery should collapse together. We test exactly this in \secref{sec:decoder-calibration}. The bound is intentionally local and sufficient rather than a tight decoder theory: we do not estimate the full local Lipschitz spectrum of ELF's nonlinear decoder, and the isotropic Gaussian form ignores the anisotropy measured later. Its role is qualitative and diagnostic. It identifies the variables a decoder-basin analysis must measure---margin, local sensitivity, and perturbation scale---rather than providing a calibrated probability model for ELF's contextual decoder.

A small Euclidean error can therefore cross a decoder boundary, while a learned decoder can obtain low PPL by moving to a low-entropy high-frequency region. Decoder compatibility is a basin and calibration property, not simply reconstruction loss.

\parhead{From local basin to interface transition}
The margin bound is pointwise, but a diffusion trajectory is dynamic. For any decoder-facing signal $q(\tau)$ measured along normalized denoising phase $\tau\in[0,1]$, define the crossing phase
\[
\tau_q(a)=\inf\{\tau:q(\tau)\geq a\}.
\]
For an agreement-like signal, such as ZSBD agreement to the native decoder, we define a transition width
\[
W_q(a,b)=\tau_q(b)-\tau_q(a),\qquad a<b,
\]
and an empirical entry sharpness $S_q(a,b)=(b-a)/W_q(a,b)$ when the width is positive. For margin signals, the same definition applies with thresholds such as $m=2$ and $m=8$. These quantities do not assert a thermodynamic phase transition. They turn basin entry into measurable timing variables: when token alignment begins, how long the competition region lasts, and how abruptly the trajectory becomes decoder-stable.

\parhead{A rank-expansion model of pre-entry}
The pre-entry phase needs a different observable because the native decoder margin is still near zero. Let $\widehat{X}_\tau\in\mathbb{R}^{L\times d}$ be the predicted clean sequence at phase $\tau$, and let
\[
R_e(\widehat{X}_\tau)=\exp\!\left(-\sum_k p_k(\tau)\log p_k(\tau)\right),\qquad
p_k(\tau)=\frac{s_k(\widehat{X}_\tau)}{\sum_j s_j(\widehat{X}_\tau)}
\]
be the entropy effective rank used in the pre-entry audit. A simple local model explains what this rank can and cannot mean. If the denoiser around high noise is locally approximated by
\[
\widehat{X}_\tau \approx B_\tau + \mathcal{J}_\tau Z_\tau,
\]
where $\mathcal{J}_\tau$ is the Jacobian of the predicted-clean map with respect to the noisy state, then the covariance of the variable part of $\widehat{X}_\tau$ is controlled by $\mathcal{J}_\tau \mathcal{J}_\tau^\top$. Thus a low effective rank at $\tau\approx0$ is evidence for a low-rank denoiser-prior proposal, not by itself evidence that the target token manifold is low-dimensional. More formally, in the linearized model with isotropic input noise, the output covariance rank is at most $\mathrm{rank}(\mathcal{J}_\tau)$, and its entropy rank is bounded by the entropy rank of the singular spectrum of $\mathcal{J}_\tau$. As denoising proceeds, newly activated singular directions of $\mathcal{J}_\tau$ or of the self-conditioned update can increase $R_e$ before the decoder margin becomes positive.

This suggests a subspace-accumulation law for the measured early linear growth. For discrete sampling steps $s$, let $U_s$ be the row/feature subspace already spanned by $\widehat{X}_s$ and let $P_s^\perp$ project to its orthogonal complement. If the next predicted-clean increment is $\Delta_s=\widehat{X}_{s+1}-\widehat{X}_s$, then
\[
R_e(\widehat{X}_{s+1})-R_e(\widehat{X}_s)\approx
\mathbb{E}\big[\mathrm{rank}_{\mathrm{eff}}(P_s^\perp \Delta_s)\big].
\]
When the expected newly activated component is roughly constant over early steps, $R_e$ grows approximately linearly. This is a phenomenological law, not a first-principles derivation of the slope: it identifies the missing object as the spectrum of denoiser increments, or equivalently the high-noise Jacobian schedule $\mathcal{J}_\tau$.

This gives a falsifiable bridge between pre-entry and margin entry. Rank expansion is necessary only if the new directions overlap decoder-sensitive token-separating directions. Let $P_D$ denote a local projector onto directions that substantially change decoder margins, for example the leading singular subspace of pairwise logit Jacobians or a token-embedding alignment subspace. Then a useful transition variable is not $R_e$ alone but an aligned rank or energy ratio, such as $\|P_D\widehat{X}_\tau\|_F^2/\|\widehat{X}_\tau\|_F^2$. The chaotic phase is characterized by low rank and low aligned energy; the competition phase begins when aligned energy and self-conditioning (SC) disagreement become large enough for candidate token decisions to compete; the locked phase begins when margin lower tails cross positive thresholds. Equivalently, this is a two-threshold hysteresis model: rank must exceed a subspace threshold, and the newly opened directions must also align with decoder-sensitive boundaries before lower-tail margins rise. The present paper estimates the first and third quantities directly and validates SC causality in the middle phase. It does not yet derive $\mathcal{J}_\tau$ or $P_D$ from the trained weights, so the model is an operational phase theory rather than a complete random-matrix derivation. A sharper fixed-checkpoint test would estimate $P_D$ directly with randomized probes of pairwise decoder-margin Jacobians on trajectory states, and would compare the high-noise denoiser-Jacobian spectrum with the observed step-0 entropy rank. We leave these weight-level probes to future work because the present evidence already constrains the observable causal chain but does not require a full spectral theory.

The same model predicts transition width and tail persistence. Near a token's basin entry, write its relevant margin as
\[
m_i(\tau)=a_i(\tau-\tau_i)-b_i+\epsilon_i(\tau),
\]
where $a_i$ is the local margin-growth rate, $b_i$ is a token- or position-specific offset, and $\epsilon_i$ is residual stochasticity from the sampler and context. Then the agreement curve is a threshold probability, $A_i(\tau)=\Pr[m_i(\tau)>0]$. If $\epsilon_i$ has scale $s_i$, the width of the transition from agreement level $q_1$ to $q_2$ is proportional to $s_i/a_i$; for a logistic residual, $W_i(q_1,q_2)=s_i\{\mathrm{logit}(q_2)-\mathrm{logit}(q_1)\}/a_i$. Larger or better trained denoisers can shift entry earlier by lowering $\tau_i$, and they may sharpen entry when $a_i$ increases relative to residual scale. The latter is an empirical question, not a built-in prediction: the coarse transition-grid supplement later shows earlier entry without a monotone transition-width law. Persistent tail tokens arise from a mixture component with large $b_i$, small $a_i$, or high $s_i$; sample-level early exit is therefore governed by an order statistic over token entry times rather than by the median token. This connects four empirical facts in the paper: rank expands before margins are positive, SC is causally useful in the competition window, high-margin crossings move earlier with same-interface denoiser scale, and rare/numeric/subword tokens dominate the late residual tail.

\begin{proposition}[Corrupted-latent cross-entropy applies local margin pressure]
\label{prop:ce-margin-pressure}
Consider a decoder with logits $g(h)$ trained by cross-entropy (CE) loss on a latent $h$ and correct token $i$. The gradient in logit space increases $g_i(h)$ and decreases competing logits in proportion to their softmax probabilities. If training includes corrupted latents $h+\delta$ sampled from a neighborhood of the clean state, then the same update is applied throughout that neighborhood. Consequently, decoder-side corruption training applies margin-increasing pressure not only at $h$, but on the latent region through which the prior or denoiser is expected to pass.
\end{proposition}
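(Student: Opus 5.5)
The plan is to compute the cross-entropy gradient in logit space, translate it into a statement about pairwise margins, and then average over the corruption distribution to get a neighborhood statement. Write $p=\mathrm{softmax}(g(h))$ and $\ell(h)=-\log p_i(h)$.

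\textbf{Step 1: logit-space gradient.} The standard identity gives
\[
\partial\ell/\partial g_k=p_k-\mathbf{1}[k=i].
\]
A gradient-descent step of size $\eta$ applied directly to the logits therefore changes $g_i$ by $+\eta(1-p_i)\ge 0$ and each competitor $g_j$ by $-\eta p_j\le 0$. This is exactly the claimed proportional pressure.

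\textbf{Step 2: effect on pairwise margins.} For every pairwise margin $g_i-g_j$, the logit-space change is
\[
\eta\,(1-p_i+p_j)>0 .
\]
Hence every pairwise margin increases, and so does the minimum $m(h)$ of Theorem~\ref{thm:decoder-margin}, whenever $p_i<1$.

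\textbf{Step 3: lifting to parameter space.} Real training updates the decoder parameters $\theta$, not the logits, so I would state the logit-space claim first. I would then note that the first-order change in the logits is
\[
-\eta\,J_\theta J_\theta^\top (p-e_i),
\]
where $J_\theta$ is the Jacobian of the logits with respect to $\theta$. The inner product of this change with the logit-space descent direction is $\eta\|J_\theta^\top(p-e_i)\|^2\ge 0$. So the update is an ascent direction for $\log p_i$, which I would phrase as margin-increasing \emph{pressure} rather than a guaranteed increase of each pairwise margin.

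\textbf{Step 4: the corrupted objective.} With the loss $\mathbb{E}_\delta[\ell(h+\delta)]$, linearity of expectation writes the gradient as an average of the Step~1 gradients evaluated at each $h+\delta$ in the support of the corruption distribution. At every such point, $\log p_i$ and the pairwise margins receive the same first-order push. Combined with the Lipschitz argument of Theorem~\ref{thm:decoder-margin}, larger margins across the whole neighborhood enlarge the super-level set $\mathcal{B}_{i,\tau}$ over the region the denoiser traverses.

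\textbf{Main obstacle.} Step~3 is where the argument can fail. In parameter space, shared weights couple the different points $h+\delta$. The averaged update therefore guarantees only that the \emph{expected} loss decreases to first order; the margin at any individual point may still shrink. I would handle this by making the proposition's claim explicitly first-order and in expectation. To make it rigorous, I would first try the case of a final linear readout, $g=Wh'$, for which the margin gradient can be computed exactly.
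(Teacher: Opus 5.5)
Your proposal is correct and takes essentially the same route as the paper. The paper gives no separate proof: it justifies the proposition only through the inline observation that the CE logit gradient $p-e_i$ raises $g_i$, lowers each competing $g_j$ in proportion to $p_j$, and is applied at every corrupted latent $h+\delta$, which is exactly your Steps 1, 2 and 4. Your Step 3 and main-obstacle paragraph go slightly further than the paper by making explicit that, once shared decoder weights couple different points, the guarantee holds only to first order and in expectation, which is consistent with the paper phrasing its conclusion as margin-increasing \emph{pressure}.
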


\begin{proposition}[Dispersion increases a prototype decoder basin]
\label{prop:dispersion-basin}
Consider a nearest-prototype decoder with token prototypes $\{\mu_k\}$ and logits $g_k(h)=-\|h-\mu_k\|_2^2$. If $h$ is assigned to token $i$, then the margin against $j$ is
\[
g_i(h)-g_j(h)=\|\mu_j-\mu_i\|_2^2-2(h-\mu_i)^\top(\mu_j-\mu_i).
\]
For latents $h$ in a fixed radius around $\mu_i$, and when prototype separation is larger than that local radius, increasing the pairwise separation $\|\mu_j-\mu_i\|_2$ increases a lower bound on the margin and therefore enlarges the sufficient recovery radius in \thmref{thm:decoder-margin}.
\end{proposition}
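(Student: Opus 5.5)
The statement immediately above is \propref{prop:dispersion-basin}. I will prove it by direct expansion and then feed the resulting margin lower bound into \thmref{thm:decoder-margin}.

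\emph{Margin identity.} First I would verify the margin identity. Write $h-\mu_j=(h-\mu_i)-(\mu_j-\mu_i)$ and expand the square:
\[
\|h-\mu_j\|_2^2=\|h-\mu_i\|_2^2-2(h-\mu_i)^\top(\mu_j-\mu_i)+\|\mu_j-\mu_i\|_2^2.
\]
Subtracting this from $\|h-\mu_i\|_2^2$ gives $g_i(h)-g_j(h)$ in exactly the stated form, so the first display is a one-line computation.

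\emph{Lower bound in the local ball.} Fix a radius $r$ and restrict to $\|h-\mu_i\|_2\le r$. Write $D_{ij}=\|\mu_j-\mu_i\|_2$. By Cauchy--Schwarz,
\[
g_i(h)-g_j(h)\ge D_{ij}^2-2rD_{ij}=D_{ij}(D_{ij}-2r).
\]
This lower bound is strictly increasing in $D_{ij}$ on $D_{ij}>r$, since its derivative is $2D_{ij}-2r>0$. It is also positive once $D_{ij}>2r$. I would state the proposition's hypothesis "prototype separation larger than the local radius" in this precise form. The working condition is $\min_j D_{ij}>2r$, with monotonicity already holding on $D_{ij}>r$. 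Taking the minimum over $j\ne i$, the bound on $m(h)$ is $\min_j D_{ij}(D_{ij}-2r)$. This bound increases when any pairwise separation increases, and strictly increases when the minimizing separation does.

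\emph{Connecting to the recovery radius.} The pairwise margin $g_i-g_j$ is affine in $h$, with gradient $2(\mu_j-\mu_i)$. It is therefore globally Lipschitz with constant $L_{ij}=2D_{ij}$. The sufficient radius from \thmref{thm:decoder-margin}, applied pair by pair, is
\[
\rho(h)=\min_{j\ne i}\frac{g_i(h)-g_j(h)}{2D_{ij}}\ \ge\ \min_{j\ne i}\frac{D_{ij}-2r}{2}.
\]
Using the pairwise Lipschitz constant, rather than a single uniform $L(h)$, is legitimate because the proof of \thmref{thm:decoder-margin} only needs each pairwise margin to stay positive. This lower bound grows linearly in the separation.

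\emph{Main obstacle.} The subtle step is that $L$ also grows with separation. A naive ratio of "bigger margin over bigger Lipschitz constant" could in principle fail to improve. The pairwise normalization above resolves this, since $m/L$ reduces to $(D_{ij}-2r)/2$. If instead one insists on a single $L=\max_j 2D_{ij}$, I would add an explicit assumption: the separations are increased uniformly, for example by scaling prototypes about $\mu_i$. Under that assumption the ratio $\min_j D_{ij}(D_{ij}-2r)/(2\max_j D_{ij})$ is still increasing.
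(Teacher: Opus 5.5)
Your proof is correct, and its first two steps match the argument the statement implicitly relies on (the paper gives no separate proof). You expand the square, bound the cross term by Cauchy--Schwarz, and note that $D_{ij}(D_{ij}-2r)$ has derivative $2(D_{ij}-r)$. So the hypothesis ``separation larger than the local radius'' is exactly the monotonicity condition, while positivity of the bound needs $D_{ij}>2r$, as you say.

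Your final step goes beyond the paper, and it is a needed repair rather than pedantry. The paper simply asserts that a larger margin bound enlarges the radius $m(h)/L(h)$ of \thmref{thm:decoder-margin}. For this decoder, however, the smallest admissible uniform constant is $L(h)=2\max_{j\neq i}D_{ij}$, which also grows with separation. For example, take $h=\mu_i$ with $D_{i1}=3$ and $D_{i2}=10$, and raise $D_{i2}$ to $20$. This leaves $m(h)=9$ unchanged but doubles $L(h)$, halving the certified radius from $0.45$ to $0.225$. So the paper's ``therefore'' needs either your pairwise form of the theorem or an assumption like your uniform scaling. The pairwise form is valid, since the theorem's proof uses each pairwise inequality separately.

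The pairwise form is moreover exact here. Since each $g_i-g_j$ is affine, $(g_i-g_j)(h)/(2D_{ij})$ is the Euclidean distance from $h$ to the bisecting hyperplane. Hence $\rho(h)$ is the distance from $h$ to the boundary of the Voronoi cell of $\mu_i$, and your bound $\min_{j\neq i}(D_{ij}/2-r)$ is attained on the ball. One harmless slip: $\nabla_h(g_i-g_j)=-2(\mu_j-\mu_i)$, not $2(\mu_j-\mu_i)$, which does not change $L_{ij}=2D_{ij}$.
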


This simplified prototype is not intended as a literal model of ELF's nonlinear decoder. It serves as a geometric bridge: dispersing token-conditioned representations can widen local decision basins when the representation remains near the correct prototype, while collapsed representations make small latent errors more likely to cross token boundaries. The connection to the margin bound suggests a training principle: decoder branches should apply margin-increasing pressure on plausible noisy latents, not only reconstruct clean ones. ELF's mixed denoising/decoding training and LDLM's decoder-input noise~\citep{ldlm2026} can both be read in this light. The open question is how to widen the basin without encouraging low-entropy collapse.

\parhead{SDE noise as basin exploration}
\thmref{thm:decoder-margin} also gives a geometric interpretation of SDE gamma. Larger stochasticity increases the effective perturbation scale $\sigma$, which can lower token recovery probability if the trajectory is already inside a high-margin basin. But extra noise can also help escape a wrong low-margin basin before final decoding. This creates an exploration--exploitation trade-off: gamma is not merely a numerical sampler knob; it changes how often the trajectory crosses decoder boundaries. This is why we pair gamma and guidance sweeps with entropy, repetition, and MAUVE rather than relying on PPL as the sole quality indicator.

\subsection{Trajectory Reliability Signals}
\label{sec:trajectory-reliability-protocol}

We audit the released ELF checkpoints through the official PyTorch implementation, without modifying checkpoint weights or source behavior. During sampling we record self-conditioning (SC) delta, agreement with zero-self-conditioned predictions, decoder entropy, layer-similarity proxies, and effective-rank proxies. For each generated sample we then compute GPT-2-Large~\citep{gpt22019} PPL and correlate internal trajectory summaries with final quality. These signals are not used to train a new model; they test whether the model exposes when its own trajectory is reliable.

\parhead{Signal controls}
Following the logic of mechanistic diagnostic studies, we do not stop at a positive correlation. We perturb the proposed signal in three ways. A \emph{delayed} signal uses the same values but shifts them to the wrong time region. A \emph{clipped} signal removes its dynamic range. A \emph{random} signal preserves the marginal distribution but breaks the sample-trajectory association. We also compare front-loaded and back-loaded schedules to test whether the useful region is temporally localized.

\parhead{Why not learn a policy directly}
It is natural to ask whether trajectory signals could be fed into a learned policy that adjusts guidance strength or step allocation. We avoid this path on purpose. A learned policy would conflate three separate questions: whether the signal carries information, whether the policy class is adequate, and whether the reward is well-specified. Our approach separates them. We first verify that the signal predicts quality, then test localization with a hand-designed schedule. Learning an optimal policy is a natural next step, but it belongs after the diagnostic evidence has been established.

\subsection{Basin-Navigation Audit}
\label{sec:basin-navigation-audit}

Trajectory-reliability correlations say that a trajectory exposes useful signals, but not what the signal means. We therefore add a decoder-facing trajectory audit. At every SDE step, we decode the predicted clean latent $\hat{x}_t$ through the native ELF decoder and record top-1-vs-top-2 margins, decoder entropy, and the token-level self-conditioning disagreement
\[
\Delta_t(i)=\frac{\|\hat{x}^{\mathrm{SC}}_{t,i}-\hat{x}^{0}_{t,i}\|_2}{\|\hat{x}^{\mathrm{SC}}_{t,i}\|_2+\epsilon}.
\]
At selected steps we also compare the intermediate argmax tokens with the final decoded sequence and compute the margin of those final tokens under the intermediate decoder logits. This distinguishes three questions: whether an intermediate state is readable by the decoder, whether it agrees with the final sequence, and whether it lies inside the final-token margin basin. The audit is still training-free; it is a mechanism probe for the existing checkpoint.

\section{Experiments}
\label{sec:experiments}

\parhead{Experiment roadmap}
The experiments are organized as a causal reading path rather than as independent ablations. First, representation diagnostics show why denoisability, semantics, order, and decoder compatibility must be measured together (\secref{sec:denoise-recover-decouple}--\secref{sec:order-sensitivity}; \figref{fig:mse}--\figref{fig:order-curve}). The appendix then condenses these representation results into a readiness decomposition and a negative-control summary (\tabref{tab:encoder-readiness-decomposition}--\tabref{tab:negative-control-summary}). Second, trajectory audits reveal an empirical interface phase diagram with pre-entry, competition, and locked regions; a pre-entry tensor-capture audit (\figref{fig:preentry-bootstrapping}) separately confirms that the earliest steps exhibit directed bootstrapping rather than random drift, even though they lack the high-margin basin geometry of later phases. A matched early zero-self-conditioning audit then shows that this initial rank expansion is not caused by the first few self-conditioning updates alone, and a rank-origin control rules out a low intrinsic rank of the frozen T5 token table as the trivial source. Layer-wise basin tests further show that decoder compatibility depends on the exact normalization and interface layer, not on contextual representations in general (\secref{sec:trajectory-reliability-results}--\secref{sec:layer-wise-basins}; \figref{fig:trajectory-reliability}--\figref{fig:layer-basin}). Third, an operational phase-transition model extracts four order parameters from these trajectories and defines crossing phases and transition widths to replace the regional metaphor with measurable observables (\secref{sec:interface-phase-diagram}; \figref{fig:interface-phase-diagram}). Fourth, a small schedule intervention, three minimal basin probes, and fixed-checkpoint stress tests ask where the measured basin variables are actionable and where their explanatory boundary lies, including long-form coherence (\secref{sec:minimal-schedule}--\secref{sec:three-minimal-probes}; \figref{fig:frontier}--\figref{fig:rtgr}; \tabref{tab:basin-stress-tests}). Finally, multi-metric calibration, Cola-DLM boundary tests, and short training ablations rule out PPL-only, clean-reconstruction-only, and clean-decoder-only explanations (\secref{sec:decoder-calibration}--\secref{sec:ruled-out-explanations}; \figref{fig:boundary}--\figref{fig:margin-scatter}, with training curves in \figref{fig:decoder-noise-training}--\figref{fig:margin-10k-basin}). \tabref{tab:scale-summary} summarizes the evidence level and sample scale for each group.
We use this roadmap as a claim-first reading guide: each figure is introduced where its conclusion is used, and appendix figures are cited only when they support a main-text claim.

\parhead{Models and data}
We evaluate the public ELF-B, ELF-M, and ELF-L checkpoints on OpenWebText (OWT)~\citep{openwebtext2019} generation. ELF-B and ELF-M provide the main diagnostic coverage; ELF-L is used for scale confirmations with smaller batches. We use T5~\citep{t52020} contextual embeddings, T5 token embeddings, random T5 controls, BERT~\citep{bert2019}, RoBERTa~\citep{roberta2019}, GPT-2~\citep{gpt22019}, Gaussian/covariance controls, and Cola-DLM VAE latents for representation diagnostics. For the pretraining-readiness audit, we additionally test local Google/T5-family checkpoints including T5-base/large, mT5~\citep{mt52021}, ByT5~\citep{byt52022}, Switch-base-8~\citep{switch2022}, and T5Gemma variants~\citep{t5gemma2025,t5gemma2_2025}. These extended controls are pair-compatibility checks against the ELF interface, not a benchmark ranking of encoder quality. Yelp Polarity and AG News~\citep{zhang2015charcnn} provide lightweight semantic probes. GPT-2-Large~\citep{gpt22019} is used for ELF-internal Gen. PPL comparability and related DLM reporting conventions, not because it is a complete evaluator of 2026-level coherence or reasoning. OWT reference samples are used for JS divergence and MAUVE~\citep{mauve2023}.

\parhead{Why these controls}
The controls are chosen to isolate different claims. Gaussian and covariance-matched Gaussian controls test whether MSE recovery is explained by second-order statistics alone. Token-shuffled and sequence-shuffled contextual embeddings test whether contextual order matters beyond token identity. T5 token embeddings test a decoder-friendly but non-contextual interface. BERT, RoBERTa, and GPT-2 test whether the phenomenon is specific to T5-style span-corruption pretraining or appears across encoder and decoder families. The additional T5-family controls separate denoiser scale from interface scale: changing the denoiser while keeping the T5-small interface fixed is not the same intervention as replacing the encoder/decoder pair with a different dimensionality, vocabulary, or normalization. Cola-DLM tests a learned VAE latent interface rather than a frozen pretrained embedding interface. \tabref{tab:encoder-readiness-decomposition} and \tabref{tab:negative-control-summary} collect the resulting decomposition after the main experiments, separating contextuality, lexical geometry, scale mismatch, and decoder compatibility from covariance and shuffle confounds.

\parhead{Metrics}
We report Gen. PPL because it is the standard ELF metric, but we never treat it as a complete objective. We pair it with unigram entropy, distinct/repetition statistics~\citep{distinct2016}, JS divergence to reference token distributions, MAUVE~\citep{mauve2023}, token agreement, decoder logit entropy, and margin recovery under latent corruption. For the long-form boundary audit, we additionally compute adjacent- and first-to-last sentence cosine with a sentence-embedding model~\citep{sentencebert2019}. This multi-metric design is necessary because many of our negative results are metric failures: a method can improve one scalar while degrading the property the scalar was meant to measure.

\parhead{Hardware and scope}
All experiments use a workstation with four graphics processing units (GPUs), each with 24\,GB memory. ELF-B and ELF-M trajectory audits and decoder diagnostics run with moderate batch sizes; ELF-L is used for smaller-batch scale confirmations, including the two-seed trajectory-reliability check. This setup reflects the central premise: the diagnostic protocol is designed to be cheap enough to run before committing large-scale training.

Sample counts differ by experimental role. Mechanism audits use hundreds of generated sequences with per-token or per-step statistics; trajectory-reliability and schedule experiments use multiple 1024-sample seeds; MDP uses up to 32k generated final states because it estimates a readout on the generated-state manifold. Each subsection reports its own sample count, and \tabref{tab:scale-summary} separates fixed-checkpoint diagnostics, minimal probes, short training ablations, and external boundary checks.

\begin{table}[!htbp]
\centering
\footnotesize
\caption{Experiment scale and claim level. Sample counts differ because the experiments answer different questions: mechanism audits need per-step or per-token statistics, readout probes need many generated final states, and external systems are used only as boundary diagnostics.}
\label{tab:scale-summary}
\setlength{\tabcolsep}{3pt}
\begin{tabular}{>{\raggedright\arraybackslash}p{0.18\linewidth}>{\raggedright\arraybackslash}p{0.30\linewidth}>{\raggedright\arraybackslash}p{0.23\linewidth}>{\raggedright\arraybackslash}p{0.17\linewidth}}
\toprule
Claim level & Main evidence & Scale reported & Intended use \\
\midrule
Representation readiness & T5 contextual/token, BERT, RoBERTa, GPT-2, Gaussian controls, Cola VAE latents, and PCA rank sweep & held-out OWT latent subsets; five PCA ranks; semantic probes on Yelp Polarity/AG News & diagnose candidate state spaces before relying on a denoiser \\
Trajectory reliability & trajectory signals correlated with GPT-2-Large log PPL; shuffled and time-region controls & ELF-B/M: three 1024-sample seeds; ELF-L: two 512-sample seeds & test whether checkpoints expose reliability signals \\
Basin-navigation mechanism & step-wise margin, entropy, self-conditioning delta, final-token agreement, pre-entry tensor capture, early zero-SC causal check, and shard standard errors of the mean (SEM) & ELF-B 32-step SDE: 512 samples; B/M/L and ODE/SDE confirmations: 512 samples per run; pre-entry tensor capture: two 32-sample repeats; early zero-SC audit: 256 matched-noise samples & fixed-checkpoint mechanism evidence, not a training-seed scaling law \\
Minimal probes and stress tests & BGEE, ZSBD, MDP, RBN, same-start path selection, and long-form topic boundary & BGEE: 512 ELF-B and 256 M/L confirmations; ZSBD: 4k B/M/L plus 8k/16k ELF-B checks; MDP: 1k--32k ELF-B saturation and 1k/4k M/L confirmations; RBN: 1024 ELF-B plus 256 M/L cross-scale confirmations; topic audit: 1000 texts per main group plus 512 ODE32 samples & measure timing, token alignment, linear recoverability, decoder-basin robustness, and discourse-level boundary cases \\
Training ablations & decoder-noise, decoder-branch frequency, and margin-loss variants in the official PyTorch ELF-B implementation & short 5k/10k runs on four RTX 3090 GPUs; 1024-sample basin sweeps & mechanism-side training evidence, not a final recipe \\
External boundaries & Cola-DLM VAE/DiT, LangFlow, and BitstreamDiffusion checks & Cola DiT: 512 samples per guidance value; Bitstream: structured proxy plus 512 real-code subset; LangFlow: small trajectory audits & show interface questions beyond ELF, not benchmark ranking \\
\bottomrule
\end{tabular}
\end{table}

\parhead{Evaluation order}
The experimental order mirrors the diagnostic logic of the paper. We first test representations without the ELF denoiser. We then audit ELF trajectories to establish the interface phase diagram, including an explicit pre-entry tensor-capture check. We then extract an operational phase-transition model from the trajectory observables. We then test a minimal schedule and three basin probes. We finally test decoder calibration, Cola-DLM boundary behavior, and short training-side basin widening. If any earlier stage failed, the later probe would be unmotivated. This ordering is what distinguishes the study from a collection of ablations.

\subsection{Denoisability and Recoverability Decouple}
\label{sec:denoise-recover-decouple}

\figref{fig:mse} shows the central diagnostic result. Denoisability and linguistic recoverability are not the same. Color separates the state-space family: blue points are language-model embedding spaces, while orange points are Cola-DLM VAE latents. Circle area encodes token recovery after denoising at $t=0.5$, not model size; the size legend is shown at the right of the plot. Some spaces are easy under MSE but weak under semantic or decoder tests; other spaces preserve semantics while being poorly matched to a particular decoder. This explains why ``continuous'' alone is not enough for language diffusion.

The most useful reading is not as a ranking of encoders, but as a rejection of an overly broad hypothesis. If continuous diffusion worked merely because the state distribution is smooth or Gaussian-like, then covariance-matched controls should behave like contextual T5. They do not. Conversely, if token recovery alone were sufficient, token embeddings would be enough. They are not, because order-sensitive contextual structure is lost. The diagnostic therefore points to a narrower requirement: the state must be smooth enough to denoise and structured enough to preserve linguistic distinctions relevant to the decoder.

\begin{figure}[!htbp]
  \centering
  \includegraphics[width=.82\linewidth]{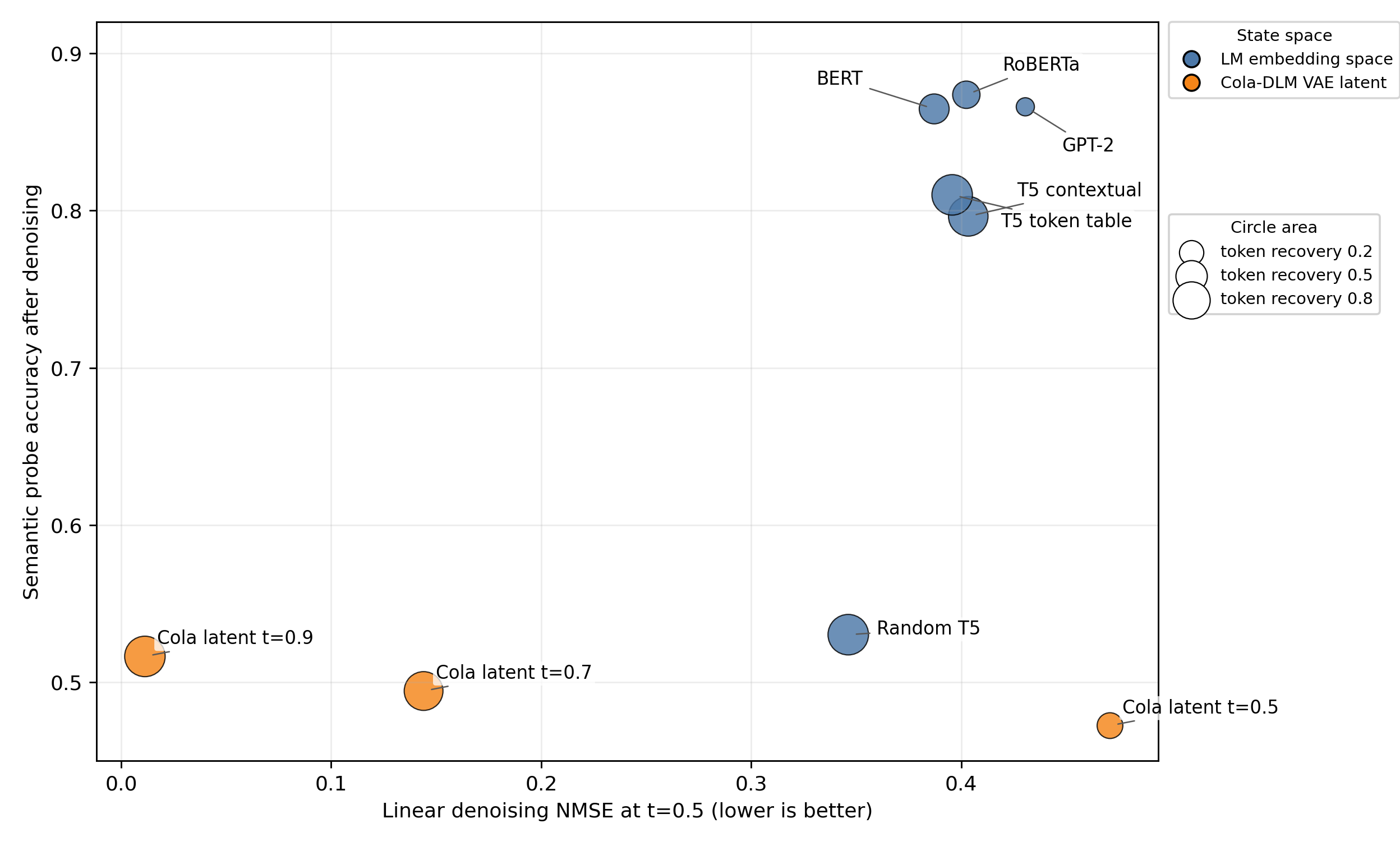}
  \caption{MSE can be misleading: denoisability and semantic recovery decouple. Color indicates state-space family (blue: language-model embeddings; orange: Cola-DLM VAE latents), and circle area indicates token recovery after denoising at $t=0.5$, not model size. The size legend maps circle area to representative recovery values. The diagnostic must include linguistic and decoder-facing axes.}
  \label{fig:mse}
\end{figure}

\subsection{Controlled Degradation: PCA Bottlenecks Break the Interface}
\label{sec:pca-negative}

A diagnostic protocol is more convincing if it flags an interface problem before one invests in a new training run. We therefore construct a controlled degradation inside the same T5 contextual family. We fit PCA on T5 contextual latents, project the latents to rank $r$, and project back to the native 512-dimensional ELF interface. This preserves the official denoiser and decoder shapes, but removes low-variance directions that may be important for order or decoder margins.

This is a narrow controlled test. PCA is not meant to simulate a nonlinear VAE or a learned tokenizer; it is an intervention inside the same ELF/T5 coordinate system. Its value is that the diagnostic axes are sensitive to interface degradation while the checkpoint, decoder, and text distribution remain fixed. A protocol that catches this failure is better positioned to flag problematic state spaces encountered in practice.

High-rank PCA should preserve denoisability and decoder compatibility. Low-rank PCA may still preserve a large amount of variance and a nontrivial latent cosine, but should lose decoder margin. If the bottleneck is inserted only at the final endpoint, the decoder may partially tolerate it. If the bottleneck is imposed at every denoising step, the trajectory itself should move into a different operating region.

\figref{fig:pca-negative} confirms the protocol's sensitivity to a degraded interface. Rank 256 keeps $85.3\%$ of variance and high decoder agreement. Rank 128 keeps $64.8\%$ of variance and still preserves most native decoder decisions. Rank 64 is the first clear boundary: agreement falls to $0.938$, the 10th percentile margin falls to $1.90$, and generation begins to move along a repetition frontier. Rank 32 is the failure flagged by the diagnostic: agreement falls to $0.478$ and the median margin becomes negative. When rank 32 is applied at every step, GPT-2-Large PPL becomes misleadingly low ($7.53$), but unigram entropy drops to $3.87$, distinct-2 drops to $0.074$, and bigram repetition rises to $0.926$. The failure is diagnostic: low PPL can hide a collapsed state-space interface.

\figref{fig:pca-order} shows that the same bottleneck also changes the order-sensitivity axis. Under full word shuffling, full-rank T5 pooled representations fall to cosine $0.777$, while rank-32 PCA remains at $0.966$. In other words, the bottleneck makes the representation more order-invariant under the pooled diagnostic: it smooths away directions that carry contextual order differences. This is the desired controlled failure. A lower-dimensional projection can preserve enough variance to look geometrically benign while suppressing linguistic distinctions that the interface needs.

\begin{figure}[!htbp]
  \centering
  \includegraphics[width=\linewidth]{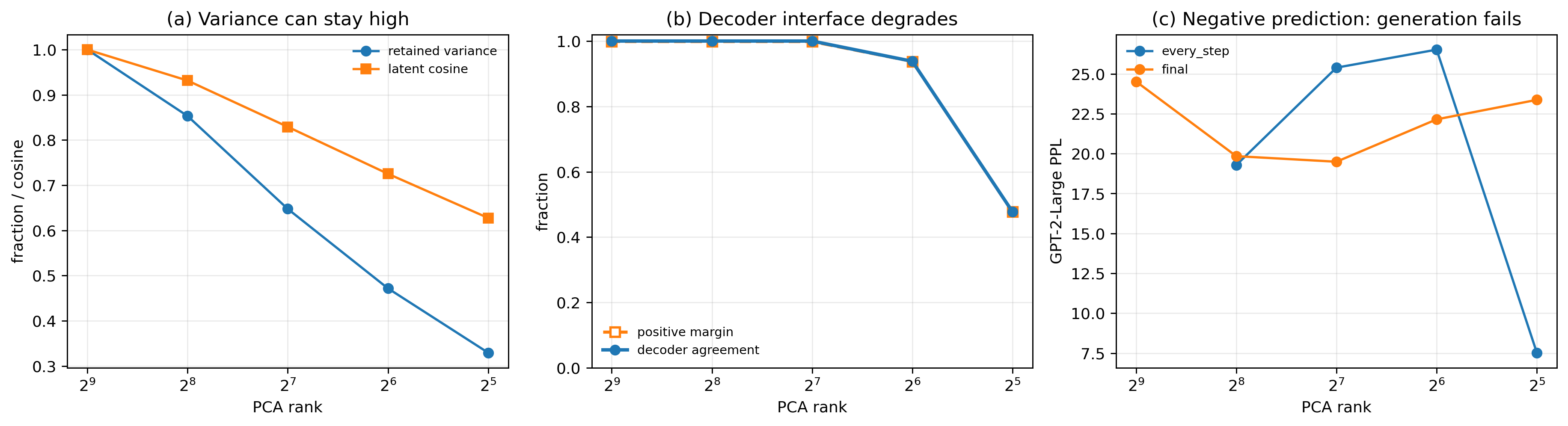}
  \caption{Controlled interface degradation through a PCA bottleneck. Left: retained variance and latent cosine remain high at moderate ranks. Middle: decoder agreement and margin degrade as the interface rank is reduced. Right: imposing the bottleneck along the generation trajectory yields low-PPL but low-diversity text. The every-step point at rank $2^9$ is omitted because full-rank PCA is the identity map and duplicates the no-bottleneck baseline. The diagnostic flags the degraded interface even when variance appears adequate.}
  \label{fig:pca-negative}
\end{figure}

\begin{figure}[!htbp]
  \centering
  \includegraphics[width=.92\linewidth]{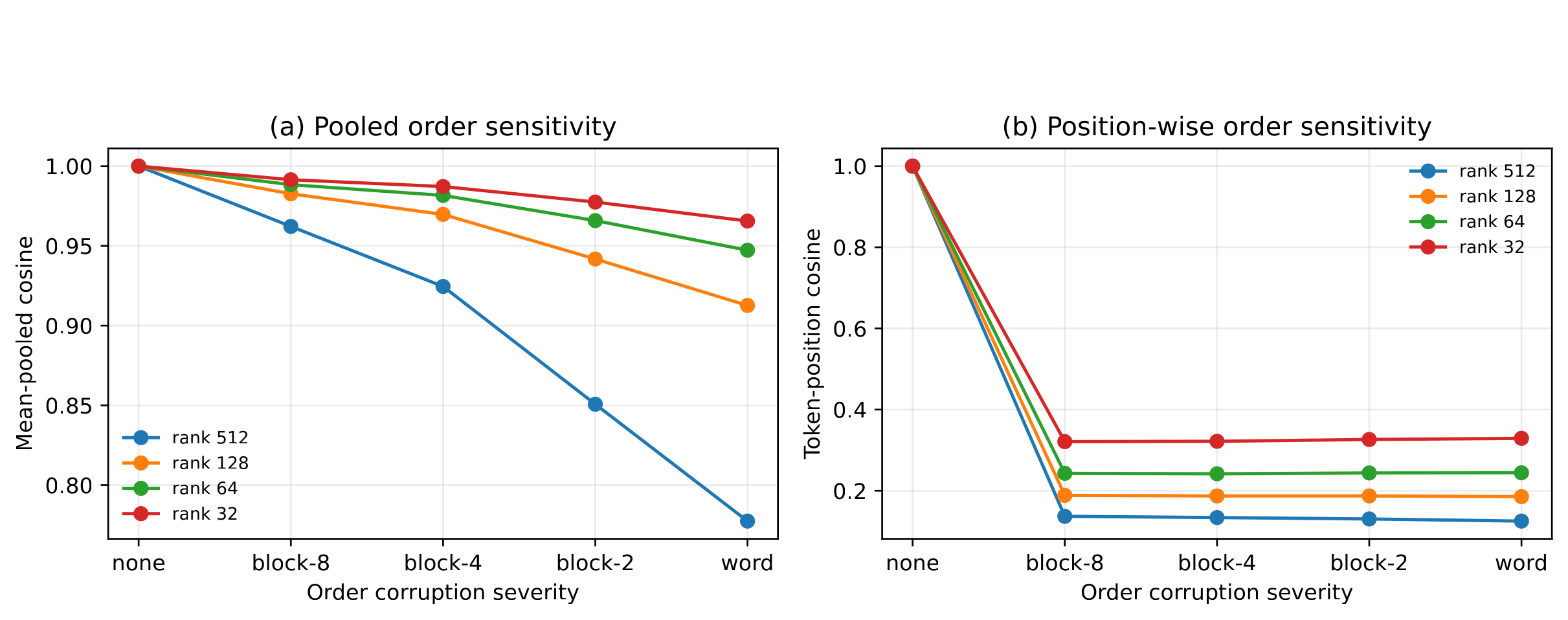}
  \caption{PCA bottlenecks reduce order sensitivity. Left: mean-pooled representations become less sensitive to stronger order corruption as rank decreases. Right: position-wise cosine shows the same effect at token locations. Low-rank variance preservation can therefore erase contextual order directions even when denoisability remains high.}
  \label{fig:pca-order}
\end{figure}

\subsection{Order Sensitivity Is Graded}
\label{sec:order-sensitivity}

The original order-sensitivity diagnostic used a binary real-vs-shuffled comparison. We strengthen it with graded block shuffling: keep words intact but shuffle blocks of length 8, 4, 2, or 1. The result is not merely that contextual encoders notice order. The shape of the degradation curve reveals what kind of order each state space preserves.

\figref{fig:order-curve} gives the graded order curve. T5 contextual embeddings degrade smoothly from cosine $0.906$ under block-8 shuffling to $0.729$ under full word shuffling. BERT shows a similar but slightly stronger degradation, from $0.890$ to $0.724$. T5 token embeddings, by contrast, remain nearly flat around $0.90$ in mean-pool cosine because they largely preserve a bag-of-words signal; their first-token cosine collapses, but the pooled representation does not encode contextual order in the same way. GPT-2 and RoBERTa pooled states are more invariant under this particular pooling metric, which is itself informative: order sensitivity should be measured with a representation and pooling choice that matches the proposed decoder interface.

This graded curve strengthens the interface claim. A state space can be easy to denoise and easy to decode while still failing to expose the word-order distinctions a language generator needs. Diffusion-readiness is therefore not a scalar smoothness property that can be verified by inspecting the covariance spectrum in isolation.

\begin{figure}[!htbp]
  \centering
  \includegraphics[width=.82\linewidth]{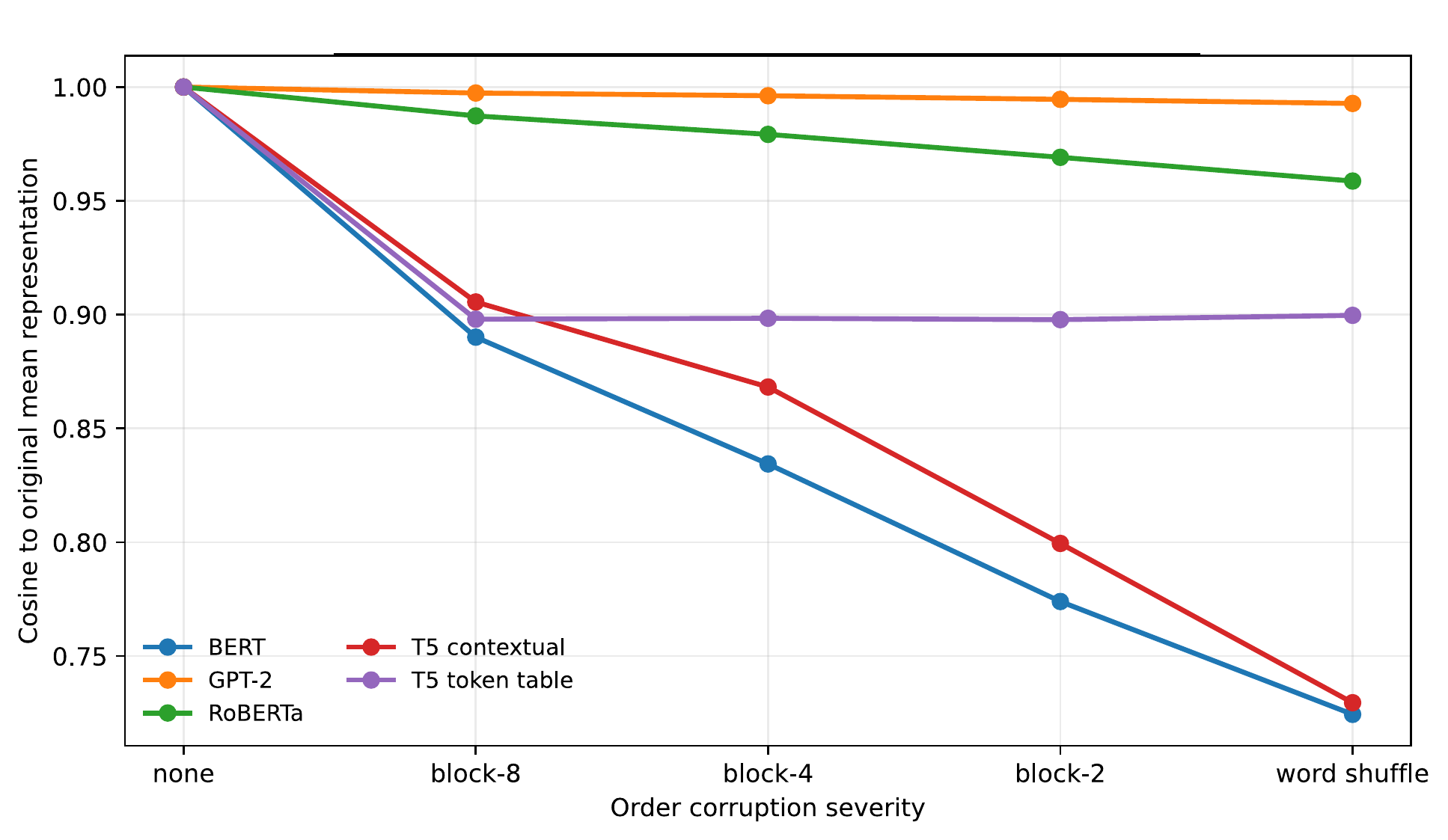}
  \caption{Order sensitivity is graded. The horizontal axis increases order corruption from none to full word shuffle; the vertical axis is cosine similarity to the original mean representation, so lower values mean stronger order sensitivity. Contextual T5 and BERT degrade progressively, whereas T5 token embeddings remain nearly flat under mean pooling, exposing a bag-of-words failure mode.}
  \label{fig:order-curve}
\end{figure}

\subsection{ELF Trajectories Expose Reliability Signals}
\label{sec:trajectory-reliability-results}

Across three 1024-sample seeds, ELF's internal trajectory signals are predictive under the standard ELF GPT-2-Large PPL evaluator. For ELF-B fixed SC=3, the peak relative self-conditioning change has Spearman correlation $-0.417 \pm 0.009$ with per-sample log PPL, while the minimum agreement with the zero-SC prediction has $0.411 \pm 0.009$. For ELF-M, the corresponding correlations are weaker but stable: $-0.253 \pm 0.017$ and $0.239 \pm 0.016$. Shuffled-signal permutation tests give $p \leq 0.005$ for these core signals. A two-seed ELF-L confirmation with 512 samples per seed and 64 SDE steps shows that reliability signals persist at 652M parameters, but the strongest region shifts. For seed 0, the mid-trajectory mean SC change has $\rho=0.359$ and the mid-trajectory zero-SC agreement has $\rho=-0.356$ with log PPL. For seed 1, the same mid-trajectory signals remain significant with $\rho=0.224$ and $\rho=-0.225$, while the full-trajectory SC-change peak reaches $\rho=-0.336$. All three ELF-L core correlations have permutation $p=0.002$. These are not formal certificates of text quality; they are trajectory statistics that remain informative under the reported automatic metrics and signal controls.

We use per-sample GPT-2-Large PPL here because it is the most sensitive cheap scalar for within-checkpoint trajectory correlations. This does not contradict the PPL critique above. The claim in this subsection is narrow: under a fixed checkpoint and a fixed sampler family, PPL is a useful \emph{relative proxy} for ranking samples from the same operating regime, and internal signals correlate with the standard evaluator used by ELF. It is not a sufficient cross-model, cross-sampler, or cross-temperature quality objective. Later calibration and boundary experiments therefore pair PPL with entropy, repetition, MAUVE, JS divergence, decoder agreement, and margin precisely because PPL alone can validate the wrong operating region.

The sign of the correlation is less important than its existence and localization. ELF-B and ELF-M expose reliable extrema over the full trajectory, whereas ELF-L exposes a clearer mid-trajectory average. This is consistent with a scale-dependent schedule: larger denoisers may stabilize earlier or distribute correction differently across time, but they still reveal reliability before final decoding. This observation argues against a purely final-decoder explanation of generation quality, in which the denoiser merely minimizes latent error and the decoder resolves all linguistic structure at the last step.

\figref{fig:trajectory-reliability} shows the main ELF-B/M reliability curves: a per-sample scatter makes one signal concrete, and a correlation summary shows that several trajectory statistics predict the standard GPT-2-Large PPL evaluator. \figref{fig:elfl-scale} summarizes the two-seed ELF-L check with the shifted mid-trajectory signals.

\begin{figure}[!htbp]
  \centering
  \includegraphics[width=\linewidth]{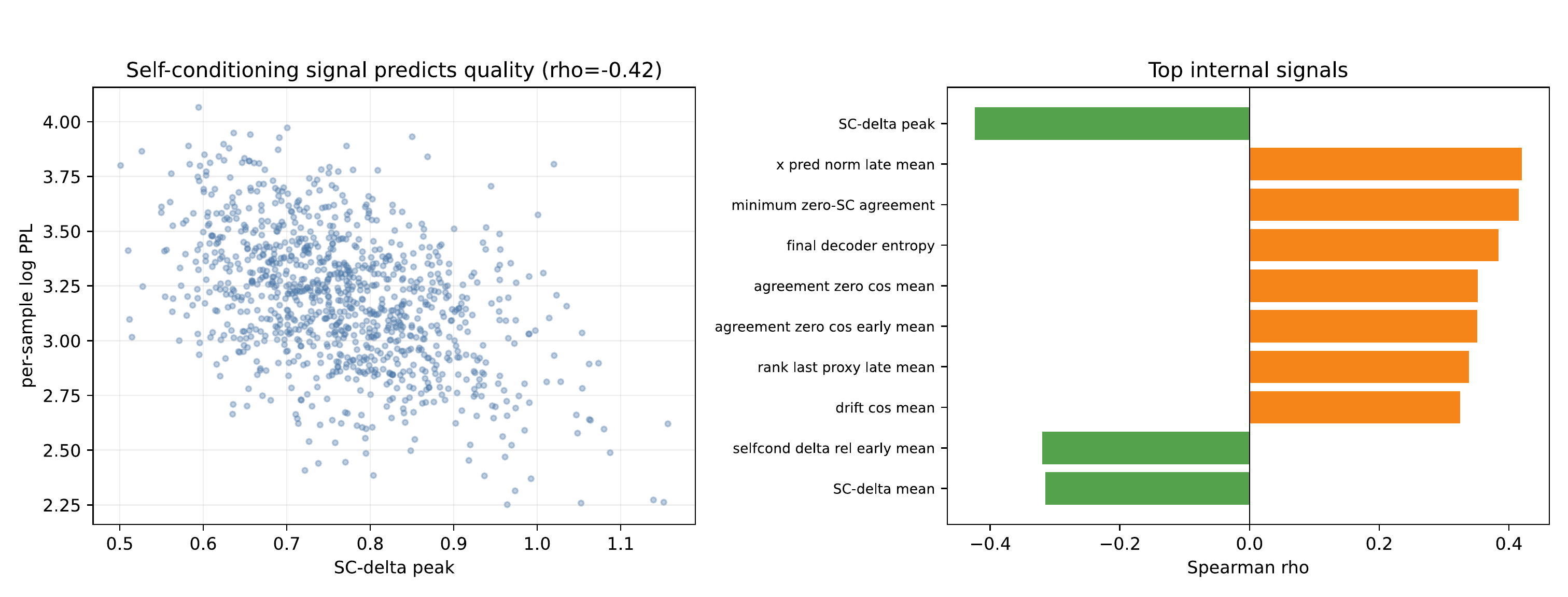}
  \caption{ELF internal signals predict the standard PPL evaluator. Left: each point is one generated sample; the x-axis is the peak relative self-conditioning change along the trajectory, and the y-axis is per-sample GPT-2-Large log PPL. Lower PPL tends to occur when this internal change is larger, giving a negative Spearman correlation. Right: the x-axis is Spearman $\rho$ between each trajectory statistic and log PPL; the y-axis lists candidate statistics. Multiple internal signals correlate with the evaluator before external scoring, motivating the later basin-navigation audit.}
  \label{fig:trajectory-reliability}
\end{figure}

\begin{figure}[!htbp]
  \centering
  \includegraphics[width=.95\linewidth]{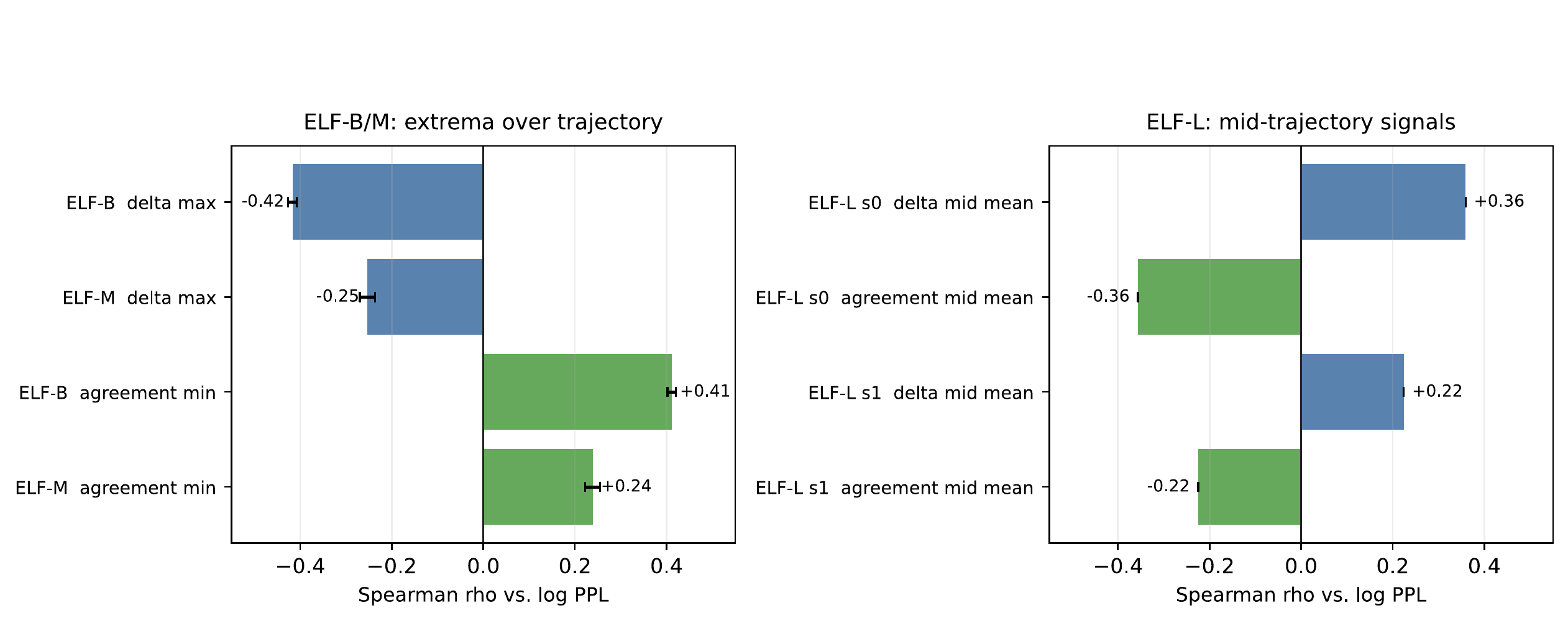}
  \caption{ELF-L trajectory reliability at larger scale. The x-axis in both panels is Spearman $\rho$ with per-sample log PPL; bars to the left mean the statistic is larger for lower-PPL samples, and bars to the right mean it is larger for higher-PPL samples. The left panel repeats the strongest whole-trajectory extrema from ELF-B/M, while the right panel reports two independent ELF-L seeds using mid-trajectory summaries. The correlations persist, but the strongest time region shifts, so the signal is phase-dependent rather than a universal confidence scalar.}
  \label{fig:elfl-scale}
\end{figure}

\subsection{Mechanism: Denoising as Basin Navigation}
\label{sec:basin-navigation-results}

The correlation result above leaves a mechanistic ambiguity. Do these reliability signals simply correlate with final PPL, or do they reflect how the trajectory approaches the decoder interface? We answer this by measuring decoder margins at every denoising step. For each step, we decode the predicted clean latent $\hat{x}_t$ through the native decoder and record top-1-vs-top-2 margins, decoder entropy, and self-conditioning disagreement. We then run a smaller selected-step audit that asks whether intermediate predictions already agree with the final decoded tokens.

\figref{fig:basin-navigation} gives a compact picture. This single-checkpoint audit uses raw denoising steps to show the actual SDE32 chronology; the cross-scale and sampler-comparison plots below use normalized phase so that 32- and 64-step trajectories can be compared. The 10th-percentile decoder margin of $\hat{x}_t$ grows from $0.049$ at step 0 to $12.30$ at step 31, while decoder entropy falls from $6.18$ nats to $0.018$ nats. Self-conditioning disagreement is not monotone. It rises from zero, peaks around steps 16--18 at about $0.47$, and then falls. The delta-margin correlation is also phase-dependent: it is near zero early, becomes positive in the middle region where disagreement peaks, and then becomes strongly negative late (mean Spearman $\rho=-0.31$ after step 24). Thus self-conditioning disagreement is not a universal confidence score. It becomes a margin proxy only once the trajectory has approached the decoder basin. A post-hoc GPT-2-Large evaluation on the same 512 samples connects the mechanism back to quality: maximum mid-trajectory self-conditioning delta correlates with log PPL at $\rho=-0.499$, early 10th-percentile margin mean at $\rho=-0.445$, and late decoder-entropy minimum at $\rho=0.409$.

We use ``compare'' as a compact description of this statistical pattern, not as a directly observed internal algorithm. The measurements are consistent with a comparison-then-commitment transition: candidate clean predictions differ most in the middle, while final-token margins and agreement appear later. They do not by themselves reveal the exact computation performed inside the denoiser.

\leadin{Basin-entry timing.} The selected-step audit makes the basin interpretation sharper. At step 0, the margin of the final decoded tokens under the intermediate logits is strongly negative (p10 $=-19.4$), and agreement with the final decode is only $1.0\%$. At step 20, agreement has risen to $76.2\%$ but the final-token margin is still negative (p10 $=-10.4$). The boundary is crossed around step 24: final-token p10 margin becomes positive ($0.76$), and agreement reaches $89.5\%$. By step 31, final-token p10 margin is $12.57$ and agreement is $99.96\%$. Intermediate decodes become diverse earlier than this: decoded sample entropy reaches about $5.0$ by step 16, but high-margin final compatibility arrives later. This separates \emph{readability} from \emph{basin entry}.

This is the empirical counterpart of \thmref{thm:decoder-margin}. As the trajectory enters the decoder basin, the relevant margin-to-sensitivity ratio should increase, making residual perturbations less likely to change tokens. We do not estimate the full local sensitivity spectrum here, but the measured part of the prediction is direct: lower-tail final-token margin moves from a large negative value to a large positive value, while final-token agreement rises from $1.0\%$ to $99.96\%$. This is the mechanism layer on top of the diagnostic layer. The first layer says what fails: MSE, PPL, and clean reconstruction can each validate the wrong thing. The second layer says what the audited successful trajectories are consistent with: the denoiser transports predictions through a phase of proposal, self-conditioning disagreement is statistically consistent with candidate revision, and the final phase enters a high-margin decoder-readable basin.

\begin{figure}[!htbp]
  \centering
  \includegraphics[width=\linewidth]{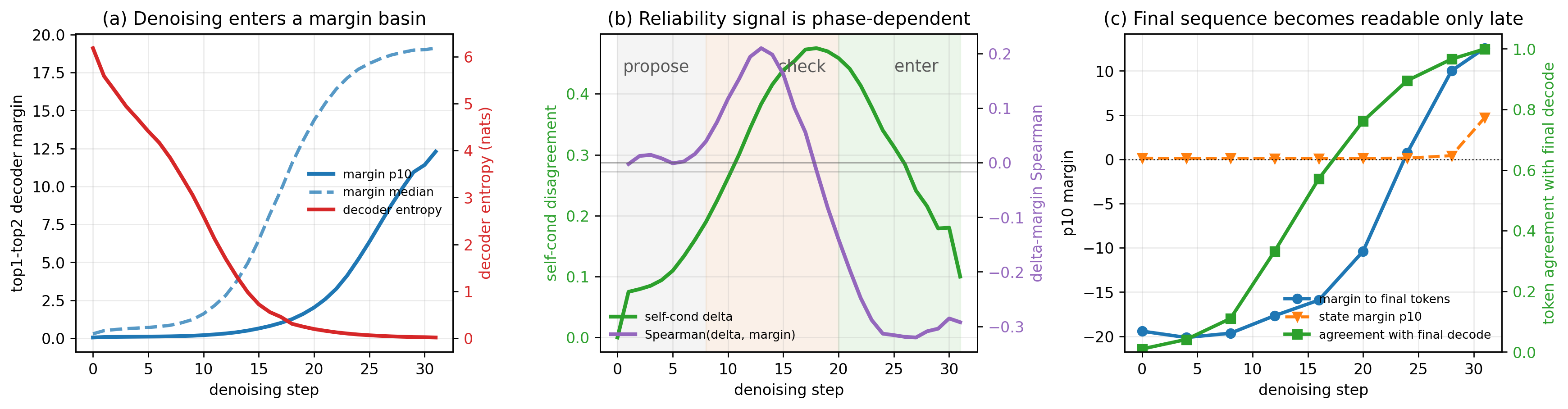}
  \caption{Denoising as basin navigation. (a) Decoder margins widen while decoder entropy collapses. (b) Self-conditioning disagreement is phase-dependent: it peaks in the middle region and becomes a negative margin proxy only late. (c) The final sequence enters a positive-margin basin late in the trajectory; readability and final-token compatibility are not the same.}
  \label{fig:basin-navigation}
\end{figure}

\subsection{Interface Phase Diagram}
\label{sec:interface-phase-diagram}

The basin-navigation plot above is a trajectory audit; the same measurements can be summarized as an empirical interface phase diagram. The horizontal coordinate is normalized denoising phase. The vertical coordinates are decoder-facing reliability signals: native margin, ZSBD agreement, final-token agreement, entropy, and token-wise entry. We use three region names only as diagnostic shorthand. In the \emph{pre-entry} region, token-embedding alignment is weak and final-token agreement is low. In the \emph{competition} region, candidate basin labels become readable but are not yet stable; ZSBD agreement rises rapidly and token positions enter asynchronously. In the \emph{locked} region, the lower-tail native margin is high and most token decisions persist.

\leadin{Operational phase-transition model.} Let $\phi\in[0,1]$ denote normalized denoising phase. We summarize a trajectory by four order parameters: lower-tail native margin $M_{10}(\phi)$, final-token agreement $A(\phi)$, ZSBD/native agreement $Z(\phi)$, and self-conditioning disagreement $D(\phi)$. A crossing phase is
\[
\tau_M(\theta)=\inf\{\phi: M_{10}(\phi)\ge \theta\},
\]
with analogous crossings for $A$ or $Z$; a transition width is the phase interval needed for an agreement curve to move between two fixed levels, such as $Z=10\%$ and $Z=90\%$. This gives three testable predictions rather than a metaphor: stronger models or longer sampling should move high-margin crossings earlier; sample-level stopping should be limited by an asynchronous token tail; and pre-entry can show coherent rank growth before high-margin readability appears. The pre-entry tensor-capture audit (\secref{app:basin-stress-tests}) adds a fourth observable: the effective rank of predicted clean states rises while the noisy state remains high-rank, and consecutive predicted-clean update directions are coherent ($\langle\cos\rangle>0.75$) even though the lower-tail margin stays below $1$. The matched early zero-SC audit then shows that this chaotic-phase rank growth is not caused by the first few SC feedback updates. Directional bootstrapping therefore operates before margin-based basin entry, while SC becomes functionally necessary later in the competition phase. The measurements below test these predictions. We do not claim a thermodynamic phase transition; the phase language is an operational model for decoder-facing observables.

\figref{fig:interface-phase-diagram} makes this dynamic view explicit. For ELF-B SDE32, ZSBD agreement rises from $10\%$ to $90\%$ over a width of $0.635$ normalized phase units, crossing the midpoint at phase $0.460$ and reaching $90\%$ at phase $0.858$. ODE32 has a similar ZSBD transition width ($0.631$) but reaches Margin-8 later (phase $0.923$, rather than around $0.85$ for SDE32 under the interpolated summaries). The cross-scale panel reports discrete recorded-phase landmarks from the main audit, so its Margin-8 labels should be read as visual trajectory summaries rather than the canonical $\tau_M(8)$ estimates used for trend fitting. For numerical crossing claims, the condition-matched supplement in \tabref{tab:p5-tau-scaling} recomputes $\tau_M(8)$ by shard-level linear interpolation: SDE32 moves from $0.849\pm0.007$ on ELF-B to $0.736\pm0.031$ on ELF-M and $0.657\pm0.036$ on ELF-L, while SDE64 moves from $0.726\pm0.024$ to $0.667\pm0.012$ and $0.582\pm0.017$. Thus, within ELF's shared frozen T5-small interface, denoiser scale shifts high-margin entry earlier under matched SDE settings. The same trend is consistent with BGEE: the main diagnostic Margin-12 gates shown in the phase-diagram figure save about $17.1\%$, $23.0\%$, and $27.3\%$ of denoising steps (number of function evaluations, NFEs), while the deterministic held-out evaluation split in \tabref{tab:basin-report} gives $16.6\%$, $23.4\%$, and $27.6\%$. A small transition-grid supplement in \tabref{tab:phase90-transition-grid} fills the missing B-SDE64 and M/L-SDE32 cells at coarse sample scale. It supports earlier entry as the stable scale/compute signal, while showing that ZSBD transition width itself is not monotone. We therefore treat the exponent as a reproducible same-interface denoiser-scale trend rather than a universal scaling law or a clean sharpness-scaling law.

The token-wise panel explains why sample-level early exit has a ceiling. In the 512-sample audit, stable Margin-8 entry is reached by $94.7\%$ of token positions, with shard-p90 phase $0.819$, while persistent native-token entry reaches $99.96\%$. The hard tail is structured rather than uniform: numeric tokens reach stable Margin-8 only $79.8\%$ of the time, and rare tokens reach it $81.4\%$ of the time. Thus the interface transition is not a single global event. It is a mostly shared sequence-level transition with an asynchronous token tail. This phase-diagram view is the main conceptual object of the paper. A static readability observation says that some final states are easy for a decoder and others are not. Our evidence asks the stronger time-resolved question: how does an existing ELF trajectory enter the decoder basin, how does the entry shift with scale and sampler, and which token classes remain outside the early locked region? The supported claim is dynamic and diagnostic: basin entry is a measurable transition summarized by crossing phases and transition widths, not merely a final-state label.

\begin{figure}[!htbp]
  \centering
  \includegraphics[width=\linewidth]{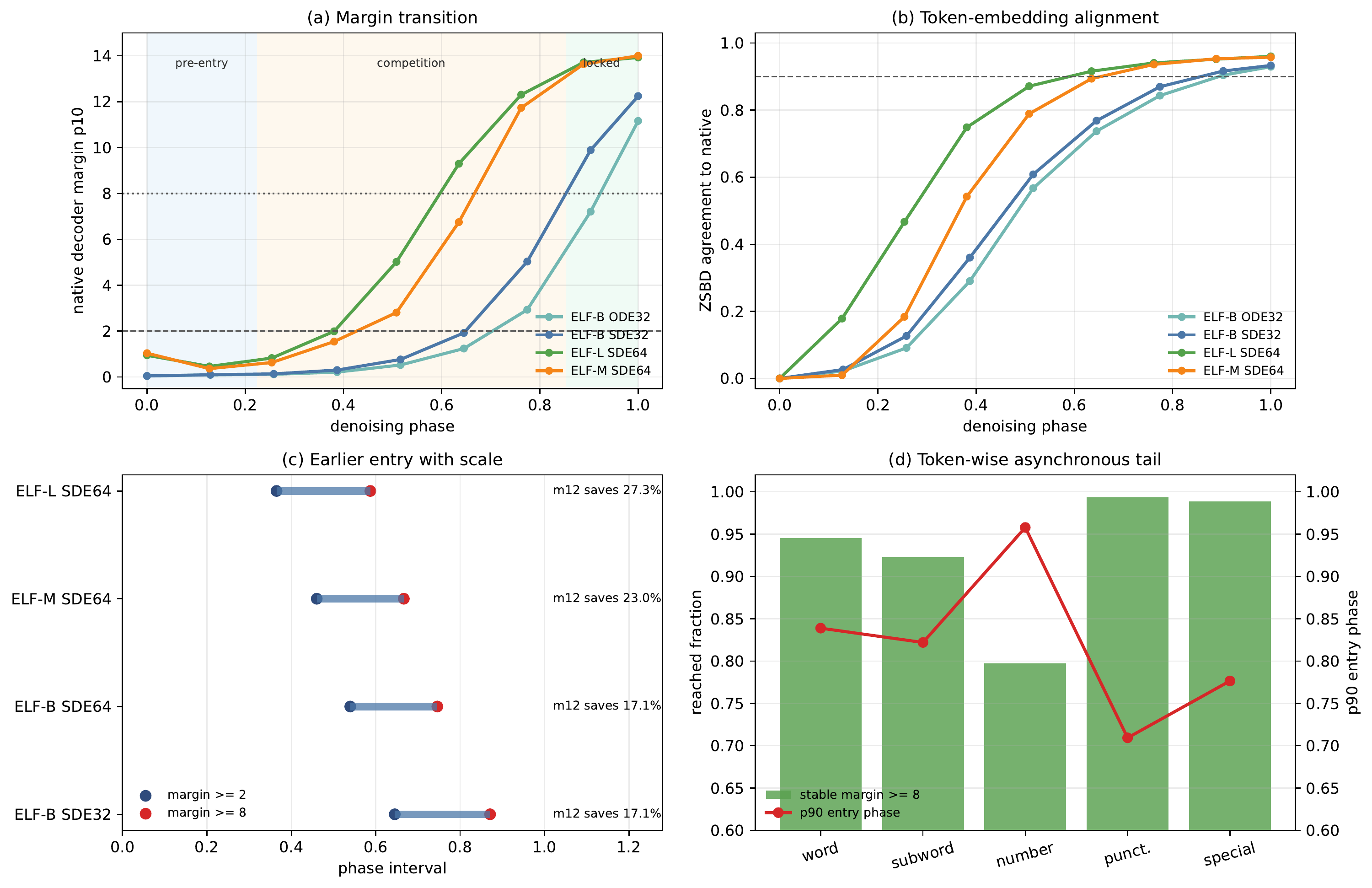}
  \caption{Interface phase diagram for ELF basin entry. (a) Native decoder margins trace pre-entry, competition, and locked regions. The region names are empirical diagnostics rather than thermodynamic phases. (b) ZSBD agreement follows the same transition, rising from weak token alignment to a high-agreement locked region. (c) Cross-scale entry shifts earlier under stronger models and longer SDE sampling; panel labels use illustrative recorded-phase landmarks from the main audit, while interpolated $\tau_M(8)$ estimates are reported in \tabref{tab:p5-tau-scaling} and \tabref{tab:phase90-transition-grid}. Text at right reports conservative BGEE Margin-12 savings from the main diagnostic audit; held-out evaluation gate savings are summarized in \tabref{tab:basin-report}. (d) Token-wise entry is asynchronous: most tokens lock early, but numeric and subword-like tail positions remain harder.}
  \label{fig:interface-phase-diagram}
\end{figure}

\subsection{Scale, Sampler, and Architecture Checks}
\label{sec:scale-sampler-architecture}

\leadin{Robustness checks.} A useful mechanism should survive obvious perturbations. We therefore rerun the basin-navigation audit across ELF-B, ELF-M, and ELF-L; compare SDE32, SDE64, and ODE32 on ELF-B; and run analogous boundary probes on LangFlow~\citep{langflow2026} and BitstreamDiffusion~\citep{bitstream2026}. These checks are not meant to make the paper a benchmark suite. They test whether the margin-basin story is an artifact of one checkpoint, one sampler, or one specific decoder implementation.

\figref{fig:crossscale-basin} shows the cross-scale result. All ELF scales widen the decoder margin over denoising time, but the phase at which a strong basin is reached shifts with scale. For ELF-B SDE32, the 10th-percentile margin crosses 2.0 around phase $0.65$ and 8.0 around phase $0.87$. With 64 SDE steps, the same model reaches these thresholds earlier (about phase $0.54$ and $0.75$ in the plotted discrete audit). ELF-M and ELF-L reach them earlier still. A shard-level robustness audit (\figref{fig:basin-robustness-ci}) confirms that these basin-entry shifts are larger than variation across four independent 128-sample sampling shards: the Margin-8 entry phase moves from $0.863\pm0.008$ for ELF-B SDE32 to $0.675\pm0.010$ for ELF-M SDE64 and $0.587\pm0.017$ for ELF-L SDE64 under that audit's aggregation. A follow-up condition-matched interpolation check (\tabref{tab:p5-tau-scaling}) gives the same qualitative trend within both SDE32 and SDE64. Small absolute differences between the plotted landmarks, the robustness audit, and the condition-matched table reflect temporal resolution, sample set, and whether interpolation is performed before or after shard aggregation; we use them as convergent trend estimates rather than interchangeable calibrated constants. These error bars measure sampling/shard-level robustness under fixed checkpoints, not training-seed uncertainty or a controlled scaling law. The delta-margin sign flip remains visible as a mid-trajectory positive peak followed by a negative late correlation, but its shape is scale-dependent. We therefore use the sign flip as evidence of a revision-to-commit transition, not as a universal scalar confidence law that generalizes across architectures without recalibration.

We report two complementary correlation estimates. Step-wise correlations use per-sample Spearman statistics within a single denoising step; they have high temporal resolution and identify where the sign flip happens, but are noisy at 128 samples. The shard-level audit first aggregates each 128-sample shard over coarse trajectory regions and then reports the standard error of the mean (SEM) across shards; it loses some temporal resolution but establishes sampling robustness. The same convention applies to crossing phases: figure annotations are useful landmarks, whereas \tabref{tab:p5-tau-scaling} is the condition-matched interpolation table. Apparent numerical differences between the estimates should therefore be read as a resolution--variance trade-off, not as conflicting measurements.

\figref{fig:sampler-basin} shows that the qualitative mechanism is not specific to SDE noise. ODE32 has a different entropy trajectory, but it still exhibits margin growth and a late delta-margin sign change. This separates basin entry from a purely stochastic-sampler artifact. The sampler changes how the trajectory explores the interface; it does not remove the need to enter the decoder basin.

The external checks in \figref{fig:external-basin} test the same concept outside ELF under a stricter hierarchy of evidence. LangFlow exposes a weaker latent-step margin trajectory: its native margins become positive but do not reach the high ELF lower-tail margins, and sparse final-token probes remain negative in the lower tail on both OWT and the One Billion Word Benchmark (LM1B)~\citep{lm1b2013}. \figref{fig:external-interface-overlay} places these external signals beside ELF's final decoder-basin margins and Cola-DLM target-recovery diagnostics to show that the state objects differ even when the interface questions are shared. A direct ELF-to-LangFlow cross-decoder test would not isolate basin transfer, because the released LangFlow checkpoints use GPT-2 or BERT tokenizers with learned $768$-dimensional token embeddings rather than ELF's T5 decoder interface; without a separately trained adapter, such a test would confound decoder geometry with vocabulary and coordinate-system mismatch. A direct LangFlow delta-margin audit (\figref{fig:langflow-rho}) makes the analogy more precise: on LM1B and OWT, the mid-trajectory Spearman peak between self-conditioning delta and native margin is positive ($0.18$ and $0.32$), while the late average becomes negative ($-0.35$ and $-0.53$). BitstreamDiffusion exposes a bit-level analogue: structured proxy codes and real OWT code ids show nearly identical sigma-sweep behavior (\figref{fig:bitstream-real-code}); for example, at $\sigma=0.5$ the bit recovery is $0.822$ for the structured proxy and $0.828$ for real OWT codes. These observations support the interface language but do not independently prove ELF's mechanism. \figref{fig:external-interface-overview} makes this hierarchy explicit: ELF carries the main mechanism evidence, while LangFlow, BitstreamDiffusion, and Cola-DLM are boundary checks that ask whether denoisability, decoder compatibility, and trajectory reliability remain meaningful when the state object and decoder change.

\begin{figure}[!htbp]
  \centering
  \includegraphics[width=\linewidth]{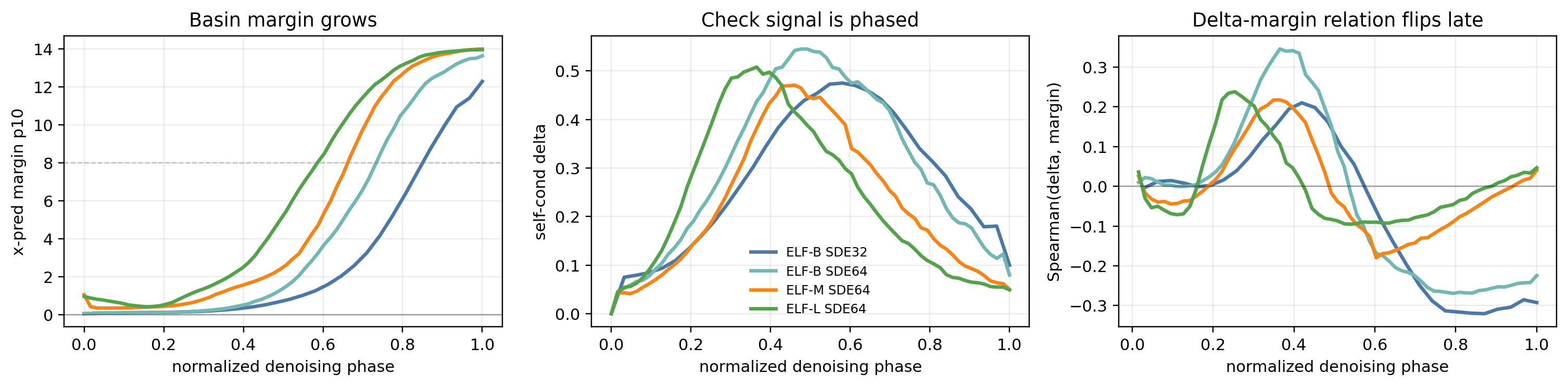}
  \caption{Basin-navigation signals across ELF scale. Left: 10th-percentile native decoder margin over normalized denoising phase. Middle: self-conditioning disagreement over the same phase axis. Right: step-wise Spearman correlation between self-conditioning disagreement and decoder margin. Margin growth is shared by ELF-B/M/L, while the strongest self-conditioning--margin relation shifts with scale.}
  \label{fig:crossscale-basin}
\end{figure}

\begin{figure}[!htbp]
  \centering
  \includegraphics[width=\linewidth]{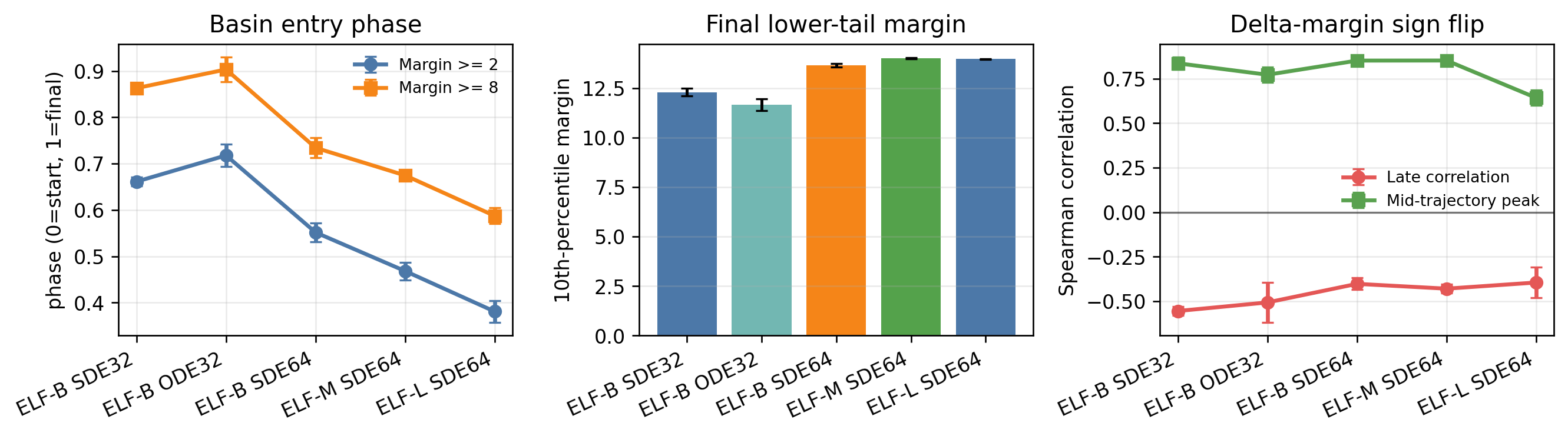}
  \caption{Shard-level robustness of the basin-navigation audit. Left: basin-entry phase under two margin thresholds. Middle: final lower-tail decoder margin. Right: mid-to-late delta-margin sign-flip strength. Each condition contains four independent 128-sample GPU shards; error bars show SEM across sampling shards under fixed public checkpoints, not training-seed uncertainty. Basin entry shifts earlier with model scale, while the self-conditioning--margin relation remains scale-dependent rather than a universal scalar law.}
  \label{fig:basin-robustness-ci}
\end{figure}

\begin{figure}[!htbp]
  \centering
  \includegraphics[width=\linewidth]{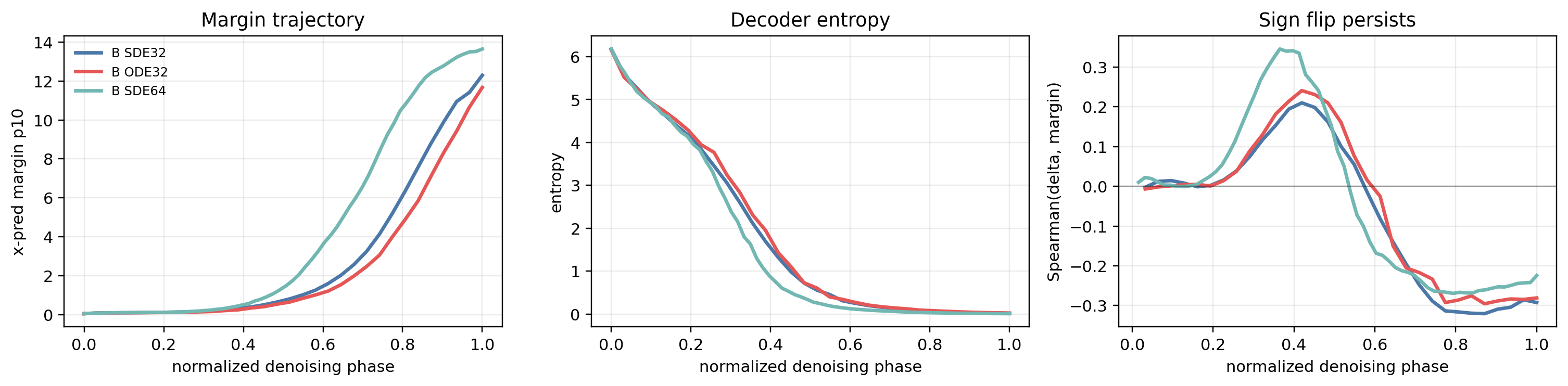}
  \caption{ODE and SDE sampler comparison. Left: native decoder margin grows under both samplers. Middle: decoder entropy follows a different schedule-dependent trajectory. Right: the delta--margin correlation still changes sign late. Basin entry is therefore not only an artifact of SDE noise injection.}
  \label{fig:sampler-basin}
\end{figure}

\begin{figure}[!htbp]
  \centering
  \includegraphics[width=\linewidth]{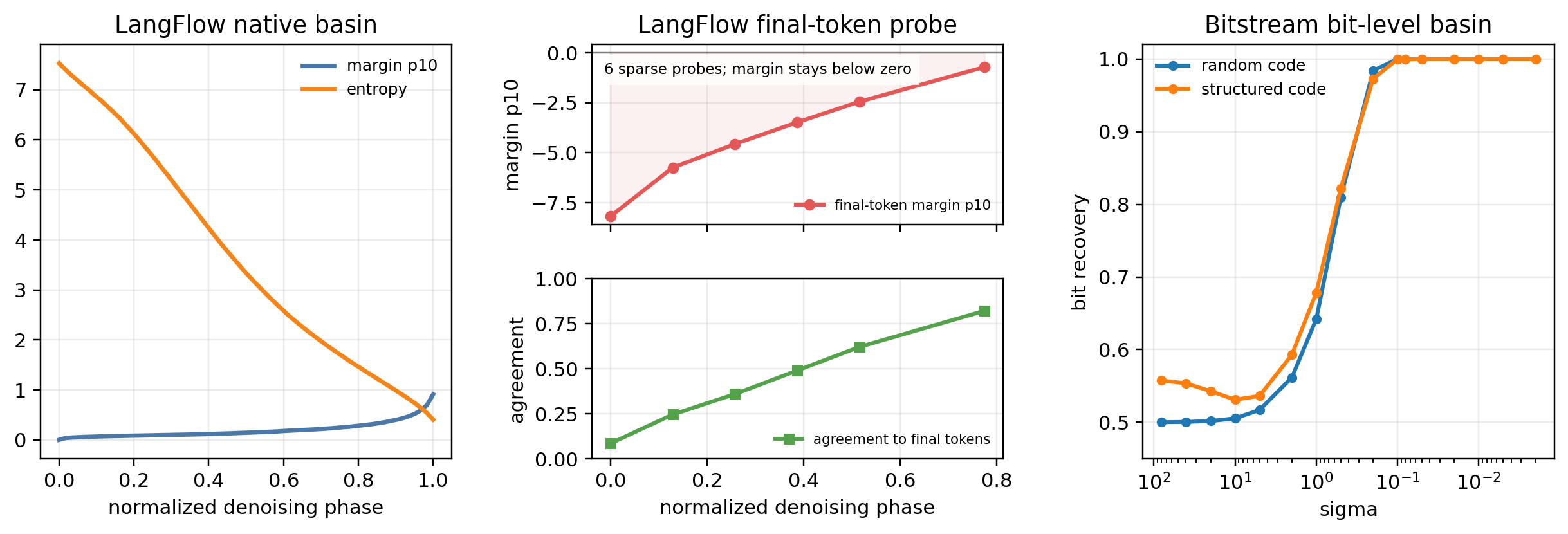}
  \caption{External boundary checks. Left: LangFlow native step margins provide a weaker latent-step analogue of ELF's decoder-margin audit. Middle: sparse LangFlow final-token probes show rising agreement but a lower-tail final-token margin that remains below zero. Right: BitstreamDiffusion produces a bit-level basin analogue through recovery and margin under increasing code noise. Axes use each model's native interface units and should be compared qualitatively, not as a shared numerical scale.}
  \label{fig:external-basin}
\end{figure}

\begin{figure}[!htbp]
  \centering
  \includegraphics[width=\linewidth]{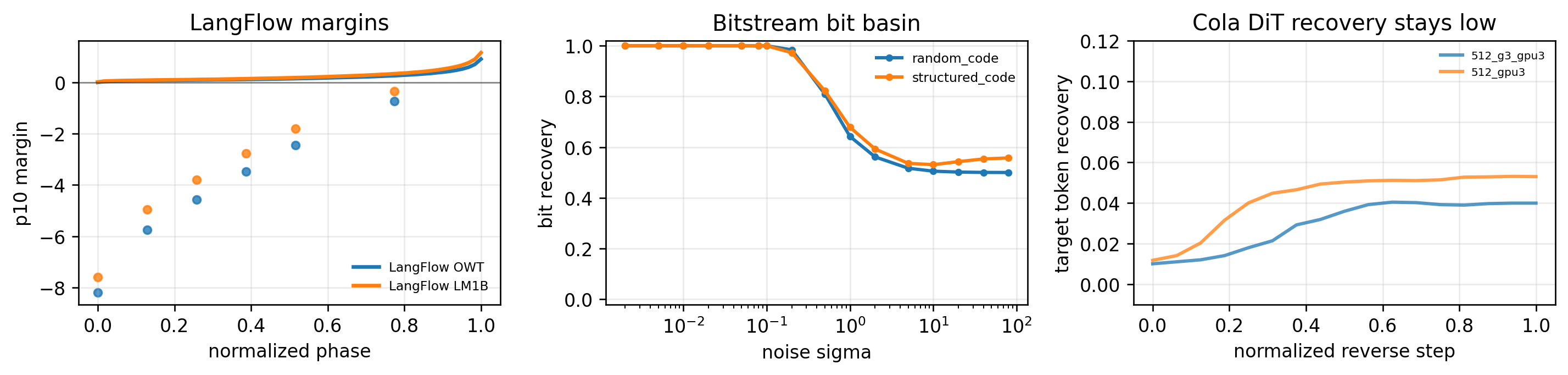}
  \caption{External interface overlay. Left: LangFlow native step margins rise, while sparse final-token commitment probes remain stricter. Middle: BitstreamDiffusion shows a bit-level basin analogue rather than a full text-generation reproduction. Right: Cola-DLM DiT target-token recovery remains low, matching the VAE boundary interpretation. The panel compares interface diagnostics, not benchmark rank.}
  \label{fig:external-interface-overlay}
\end{figure}

\begin{figure}[!htbp]
  \centering
  \includegraphics[width=\linewidth]{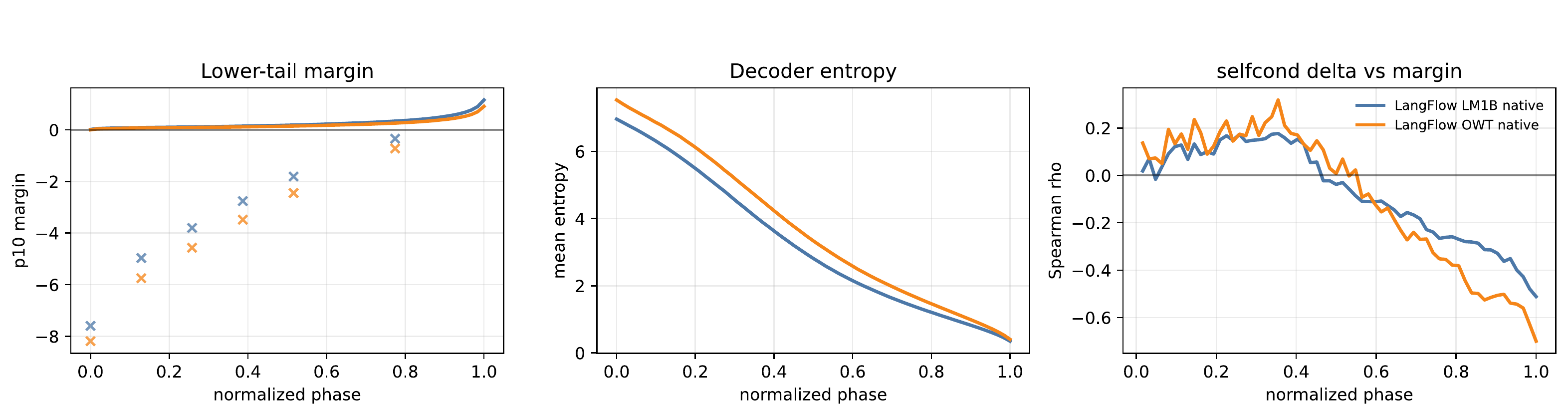}
  \caption{LangFlow delta-margin audit. Left: native lower-tail margins rise along the trajectory; the sparse cross markers show final-token margin probes and remain lower because they ask whether the intermediate logits already support the eventual final tokens. Middle: decoder entropy falls as the latent state becomes more committed. Right: the self-conditioning delta/margin relation is positive in the middle and negative late. The qualitative revision-to-commit pattern is ELF-like, but LangFlow margins remain smaller and final-token commitment is stricter.}
  \label{fig:langflow-rho}
\end{figure}

\begin{figure}[!htbp]
  \centering
  \includegraphics[width=\linewidth]{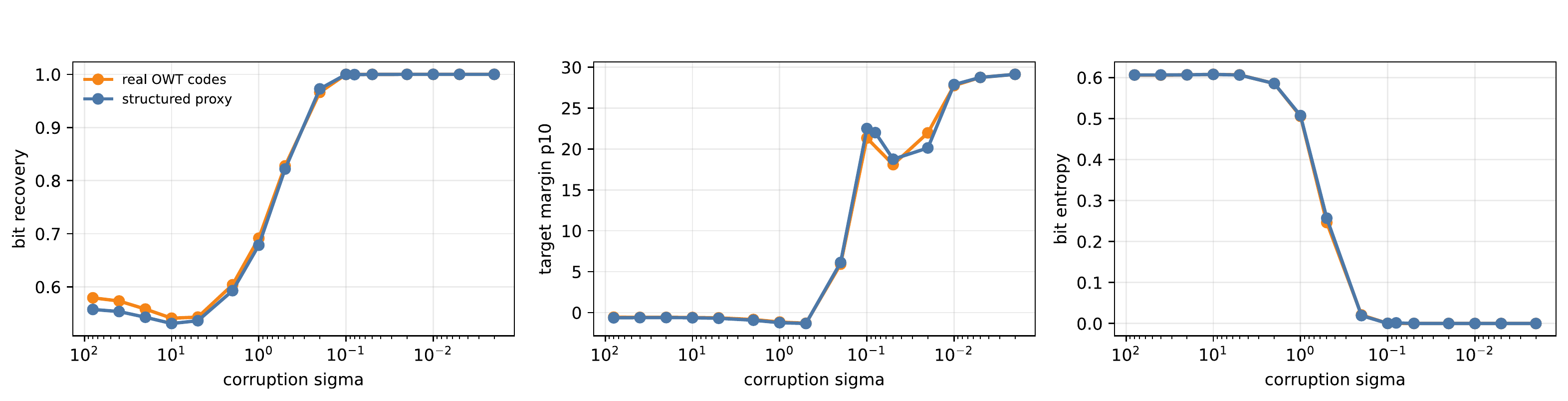}
  \caption{BitstreamDiffusion real-code validation. A 512-sample real OWT code sweep matches the earlier structured-code proxy closely, so the bit-level basin analogue is not an artifact of synthetic code ids.}
  \label{fig:bitstream-real-code}
\end{figure}

\begin{figure}[!htbp]
  \centering
  \includegraphics[width=\linewidth]{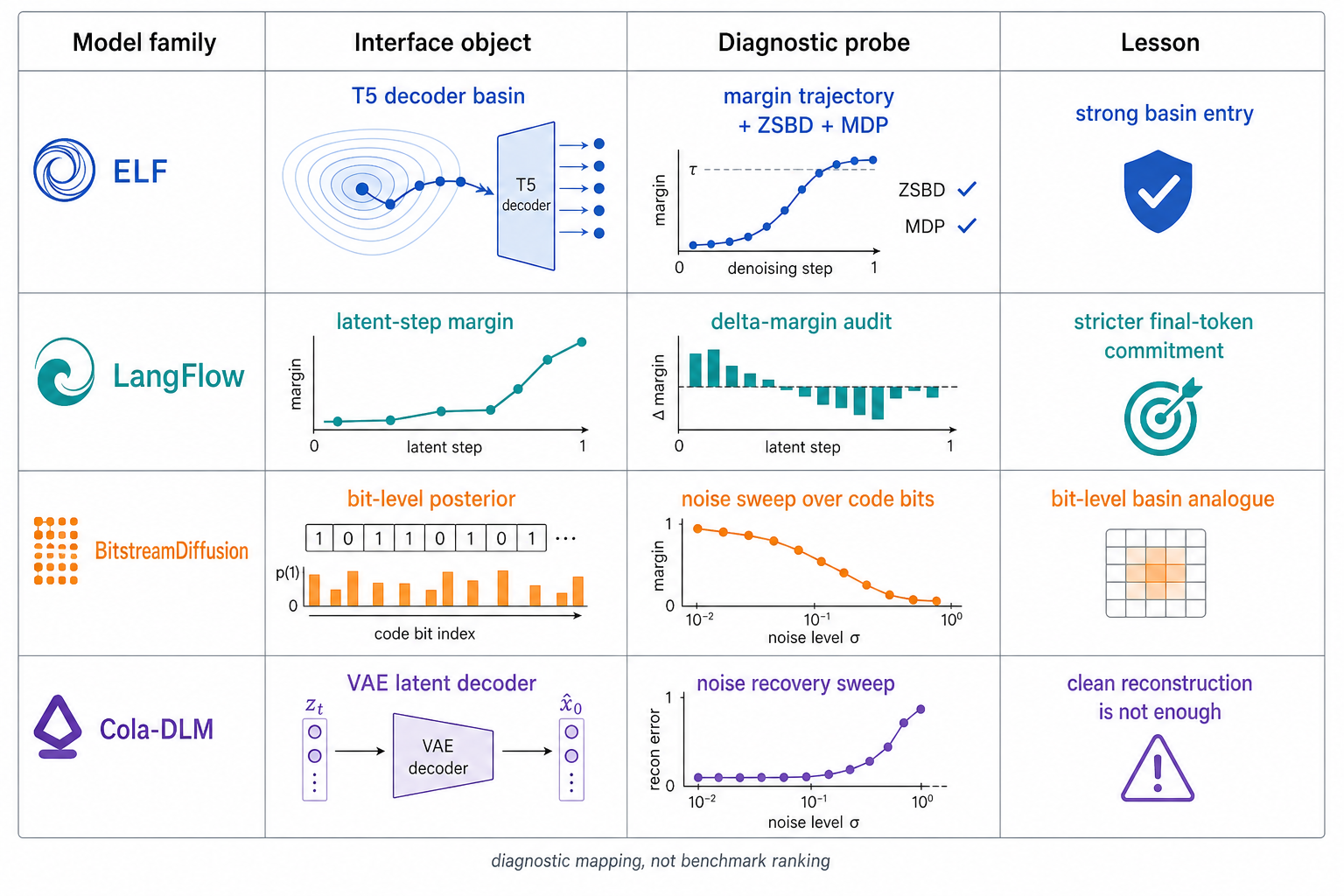}
  \caption{Interface diagnostics beyond ELF. The external checks do not turn the paper into a benchmark suite; they show how denoisability, decoder compatibility, and trajectory reliability map onto different state objects: ELF's T5 decoder basin, LangFlow's latent-step margins, BitstreamDiffusion's bit-level posterior, and Cola-DLM's VAE latent decoder under noise corruption.}
  \label{fig:external-interface-overview}
\end{figure}

\subsection{Layer-Wise Decoder Basins}
\label{sec:layer-wise-basins}

The preceding experiments test trajectories. A complementary question is whether decoder compatibility is a generic property of hidden states or a precise interface property. We therefore run a layer-wise margin sweep. For T5, we feed normalized hidden states from layers 0--6 and the final encoder state into the native ELF decoder. For BERT, RoBERTa, and GPT-2 we use their own native language-modeling heads, because cross-decoder comparisons would mostly test vocabulary and scale mismatch rather than a meaningful interface.

\figref{fig:layer-basin} shows that decoder compatibility is layer-local. After applying ELF's latent normalization, the T5 final state is highly readable by the ELF decoder: clean token recovery is $0.99996$, the 10th-percentile target margin is $15.42$, and recovery at $t=0.7$ remains $0.99993$. Earlier layers are not uniformly better or worse. Layers 0--3 are sharply decodable, layers 4--5 form a margin valley, and the final normalized encoder state re-enters the decoder basin. Native BERT and RoBERTa heads also concentrate positive margin in their final layers, while GPT-2's next-token head is not directly comparable under a same-position recovery test and remains a useful negative control. This result is small but important. It rules out the idea that any contextual hidden state with high lexical information is automatically diffusion-ready, regardless of which layer or normalization scheme produced it. The decoder basin depends on the exact layer, normalization, and readout. This is the same lesson as the PCA controlled-degradation experiment at a finer resolution: preserving information is not equivalent to entering the right tokenization region where the decoder can assign stable decisions.

\begin{figure}[!htbp]
  \centering
  \includegraphics[width=\linewidth]{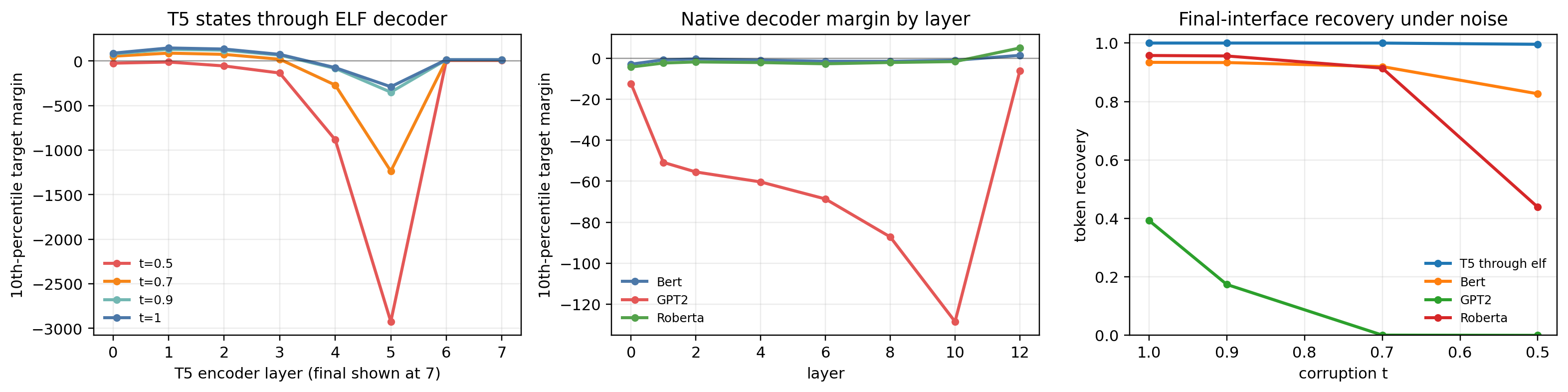}
  \caption{Layer-wise decoder-basin tests. Left: clean token recovery for hidden states decoded through their native or matched readout. Middle: 10th-percentile target margin at the clean point. Right: token recovery after a fixed corruption level. T5 states must be normalized into ELF's native latent interface, and compatibility is layer-local rather than a generic property of contextual representations.}
  \label{fig:layer-basin}
\end{figure}

\subsection{A Minimal Schedule Validates the Signal}
\label{sec:minimal-schedule}

The trajectory-reliability signal suggests a minimal schedule: use stronger self-conditioning early, default self-conditioning in the middle, and weaker self-conditioning late. \tabref{tab:multiseed} shows three-seed results. Front-loading is consistently slightly better than fixed SC=5 in terms of PPL, while preserving similar entropy. It is not an optimized sampler; it is a small intervention validating the diagnostic.

\figref{fig:frontier} places this schedule in the PPL--entropy plane, and \figref{fig:signal-ablation} shows the more important time-region control: moving the same guidance strength to the wrong part of the trajectory degrades quality. On the same seed and 512 samples, front-loading gives PPL $23.45$, while reversing the schedule (strong SC late, weak SC early) gives PPL $26.48$. This is the most controlled intervention because it keeps the same guidance values but swaps when they are applied. In contrast, simple online delta-gating is not yet a reliable sampler: true, delayed, randomized, and clipped gates all remain within a narrow band. We therefore make a conservative claim: ELF exposes trajectory information that can validate a time-region schedule, but converting this into a generally superior adaptive policy remains future work.

We also tested even more literal one-line interventions suggested by the diagnostics. We averaged final states with late clean predictions, applied token-level margin-aware temperature scaling, and used a dispersive logit scaling control. Across four low-memory runs (front-loaded 256 samples, a second front-loaded seed with 128 samples, and fixed SC=3/5 with 128 samples each), late clean-prediction averaging gives only small PPL changes: last-two-clean averaging improves by $0.03$--$0.13$ PPL, while a non-adjacent clean-prediction average improves by $0.18$--$0.36$ PPL. This suggests a weak smoothing effect near the final predicted-clean states rather than a strong adaptive policy. Margin-aware temperature is more decisive as a negative result: it consistently worsens PPL by $0.20$--$0.23$, and a shuffled-margin control is similarly bad. Averaging the final state with initial noise hurts strongly ($+0.68$--$0.98$ PPL). These controls strengthen the diagnostic framing: internal signals and decoder margins are meaningful measurements, but local one-line rules are not yet robust methods that can be trusted without further validation.

\begin{table}[!htbp]
\centering
\caption{Three-seed 1024-sample confirmation of the front-loaded self-conditioning schedule. The table reports corpus-level GPT-2-Large PPL and generated-text entropy across three sampling seeds for ELF-B and ELF-M.}
\label{tab:multiseed}
\begin{tabular}{llcc}
\toprule
Model & Setting & PPL mean $\pm$ std & Entropy mean $\pm$ std \\
\midrule
ELF-B & fixed SC=3 & $24.01 \pm 0.24$ & $5.158 \pm 0.006$ \\
ELF-B & fixed SC=5 & $22.99 \pm 0.23$ & $5.135 \pm 0.008$ \\
ELF-B & front-loaded & $\mathbf{22.82 \pm 0.23}$ & $5.135 \pm 0.008$ \\
\midrule
ELF-M & fixed SC=3 & $26.35 \pm 0.31$ & $5.244 \pm 0.008$ \\
ELF-M & fixed SC=5 & $23.85 \pm 0.41$ & $5.202 \pm 0.011$ \\
ELF-M & front-loaded & $\mathbf{23.77 \pm 0.40}$ & $5.202 \pm 0.011$ \\
\bottomrule
\end{tabular}
\end{table}

\begin{figure}[!htbp]
  \centering
  \includegraphics[width=.86\linewidth]{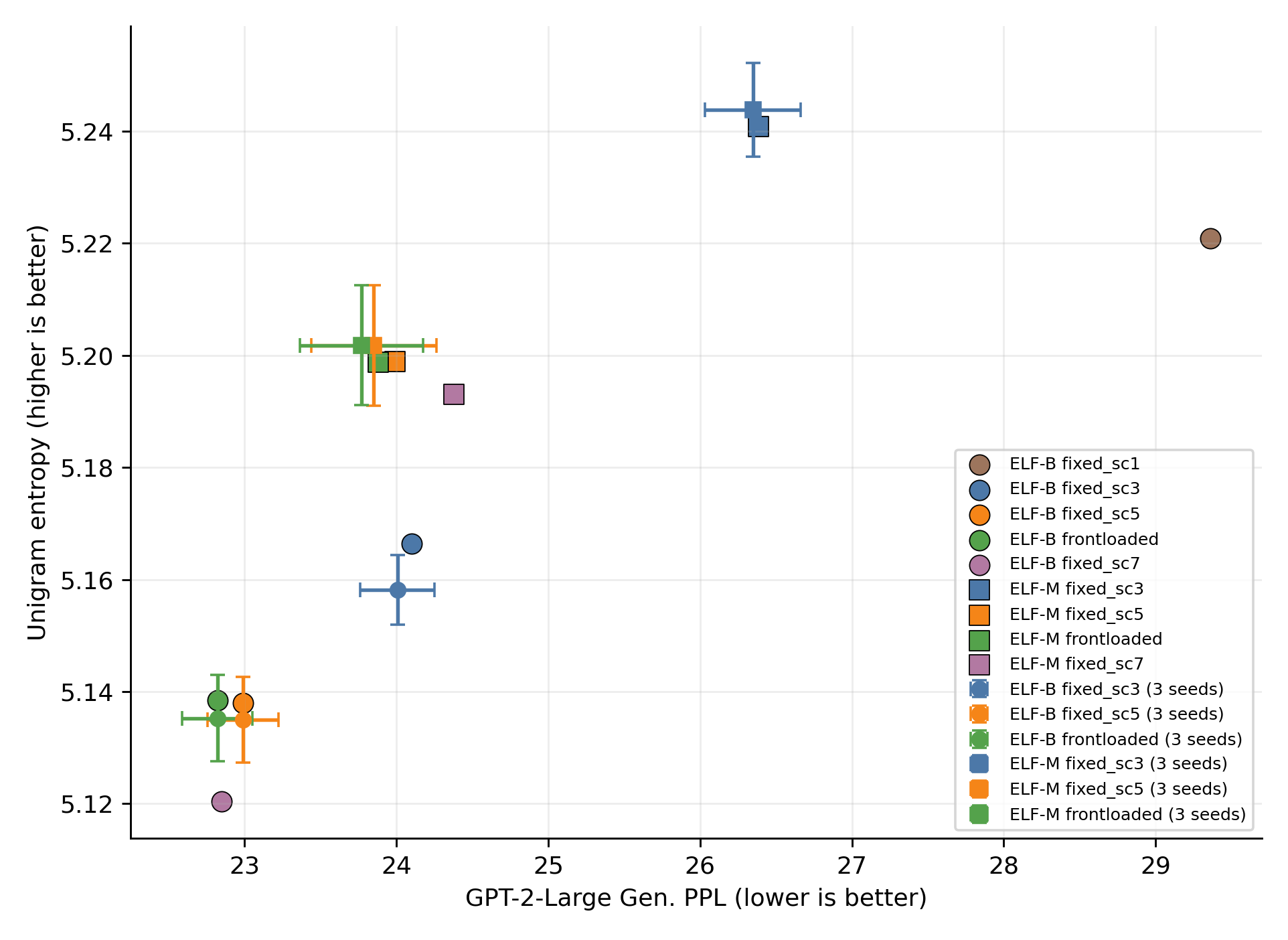}
  \caption{PPL--entropy frontier for fixed and front-loaded self-conditioning schedules. Each point is a sampling configuration evaluated with GPT-2-Large PPL and sample entropy. Front-loading moves slightly toward lower PPL without the large entropy collapse caused by stronger fixed guidance.}
  \label{fig:frontier}
\end{figure}

\begin{figure}[!htbp]
  \centering
  \includegraphics[width=.95\linewidth]{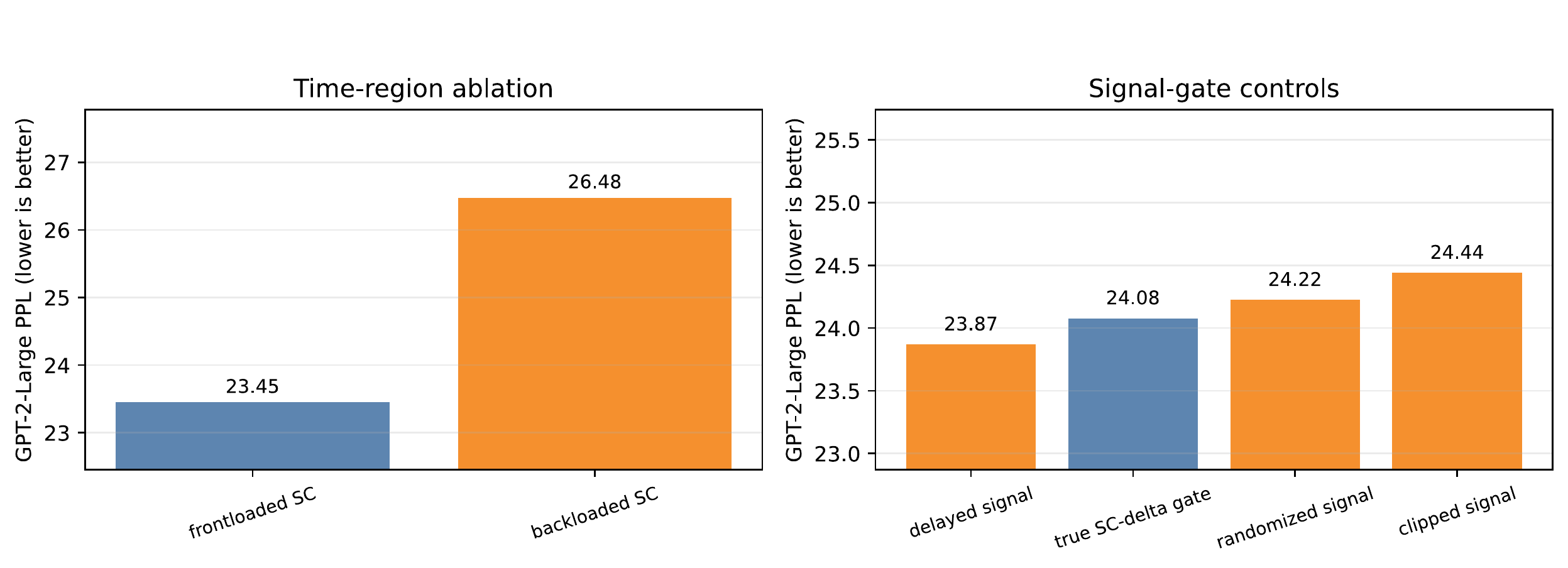}
  \caption{Time-region and signal-gate controls. Left: front-loading self-conditioning improves over the reversed schedule, confirming that the discovered time region is not arbitrary. Right: online delta-gating variants remain close to each other and do not form a mature adaptive sampler. The schedule result validates the phase signal, while the gate controls keep the method claim limited.}
  \label{fig:signal-ablation}
\end{figure}

\subsection{Three Minimal Basin Probes}
\label{sec:three-minimal-probes}

\leadin{Three probes, one basin.} The basin view naturally suggests three minimal probes. They measure the same mechanism from three views rather than define three separate decoding products. Unless otherwise specified, \emph{basin} below refers to the native decoder margin basin defined in \secref{sec:diagnostic}; ZSBD and MDP measure geometric and linear shadows of that basin, not new basin definitions. The word \emph{minimal} refers to the added modeling machinery: BGEE is a rule over an existing margin, ZSBD uses a frozen lookup, and MDP uses one linear readout. Their sample counts and monitoring costs differ, and those costs are reported separately. BGEE is a \emph{basin-timing probe}: if the trajectory has already entered a high-margin decoder basin, stop. ZSBD is a \emph{token-alignment probe}: once inside the basin, ask whether final states are already close to frozen T5 token embeddings. MDP is a \emph{linear-recoverability probe}: ask how much of the native decoder interface can be recovered by a single readout trained on generated final states. \figref{fig:three-basin-probes} summarizes this structure and makes the relationship among the three probes visually explicit.

\begin{figure}[!htbp]
  \centering
  \includegraphics[width=\linewidth]{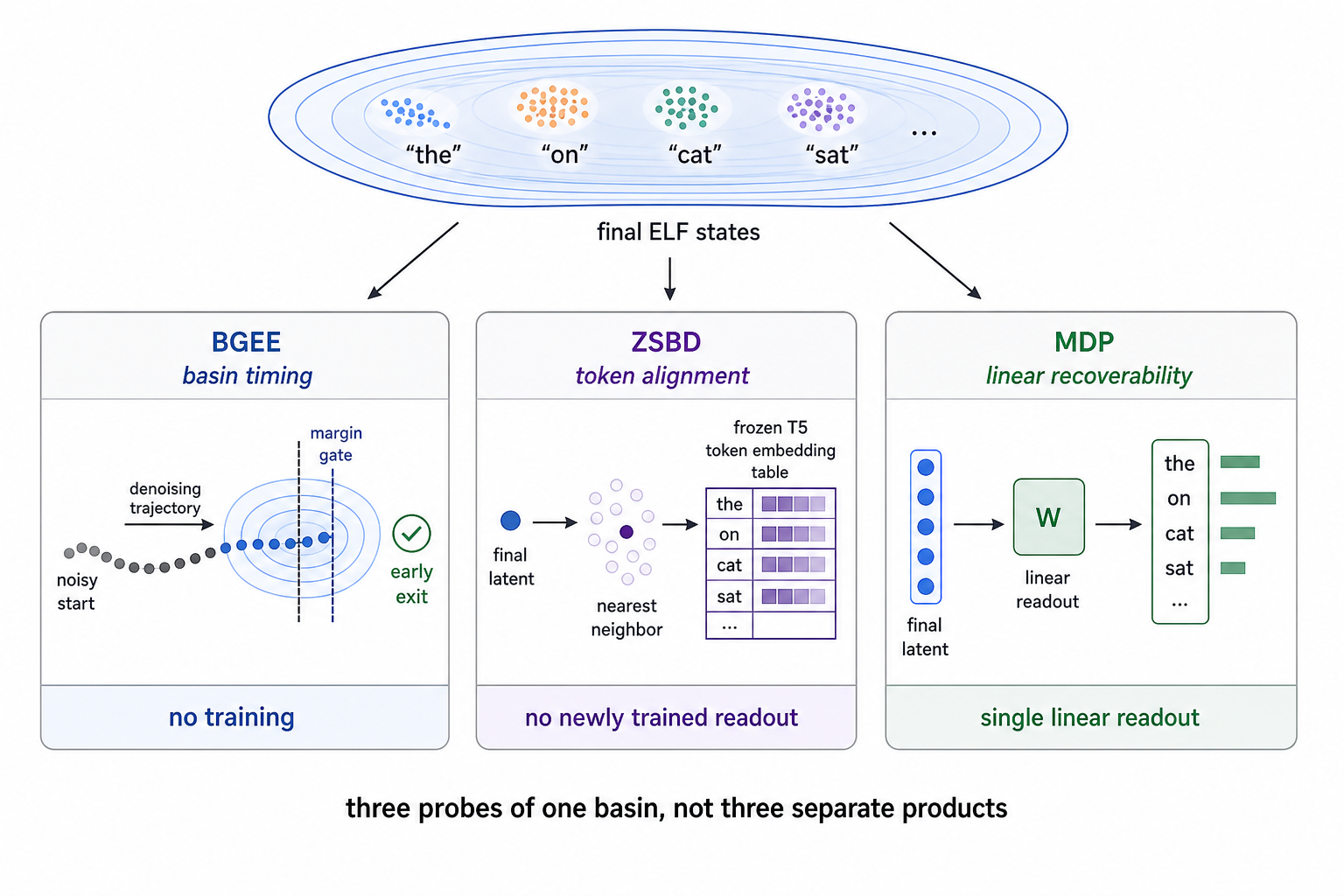}
  \caption{Three minimal probes of the decoder basin. BGEE measures basin-entry timing, ZSBD measures frozen token-embedding alignment with no newly trained readout, and MDP measures local linear recoverability. They are probes of one basin-navigation mechanism, not three separate products.}
  \label{fig:three-basin-probes}
\end{figure}

\parhead{BGEE: a basin-timing probe}
The previous schedule experiment validates that the signal matters, but it still uses a fixed number of denoising steps. A natural diagnostic question is whether the measured basin can support a minimal action: if the trajectory has already entered a high-margin decoder basin, stop. We implement \emph{Basin-Guided Early Exit} (BGEE), a training-free rule that monitors the 10th-percentile native decoder margin of the predicted clean latent. Once the margin exceeds a threshold for a short run of consecutive steps, we decode immediately. No policy is learned, no new model is trained, and the rule uses the same decoder-margin quantity used by the diagnostic.

\figref{fig:bgee} shows the speed-quality curve on ELF-B SDE64 with 512 samples. The conservative Margin-12 threshold exits at an average number of function evaluations (NFE) of $53.1$, saving $17.1\%$ of denoising steps while essentially matching full decoding (geometric PPL $18.45$ vs.\ $18.38$, mean PPL $19.24$ vs.\ $19.21$, and token agreement $0.969$ to the full decode). These geometric PPL values come from the single early-exit audit and should not be compared directly with the multi-seed arithmetic PPL table above. A more aggressive Margin-8 threshold exits at average NFE $45.4$, saving $29.1\%$ with moderate degradation (geometric PPL $19.28$, agreement $0.925$). Fixed exits can also save compute and sometimes lie on a similar frontier. BGEE contributes a decoder-facing timing criterion: it exits when the measured interface has been reached rather than at a hand-picked step.

The cross-scale confirmation in \figref{fig:bgee-crossscale} makes the same point more strongly. The conservative Margin-12 gate saves $23.0\%$ NFEs on ELF-M and $27.3\%$ on ELF-L while preserving high agreement to the full decode ($0.965$ and $0.972$). The Margin-8 gate saves $32.2\%$ and $41.2\%$ NFEs with the expected moderate degradation. We do not interpret the absolute PPL values across model sizes as scaling claims, since these public checkpoints and evaluation settings are not a controlled training sweep. The robust trend is narrower: the larger checkpoints enter the measured decoder basin earlier, and BGEE converts that earlier entry into a larger compute saving.

BGEE is not presented as a final sampler. It is a simple timing probe that follows from the mechanism. To check that the result is not purely hand-picked on the main diagnostic audit, we also perform a deterministic half-split validation: margin gates are selected on one calibration half of generated samples using an agreement-and-PPL constraint and then evaluated on the held-out half. The selected held-out gates save $16.6\%$, $23.4\%$, and $27.6\%$ NFEs on ELF-B/M/L SDE64 respectively, with token agreement above $0.96$ and small geometric-PPL changes relative to full decoding. This monotonic increase with checkpoint size is an empirical trend rather than a fitted law: the three checkpoints were not produced as a controlled scaling suite. It nevertheless gives a concrete hypothesis for future larger DLMs: earlier basin entry should either keep improving with denoiser scale, saturate at a representation-limited floor, or break under a different training recipe. We also test a random-time control that scans the same eligible denoising steps in a shuffled temporal order before applying the same margin rule. At matched compute on a 512-sample control run, ordered Margin-12 saves $15.9\%$ NFEs with geometric-PPL change $+0.13$ and agreement $0.967$, while random-time Margin-5 saves $15.5\%$ with geometric-PPL change $+0.44$ and agreement $0.958$. The gate therefore uses both the margin threshold and the ordered denoising trajectory.

The NFE numbers are algorithmic, not an optimized systems claim. The literal monitor used here decodes each predicted-clean state to compute a lower-tail margin, which adds overhead. A small wall-clock smoke test confirms the caveat: on four 32-sample GPU shards, unmonitored 64-step sampling takes $25.3$ seconds on average, per-step monitoring takes $45.5$ seconds, and a conservative batch-level early-exit implementation takes $44.0$ seconds. We also audited the simplest cheap proxy, the self-conditioning delta. It is informative but phase-dependent: it is positively related to margin early, becomes a strong negative proxy late, and is less stable across scales. It is therefore not a drop-in replacement for native-margin monitoring. An efficient implementation would need a learned cheap proxy, sparse monitoring, or dynamic batching. In this paper BGEE is diagnostic: decoder-basin entry becomes an actionable stopping criterion once a cheap monitor exists.

\begin{figure}[!htbp]
  \centering
  \includegraphics[width=\linewidth]{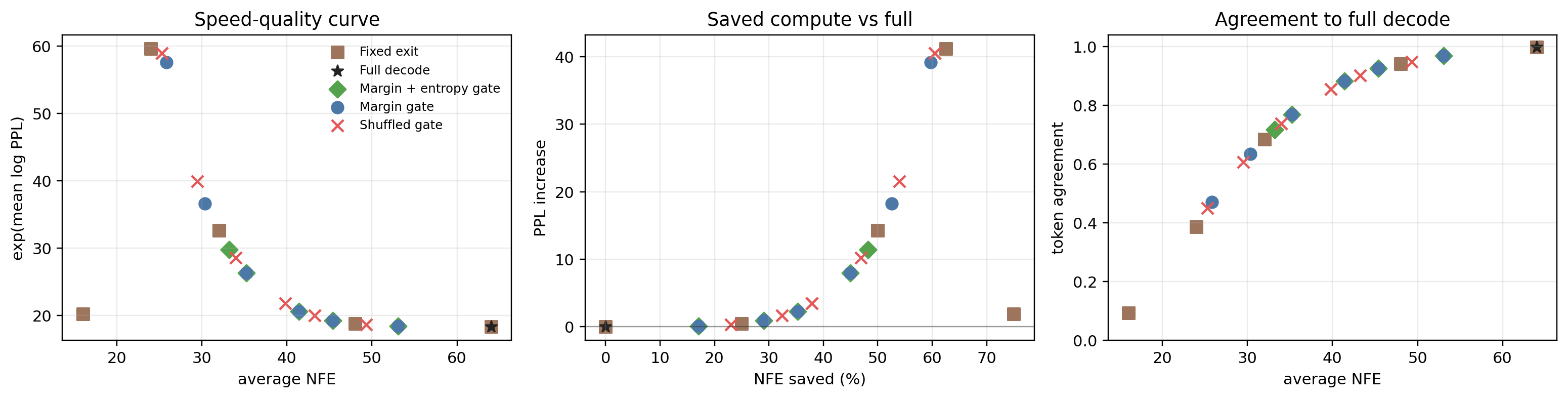}
  \caption{Basin-Guided Early Exit (BGEE) as a basin-timing probe. Left: speed-quality curve for fixed exits, margin gates, and shuffled controls. Middle: compute saved relative to the full 64-step decode. Right: token agreement with the full decode. Margin gates are not claimed to dominate every fixed exit; they show that measured native-decoder basin entry can act as an interpretable stopping signal.}
  \label{fig:bgee}
\end{figure}

\begin{figure}[!htbp]
  \centering
  \includegraphics[width=.92\linewidth]{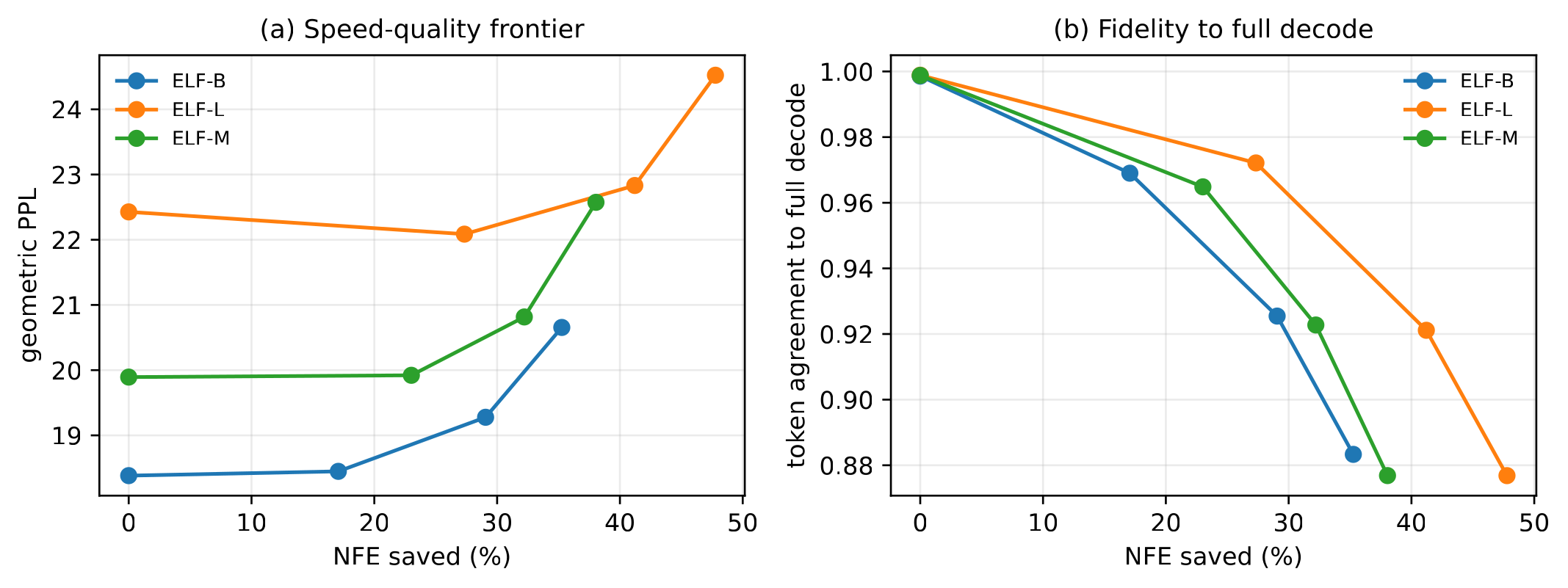}
  \caption{Cross-scale BGEE confirmation. Left: geometric PPL versus denoising compute saved for margin-gated exits across ELF-B/M/L. Right: token agreement with the full 64-step decode under the same exits. Larger checkpoints tolerate earlier exits at comparable fidelity, matching the diagnostic finding that they enter the final decoder basin earlier.}
  \label{fig:bgee-crossscale}
\end{figure}

\parhead{MDP: a linear-recoverability probe} We present MDP before ZSBD because it first estimates how linearly recoverable the native interface can become when a small matching set is allowed; ZSBD then asks the stricter no-new-readout question. Once ELF has navigated into the decoder basin, how much of the native decoder is still essential? We define \emph{Minimal Decoder Protocol} (MDP). We freeze the ELF denoiser, sample generated final states, decode them with the native decoder to obtain target tokens, and train a single token-wise linear readout from late latent states to those native tokens. Unless otherwise specified, MDP reports the best late-state variant on this generated-state manifold; the appendix lists the separate $z_T$, $\hat{x}_T$, and last-two-clean variants. This is not meant to replace the official decoder in a deployed model. It is a controlled interface test: if a small readout fails, the native decoder is doing substantial nonlinear work; if it succeeds, then the denoiser has already concentrated final states into a nearly linearly separable decoder-basin region where simple classifiers can recover most token identities.

\figref{fig:mdp} shows that both statements are partly true. With only 1024 generated latents, the best linear readout reaches about $81.6\%$ token agreement to the native decoder and remains far worse in PPL ($36.1$ vs.\ $21.7$ on the held-out split). With 4096 generated latents and 1.2M token-position training examples, the same protocol reaches $93.9$--$94.1\%$ token agreement and reduces geometric PPL to $28.0$--$28.4$, compared with $23.9$ for the native decoder on the same split. The best 4k readout also closely matches the native decoder's diversity: sample unigram entropy is $5.06$, distinct-2 is $0.714$, and 4-gram repetition is $0.0676$. Pushing the same protocol further reveals that 4k is not the saturation point: with 8192 generated latents, agreement rises to $97.0\%$ and PPL falls to $25.5$; with 16384 generated latents, agreement reaches $97.6\%$ and PPL falls to $24.8$; with 32768 generated latents, agreement reaches $97.9\%$ and PPL stays around $24.8$, compared with native PPL $23.6$ on the same held-out split. The 16k--32k gain is smaller than the 4k--8k gain, suggesting that the linear readout is approaching the native interface while retaining a stable residual calibration gap.

The cross-scale confirmation in \figref{fig:mdp-crossscale} shows the same qualitative behavior on ELF-M and ELF-L. With only 1024 generated latents, a linear readout recovers most native tokens ($84.6\%$ on ELF-M and $86.7\%$ on ELF-L), but the PPL gap remains large. Increasing the matching set to 4096 generated latents produces the same jump observed on ELF-B: ELF-M reaches $93.7\%$ agreement with PPL $33.1$ versus native PPL $26.2$, and ELF-L reaches $94.5\%$ agreement with PPL $38.1$ versus native PPL $30.3$. The margin lower tail widens in tandem, with p10 margin moving from roughly $0.41$ at 1k to roughly $1.4$ at 4k on both larger checkpoints. We therefore interpret MDP as an interface-complexity probe rather than as a recommendation to remove the native decoder.

The result sharpens our decoder claim. A direct unembedding baseline is overly stringent, and small learned decoders can be misleading when evaluated only by PPL. MDP gives a cleaner intermediate: a single readout trained to imitate the native interface can recover most decoder-basin decisions from generated latents, and the remaining gap shrinks steadily with more matching examples. Even at 32k, however, the native decoder retains a measurable calibration advantage. The residual-tail analysis below shows why: the remaining errors concentrate in low-margin, rare, numeric, subword, and token-embedding-hard cases. Most token decisions are therefore linearly recoverable after basin entry; the native decoder calibrates a small structured tail that simple linear readouts cannot fully capture.

\begin{figure}[!htbp]
  \centering
  \includegraphics[width=.96\linewidth]{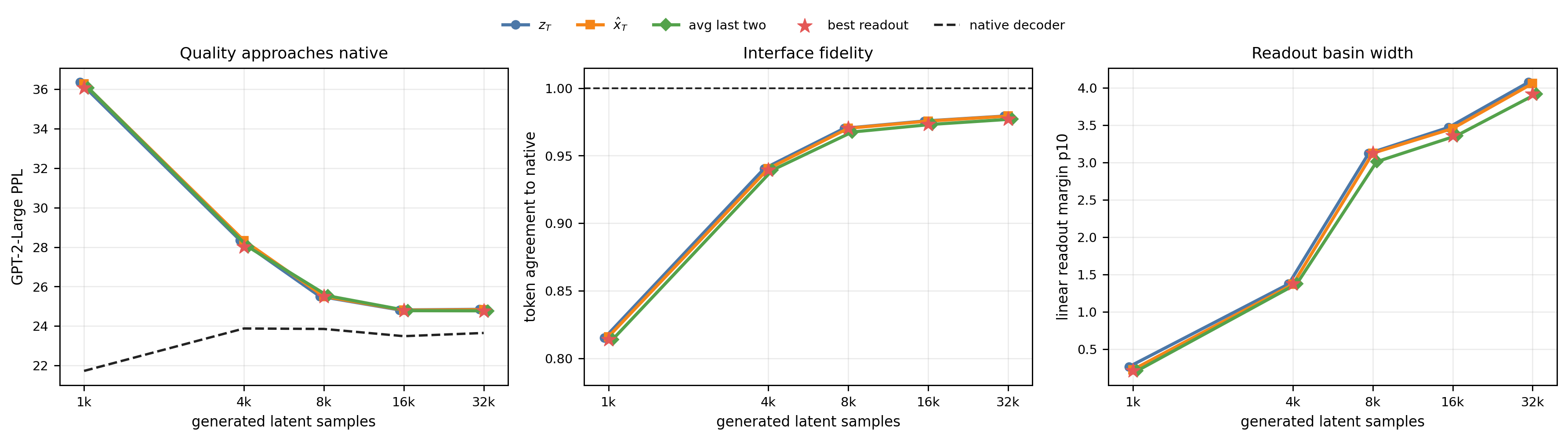}
  \caption{Minimal Decoder Protocol (MDP). Left: GPT-2-Large PPL of linear readouts trained on 1k--32k generated final latents, with the dashed line showing native decoding on the same held-out split. Middle: token agreement between the linear readout and the native decoder. Right: 10th-percentile margin of the learned linear readout, measuring the width of the recovered readout basin. A single linear readout recovers most of the native interface, while the remaining PPL gap shows that the native decoder still calibrates the tail.}
  \label{fig:mdp}
\end{figure}

\begin{figure}[!htbp]
  \centering
  \includegraphics[width=.92\linewidth]{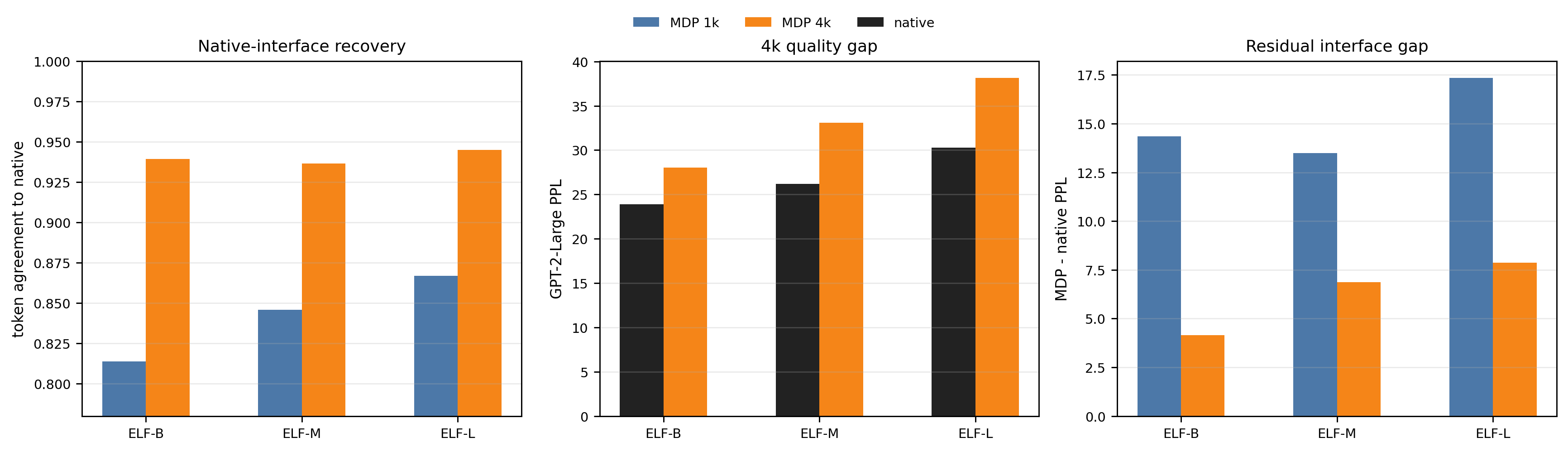}
  \caption{MDP cross-scale confirmation. Left: native-token agreement for 1k and 4k matching sets across ELF-B/M/L. Middle: GPT-2-Large PPL for the corresponding linear readouts where evaluated. Right: residual gap to native decoding. The 4k readout recovers roughly $94\%$ of native tokens across checkpoints, making MDP a probe of decoder-interface complexity rather than a replacement decoder.}
  \label{fig:mdp-crossscale}
\end{figure}

\parhead{ZSBD: a token-alignment probe} MDP still trains a readout. We next ask a stricter minimal-readout question: after ELF has entered the decoder basin, can the state be decoded with \emph{no newly trained readout at all}? In \emph{Zero-Shot Basin Decoding} (ZSBD), we normalize each final latent vector and each frozen T5 token embedding, then choose the nearest token by cosine similarity. This is not the native contextual decoder; it is a frozen Voronoi lookup in the original token-embedding table. It is therefore a strong interface-specific prior, not a prior-free decoder: success means that ELF has transported final states toward its own tied T5 token-embedding geometry.

The result in \figref{fig:zsbd-rbn} is compelling, though it does not replace native decoding. On 4096 ELF-B generated latents, nearest-neighbor lookup from either $z_T$ or the final clean prediction recovers $93.45\%$ of the native decoder's tokens. Evaluated on 1024 samples with GPT-2-Large, ZSBD obtains PPL $32.56$. This is worse than the native decoder and worse than the best MDP readout, but it is far from failure: ELF final states have moved close enough to token-embedding Voronoi cells that a frozen lookup recovers most native decisions. A 16384-sample sanity check gives the same conclusion ($93.34\%$ agreement for $z_T$ and $\hat{x}_T$). The cross-scale check is stable and slightly stronger at larger checkpoints: using the same protocol on 4096 generated latents, ELF-B/M/L obtain $93.45\%$, $95.22\%$, and $95.57\%$ agreement, respectively (\figref{fig:zsbd-crossscale}). ZSBD therefore sharpens the basin-navigation picture. The denoiser is not merely producing arbitrary continuous vectors that require a complex decoder to interpret; it is transporting states toward a token-aligned region. The native decoder still matters for the remaining tokens and for calibration, but the decoder basin has a simple geometric shadow.

We then ask whether this token-embedding alignment is a final-state accident or a trajectory phenomenon. \figref{fig:intermediate-zsbd} decodes intermediate predicted-clean states by the same frozen nearest-neighbor lookup. The agreement rises with basin entry. On 512 ELF-B SDE32 samples, ZSBD-to-native agreement is essentially zero at step 0, reaches $0.61$ around phase $0.52$, $0.87$ around phase $0.77$, and ends at $0.93$. On 512 ELF-M SDE64 samples, it reaches $0.79$ around phase $0.51$ and $0.96$ at the final step. On 256 ELF-L SDE64 samples, it reaches $0.75$ already around phase $0.38$ and ends at $0.96$. The ELF-B ODE32 control follows the same qualitative curve and ends at $0.93$. The mean nearest-neighbor cosine increases in parallel with the native decoder margin. This directly links ZSBD to basin navigation: token-embedding Voronoi alignment emerges during the same process that makes the native decoder readable, and the transition moves earlier under same-interface denoiser scale. The overview in \figref{fig:basin-probe-overview} places this trajectory-level ZSBD curve beside basin-entry timing and held-out BGEE savings, showing that the three probes are different projections of the same basin state, with BGEE measuring timing, ZSBD measuring geometric alignment, and MDP measuring local linearity.

\begin{figure}[!htbp]
  \centering
  \includegraphics[width=\linewidth]{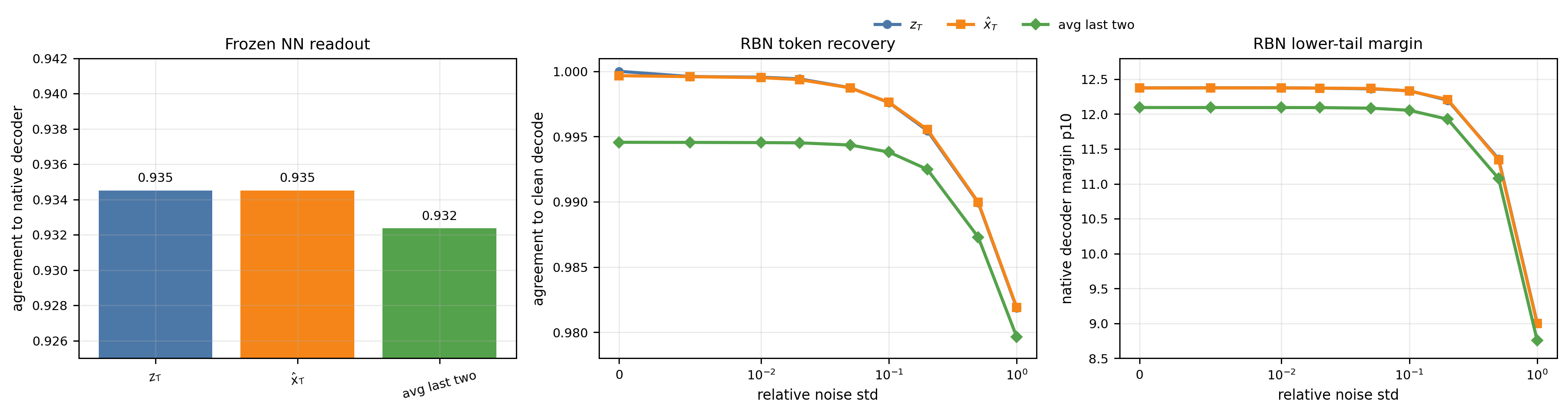}
  \caption{Zero-shot and reverse-basin tests. Left: ZSBD decodes final ELF states by nearest-neighbor lookup in frozen T5 token embeddings and recovers over $93\%$ of the native decoder's token decisions with no newly trained readout, while using the tied T5 token table. Middle/right: RBN corrupts final states and decodes with the native decoder; token recovery and margins degrade slowly under isotropic noise, showing that the decoder basin is wide but absorbing under random perturbations.}
  \label{fig:zsbd-rbn}
\end{figure}

\begin{figure}[!htbp]
  \centering
  \includegraphics[width=.92\linewidth]{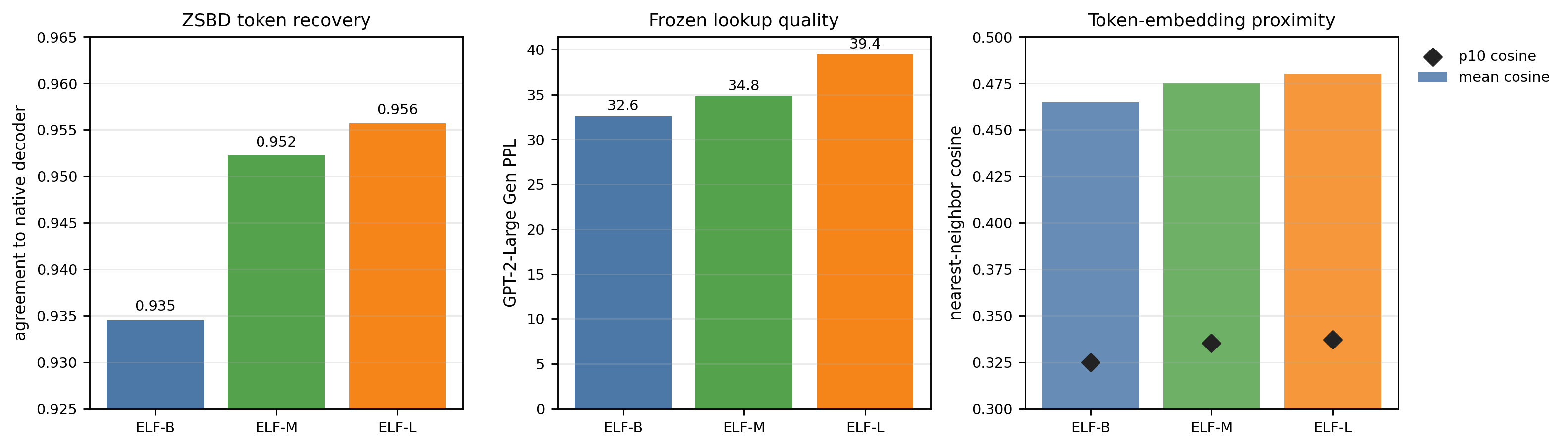}
  \caption{ZSBD cross-scale confirmation. Left: frozen nearest-neighbor agreement with the native decoder, using the T5 token-embedding table and no newly trained readout. Middle: GPT-2-Large PPL of the ZSBD outputs where evaluated. Right: stability of the lookup as sample count increases. Agreement improves slightly with model scale, but PPL remains worse than native decoding; ZSBD is evidence for token-aligned basin entry, not for removing the decoder.}
  \label{fig:zsbd-crossscale}
\end{figure}

\begin{figure}[!htbp]
  \centering
  \includegraphics[width=\linewidth]{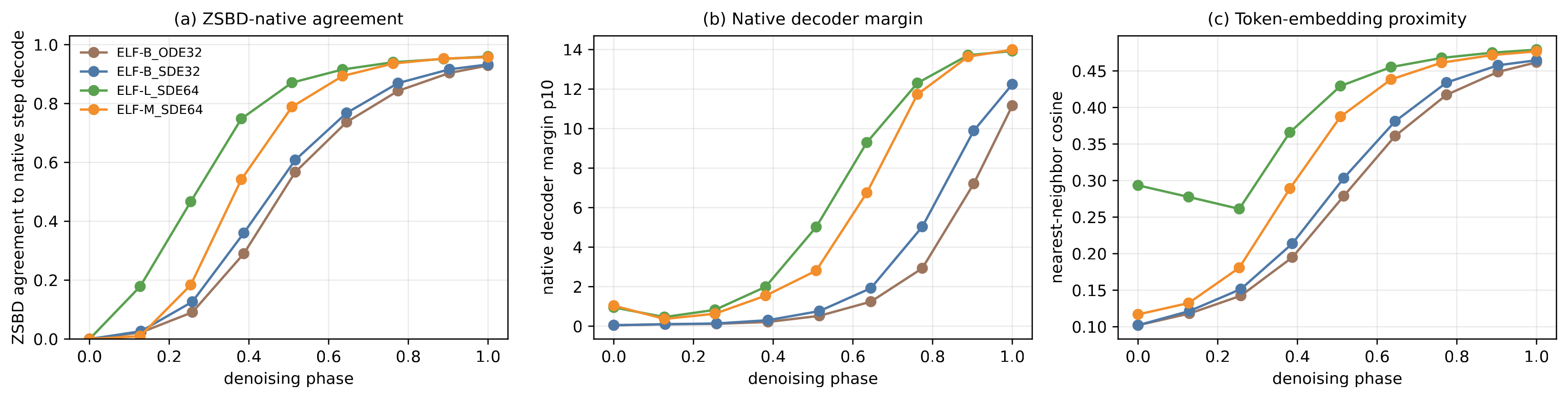}
  \caption{Intermediate-step ZSBD at larger scale. Left: ZSBD-native agreement over normalized denoising phase. Middle: native lower-tail decoder margin over the same phase axis. Right: nearest-neighbor cosine to frozen T5 token embeddings. Frozen token lookup becomes reliable as the trajectory enters the decoder basin, and the transition appears earlier for larger checkpoints and under both SDE and ODE sampling.}
  \label{fig:intermediate-zsbd}
\end{figure}

\begin{figure}[!htbp]
  \centering
  \includegraphics[width=\linewidth]{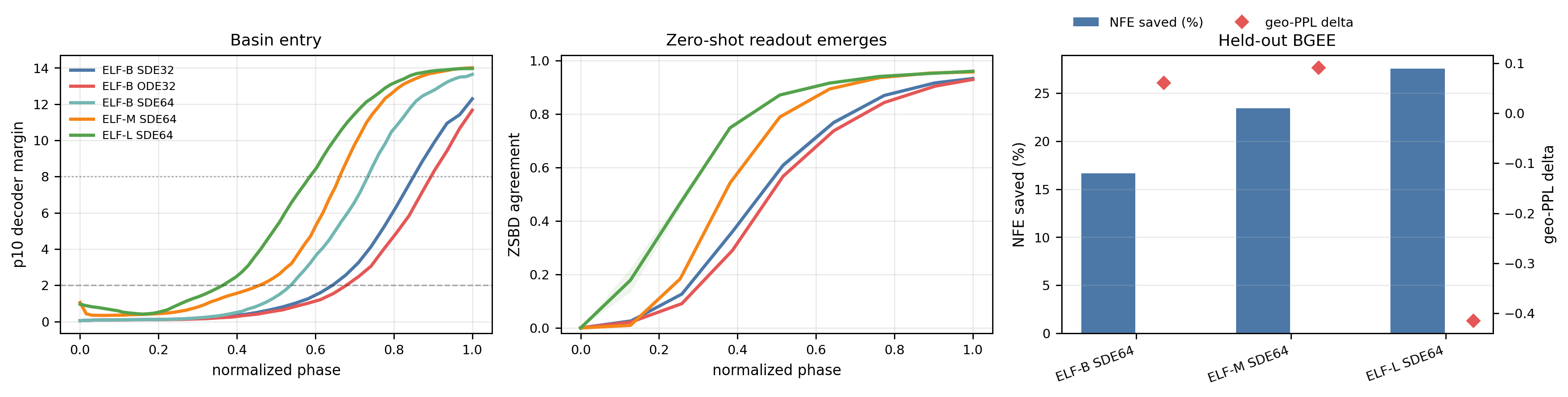}
  \caption{Basin probes in one view. Left: ELF checkpoints enter the decoder basin at different normalized phases. Middle: frozen ZSBD readout emerges along the same trajectory, with shard-level SEM where independent shards are available. Right: a held-out BGEE gate turns earlier basin entry into algorithmic NFE savings under a margin monitor.}
  \label{fig:basin-probe-overview}
\end{figure}

\parhead{Basin geometry after entry}
We also run the reverse experiment. Starting from generated final states, RBN adds isotropic Gaussian noise and decodes with the native ELF decoder. If final-state editing under a Basin-Constrained Decoding (BCD) sentiment direction failed only because the decoder basin were extremely tight, small random perturbations should immediately destroy token recovery. The opposite happens. For ELF-B $z_T$, relative noise standard deviation $0.5$ still gives $98.99\%$ agreement to the clean native tokens, and even noise $1.0$ leaves $98.19\%$ agreement with 10th-percentile margin around $9.0$. The effect is stronger at larger scale: at noise standard deviation $1.0$, ELF-M and ELF-L retain $99.21\%$ and $99.43\%$ agreement, respectively, with 10th-percentile margins above $10$ (\figref{fig:rbn-crossscale}). Thus the decoder basin is not locally fragile under isotropic noise.

\leadin{Wide but anisotropic.} \figref{fig:directional-rbn} compares matched-norm directions on ELF-B final states. Isotropic noise, one fixed random direction, and the BCD sentiment direction all retain about $98.2\%$ token agreement at scale $1.0$, with p10 margins near $8$--$9$. In contrast, the first two PCA directions of the generated latent cloud cross the basin boundary: agreement falls to $64.0\%$ and $44.1\%$, and p10 margins collapse to $0.46$ and $1.14$. The cross-scale check in \figref{fig:directional-rbn-crossscale} confirms the qualitative pattern while showing that the fragile principal direction is checkpoint-dependent. ELF-M is especially fragile along its first PCA direction (agreement $0.015$ at scale $1.0$), whereas ELF-L is more fragile along its second PCA direction (agreement $0.489$); isotropic and sentiment directions remain above $0.99$ on the same checkpoints. This raises a natural question: is the extreme ELF-M PC1 a linguistic attribute direction, or a decoder-interface direction? The follow-up audit in \figref{fig:elfm-pc1-axis-audit} answers the question in favor of the second interpretation: ELF-M's fragile PC1 is not a clean sentiment, tense, or formality axis, but aligns with the frozen T5 token table's punctuation and token-boundary anisotropy. This explains why decoder-basin sentiment editing is a boundary result rather than a contradiction. Late ELF states are robust under many perturbations, but high-variance data-manifold directions can leave the decoder basin. The final state is therefore excellent for stopping and readout probes, but poor as a generic semantic editing canvas.

\figref{fig:inside-decoder-basin-summary} summarizes the mechanism that emerges from these probes. Entry is measurable: margins and frozen token lookup rise during denoising. Readout is simple inside the decoder basin: a single linear readout recovers nearly all native token decisions on the generated manifold. But the basin is not a license for arbitrary latent editing. It is wide in random directions, absorbing along the simple sentiment direction, and fragile along certain high-variance directions. The remaining quality gap is concentrated in a structured residual tail that the native decoder calibrates better than a frozen lookup or linear readout, especially on rare and out-of-domain tokens.

\begin{figure}[!htbp]
  \centering
  \includegraphics[width=\linewidth]{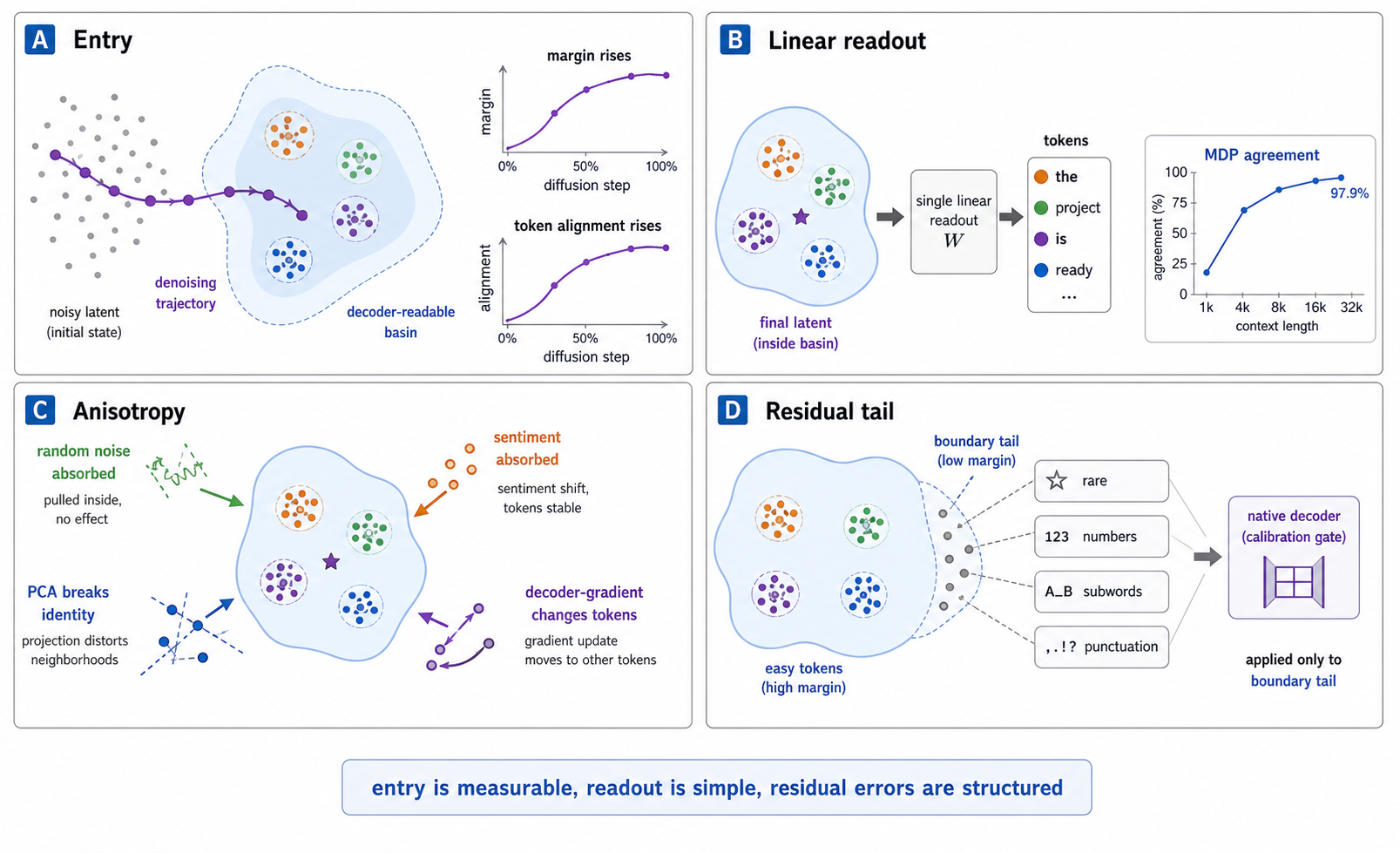}
  \caption{Inside the decoder basin. Panel A illustrates basin entry: margin and frozen token-embedding alignment rise together as denoising proceeds. Panel B summarizes MDP: inside the decoder basin, most native token decisions are recoverable by a single linear readout as the generated-latent matching set grows. Panel C shows anisotropy: random and sentiment-direction perturbations are largely absorbed, while PCA and decoder-gradient directions expose the fidelity-control trade-off. Panel D explains the residual tail: the native decoder still calibrates low-margin, rare, numeric, subword, and punctuation cases that simple readouts handle less reliably.}
  \label{fig:inside-decoder-basin-summary}
\end{figure}

\leadin{Residual tail.} The ZSBD/MDP gap is concentrated in a fragile lower tail rather than spread uniformly. \figref{fig:fragile-tokens} repeats the residual analysis on the 32k MDP saturation run, using 4096 held-out generated samples and $4.19$M valid token positions. The single linear readout agrees with the native decoder on $97.89\%$ of tokens, but its errors are not random. The native 10th-percentile margin over all tokens is $12.49$, whereas the 10th-percentile margin among MDP-mismatched tokens is only $0.37$. Very-low native-margin tokens have a $9.69\%$ MDP error rate, compared with $0.45\%$ or less once the native margin leaves the bottom quintile. ZSBD-hard tokens are also MDP-hard: when frozen token-embedding lookup is wrong, the MDP error rate is $23.36\%$, versus only $0.60\%$ when ZSBD is correct. Frequency and type give a complementary view. Rare tokens have an $8.06\%$ MDP error rate, numbers have $8.69\%$, and subwords have $5.21\%$, while punctuation and common word-start tokens are much easier. Position bins are nearly flat. The 64-sample cross-scale tail supplement in \tabref{tab:tail-scale-supplement} refines this picture: rare and numeric ZSBD errors fall sharply from ELF-B to ELF-M/L, while a very-high-frequency ZSBD floor persists. The operational implication is sharp. Most token decisions are already made easy by basin entry; the remaining quality gap is dominated by a small, structured tail of low-margin, token-embedding-hard, and lexical-boundary cases that the native decoder calibrates better than a single linear readout.

\begin{figure}[!htbp]
  \centering
  \includegraphics[width=.82\linewidth]{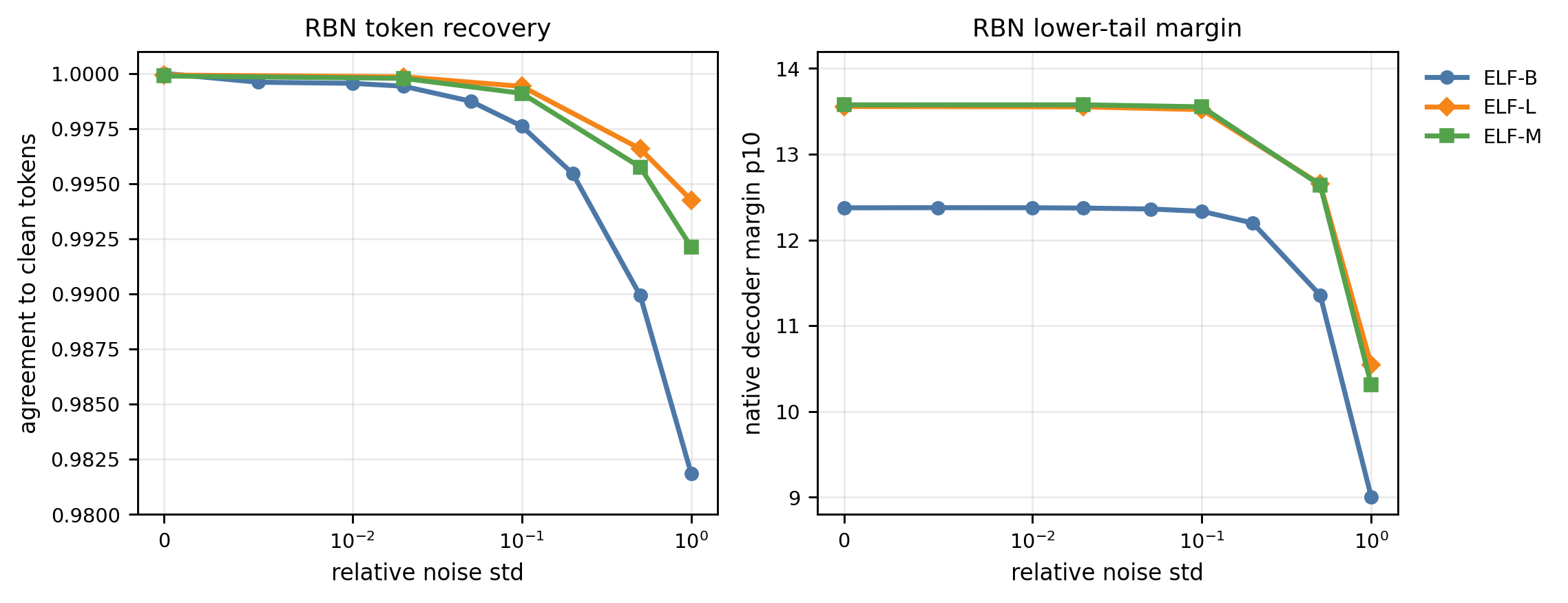}
  \caption{RBN cross-scale confirmation. Left: token agreement after adding isotropic noise to final ELF states. Right: corresponding native decoder margin under the same perturbations. Across ELF-B/M/L, token identity remains stable under large random perturbations, ruling out the explanation that failed final-state steering is caused by a tiny local basin.}
  \label{fig:rbn-crossscale}
\end{figure}

\begin{figure}[!htbp]
  \centering
  \includegraphics[width=\linewidth]{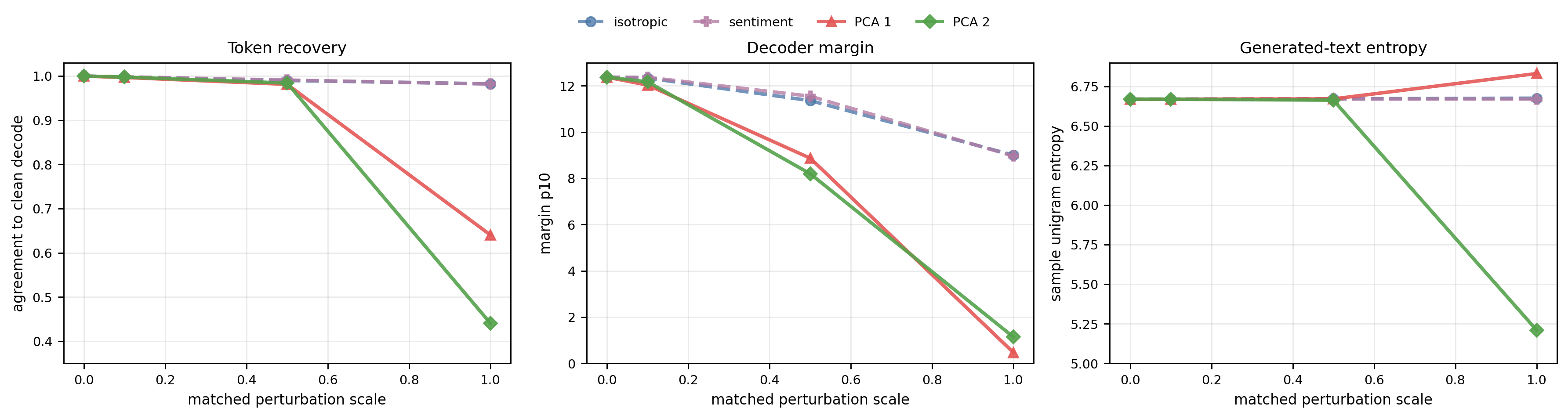}
  \caption{Directional reverse-basin navigation. Left: token agreement under isotropic and matched-norm directional perturbations. Middle: native margin under the same directions. Right: decoder entropy or quality proxy for the perturbed decodes. The final ELF basin is robust to isotropic and random directions but much narrower along dominant PCA directions, revealing anisotropy rather than a uniformly wide or tight basin.}
  \label{fig:directional-rbn}
\end{figure}

\begin{figure}[!htbp]
  \centering
  \includegraphics[width=\linewidth]{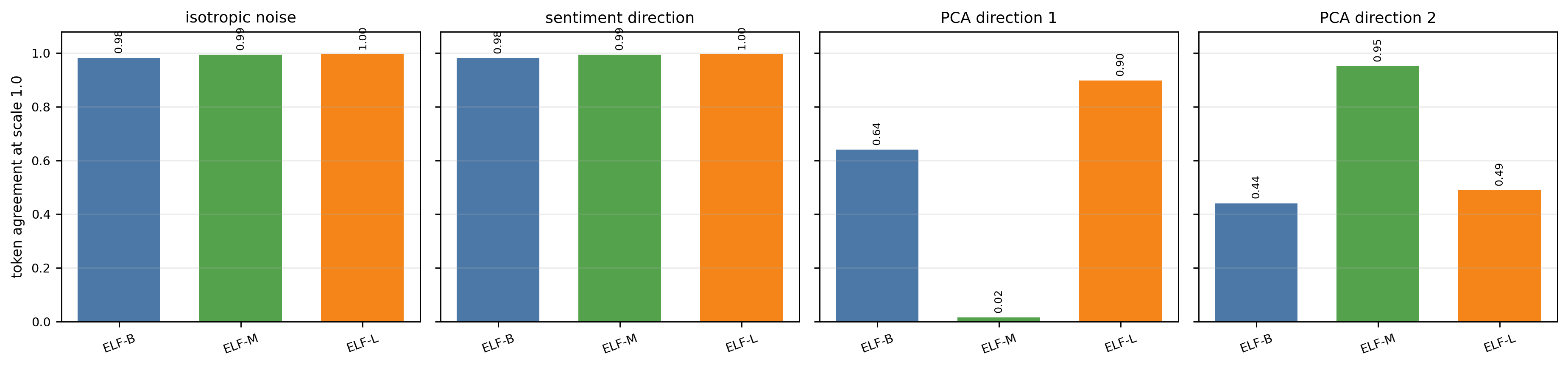}
  \caption{Cross-scale directional RBN. From left to right, the panels compare isotropic noise, a sentiment direction, the first PCA direction, and the second PCA direction at matched perturbation scale across ELF-B/M/L. Isotropic and sentiment directions remain mostly inside the decoder basin, while at least one high-variance PCA direction sharply reduces native-token agreement. The fragile direction changes with checkpoint scale, but anisotropy persists.}
  \label{fig:directional-rbn-crossscale}
\end{figure}

\begin{figure}[!htbp]
  \centering
  \includegraphics[width=\linewidth]{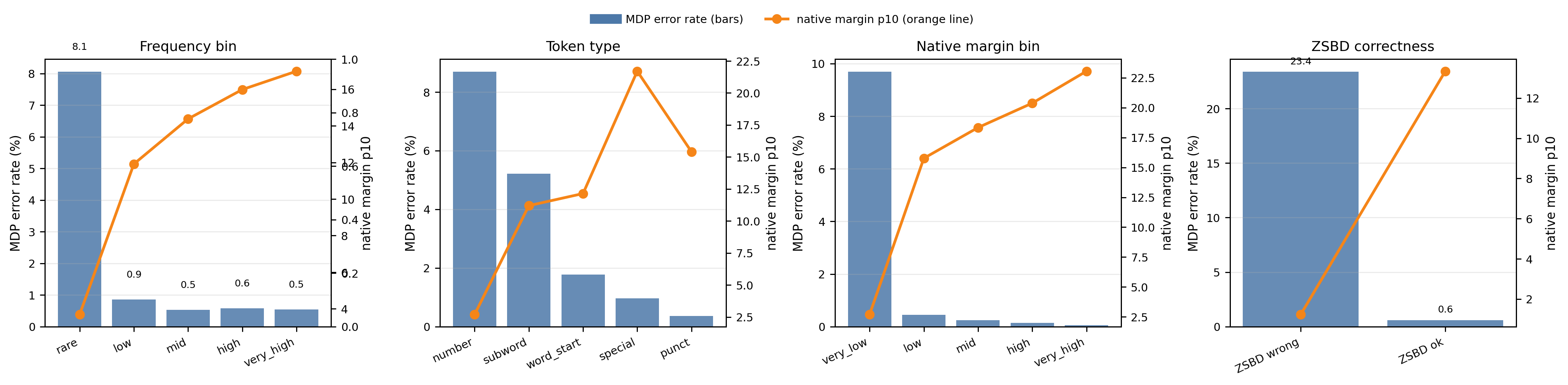}
  \caption{32k MDP residual-tail analysis. Left: error rate by native margin bucket. Middle-left: overlap between MDP errors and ZSBD-hard tokens. Middle-right: token-frequency and token-type buckets. Right: position bucket effects. A single linear readout recovers $97.89\%$ of native tokens on 4096 held-out generated samples, but the remaining errors concentrate in low-margin, ZSBD-hard, rare, numeric, and subword cases.}
  \label{fig:fragile-tokens}
\end{figure}

\leadin{Same-token clean-manifold gap.} The residual tail could mean either that ELF final states are intrinsically hard to decode, or that they are readable but do not fully return to the clean T5 interface manifold. We separate these explanations with a paired audit on 1024 generated ELF-B samples. For each native-decoded token sequence, we feed the exact token ids through the frozen T5 encoder, apply ELF's latent normalization, and compare native-decoder target margins for this clean interface state, the last predicted clean state $\hat{x}_T$, the average of the last two clean predictions, and the final latent $z_T$. \figref{fig:clean-generated-basin-gap} shows a lower-tail gap, not a global confidence collapse. Clean T5 states recover the same target tokens at $99.996\%$ and have much deeper target-margin lower tails: their p01, p05, and p10 margins exceed $z_T$ by $7.98$, $4.53$, and $1.56$ margin units, respectively. The median gap is negative ($-0.85$), so generated states can be more confident on easy positions while remaining shallower in the fragile tail. Thus final ELF states are decoder-readable but still off the clean T5 manifold in the part that matters for residual calibration.

\begin{figure}[!htbp]
  \centering
  \includegraphics[width=.92\linewidth]{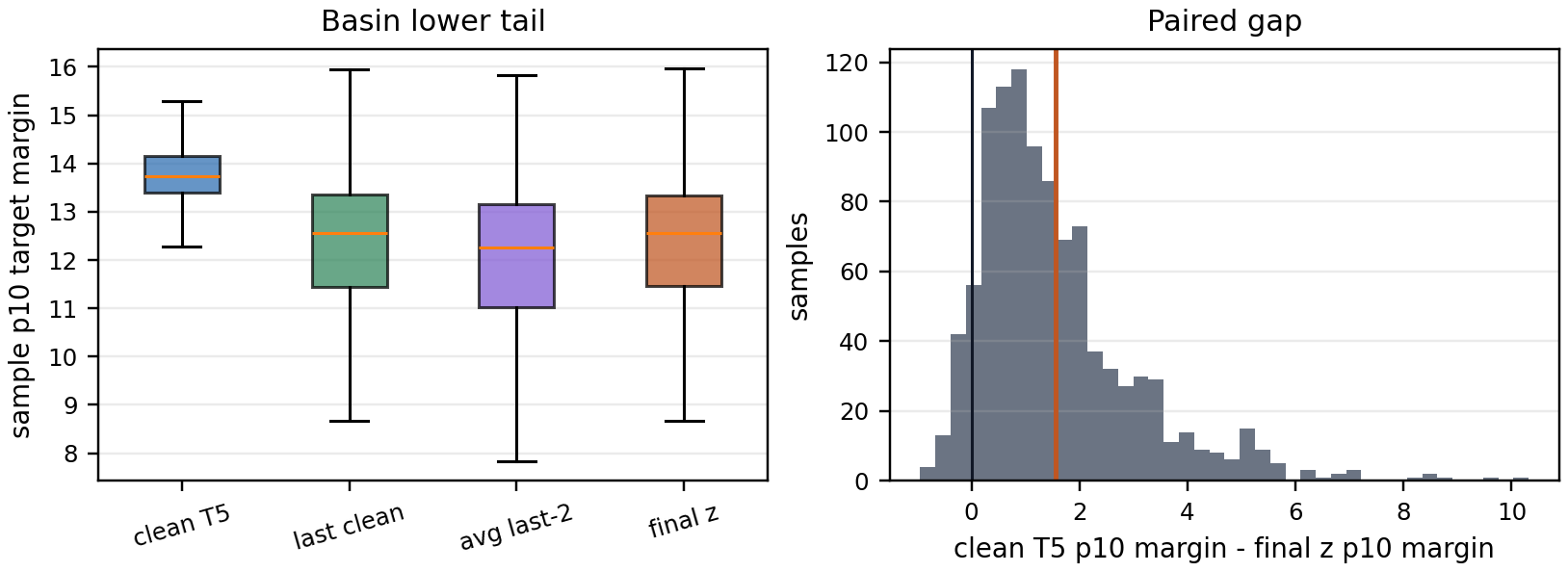}
  \caption{Paired clean-vs-generated decoder-basin depth. For the same generated token sequences, clean T5 interface states have a tighter and deeper lower tail than generated final states. Left: sample-level p10 target margin for clean T5, last predicted clean state, average of the last two clean predictions, and final latent. Right: paired p10 gap between clean T5 and $z_T$; the orange line marks the mean gap ($1.56$) and the black line marks zero. The effect is concentrated in the lower tail rather than the median, supporting the view that the residual PPL gap is a tail-calibration issue rather than a failure of bulk readability.}
  \label{fig:clean-generated-basin-gap}
\end{figure}

\leadin{Tail-gating check.} A final check asks whether this residual tail is predictable enough to route selectively, rather than replacing the native decoder everywhere. We train the same 32k MDP linear readout and then form hybrid outputs: use the linear readout for bulk tokens, but route suspected tail positions to the native token decision. The oracle upper bound routes only the MDP errors; practical gates use only deployable signals such as low linear-readout confidence, MDP--ZSBD disagreement, and predicted rare/numeric/subword token types. \figref{fig:rtgr} shows the result. MDP alone has PPL $24.82$ and $98.01\%$ agreement on the 4096-sample held-out split. Routing the lowest-confidence $5\%$ of token positions covers $94.6\%$ of MDP errors and improves PPL to $23.85$; routing the lowest-confidence $10\%$ covers $98.8\%$ of errors and reaches PPL $23.82$, close to the oracle-tail upper bound ($23.80$) and native decoding ($23.65$). The broader practical gate routes $33.9\%$ of positions and reaches PPL $23.81$, but this is less parsimonious than the confidence gate. This tail-gated readout is a diagnostic, not a new decoder: most decisions are linearly readable inside the decoder basin, and the native decoder is most valuable on a small, predictable residual tail that can be targeted with lightweight routing heuristics.

\begin{figure}[!htbp]
  \centering
  \includegraphics[width=\linewidth]{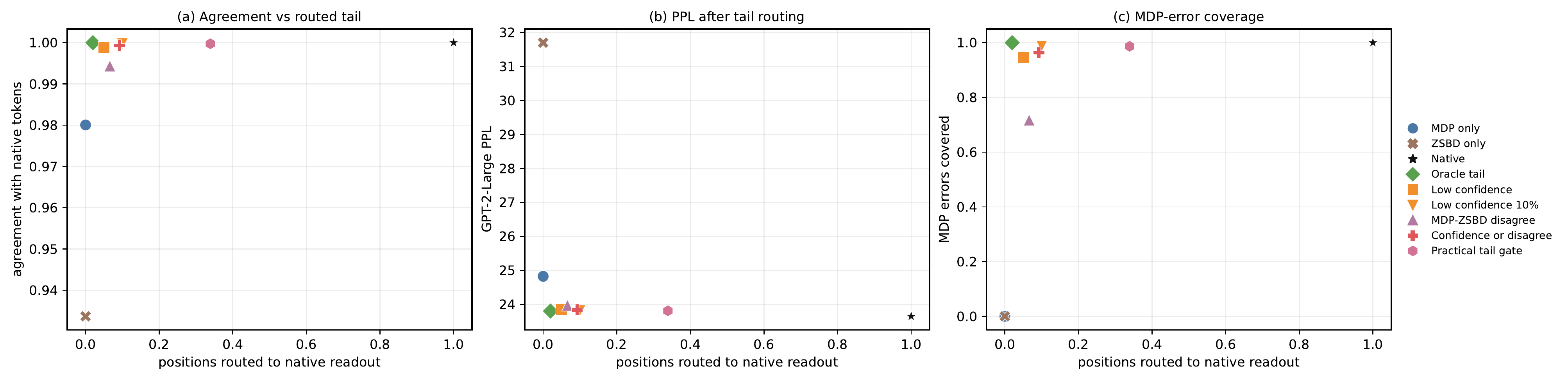}
  \caption{Tail-gated readout as a tail-calibration check. Left: native-token agreement versus the fraction of token positions routed from the linear readout to the native decision. Middle: GPT-2-Large PPL under the same routing policies. Right: fraction of MDP errors covered by each gate. Low MDP-confidence gates route only $5$--$10\%$ of token positions yet cover most MDP errors and nearly close the PPL gap, supporting the view that the native decoder primarily calibrates a predictable residual tail on generated final ELF states.}
  \label{fig:rtgr}
\end{figure}

This closes the constructive probe part of \secref{sec:experiments}. The remaining subsections are calibration and boundary checks: they test whether the basin story survives multi-metric evaluation, a learned-latent system, training-side perturbations, and the simple explanations ruled out at the end of the section.

\subsection{PPL, MAUVE, and Decoder Calibration}
\label{sec:decoder-calibration}

\leadin{Metric cross-checks.} \figref{fig:boundary} adds objective checks that prevent overclaiming. First, MAUVE does not perfectly track GPT-2 PPL, but the 2048-sample MAUVE estimate supports the qualitative schedule story: front-loading improves MAUVE for ELF-B ($0.956$ vs.\ $0.950$ for fixed SC=3) and ELF-M ($0.961$ vs.\ $0.955$ for fixed SC=5 and $0.945$ for fixed SC=3). Second, entropy-matched decoder calibration shows that low PPL can be a decoder artifact. A learned linear decoder produces low-PPL argmax text, but only moderate token agreement with the official decoder; when sampled at a temperature chosen to approach the official decoder's entropy, PPL worsens sharply. Third, the Cola-DLM VAE noise sweep shows a narrow decoder-compatible basin: clean recovery is almost perfect, but token recovery collapses rapidly around mid-noise corruption levels, exposing a sharp boundary in decoder compatibility.

The decoder results refine the direct-unembedding baseline. A direct linear map may be overly stringent, so we train small learned decoders from final ELF latents to the official decoder's argmax tokens. This is a more targeted baseline because the target interface is the official decoder itself. With a 1024-latent matching set, the two-layer multi-layer perceptron (MLP) decoder obtains lower GPT-2-Large PPL than the official decoder ($16.29$ vs.\ $18.25$), but token agreement to the official decoder is only $0.737$. After entropy-matched sampling, its PPL rises to $46.34$ and agreement drops to $0.631$. \figref{fig:ppl-entropy-scatter} shows the per-sample operating regions behind this effect: schedules and decoders occupy clouds in the PPL--entropy plane rather than a single scalar ranking. This does not contradict the later 32k MDP result. MDP uses many generated final states to imitate the native argmax interface and is judged primarily by agreement; the small-decoder calibration test shows that low external PPL alone can validate the wrong operating region. The result is more subtle than direct failure: small decoders can reduce PPL, but they do so by changing the operating region. This supports the claim that the official decoder is not an incidental readout. It defines the basin in which the denoiser's final states are meaningful.

We also run a model-level entropy calibration against GPT-2-small (\figref{fig:entropy-calibrated-gpt2}). This is a conservative check prompted by the known PPL failure mode in diffusion LMs. We decode the same ELF final logits under temperatures $T\in\{0.4,0.6,0.8,1.0,1.2,1.5\}$ and generate GPT-2-small samples under a temperature sweep. ELF's final decoder is already extremely sharp: for front-loaded ELF-B, increasing decoder temperature from $0.4$ to $1.5$ changes GPT-2-tokenized unigram entropy only from $6.891$ to $6.895$, while token agreement to argmax remains above $0.991$. At the closest GPT-2-small entropy point ($T=0.8$, entropy $6.858$), GPT-2-small has lower GPT-2-Large PPL ($12.23$ vs.\ $24.90$ for front-loaded ELF-B) and higher distinct-2 ($0.488$ vs.\ $0.324$). At lower GPT-2 temperatures, PPL becomes extremely small but repetition rises sharply, reproducing a PPL artifact in a simple AR baseline. This result prevents a misleading claim: our front-loaded schedule is not an entropy-matched replacement for AR generation. It is evidence that ELF trajectories expose reliability signals, while PPL/entropy frontiers must be reported explicitly.

\begin{figure}[!htbp]
  \centering
  \includegraphics[width=\linewidth]{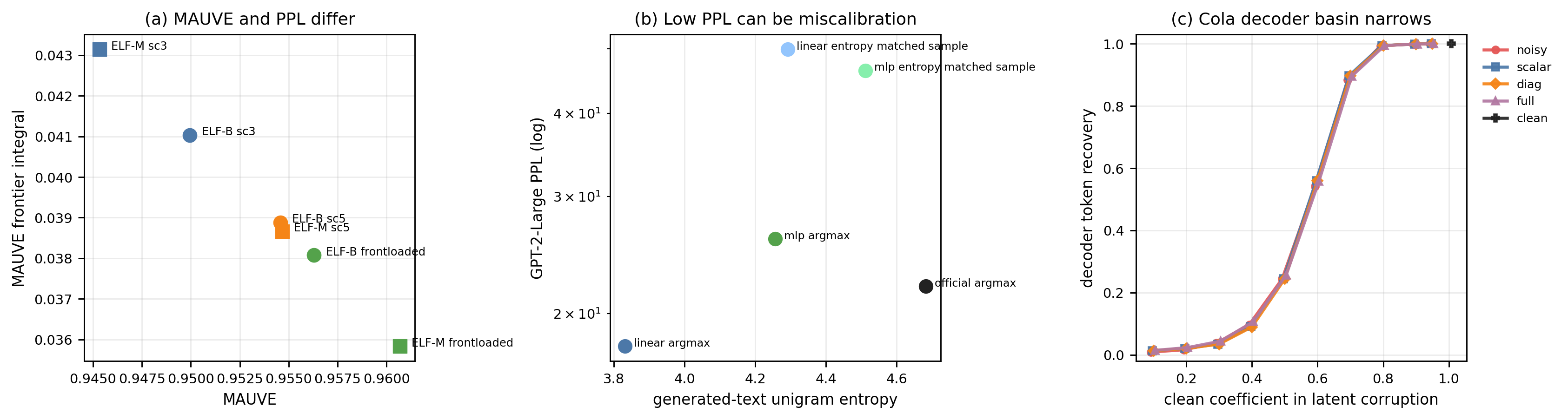}
  \caption{Multi-metric and boundary evidence. (a) MAUVE and PPL can rank schedules differently. (b) A small decoder can obtain low PPL by miscalibration; entropy matching exposes the issue. (c) Cola-DLM latents have a narrow decoder-compatible basin under noise.}
  \label{fig:boundary}
\end{figure}

\begin{figure}[!htbp]
  \centering
  \includegraphics[width=\linewidth]{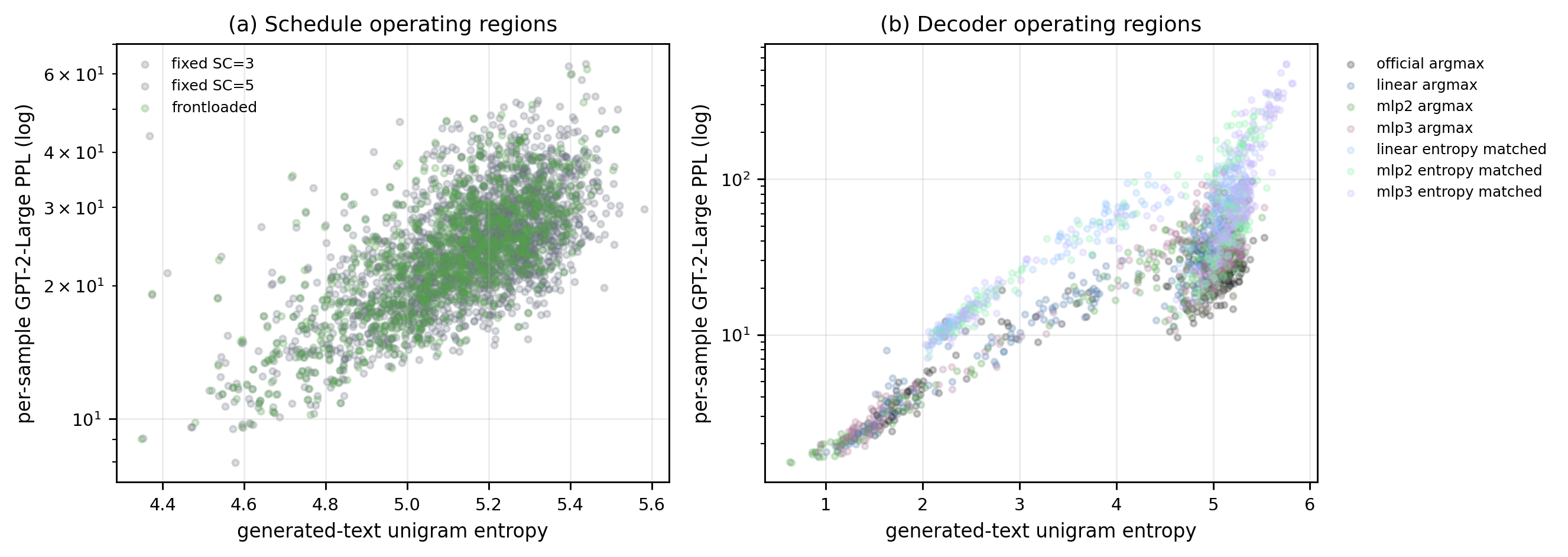}
  \caption{Per-sample operating regions. Left: schedule choices move samples along a PPL-entropy cloud rather than a single scalar. Right: learned decoders occupy different entropy/PPL regions from the official decoder, so low PPL is not sufficient evidence of decoder fidelity.}
  \label{fig:ppl-entropy-scatter}
\end{figure}

\begin{figure}[!htbp]
  \centering
  \includegraphics[width=.82\linewidth]{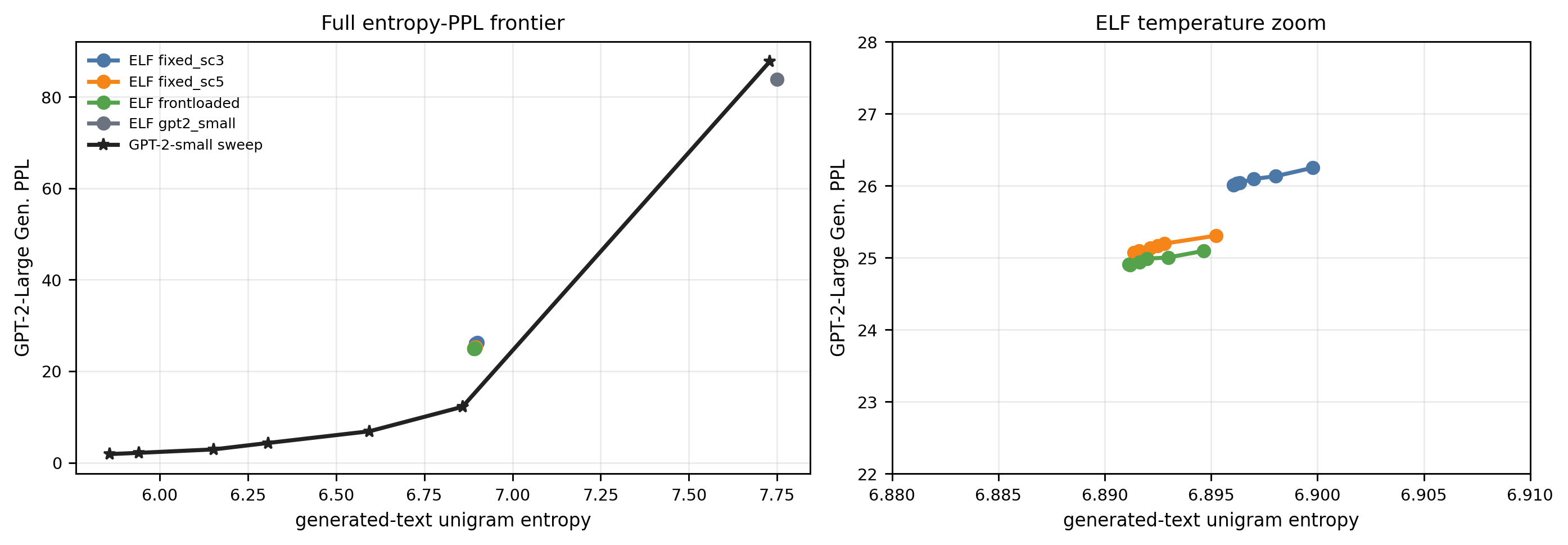}
  \caption{Entropy-calibrated ELF vs.\ GPT-2-small. Left: full temperature sweep in the PPL--entropy plane. Right: zoom near the closest entropy-matched operating points. ELF final decoding is nearly argmax-like across temperatures; the comparison is not an ELF win claim, but a demonstration of the frontier hidden by PPL-only reporting.}
  \label{fig:entropy-calibrated-gpt2}
\end{figure}

\figref{fig:margin-basin} gives the aggregate margin sweep supporting \thmref{thm:decoder-margin}. Clean final latents have token recovery $0.989$ and a positive-margin fraction $0.989$. With only a mild corruption, token recovery drops to $0.870$ at $t=0.95$ and $0.767$ at $t=0.90$. At $t=0.50$, recovery is $0.532$, and the 10th percentile margin is strongly negative. A first-order boundary estimate on sampled token positions gives a median distance of $3.00$ but a 10th percentile distance of only $0.063$, indicating that many positions sit close to a decoder boundary even when average margins look large.

\begin{figure}[!htbp]
  \centering
  \includegraphics[width=.70\linewidth]{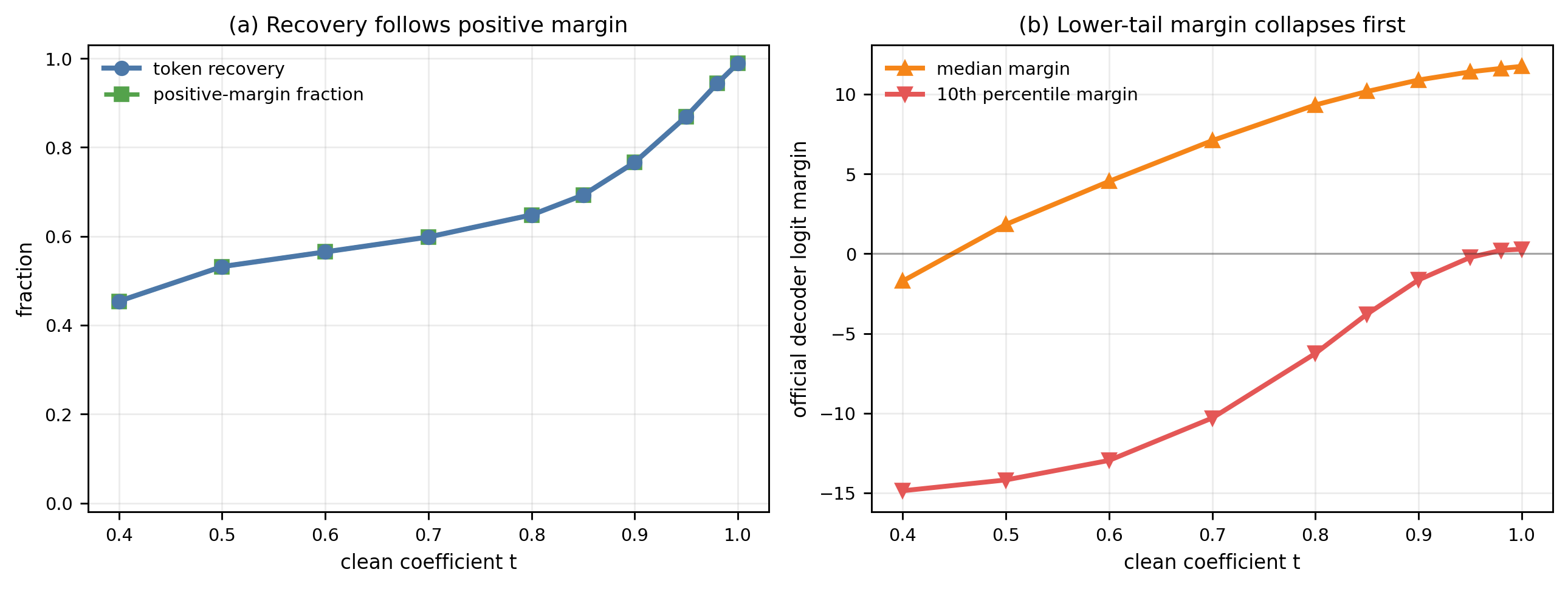}
  \caption{Decoder-margin basin test. Left: token recovery and positive-margin fraction as final ELF latents are corrupted away from the clean endpoint. Right: median and 10th-percentile native logit margins under the same corruption. Recovery and margin collapse together, empirically matching the decoder-margin bound.}
  \label{fig:margin-basin}
\end{figure}

The per-token view is even more diagnostic. \figref{fig:margin-scatter} bins $663{,}552$ token positions by clean margin and latent error norm. Low-margin tokens collapse quickly: recovery falls from $0.94$ for errors below $0.96$ to below $0.10$ for errors above $11.9$. High-margin tokens remain stable under much larger errors: recovery is still $0.981$ in the largest error bin. This is exactly the operational content of \thmref{thm:decoder-margin}. The latent error alone is not predictive; the same error has different effects depending on local decoder margin.

\begin{figure}[!htbp]
  \centering
  \includegraphics[width=.78\linewidth]{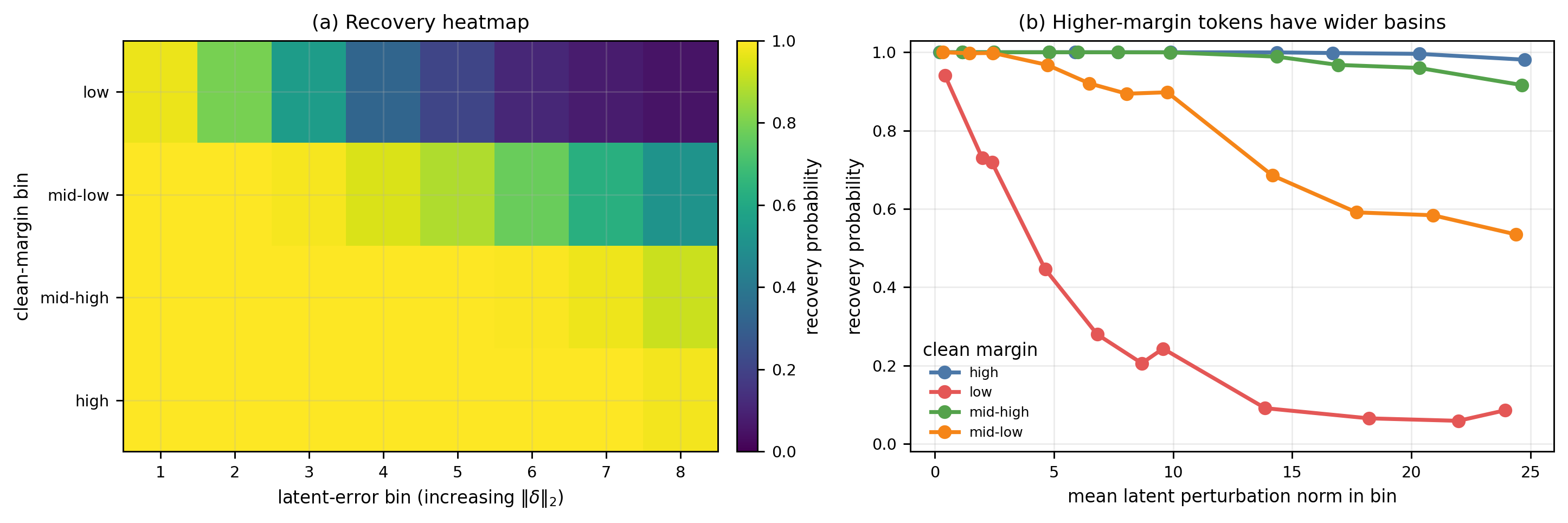}
  \caption{Per-token decoder-margin evidence. Left: token recovery as a function of latent error and clean decoder margin. Right: empirical recovery curve grouped by margin/error regime. For the same latent error, high clean margins preserve tokens while low margins collapse, turning the decoder-margin bound into a measurable diagnostic.}
  \label{fig:margin-scatter}
\end{figure}

\subsection{Cola-DLM Boundary Case}
\label{sec:cola-boundary}

\leadin{Boundary case.} Cola-DLM provides an architectural contrast rather than a parallel benchmark. It learns a VAE latent space and trains a DiT prior over it, whereas ELF uses a frozen T5 encoder with a shared denoiser and decoder. Despite these differences, both systems face the same diagnostic question: can a continuous state be safely handed to a decoder after a noisy trajectory? In Cola, clean VAE latents reconstruct with very high fidelity---a clean reconstruction metric alone would suggest the interface is safe. The noise sweep reveals otherwise: recovery remains high near $t=0.9$ but collapses rapidly between $t=0.7$ and $t=0.5$. A balanced four-GPU DiT trajectory sweep sharpens the interpretation. Across 512 samples per classifier-free guidance (CFG)~\citep{classifierfree2022} setting, decoder entropy drops from about $5.7$ nats to $0.49/0.32/0.15/0.19$ for CFG $0/1/3/7$, and CFG $3$--$7$ reaches a stable entropy-$\leq1$ basin around steps $12$--$13$. But teacher-forced target-token recovery under our diagnostic remains only $0.8$--$4.4\%$, so this is prior transport into a decoder-confident basin, not recovery of our held-out teacher target. This teacher-forced diagnostic does not evaluate Cola-DLM's full free-generation benchmark quality. The boundary case reinforces the broader point: latent diffusion language models should report decoder-basin robustness under realistic noise levels, not only clean reconstruction fidelity.

\subsection{Training-Side Validation: Widening the Basin}
\label{sec:causal-validation}

\leadin{Training-side prediction.} The basin view makes a training-side prediction: decoder training should not only classify clean latents, but widen the decoder basin around plausible noisy latents. The released ELF training recipe~\citep{elf2026}, implemented in PyTorch, already contains such a mechanism. In OWT training, $20\%$ of examples take a decoder branch, and decoder inputs are sampled by mixing clean T5 latents with scaled Gaussian noise. We test this directly with short ablations using the official implementation on four RTX 3090 GPUs.

The result is interventional in the limited but useful sense that we vary one training ingredient at a time. With no decoder-input noise, clean recovery is high ($0.951$), but retention collapses under mid-noise: only $47.0\%$ of clean recovery remains at $t=0.6$. With the official-like noise scale of $5$, clean recovery is still high ($0.995$), but retention rises to $94.4\%$ at $t=0.6$ and $81.7\%$ at $t=0.5$. Too much noise is not monotone: scale $10$ weakens the interface. Varying the decoder-branch frequency separates transport from basin widening. A pure denoising run cannot decode; a decoder-only run creates a broad synthetic basin but learns no flow objective. The mixed objective balances the two.

We also test a tempting repair suggested by \thmref{thm:decoder-margin}: add an explicit hinge-style margin penalty to the decoder CE branch. In this limited-scale exploration, the penalty preserves clean recovery and is optimized during training, but after both 5k and 10k steps it still does not beat the noisy-CE baseline on mid-noise retention or clean margin tail. The result does not rule out all margin objectives; it shows that, in these short runs with the official implementation, directly maximizing a scalar logit gap is not a drop-in replacement for training the decoder on the neighborhood the denoiser must actually visit. This completes the training-side check: decoder-input noise widens the basin in practice, while the tested margin penalty does not, reinforcing the distinction between training a decoder on the local neighborhood of expected noisy latents and directly maximizing per-sample scalar margins.

\subsection{Summary of Ruled-Out Explanations}
\label{sec:ruled-out-explanations}

The experiments collectively rule out six simple stories. First, continuous diffusion is not explained by Gaussianity alone, because covariance controls can be denoisable without being recoverable. Second, it is not explained by token identity alone, because word-shuffled embeddings preserve token identities but fail the order-sensitive diagnostics. Third, ELF's final step is not merely a linear readout, because small decoders change calibration and reduce agreement. Fourth, sampler gains should not be summarized by PPL alone, because MAUVE, entropy, and repetition expose different frontier movements. Fifth, clean latent reconstruction is insufficient, because Cola's decoder basin collapses under the noise levels that a prior must traverse. Sixth, the early low-rank proposal is not simply the intrinsic dimension of the frozen T5 token table: the rank-origin control finds the full token table, generated token-embedding sequences, and clean contextual re-encodes all far above the step-0 predicted-clean rank under the same entropy-rank statistic.

These controls are small by design. The relevant baseline for linear denoising is a covariance-matched control, not a competing DLM. The relevant baseline for the decoder is a small learned decoder trained to imitate the official decoder, not direct unembedding alone. The relevant baseline for trajectory reliability is a shuffled or time-reversed signal, not a stronger policy. Each control preserves a superficial statistical property while removing the linguistic property the diagnostic is meant to test. Small negative or boundary results carry weight because they remove plausible but incorrect explanations, and each control is tailored to a specific confound: covariance preserves second-order statistics while removing linguistic content, and shuffled controls preserve bag-of-words structure while breaking contextual order.

\FloatBarrier

\section{Discussion}
\label{sec:discussion}

Continuous language diffusion is a property of an interface, not a latent space alone. This reframes several design questions. A better encoder is not necessarily the one with the lowest linear denoising MSE: a covariance-matched Gaussian can be denoisable but linguistically vacuous, so the axes must be reported jointly. A stronger decoder is not necessarily the one with the lowest external PPL; it must preserve calibration, diversity, and fidelity to the intended interface. A better sampler should not be judged only by PPL, because stronger churn or guidance can improve fluency at the cost of diversity and semantic accuracy, moving samples along trade-off frontiers rather than toward an absolute optimum.

\parhead{A diagnostic scaffold}
The practical implication is a reporting scaffold rather than another scalar leaderboard. As new VAEs, adaptive tokenizers, compressed representations, and latent text state spaces appear, a single generation-PPL number can validate the wrong object. The rank-32 PCA bottleneck is the warning case: it lowers PPL while collapsing entropy and repetition, precisely because the interface has failed. The five-axis checklist in \tabref{tab:checklist} is therefore part of the main claim: before a new continuous or latent DLM is judged by quality alone, it should be audited for denoisability, semantic recoverability, order sensitivity, decoder compatibility, and trajectory reliability.

\parhead{Resolving the two hypotheses}
The evidence does not reduce ELF to either the network hypothesis or the interface hypothesis. Linear denoisers and low-rank bottlenecks fail in ways that show capacity and trajectory computation matter. The trajectory audits then show what that computation is aligned to: transporting noisy predictions into a decoder-readable region made readable by the pretrained T5 interface and its native decoder. After entry, ZSBD and MDP reveal that most token decisions are simple on the generated final manifold, while the residual tail still needs native decoder calibration. In the audited checkpoints, the resulting picture is a cooperation between a structured pretrained interface and a denoising network that learns a safe transport path through it.

The front-loaded schedule and BGEE are minimal by design. Their purpose is to demonstrate that internal trajectory signals carry actionable information. They are not optimized systems results: BGEE reports algorithmic NFE savings under a margin monitor, and an optimized wall-clock implementation would need either a cheap margin proxy, sparse monitoring, or dynamic batching. The larger contribution is a protocol that can be applied before expensive training: if a candidate representation--decoder pair fails the multi-axis test, scaling the denoiser is unlikely to resolve the underlying mismatch.

\parhead{A unified view: denoising as basin navigation}
The closest conceptual analogy is a ``propose, compare, and enter basin'' view of generation. In ELF, early denoising proposes diverse candidate clean states with low margins. In the middle trajectory, self-conditioning disagreement is largest, and candidate predictions become readable but are not yet in the high-margin decoder basin. A phase-resolved causal sweep (Appendix, \figref{fig:sc-phase-causal}) upgrades this from a statistical pattern to functional necessity: zeroing self-conditioning in equal-width 3-step windows causes the largest quality degradation around steps 16--20 (PPL increase $+7.0$ at center 16, $+6.4$ at center 20), while early (center 4: $-0.1$) and late (center 28: $+1.7$) zeroing are much less harmful. The competition window identified by the trajectory audit thus coincides with the phase where self-conditioning is causally most important for generation quality. This is fixed-checkpoint functional-necessity evidence, not a claim that the denoiser's internal algorithm literally iterates over candidate proposals. Late denoising enters the high-margin decoder basin, where final-token agreement approaches one and decoder entropy collapses. In Cola-DLM, the VAE decoder is highly reliable near the clean manifold but rejects mid-noise latents sharply. This view suggests that future latent DLMs should report not only prior quality or reconstruction quality, but also the geometry of the transition region where the prior must hand states to the decoder.

The pre-entry region has a different evidentiary status from the locked basin. Two small tensor-capture repeats of the first 10 ELF-B SDE steps (\figref{fig:preentry-bootstrapping}) show directed bootstrapping rather than immediate basin entry: consecutive predicted-clean updates have mean cosine $0.751$ and $0.780$, the effective rank of $\hat{x}_t$ rises from about $15$--$18$ to $152$--$160$ while the noisy state $z_t$ remains near rank $474$, and agreement with final tokens rises from $0.8\%$ to $16$--$21\%$. At the same time, the 10th-percentile native margin remains low ($0.049$ to $0.18$--$0.22$); p10-margin $0.5$ crossings are rare and schedule-sensitive, and no repeat reaches p10 margin $1$, $2$, or $8$ within these early steps. A matched 256-sample zero-SC audit further narrows the causal interpretation: setting the SC input to zero during steps 0--5 does not suppress early rank growth or delay above-chance agreement. At step 5, the predicted-clean effective rank is higher under zero-SC than baseline ($111.2$ vs.\ $103.4$), and at step 10 final-token agreement is also slightly higher ($20.8\%$ vs.\ $20.5\%$). The 5\% agreement crossing is not delayed (mean first step $5.79$ vs.\ $6.11$). A rank-origin control rules out the simplest token-table explanation: under the same entropy-rank definition, the full T5 token table has rank about $452$, generated label-token embedding sequences rank about $254$, and clean contextual re-encodes rank about $339$, all far above the step-0 predicted-clean rank. Thus early denoising is already structured, but the first rank expansion is not causally explained by the first few SC feedback updates or by a low intrinsic rank of the T5 token table. This is the main remaining theoretical gap: the paper measures the onset of order and constrains its source to the denoiser prior, while a full account of how that prior creates the first macroscopic semantic proposal from near-Gaussian noise remains future work. The operational model in \secref{sec:interface-phase-diagram} treats early rank growth as an additional chaotic-phase observable; a predictive theory would need to explain why $\hat{x}_t$ concentrates onto $O(15)$--$O(20)$ effective dimensions at $t\approx0$ and how that low-rank denoiser seed expands into a full token proposal before SC becomes functionally necessary in the competition phase.

The interface phase diagram sharpens this view without changing the scope of the paper. A static basin says which final states are decoder-readable; the phase diagram asks when alignment begins, how wide the competition region is, and which tokens fail to lock early. The contribution is the measurement program: turn decoder-basin entry into a time-resolved diagnostic with crossing phases, transition widths, token-wise tails, sampler/scale checks, and readout probes. The follow-up transition grid refines the claim: earlier entry is the stable scale/compute signal, whereas transition width itself is not monotone across the tested scales and samplers. We use the phase language empirically, not as a claim of a thermodynamic phase transition.

\parhead{From diagnostic to minimal probes}
BGEE, ZSBD, and MDP follow a representation-first diagnostic style: before training a new generator, ask what the representation already makes easy. In our setting, the native decoder margin tells us when the denoising trajectory has reached a usable textual basin, and generated final latents tell us whether that basin is simple enough for a token lookup or minimal readout. BGEE uses the first signal to stop early. ZSBD tests the no-newly-trained-readout limit by decoding through frozen T5 token-embedding nearest neighbors. MDP then quantifies how much decoder complexity remains after basin entry. These probes are intentionally small, and their role is measurement. They expose concrete control variables that future samplers, decoders, or training objectives can optimize, from margin-thresholded early stopping to token-level routing and lightweight margin proxies. \tabref{tab:basin-report} condenses the three probes into their measured basin property, key result, and interpretation.

\begin{table}[!htbp]
\centering
\caption{Basin report for the three minimal probes. The probes measure timing, token alignment, and linear recoverability within the same basin-navigation mechanism.}
\label{tab:basin-report}
\small
\begin{tabular}{>{\raggedright\arraybackslash}p{0.12\linewidth}>{\raggedright\arraybackslash}p{0.20\linewidth}>{\raggedright\arraybackslash}p{0.32\linewidth}>{\raggedright\arraybackslash}p{0.25\linewidth}}
\toprule
Probe & Exploits & Key result & Interpretation \\
\midrule
BGEE & Basin-entry timing &
Held-out Margin-12 gate saves $16.6/23.4/27.6\%$ NFEs on B/M/L &
Stopping probe, not a tuned sampler \\
ZSBD & Token-embedding alignment &
Frozen lookup recovers $93$--$96\%$ native tokens with no newly trained readout &
Readout probe; decoder still calibrates tail \\
MDP & Local basin linearity &
4k linear readout recovers about $94\%$ agreement; 32k reaches $97.9\%$ &
Interface-complexity probe, not replacement \\
\bottomrule
\end{tabular}
\end{table}

The companion stress tests in \tabref{tab:basin-stress-tests} explain why the probes are credible but bounded: they test basin sharing, intervention direction, token-wise entry, ZSBD geometry, and residual-tail predictability without turning the paper into a new sampler or decoder proposal.

\begin{table}[!t]
\centering
\caption{Additional fixed-checkpoint stress tests around the three minimal probes. Each row reports the measured effect and the corresponding interpretation.}
\label{tab:basin-stress-tests}
\footnotesize
\begin{tabular}{>{\raggedright\arraybackslash}p{0.16\linewidth}>{\raggedright\arraybackslash}p{0.38\linewidth}>{\raggedright\arraybackslash}p{0.36\linewidth}}
\toprule
Stress test & Key result & Interpretation \\
\midrule
Cross-decoder transfer &
B/M/L final latents decoded by the other two ELF decoders retain high source-native agreement: B$\to$M/L gives $0.988/0.987$, M$\to$B/L gives $0.996/0.996$, and L$\to$B/M gives $0.995/0.996$. &
Evidence for shared ELF-family basins under the same T5-small interface, not for universal T5 geometry. \\
Causal basin intervention &
At step 16 with $\alpha=0.5$ on 512 samples, projection gives $\Delta p10=+0.085$ and $\Delta$PPL $=-0.49$; anti-basin gives $-0.186$ and $+0.49$; random-matched is near zero ($+0.009$, $-0.08$). &
The basin direction behaves like a controllable variable, but this pilot is not a tuned guidance policy. \\
Same-start multi-path SDE &
For 256 fixed initial states with four independent SDE streams each, final token agreement across paths is only $29.9\%$ on average, while the mean final 10th-percentile margin remains $12.37$. &
SDE sampling enters high-margin decoder-basin regions, but the selected token labels are path-dependent; the claim is not a unique attractor. \\
Long-form topic boundary &
On 1000 unconditional ELF-B samples, adjacent-sentence cosine is OWT-like ($0.413/0.396$ for SDE64/SDE32 vs.\ $0.386$), but first-to-last cosine is far lower ($0.064/0.067$ vs.\ $0.292$). ODE32 improves first-to-last cosine to $0.110$, still far below OWT. SDE64 improves geometric PPL over SDE32 ($20.3$ vs.\ $25.1$) without improving this drift. &
Decoder-basin entry supports local token readability and semantic families, but it does not certify document-level discourse planning. \\
Mid-SC causal intervention &
On 256 matched-noise samples with equal-width 3-step windows across seven phases (centers 4/8/12/16/20/24/28), zeroing self-conditioning causes a competition-region quality peak: PPL change $+7.0$ at center 16, $+6.4$ at center 20, vs.\ $-0.1$ at center 4 and $+1.7$ at center 28. Freezing is a weaker mid-hump (max $+1.8$). Shuffled SC lowers PPL early but collapses entropy; late shuffling is catastrophic ($+63.9$ at center 28). &
Self-conditioning is functionally necessary in the competition region, not only a correlation; shuffled SC carries distinct cross-sample mismatch effects that should not be conflated with zero/frozen interventions. \\
Token-wise entry &
On 512 samples and $523{,}578$ tokens, persistent native-token entry reaches $99.96\%$; persistent entry with margin at least $8$ reaches $94.7\%$, but numeric and rare tokens are much weaker ($79.8\%$ and $81.4\%$). &
Basin entry is token-wise, so sample-level BGEE has a natural ceiling; later boundary checks show that easy-token commitment must be soft and globally consistent, not simple freezing. \\
ZSBD geometry &
Frozen-embedding cosine gives $93.4\%$ agreement; unnormalized dot-product lookup gives $81.7\%$; Euclidean lookup gives $3.4\%$; cosine after whitening gives $4.3\%$; permuted labels are nearly zero. &
ZSBD relies on labeled angular token geometry, not vector norm, Euclidean proximity, or frequency alone. \\
Residual tail targeting &
A ZSBD-wrong gate covers $6.63\%$ of positions but contains $73.4\%$ of 32k MDP errors; the rare-token quintile contains $76.3\%$. &
The native decoder remains important for the hard tail; targeted repair should focus on few positions rather than replace the decoder wholesale. \\
Sampler extension checks &
ZSBD confidence routing, a 4.7k-parameter margin proxy, and equal-budget SC-update rejection expose useful signals. A condition-limited late bypass also works: on ELF-B SDE32, a step-28 full-decoder-head readout trained on 2048 trajectories reaches $95.3\%$ native-token agreement, and routing the lowest-10\% ZSBD cosine-gap positions to the native decoder reaches $97.8\%$ in the 1024-sample routing audit. Soft token-wise commitment preserves PPL in high-threshold variants but changes $16$--$18\%$ of tokens; hard freezing and one-step projection from noise fail. &
These are compiler and future-sampler signals, not production samplers; the step-28 bypass is validated only for the tested condition and readout. \\
\bottomrule
\end{tabular}
\end{table}

\parhead{Stress tests around the probes}
The three probes above are minimal, so we add stress tests that ask whether the basin is shared, intervenable, token-wise, and geometrically identifiable (\tabref{tab:basin-stress-tests}). Cross-decoder transfer is the cleanest generality check available inside the public ELF family: final latents generated by one checkpoint can be decoded by another checkpoint's decoder with very high agreement to the source checkpoint's native tokens. This argues against the strongest artifact explanation, namely that the denoiser merely navigates to a private decoder quirk. The supported claim is narrower: the basin is largely shared across ELF-B/M/L decoders that use the same T5-small latent and token interface. We do not extrapolate this result to independently trained decoders with different architectures or vocabularies.

The paired intervention test gives the most direct intervention evidence. At a mid-trajectory step, we replace the predicted clean latent with a decoder-basin projection, an anti-basin perturbation, or a random direction matched in scale, then continue denoising with the same noise stream. On a 512-sample four-shard expansion with $\alpha=0.5$, projection improves the final margin tail and lowers GPT-2-Large PPL on a 256-text audit ($\Delta p10=+0.085$, $\Delta\mathrm{PPL}=-0.49$), while the anti-basin perturbation worsens both ($\Delta p10=-0.186$, $\Delta\mathrm{PPL}=+0.49$). The matched random control remains close to neutral ($\Delta p10=+0.009$, $\Delta\mathrm{PPL}=-0.08$). Projection is not a competitive sampler--token agreement to the unedited baseline is only about $0.72$--but it shows that basin direction is not merely a retrospective correlation.

Two additional trajectory stress tests bound the phase-diagram interpretation. First, we fix each initial noise state and run four independent SDE noise streams. On 256 starts, the final paths agree on only $29.9\%$ of valid tokens on average (10th percentile $21.4\%$), even though each path ends in a confident decoder-basin region with mean 10th-percentile margin $12.37$. Thus stochastic ELF sampling should be described as path-selected entry into high-margin decoder-basin regions, not convergence to one unique attractor that is determined solely by the initial noise state. Re-exporting all four paths gives a semantic counterpart: the median cross-path document cosine is $0.222$, far above random cross-start pairs ($0.023$), but still far from a unique shared document. This path-selection result raises a document-level question: do high-margin decoder-basin states also certify coherent discourse? They do not under the tested unconditional settings. On 1000 ELF-B samples, adjacent-sentence similarity is close to OWT (median cosine $0.413$ for SDE64 and $0.396$ for SDE32, versus $0.386$ for OWT), but first-to-last sentence similarity is much lower ($0.064$ and $0.067$ versus $0.292$). The available ODE32 samples improve first-to-last cosine to $0.110$, but remain much closer to SDE than to OWT; deterministic sampling therefore changes the boundary modestly without closing it. A gamma and self-conditioning sweep improves some local transition metrics, but no tested setting restores OWT-like first-to-last topic stability. Per-sample GPT-2-Large PPL confirms that this is a distinct blind spot: SDE64 improves geometric PPL over SDE32 ($20.3$ versus $25.1$) while preserving nearly the same first-to-last drift, and ZSBD has worse geometric PPL ($32.6$) but slightly better first-to-last stability ($0.088$). Within generated groups, first-to-last sentence cosine has only weak correlation with log PPL (Spearman $\rho\approx -0.16$ to $-0.22$). More denoising steps therefore refine local token likelihood within a selected decoder-basin family, but they do not choose a more coherent document-level family under these settings. Likewise, ZSBD's geometrically smoother nearest-neighbor mapping yields marginally better topic continuity than the native decoder (first-to-last cosine $0.088$ vs.\ $0.064$) at the cost of calibration (geometric PPL $32.6$ vs.\ $20.3$), suggesting that sharper per-token decision boundaries may amplify semantic drift introduced by SDE noise during the competition phase. High-margin basin entry supports local token readability and path-selected semantic families; it does not certify document-level discourse planning.

Second, we corrupt self-conditioning in matched-noise trajectories. A broad mid-window intervention first shows that this signal is not merely correlational: on 256 samples, zeroing the SC state in steps 14--22 worsens PPL from $24.47$ to $42.63$ and lowers the 10th-percentile margin from $12.14$ to $9.90$; freezing it worsens PPL to $32.69$ and lowers the margin to $10.54$. Shuffling SC across the batch lowers geometric PPL to $21.18$, but sample unigram entropy collapses to $3.84$ and repeated 4-gram fraction rises to $24.2\%$, reinforcing the need for multi-metric evaluation. The phase-resolved sweep in \figref{fig:sc-phase-causal} localizes the causal effect: zeroing equal-width 3-step windows is nearly harmless at center 4, peaks around centers 16--20, and weakens again by center 28. Thus the SC-disagreement peak is not only statistically consistent with revision; under zero/frozen interventions it marks the window where SC is most functionally needed for quality. Batch-shuffled SC measures a different failure mode: late locked states are highly sample-specific, so cross-sample SC mismatch becomes catastrophic rather than a clean test of compare-phase necessity.

Finally, token-wise entry and geometry ablations explain why the probes work and where they should fail. Most positions enter the decoder basin before the final step, but they do so at different phases; a 512-sample audit shows that persistent native-token entry reaches $99.96\%$ of token positions, while persistent entry with margin at least $8$ reaches $94.7\%$ and leaves a sequence-level tail. The p90 sample still has $453$ positions that enter the margin-$8$ basin late or never, and the longest late span has p90 length $17$. This makes a token-wise ``freeze easy, continue hard'' policy more promising than adding another sample-level gate. ZSBD's strength is also specific: cosine nearest-neighbor lookup in the labeled T5 token table works, while Euclidean distance, whitening, or label permutation destroys it. This does not mean that the frozen token table itself carries word order. The denoising trajectory supplies contextual, ordered latent states; the token table supplies labeled angular anchors for reading out those states. The remaining MDP errors concentrate in ZSBD-wrong, rare, numeric, and subword positions, so the natural repair target is the tail rather than a larger bulk readout.

Several additional fixed-checkpoint analyses, summarized in \figref{fig:existing-result-mining}, refine this boundary without adding another probe. Clean T5 states are deeper than generated ELF final states in the lower margin tail, same-start SDE paths select different high-margin basins, mid-trajectory self-conditioning is intervention-sensitive, and cheap confidence or proxy signals identify many tail cases. Together with the token-wise and residual-tail audits above, these signals suggest a shared fragile-token phenotype: late basin entry, low native margins, ZSBD disagreement, and MDP residuals repeatedly concentrate on rare, numeric, subword, and boundary-near cases. A final overlap audit within the held-out 32k residual-tail sample supports this reading: low native-margin rows are almost always MDP-hard and usually ZSBD-hard, while MDP errors are frequently rare or ZSBD-wrong. This is not a claim that every diagnostic selects the same token positions, and the document-level topic audit is not token-aligned; rather, the recurring tail structure identifies where future interfaces should allocate calibration or verification. The negative counterparts are equally useful: hard token-wise freezing and one-step projection show that simple readout after basin entry does not imply simple transport into the basin.

We also tested an additional boundary case: using a sentiment direction in the final latent state with a margin gate. The first version was essentially negative: decoder-basin editing shifted target success only marginally while random-direction controls remained close. Directional RBN refined the interpretation. The sentiment direction is not rejected by a tiny basin; it is a safe direction that mostly preserves token identity. We therefore ran an adaptive line-search version that pushes the same direction until the native margin lower tail reaches a safety threshold. Inside the high-quality decoder-basin region, the edit is still mostly absorbed: with safety margin $2$, target success rises from $68.5\%$ to $72.9\%$, token agreement remains $95.9\%$, and PPL stays near the baseline. At the boundary, the attribute can be forced: safety margin $0$ reaches $99.7\%$ target success, but token agreement falls to $61.6\%$ and PPL jumps to $51.0$.

\parhead{How to interpret the probes}
BGEE, ZSBD, and MDP should be interpreted as measurements of one mechanism rather than as separate optimized methods. BGEE measures \emph{when} a trajectory enters the basin, ZSBD measures whether final states are \emph{geometrically aligned} with token embeddings, and MDP measures how \emph{linearly recoverable} the native interface is on generated states. None is presented as a replacement for a tuned sampler or the native decoder. This is also why we keep BCD as a boundary result: the same basin that makes readout simple can make post-hoc semantic editing hard.

\parhead{Interface-driven hybrid inference}
The diagnostics nevertheless suggest a concrete systems direction. BGEE shows that basin entry can be detected, but the literal native-margin monitor is too expensive to yield wall-clock savings. MDP and the tail-gated readout point to a different design: after a cheap phase monitor indicates that the trajectory is in the locked region, use a single linear readout for bulk token decisions and route only a low-confidence residual tail to the native decoder or verifier. On the final generated ELF-B manifold, a 32k linear readout reaches $97.9\%$ native-token agreement, and routing only the lowest-confidence $5$--$10\%$ of positions covers most MDP errors and nearly closes the PPL gap (\figref{fig:rtgr}). This is not a deployed sampler yet, because the right intermediate phase, monitor, and dynamic batching rule must be validated. It is a better target than per-step native-margin decoding: turn the BGEE overhead into a hybrid architecture, replace most late-phase native computation with cheap linear projection, and spend decoder computation where the interface audit predicts calibration is still needed.

We then tried a stronger repair: use the native decoder Jacobian itself. Instead of moving along an encoder-space Yelp Polarity direction, we backpropagate through the ELF decoder to increase logits of positive sentiment tokens, and we apply the same margin line search. This direction is more efficient, but it exposes the same basin trade-off. At safety margin $4$, target success improves from $65.6\%$ to $76.6\%$ while PPL rises mildly from $24.9$ to $26.8$ and only $2.6\%$ of tokens change. At safety margin $2$, target success reaches $99.2\%$, but PPL rises to $41.7$ and $10.8\%$ of tokens change. At the boundary, the classifier is fully controlled but text quality collapses. A token-selective variant that perturbs only the top $6.7\%$ sentiment-evidence positions is almost completely absorbed by the basin: target success stays at $65.6\%$ and token agreement remains above $99.4\%$. Thus decoder-basin editing is not impossible, but clean control requires moving a nontrivial portion of the sequence through the decoder Jacobian. This is a useful boundary for representation-first readout claims. ELF final states are not a generic editable semantic canvas. Reliable control likely needs trajectory-level or training-time conditioning, whereas the decoder basin is best used for stopping and interface probing.

We test this with a trajectory-level repair. Instead of editing only the final state, we apply the decoder-gradient direction at one denoising step, replace the predicted clean latent, and continue the trajectory with the same SDE noise stream. This controlled repair asks whether earlier intervention is less constrained by the decoder basin. On 128 ELF-B samples, intervening at step 16 with a loose margin threshold raises target success from $64.1\%$ to $69.5\%$ while keeping PPL essentially unchanged ($24.6$ to $24.4$), but it changes $23.7\%$ of tokens. Later or more margin-conservative interventions are mostly absorbed. This supports the boundary interpretation: trajectory-level guidance can move the attribute frontier slightly without immediate PPL collapse, but clean controllable generation will likely require training-time conditioning or a guidance objective designed for control, not post-hoc decoder-basin editing.

The diagnostic suggests a practical pretraining question: can we design encoders whose representations are explicitly diffusion-ready under all axes, not only good for masked prediction or next-token prediction? T5 contextual embeddings appear strong because they combine denoisability, contextual order, and decoder compatibility in the ELF interface, and their span-corruption pretraining may be unusually well matched to local corruption and recovery. \tabref{tab:encoder-readiness-decomposition} refines this summary into four operational factors: contextuality, lexical geometry, scale mismatch, and decoder compatibility. This is a feature of the studied interface, not a universal theorem about all continuous language states. The protocol also reveals why blindly swapping encoders is risky. BERT-like, RoBERTa-like, GPT-like, token-level, byte-level, and compressed-tokenizer representations may each fail on different axes. A future compressed-tokenizer study for DLMs should therefore measure not only compression ratio and LM loss, but also decoder margin and recoverability under Gaussian corruption.

A central scope limitation is that ELF depends on a frozen pretrained T5 encoder. This makes it important to distinguish representation modeling from fully self-contained language modeling. ELF demonstrates that flow transport can work well once a strong contextual interface is given; it does not by itself show that a continuous language space can be self-bootstrapped at frontier scale. The diagnostic protocol turns this dependency into a measurable object. It does not assume the state space is frozen: it can be applied to frozen T5 embeddings, learned VAE latents, compressed tokenizers, or future co-evolved semantic encoders. Our evidence suggests that ELF works in this setting because T5 contextual embeddings and the native decoder happen to form a compatible interface. Cola-DLM asks a more upstream question: can the representation itself be learned and co-adapted with the prior? Our Cola boundary results show why reconstruction alone is not enough for that path. A learned latent should be evaluated by denoisability, semantic recovery, order sensitivity, decoder-basin width, and prior compatibility, not by clean VAE recovery alone. From this perspective, token budget accounting is incomplete unless the representation interface is also characterized.

\parhead{What remains unclosed}
The evidence closes an \emph{operational} loop, not an origin theorem for T5. We can identify the observable ingredients that make the ELF interface usable: contextual states are denoisable without losing order, the tied token table gives angular anchors for ZSBD, the native decoder has a broad enough margin neighborhood after noisy-CE exposure, and the denoiser learns to transport Gaussian states into that neighborhood. This explains why readability, margin, and trajectory signals move together in the audited checkpoints. It does not derive those properties from the T5 pretraining objective, the decoder weights, or the ELF training dynamics. In particular, the paper does not prove why span-corruption pretraining, tied embeddings, layer normalization, vocabulary geometry, and decoder-input noise combine to make the T5-small interface unusually diffusion-ready.

A small zero-training pilot rejects a simple static token-table explanation. We computed each T5-small token embedding's nearest-neighbor cosine width over the full $32{,}128$-token vocabulary and estimated token frequency from the first $50{,}000$ tokenized OWT examples ($47.9$M tokens). High-frequency tokens do \emph{not} occupy wider static Voronoi cells: full-vocabulary frequency and token-embedding width have Spearman $\rho=-0.225$. On the 1,911 unique token labels in the token-wise ELF audit, token-aggregated embedding width is only weakly related to final decoder margin ($\rho=0.209$), whereas frequency is more strongly related to final margin ($\rho=0.418$). Thus the interface origin is unlikely to be static token-table Voronoi geometry alone; it likely involves the pretrained language prior, contextual encoder geometry, decoder calibration, and noisy decoder-neighborhood training together. Closing that origin-level question requires controlled interventions that this fixed-checkpoint study cannot provide: retraining or adapter-training with one ingredient removed at a time, dimension-matched encoder swaps, explicit tied-vs-untied decoder comparisons, and weight-level analyses of local decoder Jacobians and high-noise denoiser proposals. The present contribution is therefore to make the interface-origin problem measurable and falsifiable, not to claim a first-principles derivation of why T5 has this property.

On the decoder side, ELF's shared decoder is sometimes described as a simplification, but our results suggest a more nuanced view. The final decoder step is a nonlinear interface trained under a CE objective; it is not merely the last flow step and not merely a matrix multiplication. Sharing weights may be useful because the denoiser and decoder learn compatible hidden states, but the benefit should be evaluated against fair separate decoders with sufficiently broad hyperparameter ranges. Our small-decoder results show that even when a decoder is trained directly on final ELF latents, PPL and token agreement can diverge. This motivates future work on decoder-aware flow objectives, margin regularization, or explicit basin widening.

\parhead{Training-time basin widening}
The diagnostic also suggests which training-side factors should matter. Decoder-input noise should help because it trains the decoder on a neighborhood rather than only on the clean manifold; MSE auxiliary losses can prevent the latent from becoming easy to classify but hard to denoise; adaptive timestep sampling should focus compute where decoder compatibility changes fastest. The released ELF training recipe~\citep{elf2026} already follows this principle: OWT training uses a decoder branch on 20\% of examples and constructs its decoder input by mixing clean T5 latents with Gaussian noise at scale $5.0$. This is consistent with the basin-widening effect predicted by the margin view, so we treat it as a substantive implementation detail rather than incidental. LDLM's joint-training recipe independently uses related decoder-input-noise and decoder-stabilization losses~\citep{ldlm2026}. In this sense, LDLM is convergent empirical evidence rather than a competing explanation: once the representation is allowed to co-evolve with the prior, the decoder basin must be widened during training. Draft-conditioned refinement~\citep{zhang2026latent} is another indication that decoder readability matters, but our main evidence is the fixed-checkpoint measurement of when ELF basin entry happens, how it changes with trajectory phase, and which operations become reliable after entry.

We also ran a small 4-GPU training ablation with the official PyTorch ELF-B code to test this prediction directly. Holding the architecture and training length fixed, a clean-only decoder branch reaches high clean token recovery ($0.951$) but retains only $47.0\%$ of that recovery at a mid-noise state ($t=0.6$) and $42.1\%$ at $t=0.5$. Adding decoder-input noise widens the basin substantially. A mild scale of $1$ reaches nearly perfect clean recovery ($0.999$) and retains $76.4\%$ at $t=0.6$, while the official-like scale of $5$ reaches $0.995$ clean recovery and retains $94.4\%$ at $t=0.6$ and $81.7\%$ at $t=0.5$. Very large noise is not monotonic: scale $10$ weakens clean recovery ($0.917$) and has lower retention than scale $5$. Thus the official-style decoder-noise branch is not merely a robustness trick. It trains a decoder-compatible neighborhood around clean T5 latents, exactly the basin-widening effect predicted by the interface view.

Varying the frequency of the decoder branch sharpens the same conclusion. With the noise scale fixed at $5$, a pure denoising run learns the L2 branch but has essentially zero clean token recovery, showing that transport training alone does not create a token interface. A tiny $5\%$ decoder branch already creates a usable decoder (clean recovery $0.962$), but its 10th-percentile margin is much smaller than the official-like $20\%$ branch ($1.87$ vs.\ $6.11$). At the other extreme, a decoder-only run has the strongest synthetic noisy-latent basin (retention $0.981$ at $t=0.6$), but no denoising loss at all. This separates two competencies that are often conflated: the decoder branch can widen the basin, but the flow branch is needed to transport Gaussian states into it. The useful training recipe is therefore not ``more decoder'' or ``more denoising'' in isolation, but a balance between transport and basin widening.

We also tested the most literal repair suggested by \thmref{thm:decoder-margin}: add an explicit top-1-vs-runner-up margin penalty to the decoder branch while keeping the decoder-input noise scale at $5$ and the decoder branch at $20\%$ of examples. At 5k steps, clean token recovery remains unchanged ($0.9946$ vs.\ $0.9948$), but retention under corrupted latents drops from $0.944$ to about $0.912$ at $t=0.6$ and from $0.817$ to about $0.787$ at $t=0.5$; the clean 10th-percentile margin also decreases. We then extended the official-like baseline and the strongest margin run to 10k steps. The margin objective is clearly optimized---its training margin loss decreases from about $0.85$ to $0.56$ and clean recovery reaches $0.998$---but it still does not surpass the noisy-CE baseline: retention at $t=0.6$ is $0.973$ vs.\ $0.979$, retention at $t=0.5$ is $0.880$ vs.\ $0.894$, and the 10th-percentile margin is $7.76$ vs.\ $8.40$. The negative result is useful but limited: in these short runs with the official implementation, decoder robustness did not improve under the tested scalar margin penalty. Noisy CE trains a local calibrated basin around states the denoiser is likely to visit, while the blunt margin penalty can optimize its own objective without producing the widest off-manifold basin. A stronger basin-aware objective should therefore be neighborhood-aware rather than only pointwise: it should regularize local decoder-Jacobian or Lipschitz sensitivity in directions the denoising trajectory actually visits, or contrast clean and corrupted states inside a local manifold neighborhood, instead of only increasing a visible scalar logit gap on clean or synthetic decoder inputs.

The same basin constraint applies to few-step and one-step language generation. Methods such as FMLM and CDLM reduce sampling depth directly~\citep{fmlm2026,cdlm2026}. Drifting Models~\citep{drifting2026} raise an even sharper question: can the iterative process be moved from inference into training? Our results suggest a language-specific constraint. A direct map from noise to text latents is useful only if it lands inside the decoder-compatible basin, not merely near the clean latent under MSE. A natural follow-up experiment is a small drift head trained on ELF trajectories and evaluated by the same margin, entropy, and recovery diagnostics used here.

PPL remains useful because it is sensitive, well understood, and cheap, but it should not be the sole metric for latent or embedded DLMs. We treat it as necessary but not sufficient: a method that worsens PPL under matched entropy is unlikely to be useful, but a method that improves PPL alone has not been adequately validated. A sampler can improve PPL by reducing entropy, increasing repetition, or moving closer to high-frequency reference text. MAUVE, JS divergence, repetition, and entropy provide complementary views, and decoder agreement is needed when the sampler changes the final-state distribution. The diagnostic framing therefore changes the usual sampler question from ``which curve has the lowest PPL'' to ``which part of the interface frontier has moved, and what was traded away?''

Tri-mode systems such as Nemotron-Labs-Diffusion~\citep{fu2026nemotronlabsdiffusion} make continuous-diffusion diagnostics more relevant. If diffusion is one inference mode inside a larger autoregressive model, the system may inherit the AR backbone's fluency while still exposing diffusion-specific interface risks. Trajectory reliability signals could become a routing criterion: use diffusion when the trajectory is stable, fall back to AR or self-speculative verification when decoder margins are small. Conversely, if tri-mode diffusion lacks such internal reliability signals, that would mark an important boundary beyond which the ELF-style mechanism does not cleanly generalize.

ELF's design naturally favors fast parallel generation over sequential deliberation. A fixed-step sampler can produce an entire sequence at roughly fixed denoising depth, which is attractive for latency-bound generation. But this is not the same capability as chain-of-thought reasoning or long-horizon planning. Autoregressive models condition on their own intermediate tokens and can externalize computation through a written chain~\citep{wei2022cot}; embedded diffusion compresses this process into a continuous trajectory that is later discretized. Hybrid systems such as block diffusion~\citep{blockdiffusion2025} and tri-mode AR/diffusion/self-speculation models~\citep{fu2026nemotronlabsdiffusion} explore this design space from the architecture side. Our diagnostic clarifies the interface trade-off. Parallel denoising is plausible only if the representation-decoder interface is reliable enough that the model does not need token-by-token verification. For tasks that require global consistency or reasoning, the most likely path is hybrid: continuous denoising for fast local fluency and semantic organization, followed by AR-style verification, correction, or self-speculation for long-horizon decisions.

Discrete DLMs avoid the Gaussian-noise puzzle by keeping corruption in token space, usually through masking or remasking~\citep{d3pm2021,llada2025}. But they do not avoid the interface question. They cross a continuous-to-discrete boundary at every denoising step through logits, softmax probabilities, and sampling or argmax decisions. Our axes have discrete analogues: denoisability becomes mask-recovery accuracy; recoverability becomes whether intermediate distributions preserve semantic and syntactic constraints; decoder compatibility becomes whether intermediate logits remain inside stable decision regions; trajectory reliability becomes whether confidence, entropy, or agreement predicts final correctness before the last step. Testing these analogues is outside this paper, but the broader point is the same: generation quality is an interface property, and any model that crosses a continuous-to-discrete boundary must solve the same margin and compatibility questions that we diagnose here.

CCDD~\citep{ccdd2025} frames continuous diffusion as expressive but practically difficult to train and decode. Our experiments give a complementary mechanism for this gap. A continuous state space may be expressive, but if its off-manifold states are not recoverable, if order information is compressed away, or if the decoder basin is narrow, the expressive latent can still fail at the token interface. This explains why hybrid continuous-discrete systems are plausible: the continuous component can carry semantic organization, while the discrete component can anchor token realization and verification.

Several questions remain open. First, the diagnostic should be scaled: can readiness scores predict training efficiency for larger jointly trained systems? Second, decoder basins need to be engineered rather than merely measured: how can training widen the basin without inducing low-entropy collapse? Third, future systems should test whether the representation and decoder can co-evolve a readable basin instead of inheriting one from a frozen encoder. Fourth, the probes here should be converted into practical systems only after cheaper monitors, token-wise policies, or dynamic batching remove their diagnostic overhead. Fifth, cross-architecture validation should move beyond boundary checks to independently trained continuous, latent, and hybrid DLMs. Sixth, controlled, prompted, and long-context generation should be treated as basin-distortion tests: a prompt or control signal may move the target basin, sharpen some token boundaries, or trade local readability for global fidelity. Cross-modal extensions such as speech- or image-conditioned ELF variants provide a natural version of this test because an external encoder and projection layer may anisotropically distort the text decoder basin. Closed-loop adaptive samplers are another natural direction: instead of using BGEE as a diagnostic early-exit rule, a sampler could adjust step size, stochasticity, guidance, or verification based on the same margin, entropy, and self-conditioning signals. Finally, the local margin analysis should be extended into a theory of nonlinear, anisotropic decoder basins and into neighborhood-aware representation pretraining objectives. The short margin-loss ablation here argues against a blunt pointwise logit-gap penalty, but it does not rule out decoder-margin or local-sensitivity regularizers applied to the trajectory neighborhoods that future continuous DLMs actually visit. Directional RBN already shows that the most fragile principal direction is checkpoint-dependent---ELF-M is most fragile along its first principal component, whereas ELF-L is more fragile along its second. The ELF-M follow-up audit points to punctuation and token-boundary anisotropy rather than a named semantic axis, but a full explanation requires local decoder-Jacobian or logit-level sensitivity analysis rather than token-neighbor inspection alone. Our online delta-gating controls, decoder-noise ablations, and margin-loss negative result define useful starting points, but they also argue for the same discipline as the rest of the paper: establish the diagnostic evidence before optimizing the method.

\parhead{Higher-level extensions}
Several extensions fall outside the present claim. A fuller interface phase-transition theory would need to predict rank growth, transition width, token-tail persistence, and scale shifts from the denoiser and decoder dynamics, not only name the observed phases; random-matrix, information-theoretic, or energy-landscape tools may be useful for that future step. The fixed-checkpoint study is complete up to observable causal constraints; deriving the pre-entry rank dynamics requires controlled retraining or architecture/noise-schedule interventions. The pre-entry effective-rank divergence---$\hat{x}_t$ rank rising from $\approx15$--$18$ to $\approx160$ while $z_t$ stays near 474---provides a concrete target for such a theory. It is not explained by the frozen token table alone: the same entropy-rank calculation gives about $452$ for the full T5 token table, $254$ for generated label-token embedding sequences, $339$ for clean contextual re-encodes, and $474$ for Gaussian sequences. The low-rank start is therefore better read as a property of the denoiser-prior map, for example a low-rank high-noise proposal, rather than as the intrinsic dimensionality of the target token manifold. The numerical proximity between $R_0$ and $\log_2 |V|$ is at most a vocabulary-resolution hypothesis, not evidence, because entropy rank and information capacity have different units; testing it would require changing vocabularies or estimating token mutual information at step 0. A tiny fixed-checkpoint spectral pilot sharpens this boundary. Estimating $P_D$ from local pairwise decoder-margin Jacobians gives a weak but phase-consistent feature-space alignment signal: using a competition-phase basis, centered trajectory deltas rise from about $0.035$--$0.036$ early to about $0.061$ after entry. However, finite-difference linearization of the high-noise denoiser does \emph{not} support the simplest low-rank-Jacobian explanation: the step-0 predicted-clean state has entropy rank $18.0$, while randomized denoiser responses have mean entropy rank about $407$ and concatenated feature rank about $433$. Thus the low-rank seed is more likely an output/bias or nonlinear proposal effect than the full local Jacobian spectrum itself. Other plausible explanations, such as a training-data covariance floor or an architecture-conditioned low-rank dictionary, remain open.

A descriptive fit makes the constraint sharper. From steps 1--10, the measured rank growth is approximately linear at $11.8$ ranks per step under the normal sampler and $11.4$ ranks per step when SC is zeroed for steps 0--5, with $R^2=0.998$ and $0.994$, respectively. A square-root fit over the same early region is not the primary summary; the data are better described as near-constant subspace accumulation than as passive diffusion. The growth is correlated with early agreement and margin, but it is not itself basin entry: at step 10, agreement is only about $20.5\%$ and $p_{10}$ margin is about $0.21$. The operational theory in \secref{sec:decoder-basin-theory} therefore gives a useful phenomenological model---low-rank denoiser seed, subspace accumulation, aligned-rank threshold, competition, and margin locking---but it does not yet derive the rank slope, the critical aligned rank, or the tail-token offsets from weights or training dynamics.

An information-theoretic \emph{interface capacity} could measure how much token information a decoder preserves under a constrained perturbation distribution, turning noise-sweep retention curves into a global analogue of the local margin bound. As a first constraint rather than a theorem, we converted existing retention curves into a $q$-ary symmetric-error proxy. Final-state RBN curves remain near the token-table ceiling, with proxy AUCs of about $14.75$, $14.87$, and $14.90$ bits for ELF-B/M/L final states, while decoder-noise training curves occupy a lower and more diagnostic range: the 10k noisy-CE baseline reaches $7.30$ proxy bits under the corrupted-latent sweep, compared with $5.08$ for clean-only decoder training. This proxy gives the right robustness ordering, but it is not a calibrated estimate of $C(E,D;\sigma^2)$ because it replaces the true high-dimensional decoder channel by a symmetric-error abstraction.

An interface-readiness score (IRS) could combine denoisability, order sensitivity, clean margin depth, and lightweight readout agreement before training a DLM. A small calibration already gives the right qualitative ordering---T5 contextual states score $77.3$, BERT $54.0$, GPT-2 $35.8$, and a fresh random-T5/random-decoder control drops to $27.1$---but it omits the trajectory-reliability axis and is therefore a prototype, not a released benchmark. The expanded T5-family controls sharpen the interpretation. T5-small is the only contextual interface in the panel clearly above $70$ IRS under the ELF-compatible audit, whereas T5-base, T5-large, mT5-small, Switch-base-8, ByT5-small, and T5Gemma variants score in the $35.7$--$48.6$ range, with margin-depth scores collapsed to zero under the tested native or ELF-compatible decoders. \tabref{tab:encoder-readiness-decomposition} decomposes this pattern into four operational factors: contextuality, lexical geometry, scale mismatch, and decoder compatibility. This is not evidence that larger encoders are worse in general, and it is not an origin theorem for why span-denoising pretraining created T5-small's favorable basin. It shows that diffusion readiness is a property of an interface pair $(E,D)$ and its normalization, not of an encoder family name or parameter count. This also resolves an apparent tension with the denoiser-scale trend: ELF-B/M/L improve basin-entry timing while sharing the same T5-small interface, whereas replacing the interface changes the target geometry itself.

An interface compiler could use the measured phase diagram to generate sparse margin monitors, confidence routers, SC-update candidate selectors, or SDE-gamma schedules automatically. ZSBD confidence routing, the lightweight margin proxy, equal-budget SC rejection, and the step-28 full-decoder-head bypass provide initial evidence for this direction. The boundary is equally important: hard token-wise early commitment (TEC) and one-step projection fail, while soft TEC preserves PPL only by allowing many token decisions to change and by using an expensive full-monitor prototype. Few-step consistency distillation is another natural extension, but our pilot makes it a future problem rather than a current contribution: a per-step light denoiser reaches $66.5\%$ native-token agreement at step 28, whereas joint transport-and-CE variants reach only $19.2\%$ and $17.1\%$ with misleading low-PPL collapse modes. A cross-architecture basin taxonomy could compare ELF, LDLM, RePlaid, DiHAL-style hidden-state replacement, TEncDM, LangFlow, and discrete DLMs under normalized readiness axes. These directions require additional checkpoints, adapters, human or task evaluation, and training runs. We treat them as implications of the diagnostic protocol rather than completed contributions; each requires validation against the same readiness axes. \tabref{tab:future-roadmap} summarizes the split between engineering follow-ups, training-side problems, and theory-scale open questions.

\begin{table}[!htbp]
\centering
\footnotesize
\caption{Future-work roadmap implied by the completed diagnostics. Tier 1 items have strong fixed-checkpoint signals and are primarily engineering follow-ups. Tier 2 items need new training recipes. Tier 3 items require theory or larger-scale validation before they should be treated as mature methods.}
\label{tab:future-roadmap}
\setlength{\tabcolsep}{3pt}
\renewcommand{\arraystretch}{0.92}
\begin{tabular}{>{\raggedright\arraybackslash}p{0.09\linewidth}>{\raggedright\arraybackslash}p{0.22\linewidth}>{\raggedright\arraybackslash}p{0.29\linewidth}>{\raggedright\arraybackslash}p{0.25\linewidth}}
\toprule
Tier & Direction & Current evidence & Next bottleneck \\
\midrule
1 & Late decoder bypass and routing &
At step 28, a full-decoder-head readout trained on 2048 trajectories reaches $95.3\%$ native-token agreement; routing the lowest-$10\%$ ZSBD cosine-gap positions reaches $97.8\%$ in an oracle-tail audit. &
Turn the signal into a deployable compiler rule: cheap confidence, dynamic batching, and non-oracle tail handling. \\
1 & Soft token-wise commitment &
On 1024 matched samples, high-threshold soft TEC preserves PPL ($23.55$--$23.66$ vs.\ $23.70$ baseline) while estimating $39$--$56\%$ potential token-step savings. &
Find the commit-rate/quality Pareto frontier with a cheap monitor; the current full-monitor prototype is slower and changes many tokens. \\
1 & Pretraining-readiness audit &
The expanded IRS panel and \tabref{tab:encoder-readiness-decomposition} rank the ELF-matched T5-small contextual interface highest ($77.3$), while larger or different T5-family interfaces fall to $35.7$--$48.6$ under the tested pairings. &
Package the audit and add trajectory reliability; report it as an interface-pair diagnostic rather than an encoder leaderboard. \\
2 & Neighborhood-aware basin training &
Decoder-input noise widens noisy-latent retention, whereas scalar margin loss fails in short 5k/10k runs. &
Design local Jacobian or contrastive neighborhood losses without inducing low-entropy collapse. \\
2 & Few-step consistency distillation &
A per-step light denoiser reaches $66.5\%$ agreement at step 28; joint transport-and-CE variants collapse to $19.2\%$ and $17.1\%$. &
Separate transport consistency from decoding calibration; a multi-step objective is needed before a 2--4 step sampler is credible. \\
2 & Training-time token commitment &
Hard inference-time freezing fails, while soft TEC preserves distributional quality under gentle commitment. A stale-token surrogate gives a small positive signal: training a step-28 full-decoder adapter with $25\%$ step-24 token replacement improves robustness to step-20 stale mixes by $1.0$--$2.2$ pp without hurting clean agreement. &
Train the denoiser with random token freezing or soft commitments and test whether it learns a globally consistent transport policy; the adapter surrogate does not test transport or PPL. \\
3 & Interface phase-transition theory &
The pre-entry audit measures rank expansion from $\approx15$--$18$ to $\approx160$; zero-SC keeps the step-1--10 rank slope similar ($11.4$ vs.\ $11.8$ ranks/step), a rank-origin control rejects the token-table intrinsic-dimension explanation, a two-threshold hysteresis model explains why rank can precede margin, and a tiny $P_D$/Jacobian pilot finds weak phase alignment but rejects a simple low-rank-Jacobian account. &
Derive the denoiser proposal dynamics, the critical aligned rank, and the tail-token offsets that connect the low-rank seed to later margin-based basin entry. \\
3 & Information-theoretic interface capacity &
The margin bound and noise-sweep retention curves expose local and empirical basin robustness; a $q$-ary proxy ranks RBN final states above decoder-noise training variants and separates clean-only from noisy-CE training. &
Define and estimate $C(E,D;\sigma^2)$ for high-dimensional token decoders without relying on the symmetric-error proxy used here. \\
\bottomrule
\end{tabular}
\end{table}

\parhead{Falsifiable predictions}
The diagnostic view makes several testable predictions for future systems. First, frozen-pretrained-interface DLMs should show a monotone token-alignment transition along the denoising trajectory: ZSBD-like agreement should rise with margin, and stronger denoisers or longer sampling should move high-margin crossings earlier until a representation-imposed floor is reached. This is a within-trajectory prediction, not a claim that transition width must shrink monotonically with scale. Second, learned or self-organized latent spaces trained without decoder-neighborhood exposure should show narrower noisy basins and later entry than comparable systems with decoder-input noise or trajectory-neighborhood training. Third, late-entry and residual-tail tokens should remain enriched for rare, numeric, subword, and token-boundary cases unless the decoder is explicitly calibrated for those classes; the tail may be heterogeneous, with scale-sensitive rare/numeric errors coexisting with an interface-floor component where frozen-embedding lookup and the native decoder boundary disagree. Fourth, the strongest SC-causal account of pre-entry is falsified in this fixed checkpoint: disabling SC during steps 0--5 does not reduce effective-rank growth or delay first above-chance agreement, so the early semantic proposal must be attributed to the denoiser prior rather than to the first SC feedback loop. This does not contradict the phase-resolved causal sweep, which shows that SC is functionally necessary later in the competition phase. Fifth, the cross-scale Margin-8 entry phases reported in \secref{sec:interface-phase-diagram} predict a monotone decrease with denoiser capacity under the same frozen T5-small interface and matched SDE sampling. The follow-up check in \tabref{tab:p5-tau-scaling} makes this prediction sharper: SDE32 and SDE64 give consistent three-point trends, $\tau_M(8)\propto N^{-0.138}$ and $N^{-0.115}$ over ELF-B/M/L, respectively, whereas the single-shard ODE32 estimate is much flatter (about $N^{-0.03}$) and has lower final margins. The coarse grid in \tabref{tab:phase90-transition-grid} further shows that this prediction should be stated as earlier entry, not monotone transition-sharpness scaling. Future systems with larger same-interface denoisers should therefore either continue the SDE entry trend, saturate at a representation-imposed floor, or break under a different sampler/training recipe. We do not present these three public checkpoints as a controlled scaling law, and we do not make an encoder-scale claim across incompatible latent dimensions; the falsifiable claim is the direction and sampler-conditioned reproducibility of earlier basin entry under a shared interface.

The status of these predictions is intentionally uneven. The monotone token-alignment transition and residual-tail enrichment are supported inside the ELF audits, while the new tail supplement shows that enrichment is not a single mechanism: rare and numeric ZSBD errors shrink with scale, but a high-frequency ZSBD/native-boundary floor remains. The learned-latent prediction is only partially supported by Cola-DLM and still needs more systems. The SC prediction splits into two causal regimes: early zero-SC falsifies SC as the cause of pre-entry rank expansion, while phase-resolved interventions support SC necessity in the competition region. The rank-origin control further falsifies the simplest explanation that the $O(15)$--$O(20)$ start is just the intrinsic dimension of the T5 token table; what remains is a denoiser-dynamics problem. The scale prediction is now restricted to same-interface denoiser scale and to entry timing; extending it to new encoders or to transition-sharpness scaling would require a normalized interface metric rather than comparing raw hidden dimensions.

\parhead{A practical checklist for future DLMs}
The results suggest a reporting checklist, summarized in \tabref{tab:checklist}. We intend this as a standard auditing scaffold for future continuous or latent DLMs: before a new state space, decoder, sampler, or training objective is compared by a single quality scalar, it should first be checked for denoisability, semantic recoverability, order sensitivity, decoder compatibility, and trajectory reliability.
This checklist is modest, but it would have caught most of the misleading shortcuts exposed in our experiments.

\section{Limitations}
\label{sec:limitations}

The study uses existing checkpoints and diagnostic interventions rather than training a new DLM. This choice limits claims about ultimate generation quality. ELF-M and ELF-L are used for trend confirmation rather than a training-seed scaling law, because local PyTorch checkpoints and sampling settings may differ from Tensor Processing Unit (TPU)-scale evaluation. The same-interface scale audit narrows the claim rather than expanding it: SDE32 and SDE64 show earlier Margin-8 entry with larger ELF denoisers under the shared T5-small interface, but this is not an encoder-scale law. Raw cross-decoder probing between ELF and Cola is not a meaningful native test because their latent dimensions, tokenizers, scaling conventions, and decoder targets differ; learning a bridge would test adapter capacity rather than native decoder compatibility. The new cross-decoder transfer results are also restricted to ELF-B/M/L decoders that share the same T5-small interface and public checkpoint family; they do not support a universal claim about T5 geometry or independent training seeds.

The primary object here is also a fixed, already trained interface. The extended IRS checks make this boundary sharper: T5-base, T5-large, Switch, mT5, ByT5, and T5Gemma checkpoints are valuable controls, but swapping them into the audit changes dimensionality, tokenization, normalization, or decoder pairing. Their lower scores under the tested pairings therefore indicate interface mismatch, not a general defect of those models. In fully joint or self-organized latent systems, the representation and decoder basin may move during training; for those systems, the same axes should be used as an in-training audit of when order and decoder compatibility emerge, not as a post-hoc certificate inherited from ELF. The interface phase-transition model has the same fixed-checkpoint boundary. The fixed-checkpoint study is complete up to observable causal constraints; deriving the pre-entry rank dynamics requires controlled retraining or architecture/noise-schedule interventions. The rank-origin and zero-SC controls constrain the pre-entry mechanism as far as this checkpoint allows, but deriving the rank slope, critical aligned rank, or tail-token offsets from weights remains beyond this manuscript. The shard-level SEM bars use four independent sampling shards under fixed public checkpoints; they are appropriate for robustness visualization but not for formal training-seed hypothesis tests. Finally, MAUVE is computed with 2048 samples and a GPT-2 featurizer for practicality, so we use it as a complementary signal rather than a definitive quality metric.

There are also methodological limitations. Linear probes can underestimate nonlinear linguistic information, although this bias is acceptable here because the goal is to test minimal recoverability. GPT-2-Large PPL is an imperfect proxy for human judgments and can reward low-entropy text. Our entropy-matched decoder and GPT-2-small temperature sweeps address this issue directly, but they do not replace human evaluation, task-specific evaluation, or downstream reasoning benchmarks. The basin definition is operational rather than a full global theory: it is a decoder-margin super-level set measured along generated trajectories, not a claim that the high-dimensional set is convex, spherical, or architecture-independent. Directional RBN shows that the basin is strongly anisotropic, so future theory should use local directional sensitivities or anisotropic norms rather than only isotropic Gaussian bounds. The paired basin intervention is a fixed-checkpoint test at one model size and one main intervention phase; it validates the sign of the control variable but does not replace a trained guidance objective.

Token-wise entry, ZSBD geometry ablation, and residual-tail targeting explain where the current probes work, and the 512-sample sequence-level audit identifies numeric, rare, ZSBD-wrong, and late-margin tokens as stress cases. They still do not constitute a full out-of-distribution (OOD) suite over code, long contexts, numeric exact match, prompt shift, or domain shift; the residual-tail pattern could change on domains such as Yelp Polarity reviews, AG News, or code. The cross-scale tail supplement also has a limited role: its B/M/L pilots preserve exact native-token definitions, but the 64-sample M/L rows are tail-typing evidence rather than a formal residual-tail scaling law. Long-form coherence and controlled generation are also outside the current certificate: decoder margins and readout agreement measure local interface readability, and our topic and BCD audits show that local sentence coherence, control success, low PPL, and high decoder margin can trade off rather than improve together. A prompt or control signal should therefore be treated as a possible geometric distortion of the target basin, and the unconditional setting here is the necessary baseline rather than the final controlled-generation setting. MDP is also a generated-manifold interface probe: its token-position training examples are drawn from generated sequences and are therefore correlated within samples, so the linear readout should not be interpreted as an independent and identically distributed (IID) token classifier or a universal decoder. The external checks are too few and too heterogeneous to support a rank correlation between basin width and PPL across architectures. The omission of full MDLM, Duo, flow-map, and LangFlow benchmark reproduction is deliberate: reproducing the system-level frontier would answer a different question from auditing decoder-interface compatibility. The GPT-2-small comparison is an unconditional sampling comparison rather than a full AR benchmark suite. The Cola-DLM analysis focuses on VAE and trajectory boundary behavior rather than full benchmark reproduction. Finally, our front-loaded schedule and BGEE provide evidence that trajectory signals are usable, but they are not competitive decoding algorithms. BGEE in particular reports NFE savings under an expensive diagnostic monitor; the direct monitored implementation is slower in wall-clock time than unmonitored sampling. This is why the hybrid-inference proposal in the Discussion is framed as future architecture design rather than as a claimed speedup in this manuscript. We will release the phase-labeled diagnostic scripts, table outputs, and plotting code with the public manuscript; until then, reproducibility relies on the released ELF/Cola/LangFlow/Bitstream checkpoints and the implementation details documented in the appendix.

The additional sampler-extension checks are also bounded. The SC-update rejection experiment uses an equal $K=3$ candidate budget, but it is a candidate selector rather than an optimized wall-clock sampler; its PPL gains trade off against entropy, distinctness, and repetition. The lightweight margin proxy is a cheap pre-filter for native-margin monitoring, not a formal certificate. Hard Token-wise Early Commitment and one-step projection are negative controls under the tested implementations, not impossibility results for all token-wise commitment or few-step projection methods. They show that the interface phase diagram cannot be compiled into a sampler by local thresholding alone.

\begin{table}[!htbp]
\centering
\footnotesize
\setlength{\tabcolsep}{3pt}
\renewcommand{\arraystretch}{0.92}
\caption{A minimal diagnostic checklist for future continuous or latent DLM papers. The checklist translates the paper's failure modes into reporting requirements before claiming a new state space, decoder, sampler, internal signal, or training recipe.}
\label{tab:checklist}
\begin{tabular}{>{\raggedright\arraybackslash}p{0.34\linewidth}>{\raggedright\arraybackslash}p{0.58\linewidth}}
\toprule
Before claiming \ldots & Report \ldots \\
\midrule
a new state space is diffusion-ready &
linear denoising NMSE, semantic recovery, order sensitivity, native-decoder margin under corruption, and at least one trajectory reliability signal. \\
a decoder is a harmless readout &
native-decoder agreement, margin recovery under latent corruption, and entropy calibration. \\
a sampler is better &
PPL together with entropy, repetition/distinctness, MAUVE or another distributional metric. \\
an internal signal is useful &
shuffled-signal, delayed-signal, and time-region controls. \\
a latent autoencoder is enough &
clean reconstruction and decoder-basin robustness under the prior's noise range. \\
a control or prompt method works &
control success together with native-margin retention, token fidelity, entropy, repetition, and prompt-shift or domain-shift basin diagnostics. \\
a training objective widens the basin &
clean recovery, noisy-basin retention, margin and entropy calibration, and transport competence under states the prior actually visits. \\
\bottomrule
\end{tabular}
\end{table}

\FloatBarrier

\section{Conclusion}
\label{sec:conclusion}

Our evidence suggests that continuous language diffusion depends on whether a continuous state can be denoised, linguistically recovered, and decoded through a compatible interface. In the audited ELF checkpoints, the representation, denoising trajectory, and decoder form an interface with measurable reliability signals; those signals are strongest for local token decisions and do not by themselves certify long-form discourse coherence. A denoising Transformer or a smooth embedding space alone is insufficient. This does not settle the larger self-bootstrapping problem for continuous language representations: the pre-entry audit constrains the earliest dynamics, but it does not yet explain how the first macroscopic semantic proposal is formed. Nor does it derive from first principles why the T5-small interface has these favorable properties; it makes that interface-origin question measurable. It specifies what such a representation must provide once that proposal exists: a decoder-readable basin that the denoising dynamics can enter and the token interface can calibrate. Cola-DLM shows the complementary boundary: clean reconstruction can be strong while decoder-basin robustness remains narrow. The practical message is to test the interface before training a larger continuous DLM. A small diagnostic suite, applied before committing to large-scale training, can reveal whether the proposed state space has evidence of diffusion readiness or is easy under a misleading scalar metric that hides a latent interface failure. The operational rule is simple: before training, strip the denoiser to a linear map and the decoder to a nearest-neighbor lookup. If the interface still supports token recovery under these minimal probes, it has evidence of diffusion readiness. If it fails, scaling the denoiser may be unlikely to compensate for a mismatched interface.

\section*{Acknowledgments}
This work was supported in part by the Development and Reform Commission of Shenzhen Municipality (grant no.\ F-2024-Z99503929), the Shenzhen Medical Research Fund (grant no.\ C2401007), and the State Key Laboratory of Chemical Oncogenomics, Institute of Biomedical Health Technology and Engineering, Shenzhen Bay Laboratory. We thank the ELF authors for releasing the checkpoints and code studied here, and refer readers to the ELF release for the compute resources used to train those checkpoints. We thank the Cola-DLM team at ByteDance, the LangFlow team at UIUC, and the BitstreamDiffusion authors for releasing their models and code under open-source licenses. We also thank the creators of OpenWebText, Yelp Polarity, AG News, and the T5, mT5, ByT5, Switch Transformer, T5Gemma, BERT, RoBERTa, and GPT-2 pretrained models for making their datasets and checkpoints publicly available, including the Google T5-family checkpoints used in the extended readiness controls. Our diagnostic experiments ran on consumer GPU workstations with no additional large-scale training.

\bibliographystyle{plainnat}
\bibliography{references}

\appendix
\renewcommand{\thefigure}{S\arabic{figure}}
\setcounter{figure}{0}
\renewcommand{\thetable}{S\arabic{table}}
\setcounter{table}{0}

\section{Reproducibility Details}
\label{app:reproducibility}

\parhead{Code organization}
The diagnostic suite is split into phases. Phase 0 verifies checkpoint loading and local generation. Phase 1 evaluates representation readiness and order sensitivity. Phase 2 audits ELF trajectories. Phase 3 runs self-conditioning schedule sweeps. Later phases add Cola-DLM boundary tests, fair decoder calibration, MAUVE/reference-distribution metrics, ELF-L confirmation, and decoder margin analysis. This staged organization mirrors the scientific argument: later experiments are only meaningful if earlier diagnostic signals exist.

\parhead{Terminology}
We use \emph{state space} for the continuous vectors being denoised, \emph{native decoder} for the decoder trained with that state space, and \emph{interface} for the pair. We reserve \emph{diffusion-ready} for the full interface property, not merely for smooth latent geometry. This terminology matters because several negative controls are smooth but not linguistically meaningful, or decodable when clean but not robust under corruption.

\parhead{Evidence hierarchy}
We separate three evidence levels throughout the paper. \emph{Core mechanism} claims are supported by representation controls, PCA controlled degradation, and fixed-checkpoint trajectory audits with shard-level robustness. \emph{Minimal probes} such as BGEE, ZSBD, and MDP are used to test whether the mechanism exposes actionable variables; they are not presented as tuned production methods. \emph{External architecture} checks on Cola-DLM, LangFlow, and BitstreamDiffusion are boundary diagnostics rather than full benchmark reproductions. The appendix keeps this hierarchy explicit, so that sample counts, shard counts, and checkpoint/training-seed assumptions are documented rather than inferred.

\parhead{Tool disclosure}
Language-model tools were used for proofreading, reference checks, consistency checks, and drafting auxiliary audit scripts. The authors reviewed the edits, reran or inspected the reported analyses, and are responsible for all claims, citations, experiments, code, and text.

\parhead{Checkpoint loading}
We use the official PyTorch~\citep{pytorch2019} ELF code and public Hugging Face Transformers checkpoints~\citep{transformers2020}. The ELF-B, ELF-M, and ELF-L checkpoints are loaded with strict key matching against the official model definitions. The JAX~\citep{jax2018} checkpoints are kept for reference, but all hidden-state and trajectory diagnostics in this manuscript use the PyTorch implementation because it exposes intermediate states more conveniently on local GPUs. We do not compare PyTorch and JAX numerics; the claims concern the released checkpoints and the representation--decoder interface, not framework-specific implementation differences.

\parhead{Trajectory extraction}
The official generation loop is scan-like and does not expose intermediate signals by default. Our audit wrapper runs the same SDE/ODE updates explicitly and records selected state summaries. We record per-sample final decoder entropy, self-conditioning delta, agreement with zero-self-conditioned predictions, and effective-rank proxies. To keep memory bounded, full hidden states and centered kernel alignment (CKA)~\citep{cka2019} is computed only at selected time steps or on subsampled rows.

\parhead{Basin-navigation audit}
For the mechanism audit, we run ELF-B with 32-step SDE and SC=3 on 512 samples split evenly across four RTX 3090 GPUs. At every step we decode the predicted clean latent $\hat{x}_t$ and record top1-top2 margins, decoder entropy, self-conditioning disagreement, and token-level delta-margin correlations. A second 64-sample audit stores selected-step logits to measure margins to the final decoded tokens and agreement with the final sequence. This two-pass design keeps the full mechanism audit cheap while still giving a direct decoder-basin crossing test.

\parhead{Cross-scale and sampler audits}
The same audit is repeated for ELF-B SDE64, ELF-M SDE64, ELF-L SDE64, and ELF-B ODE32 with 512 samples per run. The plots use normalized denoising phase rather than raw step index so that 32-step and 64-step trajectories can be compared directly. For external checks, LangFlow is audited with the same step-margin interface where available, and BitstreamDiffusion is evaluated through a sigma sweep over structured proxy codes and a repaired-tokenizer subset of 512 real OWT code sequences.

\parhead{Basin-Guided Early Exit}
BGEE uses the native decoder's 10th-percentile predicted-clean margin as a stopping signal. We evaluate fixed exits and margin-threshold exits on the same 512 ELF-B SDE64 samples and on 256-sample ELF-M/L SDE64 confirmations. Quality is reported with geometric PPL, arithmetic mean PPL, sample entropy, distinct-2, repetition, and token agreement to the full 64-step decode. Main plots show simple interpretable margin levels. As protocol-rigor checks, we split generated samples in half, select a gate on the validation half subject to token agreement and geometric-PPL constraints, and report held-out test metrics. We also run a 512-sample random-time control that shuffles eligible denoising steps before applying the same margin or margin-plus-entropy rule; this separates temporal ordering from the threshold itself. The wall-clock timing smoke test uses four 32-sample GPU shards and three repeats per shard. It compares unmonitored 64-step sampling, full per-step margin monitoring, and conservative batch-level Margin-12 early exit; it is not an optimized dynamic-batching implementation.

\parhead{Minimal Decoder Protocol}
MDP uses generated ELF-B final latents rather than ground-truth T5 latents. For each sample, we store the final state $z_T$, the last predicted clean latent $\hat{x}_T$, the average of the last two clean predictions, and the native decoder's argmax tokens. We then train a single token-wise linear readout on a train split and evaluate on held-out generated latents. The 1k experiment uses 1024 generated samples with a 768/256 train/eval split; the 4k experiment uses 4096 generated samples with a 3072/1024 split and caps training at 1.2M valid token positions; the 8k, 16k, and 32k saturation checks use 6144/2048, 12288/4096, and 24576/8192 splits, respectively. The 16k and 32k runs are loaded one readout at a time to avoid duplicating CPU memory; the 32k run caps training at 9.6M valid token positions and evaluates GPT-2-Large PPL on 4096 held-out texts. We report agreement to the native decoder, GPT-2-Large PPL, sample entropy, distinct-2, repetition, and margin. Small Gaussian noise augmentation is applied only to the readout input during training, not to the ELF denoiser or native decoder. \figref{fig:mdp-full} gives the fuller 4k readout frontier behind the compact MDP panels in the main text.

The paired clean-manifold audit in \figref{fig:clean-generated-basin-gap} reuses the 1024-sample generated MDP shards but does not train a readout or resample trajectories. For each native-decoded token sequence, we feed the same token ids to the frozen T5 encoder with the stored valid-token mask, normalize by ELF's latent standard deviation, and decode this clean interface state with the same native ELF decoder used for $z_T$, $\hat{x}_T$, and the average of the last two clean predictions. The reported margins are target margins against the native argmax tokens from the generated sample, so the comparison isolates a same-token clean-vs-generated basin-depth gap rather than a text-distribution or tokenizer mismatch.

\begin{figure}[!htbp]
  \centering
  \includegraphics[width=\linewidth]{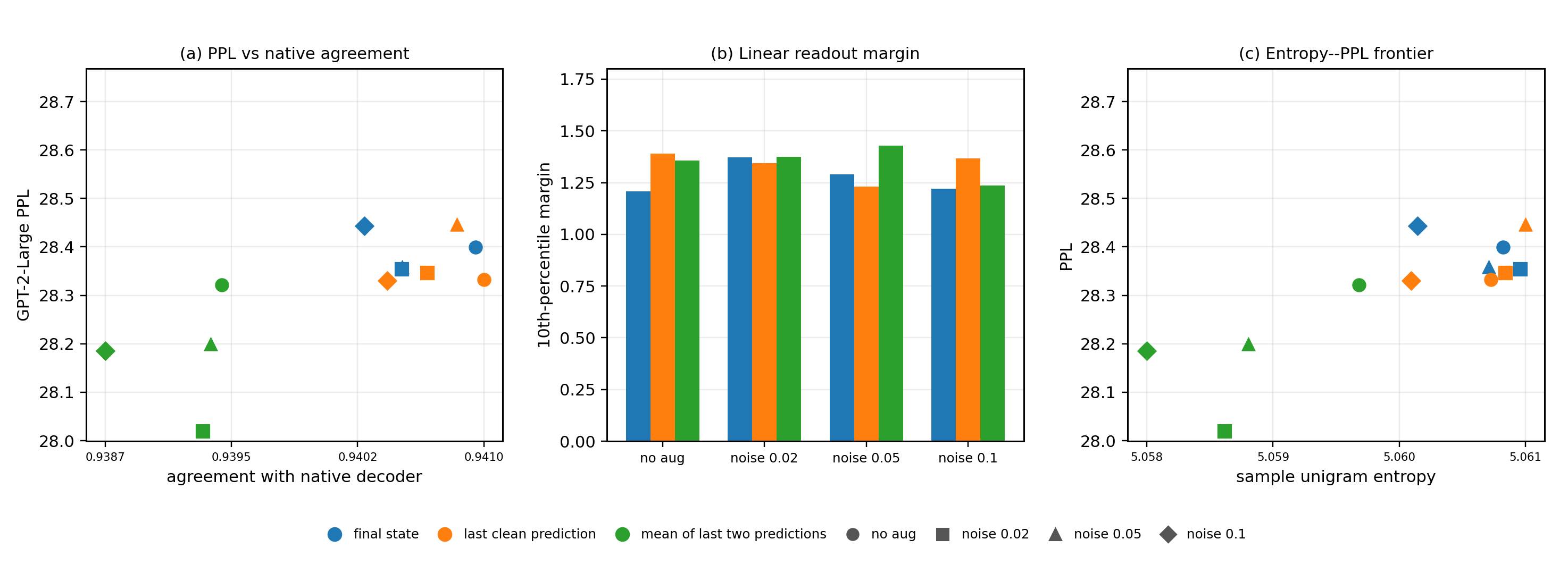}
  \caption{Full MDP readout frontier on 4096 generated latents. Left: GPT-2-Large PPL versus native-token agreement for different generated-latent readouts and noise augmentations. Middle: 10th-percentile native margin for the same readout family. Right: PPL versus generated-text unigram entropy. The best linear readouts approach the native decoder's operating region while retaining a remaining PPL gap.}
  \label{fig:mdp-full}
\end{figure}

\parhead{Zero-shot and reverse-basin tests}
ZSBD reuses the same generated ELF-B final latents as MDP. We decode $z_T$, $\hat{x}_T$, and the average of the last two clean predictions by cosine nearest-neighbor lookup against the frozen T5 token-embedding table, with no learned readout parameters. The main figure uses 4096 samples; 8192- and 16384-sample checks are reported as stability controls. We evaluate GPT-2-Large PPL on 1024 decoded samples. RBN uses 1024 generated final latents. For each late-state variant, we add isotropic Gaussian noise with relative standard deviations $\{0,0.005,0.01,0.02,0.05,0.1,0.2,0.5,1.0\}$, decode with the native ELF decoder, and report token agreement to the clean native decode, 10th-percentile margin, decoder entropy, and PPL at selected noise levels. Both experiments are post-hoc analyses of generated states; no ELF weights are changed.

\section{Additional Basin Stress Tests}
\label{app:basin-stress-tests}

\parhead{Cross-decoder basin transfer}
We test whether generated decoder-basin states are private to one decoder or shared within the ELF checkpoint family. We generate 128 32-step SDE samples from each of ELF-B, ELF-M, and ELF-L, take the final latents from the source checkpoint, and decode them with ELF-B, ELF-M, and ELF-L decoders. Agreement is measured against the source checkpoint's own native decoded tokens for the same source latents. This protocol avoids a ground-truth text comparison; it asks whether a different ELF decoder assigns the same token labels to states that the source denoiser has navigated into. B final latents decoded by M/L agree with B-native tokens at $0.9875/0.9870$, M final latents decoded by B/L agree with M-native tokens at $0.9960/0.9957$, and L final latents decoded by B/M agree with L-native tokens at $0.9952/0.9956$. \figref{fig:cross-decoder-transfer} shows the resulting transfer matrix. The result indicates that ELF-family decoders share a large portion of the decoder-basin region under the common T5-small interface. It does not address independent training seeds or arbitrary T5-space decoders.

\begin{figure}[!htbp]
  \centering
  \includegraphics[width=.78\linewidth]{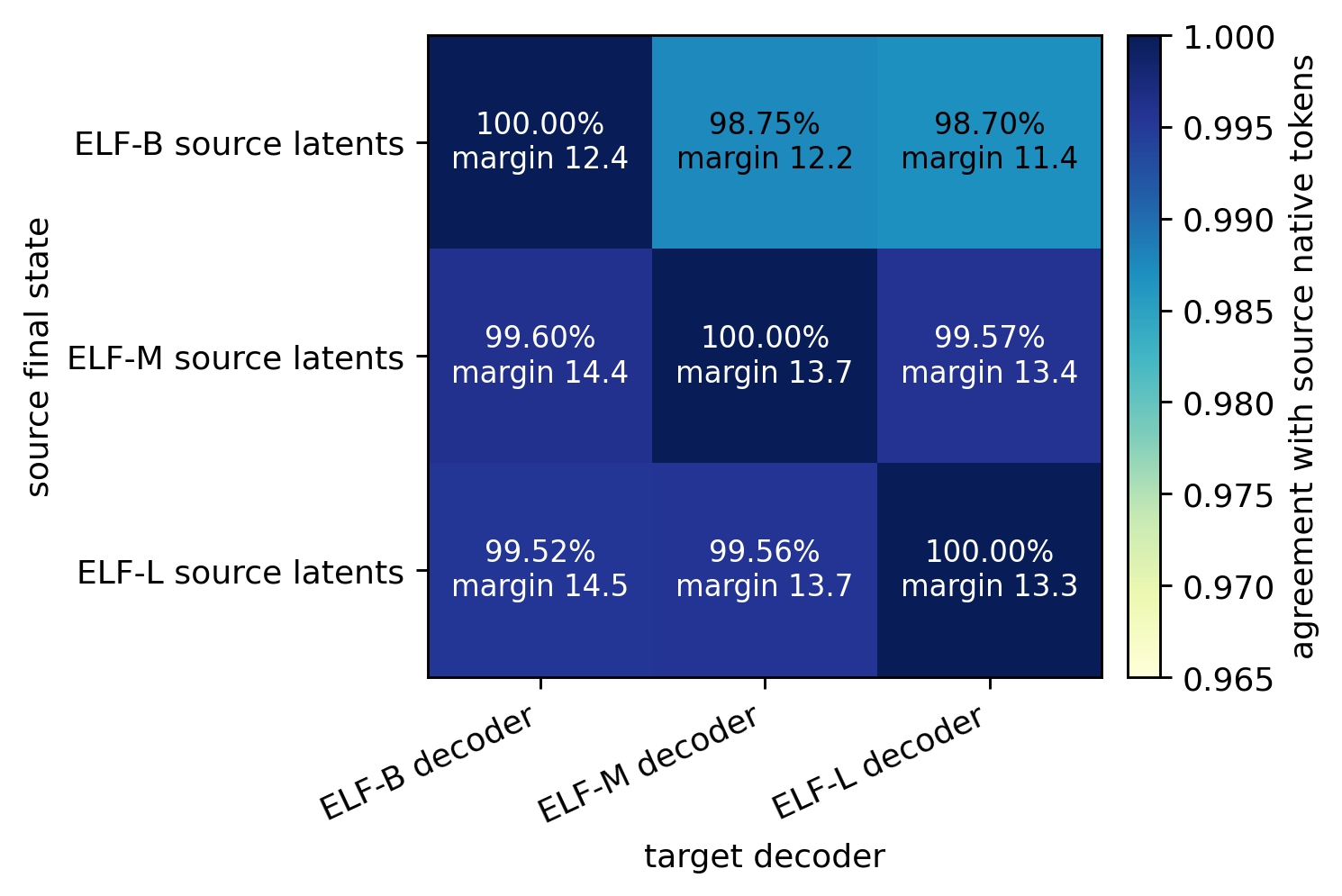}
  \caption{Cross-decoder basin transfer inside the ELF family. Rows indicate the source denoiser that generated the final latents; columns indicate the target decoder used for readout. Each cell reports agreement with the source model's native decoded tokens and the corresponding source-label margin. High off-diagonal agreement indicates that generated decoder-basin states are largely shared across these checkpoint decoders under the same T5-small interface.}
  \label{fig:cross-decoder-transfer}
\end{figure}

\parhead{Paired basin intervention}
To test whether the basin signal is only correlational, we intervene on the predicted clean latent at step 16 of a 32-step ELF-B SDE trajectory and then continue denoising with the same initial noise and SDE noise stream. The projection intervention moves each position toward its nearest frozen T5 token embedding direction; the anti-basin intervention moves away from that direction; the random-matched control uses a random direction with matched norm. We expand the original pilot to 512 samples across four GPU shards. With $\alpha=0.5$, projection increases the final margin tail by $\Delta p10=+0.085\pm0.045$ and lowers GPT-2-Large PPL by $0.49$ on a 256-text audit, while anti-basin perturbation decreases the margin tail by $0.186\pm0.064$ and raises PPL by $0.49$. A random-matched perturbation is near neutral ($\Delta p10=+0.009\pm0.024$, $\Delta$PPL $=-0.08$). \figref{fig:causal-basin-intervention} visualizes the paired changes in margin, entropy, and edit size. A smaller $\alpha=0.1$ 512-sample stability check preserves high token agreement but is too weak to cleanly separate projection from matched random noise. This is an interventional mechanism check, not a tuned guidance sampler: at $\alpha=0.5$, token agreement to the unedited baseline is only about $0.72$--$0.78$.

\begin{figure}[!htbp]
  \centering
  \includegraphics[width=.86\linewidth]{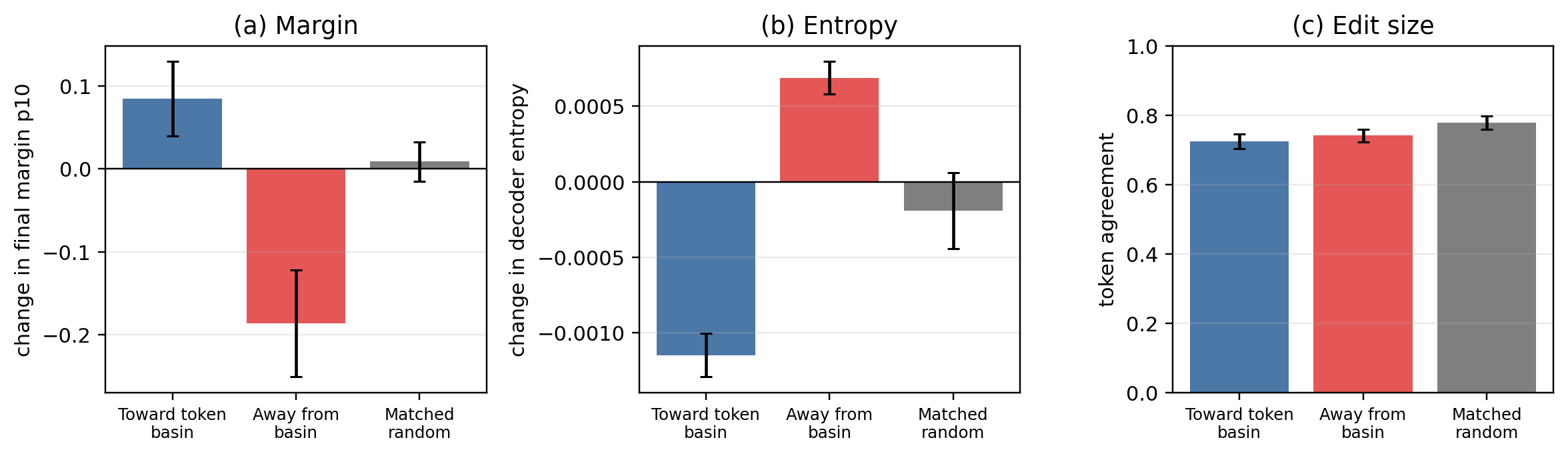}
  \caption{Paired basin intervention at 512 samples. Left: change in final 10th-percentile margin after a step-16 intervention. Middle: change in decoder entropy. Right: token agreement to the unedited baseline, which measures edit size. The same trajectory is continued after projecting toward token-embedding basins, pushing away from them, or applying a matched random perturbation; error bars are shard-level SEM.}
  \label{fig:causal-basin-intervention}
\end{figure}

\FloatBarrier

\parhead{Phase-resolved self-conditioning causal sweep}
\label{app:sc-phase-causal}
The mid-SC intervention in the main text tests a single broad window. To resolve where self-conditioning is causally most important, we run a seven-phase sweep on 256 ELF-B SDE32 samples that share the same initial states and SDE noise streams (seed 20260605). Each intervention corrupts self-conditioning within an equal-width 3-step window centered at steps 4, 8, 12, 16, 20, 24, and 28. We apply three corruption types: zero (set the self-conditioning input to zero), frozen (hold the self-conditioning state at the first window step), and shuffled (replace each sample's SC state with a batch-shifted SC state from another sample). All trajectories use the fixed SC=3 sampler. The baseline (no intervention) gives PPL $24.23$ and 10th-percentile margin $12.16$.

\figref{fig:sc-phase-causal} shows the phase-resolved result. Zeroing SC causes a competition-region peak: PPL change is $-0.07$ at center 4, $+1.76$ at center 8, $+4.89$ at center 12, $+7.02$ at center 16, $+6.44$ at center 20, $+3.87$ at center 24, and $+1.70$ at center 28. The peak coincides with the mid-trajectory SC-disagreement maximum observed in the trajectory audit, supporting functional necessity in the competition phase. Frozen SC produces a weaker mid-hump (max $+1.76$ at center 12 and $+1.70$ at center 16, falling to $+0.21$ at center 28), consistent with a less disruptive intervention that preserves the directional self-conditioning signal while preventing its iterative update. Shuffled SC follows a distinct pattern and should not be conflated with the zero/frozen results: early shuffling at centers 4--8 lowers PPL by $2.82$--$3.22$ while reducing sample entropy, and by center 12 PPL already rises by $+1.78$; late shuffling (centers 24--28) is catastrophic, with PPL changes of $+21.93$ and $+63.90$, and margin-p10 drops of $-5.19$ and $-10.30$. The late-shuffle result reflects cross-sample mismatch in the locked phase, where each trajectory's SC state has become sample-specific.

These results support a three-part interpretation. First, the phase-resolved zero-SC sweep provides fixed-checkpoint functional-necessity evidence: self-conditioning is most important in the same competition window identified by the trajectory audit's SC-disagreement peak, and far less important in early chaos or late locked phases. Second, the graded degradation (not a sharp cliff) suggests that the competition region is a distributed computation rather than a single binary decision step. Third, shuffled SC's distinct effects---early diversity collapse and late catastrophic failure---warn against treating all SC corruptions as measuring the same mechanism.

\begin{figure}[!htbp]
  \centering
  \includegraphics[width=\linewidth]{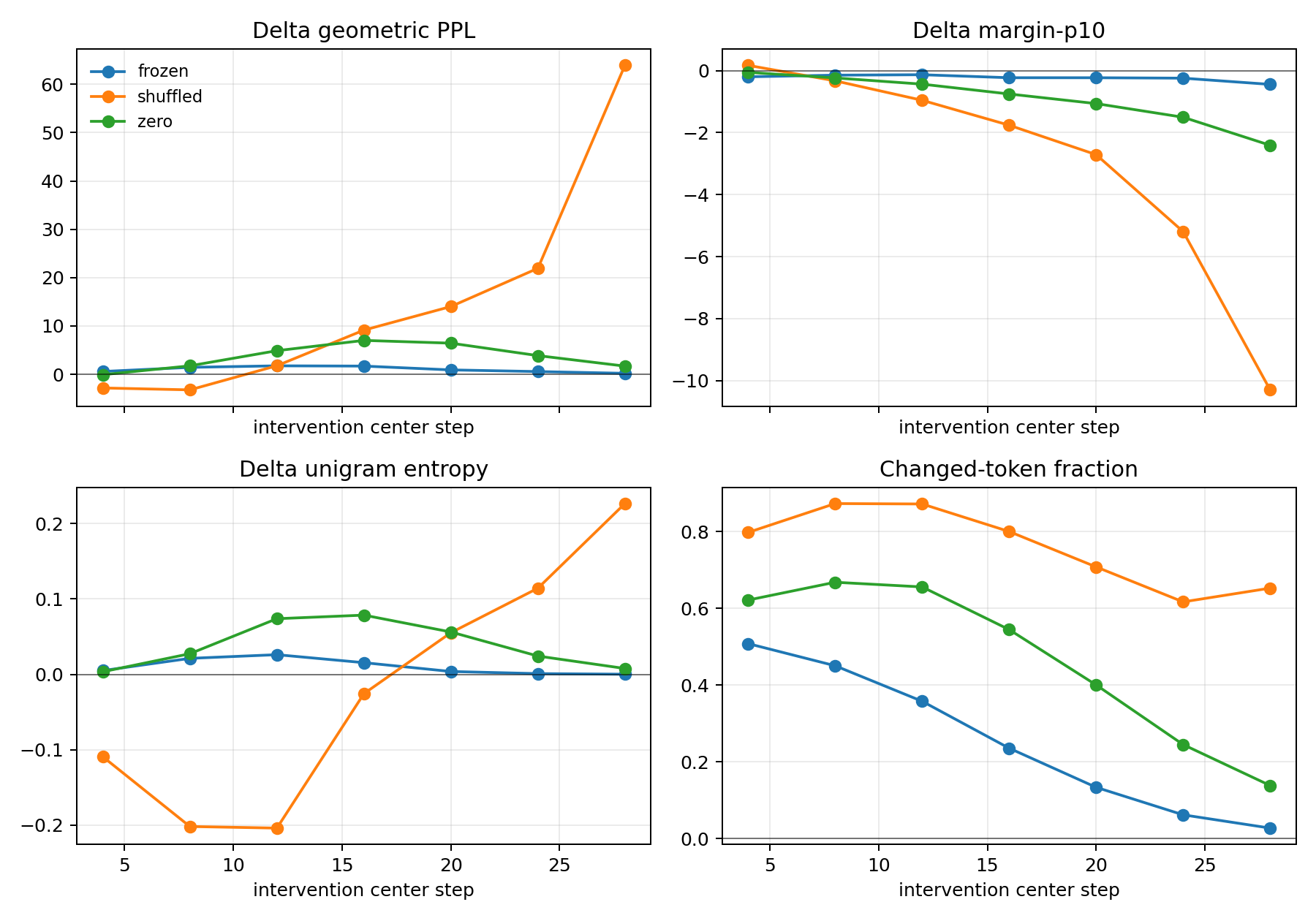}
  \caption{Phase-resolved self-conditioning causal sweep on 256 ELF-B SDE32 samples with the same initial states and SDE noise streams. Top-left: PPL change relative to the matched baseline for zero, frozen, and shuffled self-conditioning in equal-width 3-step windows at seven centers. Zero-SC degradation peaks at centers 16--20, aligning with the competition region where self-conditioning disagreement is largest. Top-right: final 10th-percentile decoder-margin change; shuffled SC becomes increasingly destructive in the locked phase, indicating cross-sample state mismatch rather than the same mechanism as zero/frozen SC. Bottom-left: unigram-entropy change, showing that early shuffled-SC PPL improvements coincide with lower diversity. Bottom-right: changed-token fraction relative to the matched baseline.}
  \label{fig:sc-phase-causal}
\end{figure}

\parhead{Same-interface scale-trend supplement}
The main text uses scale shifts as a falsifiable trend, not a scaling law. To avoid mixing model size with sampler depth, we add a condition-matched check of the Margin-8 crossing phase $\tau_M(8)$. For each shard, we linearly interpolate the first phase at which the predicted-clean lower-tail native margin crosses $8$, then average across fixed-checkpoint sampling shards. \tabref{tab:p5-tau-scaling} shows that SDE32 and SDE64 independently produce monotone B/M/L trends with similar log-log slopes. ODE32 is flatter and has lower final margin depth at larger scale, but its M/L rows are single-shard estimates and are included only as indicative sampler contrast. The supported conclusion is therefore sampler-conditioned earlier basin entry with scale, not a universal exponent.

\begin{table}[!htbp]
\centering
  \caption{Condition-matched scale supplement for the Margin-8 crossing phase. $\tau_M(8)$ is linearly interpolated per shard from the 10th-percentile native decoder margin of $\hat{x}_t$ and is the canonical table used for same-sampler entry-timing trends. SEM is across fixed-checkpoint sampling shards; one-shard ODE32 entries report no SEM, so the ODE32 slope is indicative only. The SDE trends fall in the rough $N^{-0.12}$--$N^{-0.14}$ range under matched sampler settings, but these public checkpoints are not a controlled scaling suite.}
\label{tab:p5-tau-scaling}
\small
\begin{tabular}{llrrrr}
\toprule
Sampler & Model & Params (M) & Shards & Samples & $\tau_M(8)$ \\
\midrule
SDE32 & ELF-B & 105 & 4 & 512 & $0.849 \pm 0.007$ \\
SDE32 & ELF-M & 342 & 2 & 256 & $0.736 \pm 0.031$ \\
SDE32 & ELF-L & 652 & 2 & 256 & $0.657 \pm 0.036$ \\
SDE64 & ELF-B & 105 & 4 & 512 & $0.726 \pm 0.024$ \\
SDE64 & ELF-M & 342 & 4 & 512 & $0.667 \pm 0.012$ \\
SDE64 & ELF-L & 652 & 4 & 512 & $0.582 \pm 0.017$ \\
ODE32 & ELF-B & 105 & 4 & 512 & $0.889 \pm 0.024$ \\
ODE32 & ELF-M & 342 & 1 & 128 & $0.852$ \\
ODE32 & ELF-L & 652 & 1 & 128 & $0.851$ \\
\midrule
\multicolumn{5}{l}{Log-log slope over B/M/L, SDE32} & $-0.138$ \\
\multicolumn{5}{l}{Log-log slope over B/M/L, SDE64} & $-0.115$ \\
\multicolumn{5}{l}{Log-log slope over B/M/L, ODE32} & $\approx -0.03$ \\
\bottomrule
\end{tabular}
\end{table}

To separate entry timing from transition sharpness, we add a coarse missing-cell check for B-SDE64 and M/L-SDE32. The added runs use 64 samples and nine recorded phases, so they are interpretive supplements rather than replacements for the condition-matched $\tau_M(8)$ table above or the larger summaries used in \figref{fig:interface-phase-diagram}. \tabref{tab:phase90-transition-grid} supports the same qualitative conclusion as the shard-level margin audit: SDE compute and denoiser scale move entry earlier, but transition sharpness itself does not form a clean monotone scaling law. In SDE32, M and L enter earlier than the large B-SDE32 cell; in SDE64, B is earlier than B-SDE32 but later and lower-agreement than M/L. However, M has a narrower ZSBD transition width than L in both SDE32 and SDE64. ODE32 transition-width cells are incomplete at larger scale and are omitted from this grid except for the ELF-B sampler contrast. We therefore treat entry timing as the robust same-interface scale signal, while leaving sharpness scaling as an open phase-shape question.

\begin{table}[!htbp]
\centering
\caption{Coarse transition-grid supplement. ``Large'' rows reuse the main larger-sample intermediate trajectory summaries; ``coarse64'' rows are newly added 64-sample, nine-phase audits for missing sampler/scale cells. The $\tau_M(8)^\dagger$ column is an interpolated phase-shape summary under each row's source, not the per-shard native-margin interpolation used in \tabref{tab:p5-tau-scaling}; it can differ slightly because the sample set, phase grid, and aggregation order differ. ODE32 cells are incomplete at coarse scale and are omitted except for the ELF-B sampler contrast. The table supports earlier entry with scale or more SDE steps, but not monotone transition-sharpness scaling.}
\label{tab:phase90-transition-grid}
\small
\begin{tabular}{llrrr}
\toprule
Condition & Source & ZSBD 10--90 width & $\tau_M(8)^\dagger$ & Final ZSBD agree. \\
\midrule
ELF-B ODE32 & large & $0.631$ & $0.923$ & $0.929$ \\
ELF-B SDE32 & large & $0.635$ & $0.853$ & $0.933$ \\
ELF-M SDE32 & coarse64 & $0.538$ & $0.742$ & $0.952$ \\
ELF-L SDE32 & coarse64 & $0.602$ & $0.685$ & $0.957$ \\
ELF-B SDE64 & coarse64 & $0.529$ & $0.721$ & $0.945$ \\
ELF-M SDE64 & large & $0.461$ & $0.667$ & $0.957$ \\
ELF-L SDE64 & large & $0.519$ & $0.596$ & $0.960$ \\
\bottomrule
\end{tabular}
\end{table}

\parhead{Pre-entry bootstrapping audit}
The main phase diagram summarizes pre-entry with decoder-facing scalars. To examine the earliest dynamics more directly, we instrument the first 10 steps of ELF-B SDE32 and store both the noisy state $z_t$ and the predicted clean state $\hat{x}_t$ in two 32-sample repeats. This tensor-capture audit uses the same public checkpoint and sampling settings as the main ELF-B trajectory audit, but it is intentionally smaller because it stores full intermediate states and decodes every early predicted-clean state. We measure consecutive update-direction cosine for $\hat{x}_t-\hat{x}_{t-1}$, per-sample effective rank, native-decoder margin, entropy, and agreement with the final native tokens.

The result is shown in \figref{fig:preentry-bootstrapping}. The early updates are not a random walk: consecutive predicted-clean update directions have mean cosine $0.751$ and $0.780$ across the two repeats. The predicted-clean state gains rank quickly, from about $15$--$18$ at step 0 to $152$--$160$ at step 10, while $z_t$ remains near full rank around $474$. Decoder entropy falls from about $6.20$ to $2.58$--$2.89$ nats, and agreement with final tokens rises from $0.8\%$ to $16.4$--$21.1\%$. However, the lower-tail native margin remains shallow: p10 margin rises only from $0.049$ to $0.18$--$0.22$. Crossings of p10 margin $0.5$ are rare and schedule-sensitive ($2/32$ samples in one repeat and $1/32$ in the other), and neither repeat crosses p10 margin $1$, $2$, or $8$. This supports a narrow interpretation. Pre-entry contains directed bootstrapping of candidate clean states, but it is still outside the robust decoder basin used by BGEE, ZSBD, and MDP.

\begin{figure}[!htbp]
  \centering
  \includegraphics[width=\linewidth]{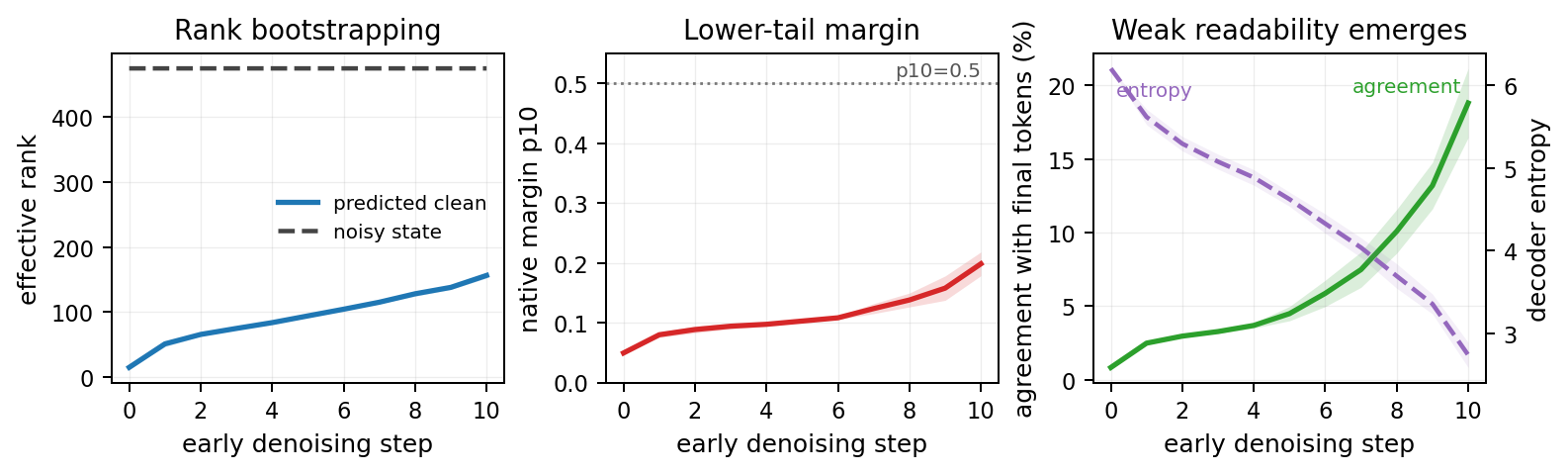}
  \caption{Pre-entry bootstrapping audit on two 32-sample ELF-B SDE32 repeats. Curves show the seed mean and shaded bands show the seed range. Left: the effective rank of predicted clean states grows rapidly while the noisy state remains high rank. Middle: the lower-tail native margin remains shallow during the first 10 steps. Right: agreement with final native tokens rises above chance before high-margin basin entry. The audit supports directed early bootstrapping, but not early locked-basin geometry.}
  \label{fig:preentry-bootstrapping}
\end{figure}

We then test whether the first few SC updates cause this rank expansion. In a matched-noise 256-sample audit, baseline and intervention trajectories share the same initial states, time grid, and SDE noise streams; the intervention sets the SC input to zero during steps 0--5 and then resumes the ordinary sampler. \figref{fig:preentry-zero-sc} shows that the strongest early-SC causal account is not supported. Step 0 is identical by construction. During steps 1--5, zero-SC does not reduce the effective rank of $\hat{x}_t$; it slightly increases it, reaching $111.2$ vs.\ $103.4$ at step 5. Agreement with final native tokens is not delayed either: at step 10 it is $20.8\%$ vs.\ $20.5\%$, and the mean first crossing of 5\% agreement is $5.79$ steps vs.\ $6.11$ for baseline. Final margin changes are small (mean final margin-p10 $12.34$ vs.\ $12.43$). Thus pre-entry rank expansion appears to be supplied by the denoiser prior in this fixed checkpoint, while SC's causal necessity emerges later in the competition window measured by \figref{fig:sc-phase-causal}.

\begin{figure}[!htbp]
  \centering
  \includegraphics[width=\linewidth]{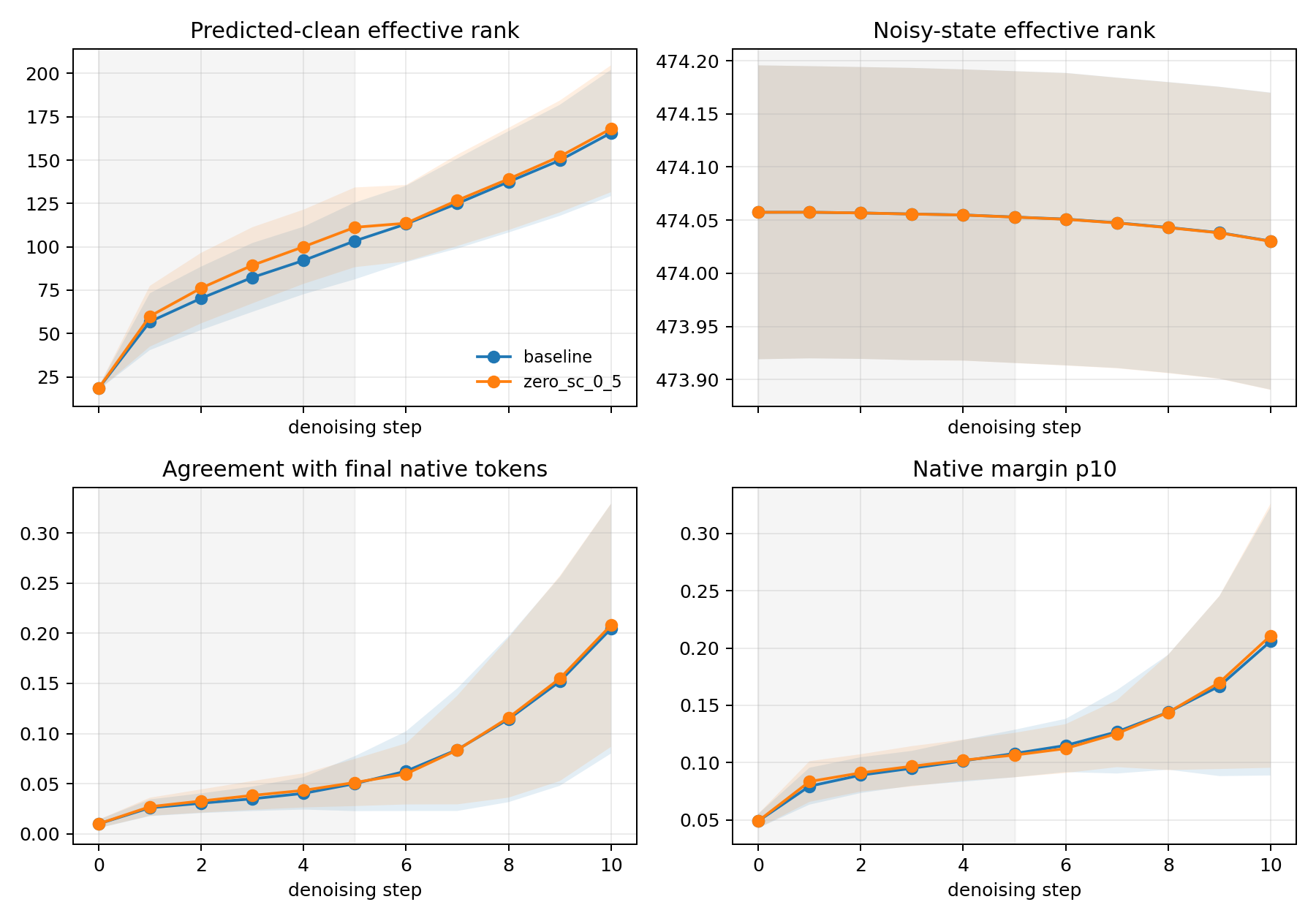}
  \caption{Pre-entry zero-self-conditioning causal audit on 256 matched ELF-B SDE32 trajectories. The intervention zeros SC during steps 0--5, shaded in gray, while preserving the same initial states and SDE noise streams as baseline. Top-left: predicted-clean effective rank grows at least as fast under early zero-SC as under baseline. Top-right: noisy-state rank remains near full rank and is unaffected. Bottom-left: agreement with final native tokens is not delayed. Bottom-right: the lower-tail native margin remains shallow in both cases. The result separates chaotic-phase rank expansion from the competition-phase SC necessity shown in \figref{fig:sc-phase-causal}.}
  \label{fig:preentry-zero-sc}
\end{figure}

\parhead{Rank-origin control}
The low step-0 rank could have had a trivial explanation: perhaps the frozen T5 token geometry itself has only $O(15)$ effective dimensions under our entropy-rank statistic. A lightweight control rejects this explanation. Using the same rank definition, the full T5 token-embedding table has entropy rank about $452$; generated label-token embedding sequences have rank about $254$; clean T5 contextual re-encodes of the same generated text have rank about $339$; and Gaussian sequences with the same shape have rank about $474$. These values are all far above the step-0 predicted-clean rank. The control does not derive the rank-expansion dynamics, but it rules out the most direct token-table intrinsic-dimension account and supports the linearized model in \secref{sec:decoder-basin-theory}: the low-rank seed is a property of the high-noise denoiser-prior map.

\parhead{Token-wise basin entry}
Sample-level early exit hides the fact that token positions enter the basin at different times. We therefore record all 32 predicted-clean states on 512 ELF-B samples and compute, for each valid final-token position, the first and persistent phase at which native-token match, ZSBD match, or a margin threshold is reached. Across $523{,}578$ valid positions, persistent native-token entry reaches $99.96\%$ with mean phase $0.533$ and shard-mean p90 phase $0.794$. Persistent ZSBD entry reaches $93.17\%$, and persistent entry with margin at least $8$ reaches $94.69\%$ with mean phase $0.580$. \figref{fig:tokenwise-basin-entry} shows the average entry timing, while \figref{fig:tokenwise-sequence-stress} shows why sequence-level hard tails remain. Together they explain both the value and the ceiling of BGEE: many easy positions are ready early, but a late hard tail remains, dominated by numeric tokens, rare subwords, and margin-borderline cases.

\begin{figure}[!htbp]
  \centering
  \includegraphics[width=.86\linewidth]{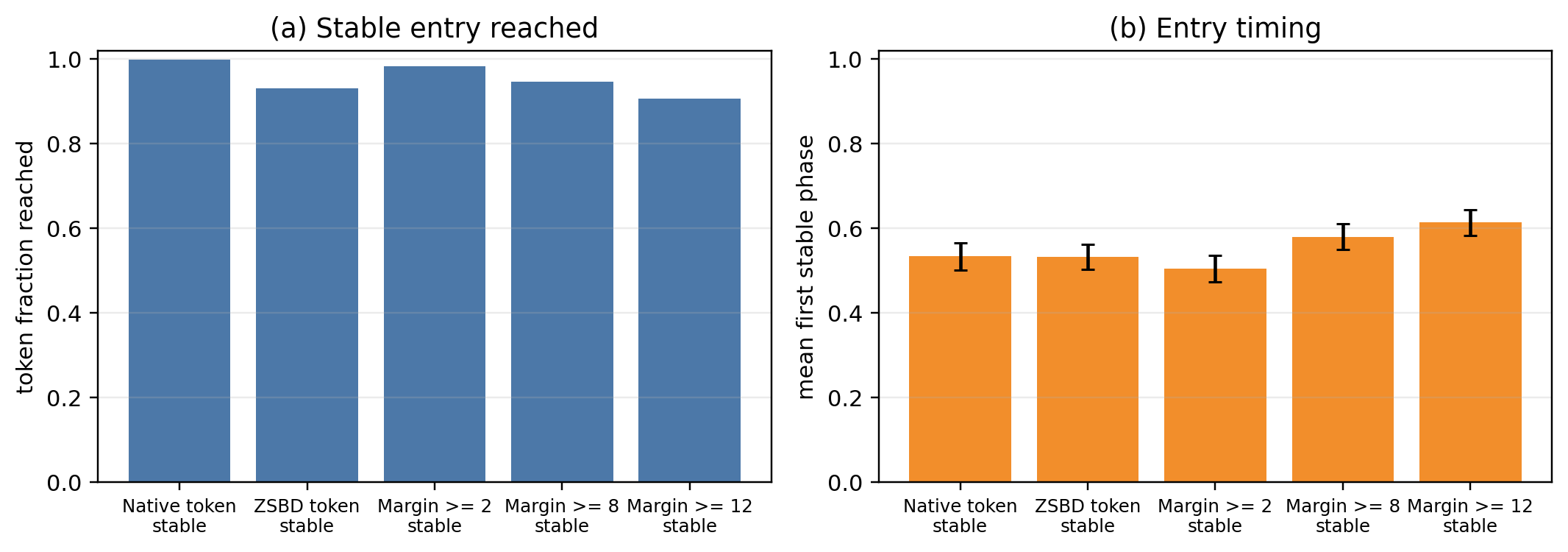}
  \caption{Token-wise basin entry on 512 ELF-B 32-step SDE trajectories. Left: fraction of valid token positions that ever satisfy each persistent-entry criterion. Right: mean first persistent phase for the same criteria, with shard-level SEM. Entry is staggered across positions: many positions stabilize before the final step, but margin-strict and ZSBD criteria retain a late hard tail.}
  \label{fig:tokenwise-basin-entry}
\end{figure}

\begin{figure}[!htbp]
  \centering
  \includegraphics[width=.86\linewidth]{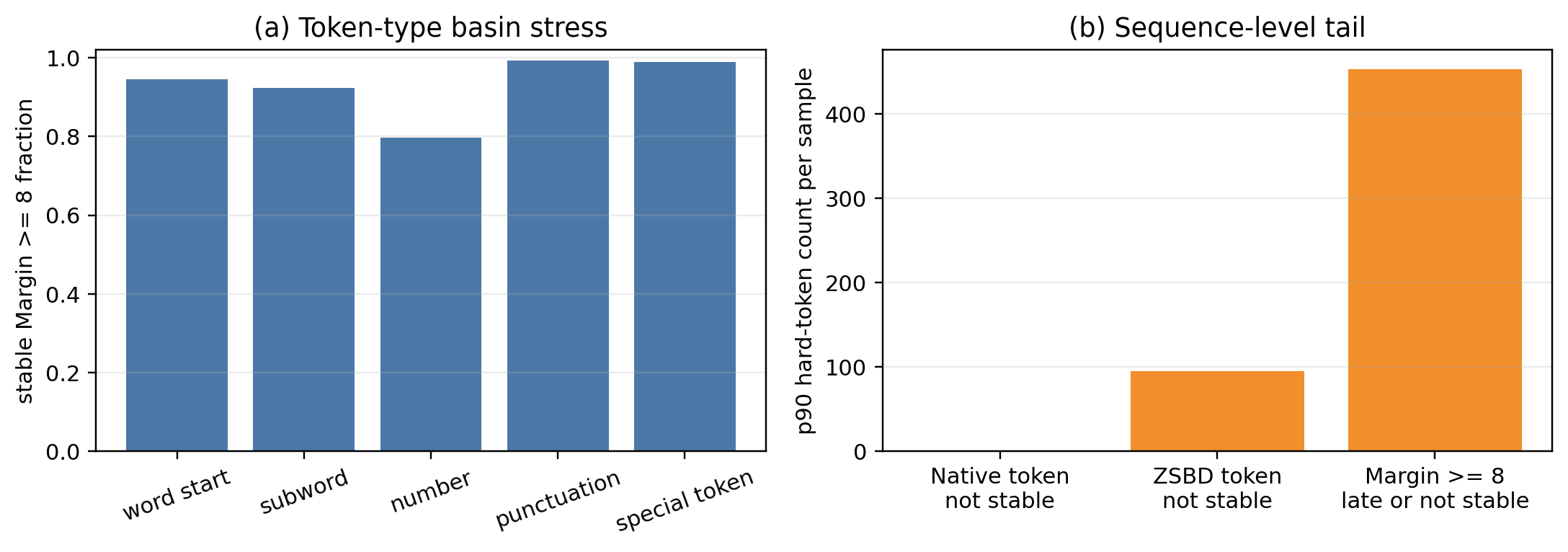}
  \caption{Token-wise basin stress at 512 samples. Left: persistent entry fraction with margin at least $8$, grouped by token type. Right: p90 hard-token count per sample under native-token, ZSBD-token, and margin-at-least-$8$ criteria. Token-average agreement hides a sequence-level tail; numeric/subword-heavy samples keep many positions late or unstable.}
  \label{fig:tokenwise-sequence-stress}
\end{figure}

\FloatBarrier

\parhead{ELF-M PC1 axis audit}
Directional RBN exposes one especially sharp anomaly: a matched perturbation along ELF-M's first principal component leaves only $1.5\%$ native-token agreement, while isotropic, random, and sentiment directions all stay near $99.4\%$. To interpret this axis without turning it into a new method, we compare the same PC1 projection against token type, sample-level surface statistics, and the frozen T5 token-embedding projection. \figref{fig:elfm-pc1-axis-audit} summarizes the result and gives the missing readout for this anomaly. The left panel establishes the specificity of the failure: the same perturbation budget is harmless in random and sentiment directions but catastrophic along PC1. The middle panel shows what dominates the axis: punctuation has much larger absolute projection than numbers, word starts, or ordinary subwords. The right panel explains why this is an interface effect rather than a semantic edit direction: generated-latent PC1 projection is strongly aligned with the T5 token table ($r=0.82$ at the token-label level). At the sample level, the projection is also moderately associated with punctuation and quote density. Thus the ELF-M anomaly is best read as a decoder-sensitive token-boundary axis. Normal trajectories can remain on its valid generated manifold, but a global perturbation along that axis crosses many punctuation and boundary decisions at once. The conclusion is narrow but useful: anisotropy here is not just ``some principal component is fragile,'' but a concrete mismatch between high-variance generated-state directions and the decoder's token-boundary geometry.

\begin{figure}[!htbp]
  \centering
  \includegraphics[width=\linewidth]{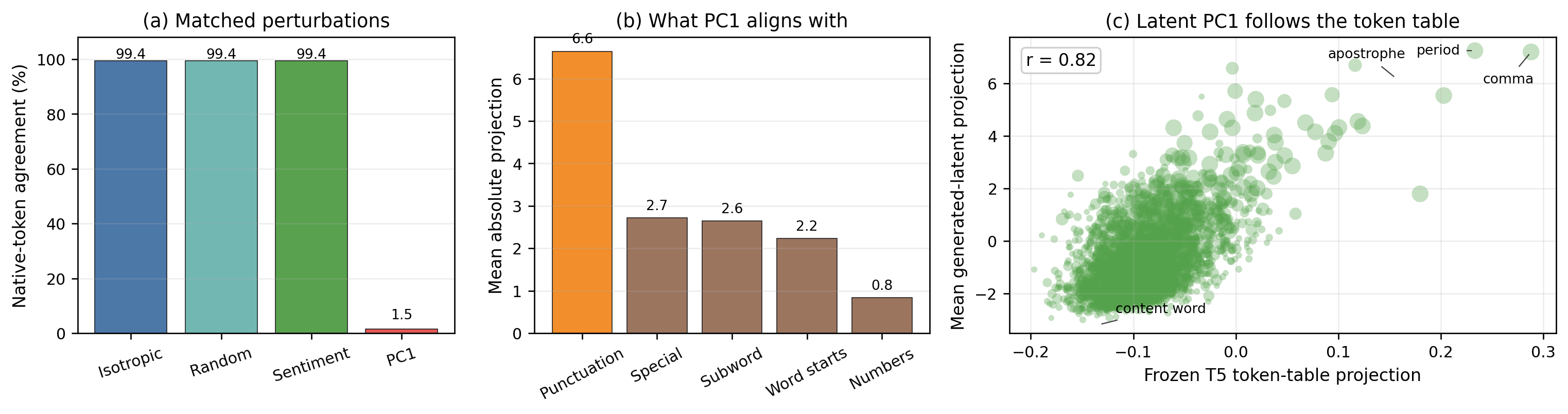}
  \caption{ELF-M PC1 axis audit. Left: matched perturbations at scale $1.0$ show that isotropic, random, and sentiment directions remain inside the decoder basin, while the first principal component nearly destroys native-token agreement. Middle: the same PC1 projection is dominated by punctuation tokens rather than by numbers, word starts, or ordinary subwords. Right: mean generated-latent projection by token label is strongly aligned with the frozen T5 token-table projection ($r=0.82$), with punctuation tokens at the high-projection edge. This identifies the fragile direction as a token-boundary anisotropy rather than a named semantic direction.}
  \label{fig:elfm-pc1-axis-audit}
\end{figure}

\FloatBarrier

\parhead{ZSBD geometry ablation}
ZSBD uses cosine nearest neighbors in the frozen T5 token-embedding table. To test what part of this geometry matters, we rerun the lookup on 1024 ELF-B final latents with unnormalized dot-product lookup, Euclidean nearest-neighbor lookup, cosine lookup after whitening, small frequency-biased cosine variants, and a permuted-label embedding table. \figref{fig:zsbd-geometry-ablation} summarizes the ablation. Cosine reaches $93.41\%$ agreement with the native decoder. Unnormalized dot-product lookup drops to $81.67\%$, Euclidean lookup to $3.44\%$, cosine after whitening to $4.29\%$, and permuted labels to near zero. A small frequency bias reaches about $94.03\%$, only slightly above pure cosine. The result identifies the useful structure as labeled angular token geometry, not vector norm, Euclidean proximity, or frequency alone. This also resolves an apparent tension with the order-sensitivity controls: bare token embeddings are order-blind, but ZSBD is applied to contextual final latents whose directions have already been shaped by the denoising trajectory. Cosine lookup reads out that contextual angular alignment; it does not make the token table itself an order-sensitive representation.

\begin{figure}[!htbp]
  \centering
  \includegraphics[width=.78\linewidth]{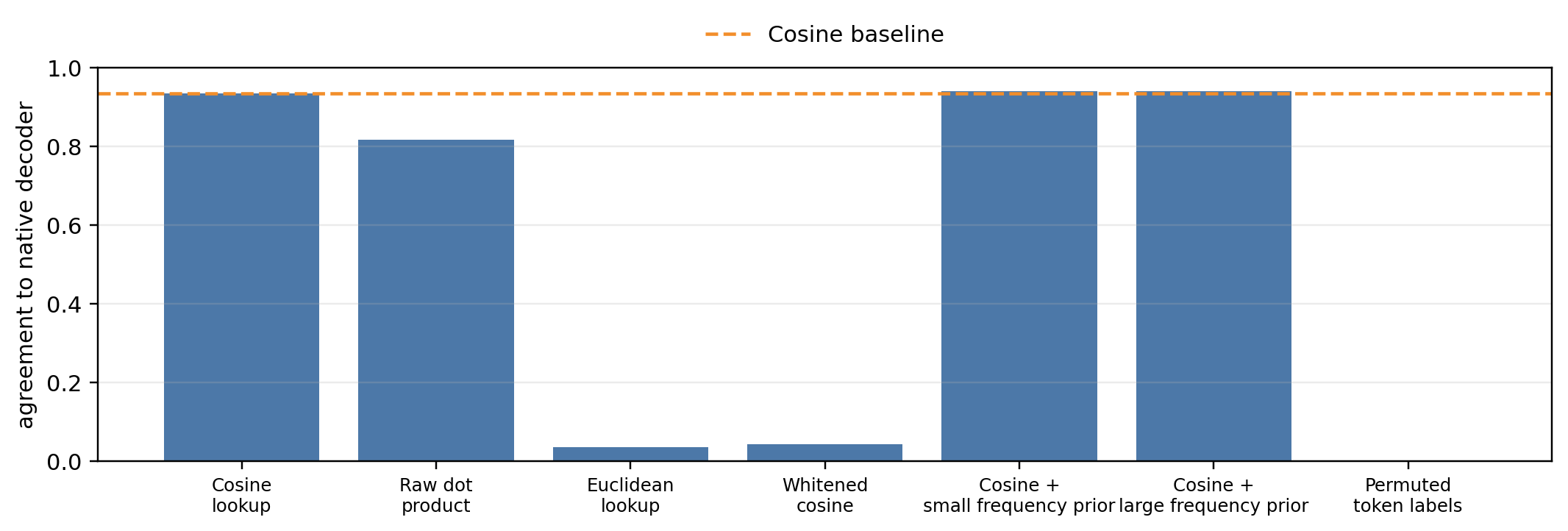}
  \caption{Geometry ablation for Zero-Shot Basin Decoding. Bars compare nearest-neighbor lookup rules on the same final ELF-B states; the dashed line marks the pure cosine baseline. Cosine lookup in the labeled T5 token-embedding table is strong, while Euclidean distance, whitening, or label permutation destroys agreement. Frequency gives a small calibration boost but does not explain the effect.}
  \label{fig:zsbd-geometry-ablation}
\end{figure}

\parhead{Residual-tail repair target analysis}
We reuse the 32k MDP residuals and ask which cheap gates capture the remaining disagreement with the native decoder. The MDP error rate is $2.11\%$. A ZSBD-wrong gate covers only $6.63\%$ of token positions but contains $73.4\%$ of MDP errors. The rare-token quintile covers $20\%$ of positions and contains $76.3\%$ of errors, while a native low-margin quintile is an oracle-style diagnostic that contains $91.7\%$ of errors. Numerics and subwords are smaller but sharper stress cases: numeric tokens cover only $0.65\%$ of positions but have an $8.69\%$ error rate, and subwords cover $13.0\%$ with a $5.21\%$ error rate. The tail-gated readout check in the main text uses the same 24576/4096 32k MDP split and evaluates GPT-2-Large PPL on the held-out 4096 texts. Its practical confidence gates use the learned linear readout's top-1-vs-top-2 margin; the ZSBD-disagreement gate compares the linear readout to frozen T5 nearest-neighbor lookup; and the broad tail gate adds predicted rare, numeric, or subword positions. These gates are evaluated as diagnostics of tail predictability, not as a deployed partial-native decoder.

We then ask whether the tail disappears with same-interface denoiser scale. A matched 512-sample exact ELF-B rerun preserves the 32k story at smaller scale: MDP agreement is $94.81\%$, ZSBD agreement is $93.16\%$, and the MDP-mismatch margin p10 is only $0.82$. For cross-scale direction, we use 64-sample exact native-token pilots for B/M/L; these pilots intentionally under-train the linear readout, so their MDP agreement is not used as a scaling law. Their training-free ZSBD and native-margin statistics are still informative because they use the same frozen token-embedding lookup and native decoder definition across scales. \tabref{tab:tail-scale-supplement} shows a heterogeneous tail. Rare, numeric, and subword ZSBD errors fall sharply from B to M/L, consistent with scale reducing some navigation difficulty. But the very-high-frequency bucket remains near a $13$--$15\%$ ZSBD error floor, indicating a different component: embedding-nearest-neighbor ZSBD and the native decoder boundary are not equivalent even for common tokens. Thus the residual tail is not only a rare-token problem. It mixes scale-sensitive lexical stress cases with an interface-floor component that still requires native decoder calibration.

\begin{table}[!htbp]
\centering
\caption{Cross-scale residual-tail supplement from exact native-token pilots. The pilots use 64 samples for B/M/L and are used for ZSBD/native-margin tail typing, not as MDP scaling laws. Error columns report ZSBD error rates within each group.}
\label{tab:tail-scale-supplement}
\small
\begin{tabular}{lrrrrrr}
\toprule
Model & ZSBD agree. & Rare err. & Number err. & Subword err. & Very-high-freq err. & Native p10 \\
\midrule
ELF-B & $0.933$ & $0.120$ & $0.078$ & $0.110$ & $0.149$ & $12.35$ \\
ELF-M & $0.953$ & $0.042$ & $0.026$ & $0.079$ & $0.134$ & $13.59$ \\
ELF-L & $0.957$ & $0.026$ & $0.021$ & $0.067$ & $0.133$ & $13.75$ \\
\bottomrule
\end{tabular}
\end{table}

\parhead{Sampler-extension boundary checks}
The same diagnostics suggest several training-free or small-training sampler extensions, but the fixed-checkpoint results separate promising monitors from unsafe shortcuts. Pure ZSBD has its own confidence signal: routing only the lowest $10\%$ of ZSBD cosine-gap positions to the native decoder raises agreement from $93.37\%$ to $96.77\%$, and routing the lowest $20\%$ reaches $99.73\%$. A lightweight margin proxy is also promising. A 4.7k-parameter MLP using only online non-decoder features predicts the sequence 10th-percentile margin with test Spearman $0.985$ and MAE $0.53$; a conservative margin-$8$ proxy gate has $0.5\%$ false-safe rate and $75.7\%$ recall. These results support a cheap pre-filter followed by occasional native-margin verification. Equal-budget SC-update rejection gives a second training-free signal. We set the threshold on a held-out fixed-SC audit, using the normal sampler's free update delta rather than the more expensive zero-vs-SC disagreement. With $K=3$ candidates on 128 new samples, the random equal-budget baseline has PPL $23.45$, threshold rejection reaches $22.08$, and best-score selection reaches $20.96$; however, entropy and distinctness fall and repetition rises, so this is a PPL-diversity frontier, not a solved sampler.

A softer token-wise commitment variant clarifies what part of the failure is due to hard freezing. Instead of holding a committed position fixed, we blend the model update with the previous latent after a position crosses a native-margin threshold, using $\alpha\in\{0.1,0.3,0.5\}$ and thresholds $\{4,8,12\}$ on 1024 matched ELF-B SDE32 samples. The variants commit $93.8$--$99.5\%$ of valid tokens and estimate $39.4$--$55.9\%$ potential token-step savings. High-threshold variants preserve distributional quality: the margin-12 settings obtain PPL $23.55$--$23.66$ versus the matched baseline $23.70$, with essentially unchanged text entropy. They do not preserve native identity, however: token agreement to the matched baseline is only $81.9$--$84.1\%$ for those same margin-12 variants, and the current full-monitor implementation is about $3.07\times$ slower than baseline. Thus soft TEC is a useful boundary result. Token-wise entry contains enough signal to preserve local text quality under gentle commitment, but a production rule would still need a cheap monitor, sparse execution, and a consistency mechanism that avoids changing many downstream token decisions. This is why TEC remains a future sampler-design problem rather than an additional claim of this paper.

As a small training-time TEC surrogate, we ask whether the late interface readout can be made robust to stale token positions without retraining the denoising transport. Using the cached Phase 73 trajectories, we train two step-28 full-decoder adapters on 512 samples: one on clean step-28 states, and one with $25\%$ of valid token positions randomly replaced by their step-24 states during training. This stale-state augmentation leaves clean agreement unchanged ($94.85\%$ to $94.92\%$). Under evaluation with $25\%$ and $50\%$ step-24 replacement, agreement improves only slightly ($+0.24$ and $+0.29$ pp), but under harder step-20 replacement it improves by $+1.03$ and $+2.16$ pp. This is a positive feasibility signal for exposing the interface to committed or stale positions during training. It is not a solution to TEC: the denoiser was not retrained, no PPL or wall-clock gain is claimed, and the result only shows that a local adapter can learn some stale-state tolerance.

Two natural shortcuts fail and are therefore kept as negative controls. Token-wise Early Commitment freezes positions once their native margin exceeds a threshold. At 128 samples, margin-$8$ TEC commits $91.9\%$ of tokens and estimates $45.4\%$ token-step savings, but PPL worsens from $23.83$ to $38.67$, the 10th-percentile margin falls from $12.33$ to $2.56$, and agreement to the matched baseline falls to $28.9\%$; margin-$12$ is less aggressive but still poor. Token-wise basin entry is real, but local clean-latent freezing breaks later global consistency. One-step projection from noise also fails. A token-level ridge map from initial noise to final clean states, trained on held-out generated trajectories, agrees with the full ELF sampler on only $3.42\%$ of tokens. The oracle token-embedding projection reaches $97.83\%$ agreement, showing that the token embeddings can represent the final labels; what is missing is the nontrivial, path-dependent transport from noise into contextual basins.

The same conclusion appears in late-interface distillation. A lightweight per-step denoiser trained to predict the final state from selected trajectory states improves sharply with phase and reaches $66.5\%$ native-token agreement at step 28, far above the one-step-from-noise baseline but still far from a usable sampler. Adding decoded cross-entropy to make a joint transport-and-decode objective does not solve the problem: the best joint variant reaches only $19.2\%$ agreement, and a Jacobian-regularized variant reaches $17.1\%$, even though both can produce deceptively low PPL in earlier phases. This failure is informative. Late states are learnable enough to support bypass and routing, but learning transport and token calibration jointly in a few-step student is a different problem from reading out a generated basin.

\begin{figure}[!htbp]
  \centering
  \includegraphics[width=\linewidth]{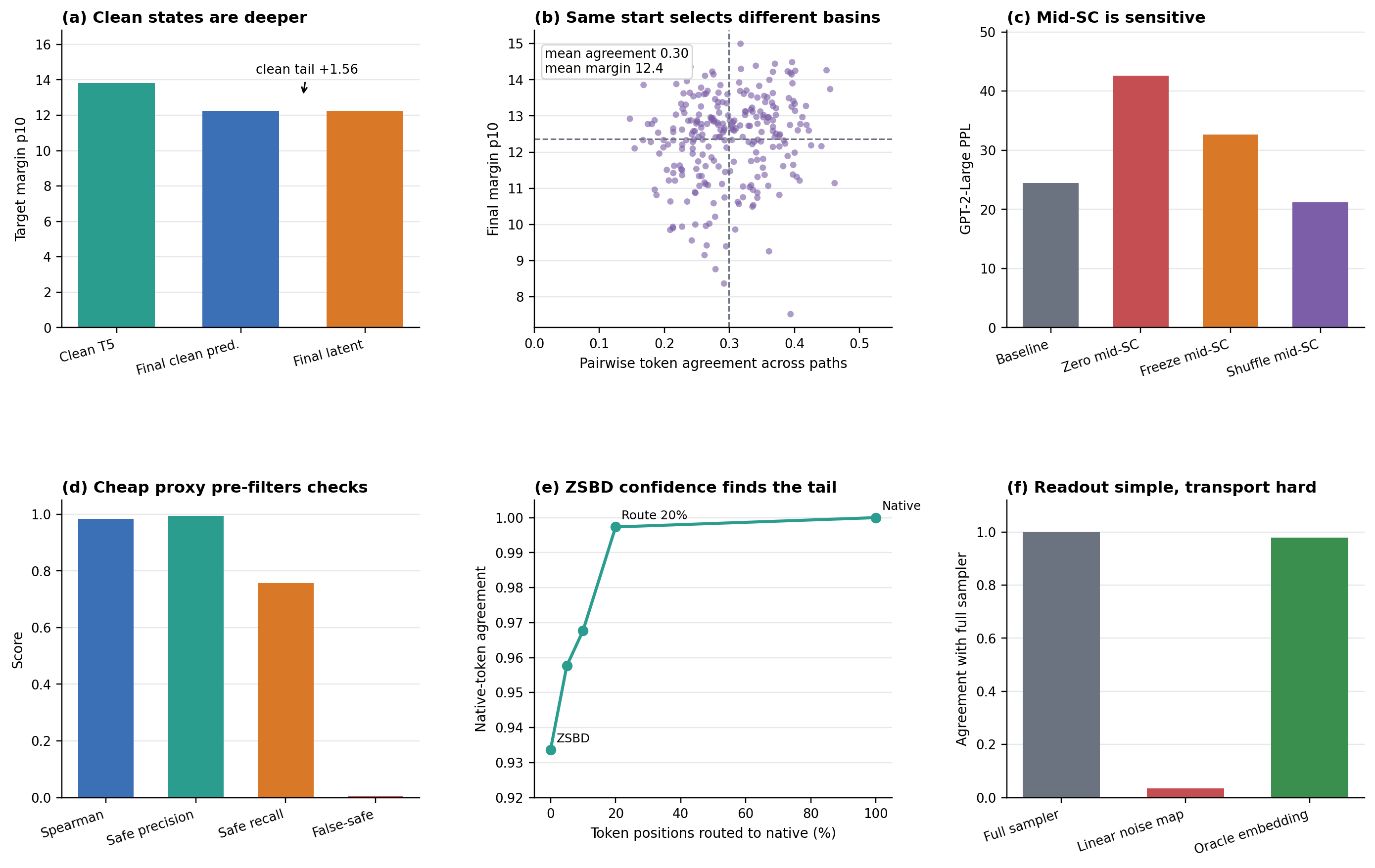}
  \caption{Additional insights mined from completed experiments. (a) Clean T5 interface states are deeper than generated final states in the lower margin tail, so ELF reaches a readable generated decoder-basin region rather than exactly reproducing the clean manifold. (b) Same-start SDE paths end in high-margin decoder-basin regions but choose different token labels, showing path-selected rather than unique attraction. (c) Mid-trajectory self-conditioning interventions change PPL and margins; shuffling can lower PPL while collapsing diversity. (d) A small online proxy can pre-filter native-margin checks with high precision. (e) ZSBD cosine-gap confidence identifies much of the token tail without a learned readout. (f) One-step projection fails even though an oracle token-embedding projection reaches $97.83\%$ agreement, separating simple readout from nontrivial transport.}
  \label{fig:existing-result-mining}
\end{figure}

\FloatBarrier

\parhead{Final-basin latent editing}
As a boundary test for latent editing, we estimate a sentiment direction from Yelp Polarity embeddings and add it to ELF final latents under a native-margin acceptance gate. This experiment tests whether the decoder basin remains a linearly editable semantic canvas. It does not under conservative margins: target success changes only modestly because most tokens remain inside the decoder basin. We then run an adaptive line search along the same sentiment direction. This converts the failure into a sharper boundary test. \figref{fig:bcd-final} shows that a safety margin of $2$ changes only $4.1\%$ of tokens and raises target success from $68.5\%$ to $72.9\%$ with little PPL change; forcing the margin down to zero raises target success to $99.7\%$ but changes $38.4\%$ of tokens and raises PPL to $51.0$. Finally, we test two stronger final-state repairs on 128 samples across four GPU shards: a native decoder-gradient direction that directly increases positive sentiment-token logits, and a token-selective variant that applies the same direction only to the top sentiment-evidence positions. \figref{fig:bcd-decoder-gradient} shows that the all-position decoder-gradient direction is more effective but still margin-limited; at safety margin $4$ it changes $2.6\%$ of tokens and raises target success to $76.6\%$ with PPL $26.8$, while at safety margin $2$ it reaches $99.2\%$ target success but raises PPL to $41.7$. Token-selective variants remain absorbed by the decoder basin, changing only about $0.6\%$ of tokens and leaving target success unchanged. \figref{fig:bcd-trajectory-gradient} shows that a trajectory-level repair is more benign but still not a clean editor: step-16 decoder-gradient intervention raises target success from $64.1\%$ to $69.5\%$ with similar PPL ($24.6$ vs.\ $24.4$), but requires changing $23.7\%$ of tokens. BCD is therefore a boundary result rather than a probe contribution.

\begin{figure}[!htbp]
  \centering
  \includegraphics[width=\linewidth]{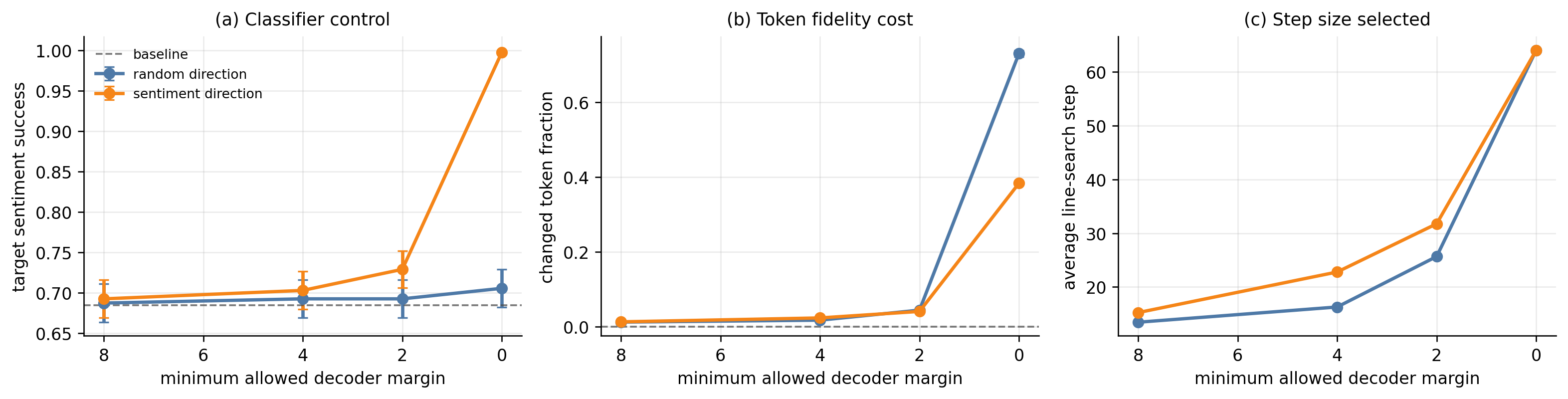}
  \caption{Decoder-basin BCD sentiment editing. Left: target sentiment success under adaptive line search. Middle: changed-token fraction relative to the unedited decode. Right: average line-search step selected by the margin constraint. Safe-margin edits are largely absorbed by the decoder basin, whereas boundary-level edits can force target sentiment only by taking large steps that sacrifice token fidelity.}
  \label{fig:bcd-final}
\end{figure}

\begin{figure}[!htbp]
  \centering
  \includegraphics[width=\linewidth]{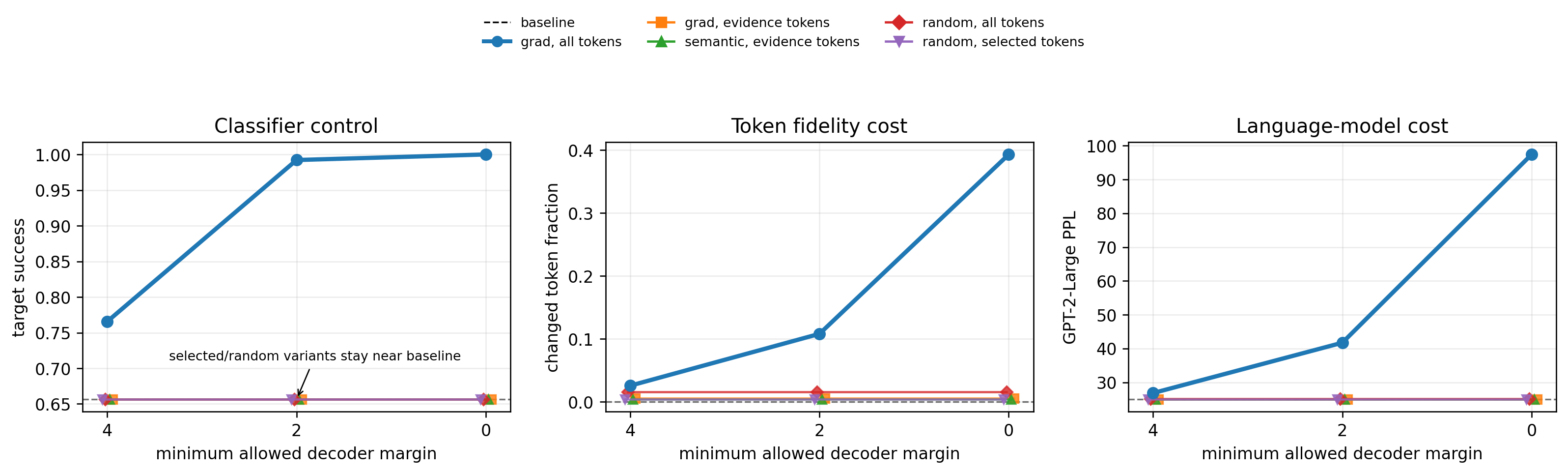}
  \caption{Decoder-gradient and token-selective BCD repairs. Left: target classifier success as the minimum allowed decoder margin is relaxed. Middle: changed-token fraction relative to the unedited decode. Right: GPT-2-Large PPL of the edited outputs. Decoder-gradient directions can control the target classifier, but the quality/fidelity cost rises sharply as the margin lower tail is relaxed. Perturbing only sentiment-evidence positions is almost completely absorbed by the decoder basin.}
  \label{fig:bcd-decoder-gradient}
\end{figure}

\begin{figure}[!htbp]
  \centering
  \includegraphics[width=\linewidth]{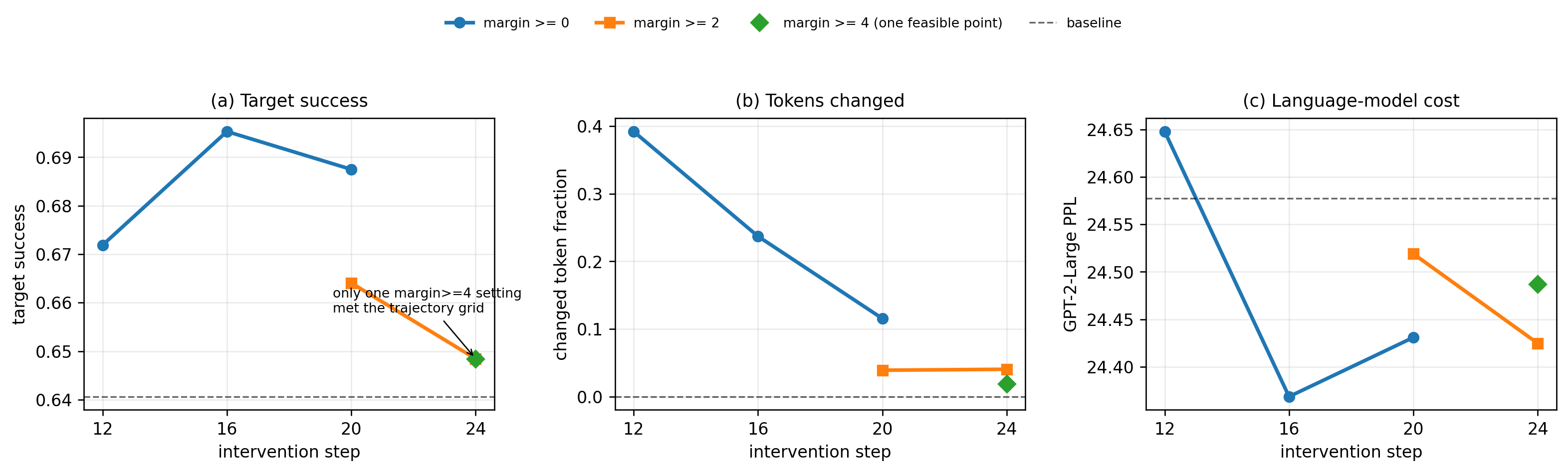}
  \caption{Trajectory-level BCD repair. Left: target sentiment success when the decoder-gradient intervention is applied at steps 12, 16, 20, and 24. Middle: fraction of tokens changed relative to the unedited decode. Right: GPT-2-Large PPL of the edited outputs; dashed lines mark the unedited baseline. Earlier intervention can slightly improve the target classifier at similar PPL, but the gain comes from changing a substantial fraction of tokens. Late or margin-conservative interventions are absorbed by the basin.}
  \label{fig:bcd-trajectory-gradient}
\end{figure}

\FloatBarrier

\parhead{Layer-wise decoder-basin sweep}
For T5~\citep{t52020}, we extract layers 0--6 and the final hidden state, normalize them using ELF's latent mean and standard deviation, and decode them with ELF's native decoder at several corruption levels. For BERT~\citep{bert2019}, RoBERTa~\citep{roberta2019}, and GPT-2~\citep{gpt22019}, we use each model's native LM head instead of cross-decoding through ELF. GPT-2 hidden states predict the next token, so same-position recovery is treated as a negative-control sanity check rather than a fair encoder ranking. The extended IRS checks for T5-base/large, mT5, ByT5, Switch, and T5Gemma use the same fixed held-out text subset and report denoisability, order sensitivity, margin depth, and lightweight readout agreement under each tested pairing. They are intended to test interface compatibility, not to compare the general-purpose quality of pretrained encoders with different dimensions or tokenizers.

\parhead{PPL evaluation}
GPT-2-Large~\citep{gpt22019} PPL is computed with fixed truncation length 1024, using per-sample negative log likelihoods and token counts. We report corpus-level PPL from aggregated NLL rather than averaging per-sample PPL. Per-sample PPL is used only for scatter plots and Spearman correlations.

\parhead{Long-form topic audit}
The long-form boundary audit uses 1000 OpenWebText reference samples with at least 200 words, 1000 ELF-B SDE64 samples, 1000 ELF-B SDE32 samples, 512 available ELF-B ODE32 samples, 1000 MDP-B 32k linear-readout samples, and 1000 ZSBD-B 16k lookup samples. We split each document into sentence-like spans with a punctuation-and-capitalization heuristic, discard spans shorter than five words, and keep at most the first 32 spans. Each span is embedded with a locally stored sentence-embedding model using mean pooling over the final hidden states followed by L2 normalization, following the Sentence-BERT protocol~\citep{sentencebert2019}. We then report adjacent-sentence cosine, first-to-last sentence cosine, embedding dispersion, and the fraction of adjacent transitions below the 5th percentile of the OWT adjacent-cosine distribution. These metrics are used only as boundary diagnostics for document-level topic drift; they are not a replacement for human evaluation or task-specific discourse metrics.

\parhead{MAUVE and reference distribution metrics}
MAUVE~\citep{mauve2023} is computed on generated samples against OpenWebText~\citep{openwebtext2019} reference samples using a GPT-style feature extractor. JS divergence is computed over token unigram distributions. Both metrics are complementary rather than decisive: MAUVE is sensitive to sample count and feature choice, while token JS captures only shallow distributional shifts.

\parhead{Decoder calibration}
For learned decoder baselines, we generate final ELF latents, split them into train and held-out subsets, and train small token-level decoders to imitate the official decoder's argmax tokens. We evaluate official decoder, direct unembedding, learned linear decoder, two-layer MLP decoder, and three-layer MLP decoder. Entropy-matched variants sample from small decoders at a temperature chosen to approach the official decoder's logit entropy.

For the late-bypass follow-up, we use the same generated-trajectory protocol but train readouts on intermediate predicted-clean states. At step 28, a plain linear readout trained on 1024 trajectories reaches $87.6\%$ agreement and a two-layer MLP reaches $92.6\%$, leaving a visible $\approx7.4$ pp gap to the native final decode. Increasing to 2048 trajectories and using the full decoder-head bottleneck raises agreement to $95.3\%$ with PPL $23.65$, close to the official decode. This is a systems signal rather than a new decoder claim: late states are nearly readable by a lightweight head, but the residual tail still requires confidence routing or native calibration.

\parhead{Decoder margin analysis}
For each final latent $h$ and official token $i$, we compute the margin between the logit of $i$ and the largest competing logit. We then corrupt $h$ with increasing noise and measure positive-margin fraction and token recovery. A subset of token positions is also used for a first-order boundary estimate $m/\|\nabla_h(g_i-g_j)\|_2$, where $j$ is the strongest competitor. This estimate is not an exact distance for the nonlinear contextual decoder, but it makes \thmref{thm:decoder-margin} operational and reveals whether many tokens are close to a decision boundary.

\parhead{Decoder-noise training ablation}
We used the official PyTorch ELF training pipeline to train short ELF-B ablations on four RTX 3090 GPUs. We varied only the decoder-input noise scale in the decoder branch, keeping architecture, data path, and training length fixed. The basin evaluation uses held-out T5 latents and the native decoder from each checkpoint. For a fair noisy-basin comparison, we report both absolute token recovery at each corruption level and retention relative to the checkpoint's own clean recovery.

\parhead{Decoder-branch frequency ablation}
We also vary the decoder-branch probability while fixing the decoder-input noise scale at $5$. This separates decoder-basin learning from flow learning. The pure-denoising run has no CE branch and therefore cannot decode. The decoder-only run has no L2 branch and therefore cannot learn the transport task, even though its synthetic corrupted-latent basin is broad. Intermediate settings test how much decoder exposure is needed to form a robust native interface.

\parhead{Explicit margin-loss ablation}
We add a hinge-style penalty on the gap between the correct-token logit and the strongest competing logit in the decoder branch, using weights $0.05$ and $0.10$ with the official-like decoder-input noise scale $5$ and decoder-branch probability $0.2$. The goal is not to propose a new training recipe, but to test the natural hypothesis that the fragile margin tail can be repaired by directly maximizing logit margins. The evaluation uses the same basin sweep as the decoder-noise ablation.

\parhead{10k margin-loss extension}
Because a short 5k run could understate the effect of the margin objective, we extend the official-like baseline and the $0.10$ margin-loss run to 10k steps. Both use the same OWT data path, global batch size, optimizer, decoder noise scale, and decoder branch frequency. The 10k comparison is evaluated with raw weights and the same 1024-sample basin sweep. This extension tests whether the direct margin penalty eventually becomes a basin-widening objective once clean decoding and CE loss have improved.

\parhead{PCA controlled degradation}
For the PCA bottleneck experiment, we fit PCA on contextual T5 latents from OpenWebText, project to ranks $r\in\{512,256,128,64,32\}$, and reconstruct back to the native 512-dimensional ELF interface. Decoder diagnostics are computed on held-out clean T5 latents. Generation uses the official ELF-B SDE sampler with either a final-only bottleneck or an every-step bottleneck. This experiment intentionally keeps the checkpoint fixed; it tests whether the diagnostic flags a controlled interface failure without retraining.

\parhead{Cola-DLM boundary tests}
The Cola-DLM experiments use the released VAE and DiT components only to probe interface behavior. The key test corrupts clean VAE latents as $z_t=t z+(1-t)\epsilon$ and decodes them with the native VAE decoder. The resulting recovery curve measures the width of the decoder-compatible basin, not the full quality of Cola's trained prior. We also run a balanced DiT trajectory audit on four RTX 3090 GPUs: classifier-free guidance (CFG)~\citep{classifierfree2022} values $0/1/3/7$ each use 512 OpenWebText samples, 16 Euler steps, prefix length 16, block size 16, and batch size 64. The batch-size probe reached out-of-memory (OOM) at 128 and used about 18GB/GPU at 64. The audit records decoder entropy, decoded-token change, token recovery to the held-out target block, and latent cosine to the held-out target latent. These target metrics are teacher-forced diagnostics; low target recovery is not a failure of free generation, but a boundary showing that Cola's DiT enters a decoder-confident basin without reproducing the next block's teacher tokens, precisely the interface gap predicted by the margin-basin diagnostic: decoder confidence and ground-truth token alignment are distinct signals.

\section{Additional Objective Metrics}
\label{app:objective-metrics}

\parhead{Training-free and decoder-calibration controls}
The first appendix figures in this section document why the main text treats simple interventions as probes rather than mature methods. \figref{fig:one-line-controls} collects the one-line controls that use late clean-prediction averaging, margin-aware temperature, and shuffled-margin baselines. \figref{fig:large-decoder-calibration} gives the corresponding decoder-calibration view: learned readouts can move PPL, entropy, and agreement in different directions, so a low PPL point is not by itself evidence of a faithful interface. Together, the two figures justify the paper's restraint: the trajectory and margin signals are information-bearing, but local post-hoc rules and low-PPL learned decoders are not reliable replacements for the native denoising/decoding interface.

\begin{figure}[!htbp]
  \centering
  \includegraphics[width=.82\linewidth]{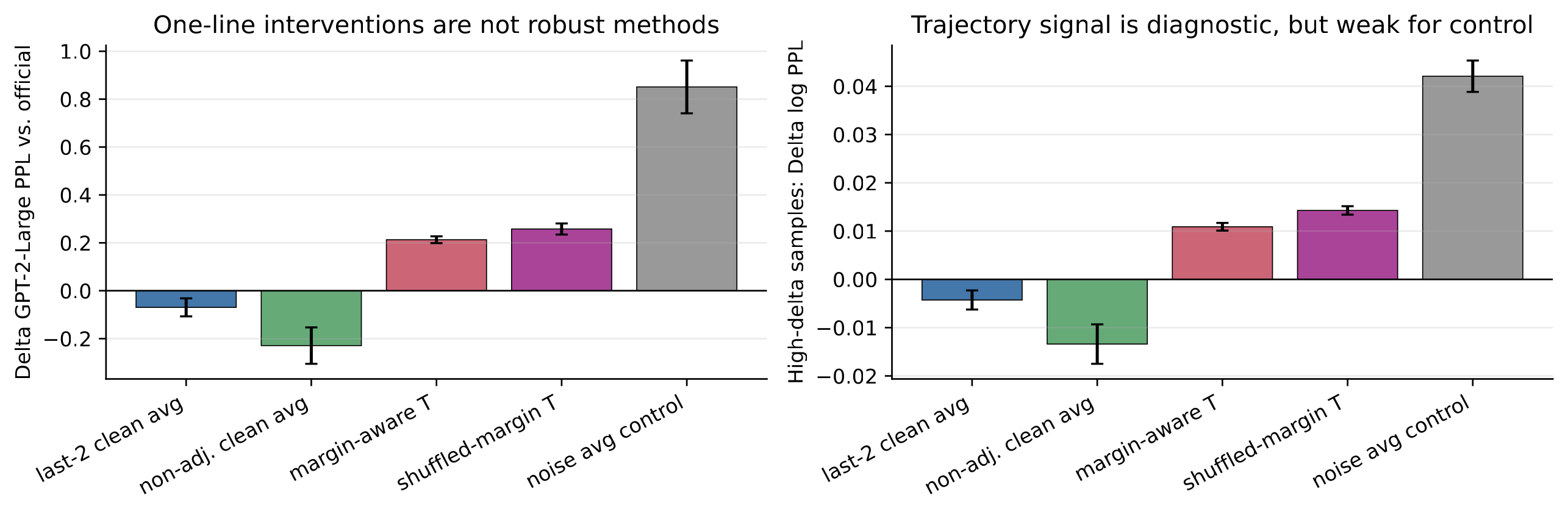}
  \caption{Training-free one-line controls across four runs. Left: late clean-prediction averaging produces small smoothing gains, while averaging with initial noise fails. Right: separating high-delta from low-delta samples shows that the trajectory signal is diagnostic, but margin-aware temperature is no better than a shuffled-margin control. The diagnostic signals are real; local rules alone are not robust samplers.}
  \label{fig:one-line-controls}
\end{figure}

\begin{figure}[!htbp]
  \centering
  \includegraphics[width=.78\linewidth]{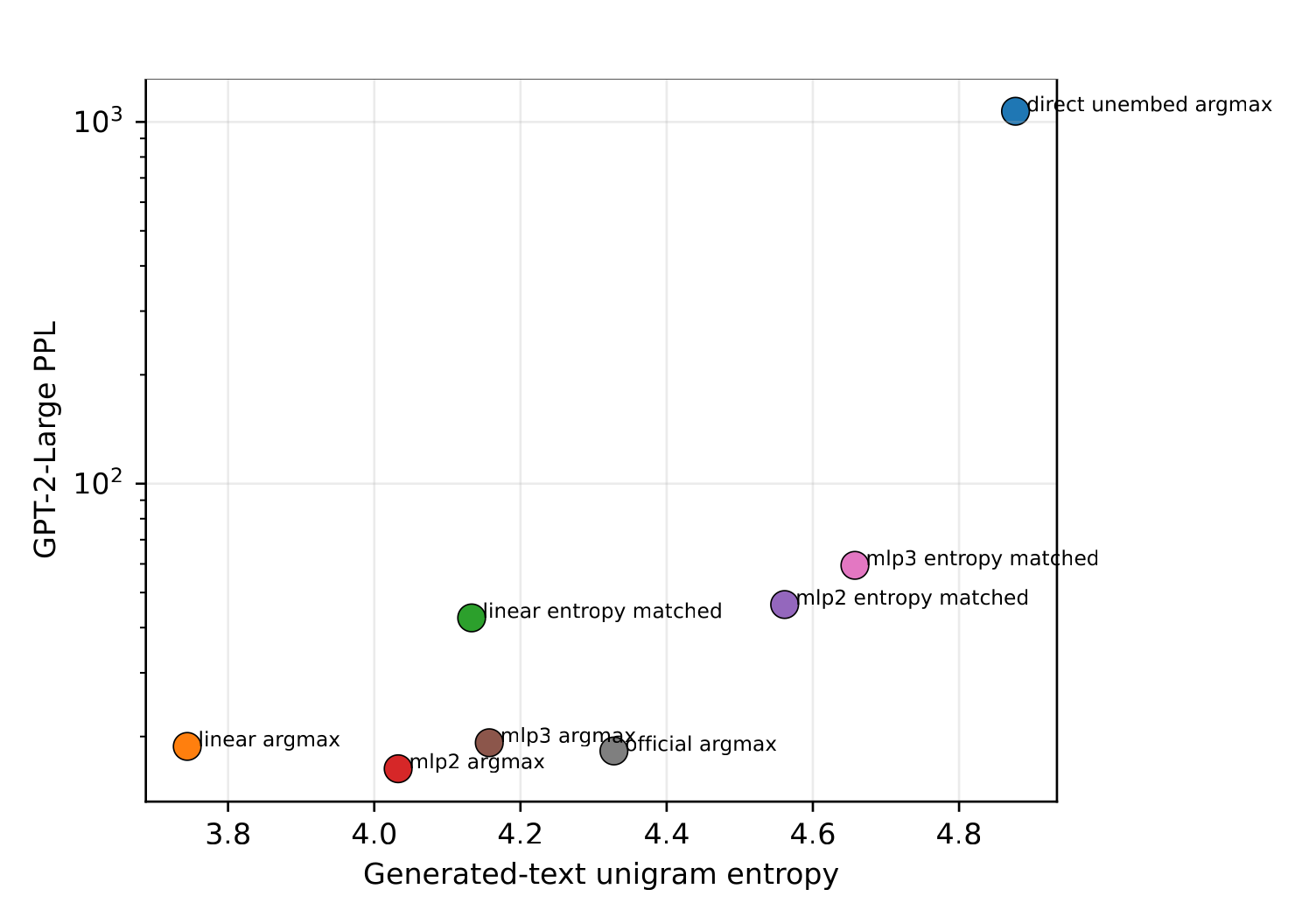}
  \caption{Large matching-set decoder calibration. Each point is a decoder variant evaluated by generated-text unigram entropy on the x-axis and GPT-2-Large PPL on the y-axis. Labels distinguish argmax and entropy-matched variants. Small decoders can reduce PPL, while PPL, entropy, and agreement to the official decoder move independently.}
  \label{fig:large-decoder-calibration}
\end{figure}

\parhead{Sampler metric frontiers}
The next group of figures records objective frontier behavior that would be hidden by a single PPL table. \figref{fig:repetition} pairs PPL with repeated n-gram statistics, \figref{fig:gamma} and \figref{fig:gamma-diversity} show how sampler noise and schedule choices move along quality-diversity frontiers, and \figref{fig:refdist} adds a reference-distribution distance axis. \figref{fig:intermediate-readability} then separates readable intermediate samples from decoder-basin compatibility. The shared lesson is that a sampler change should be interpreted as a movement on several coupled frontiers, not as a scalar improvement unless PPL, diversity, repetition, reference distance, and decoder compatibility move coherently.

\begin{figure}[!htbp]
  \centering
  \includegraphics[width=.72\linewidth]{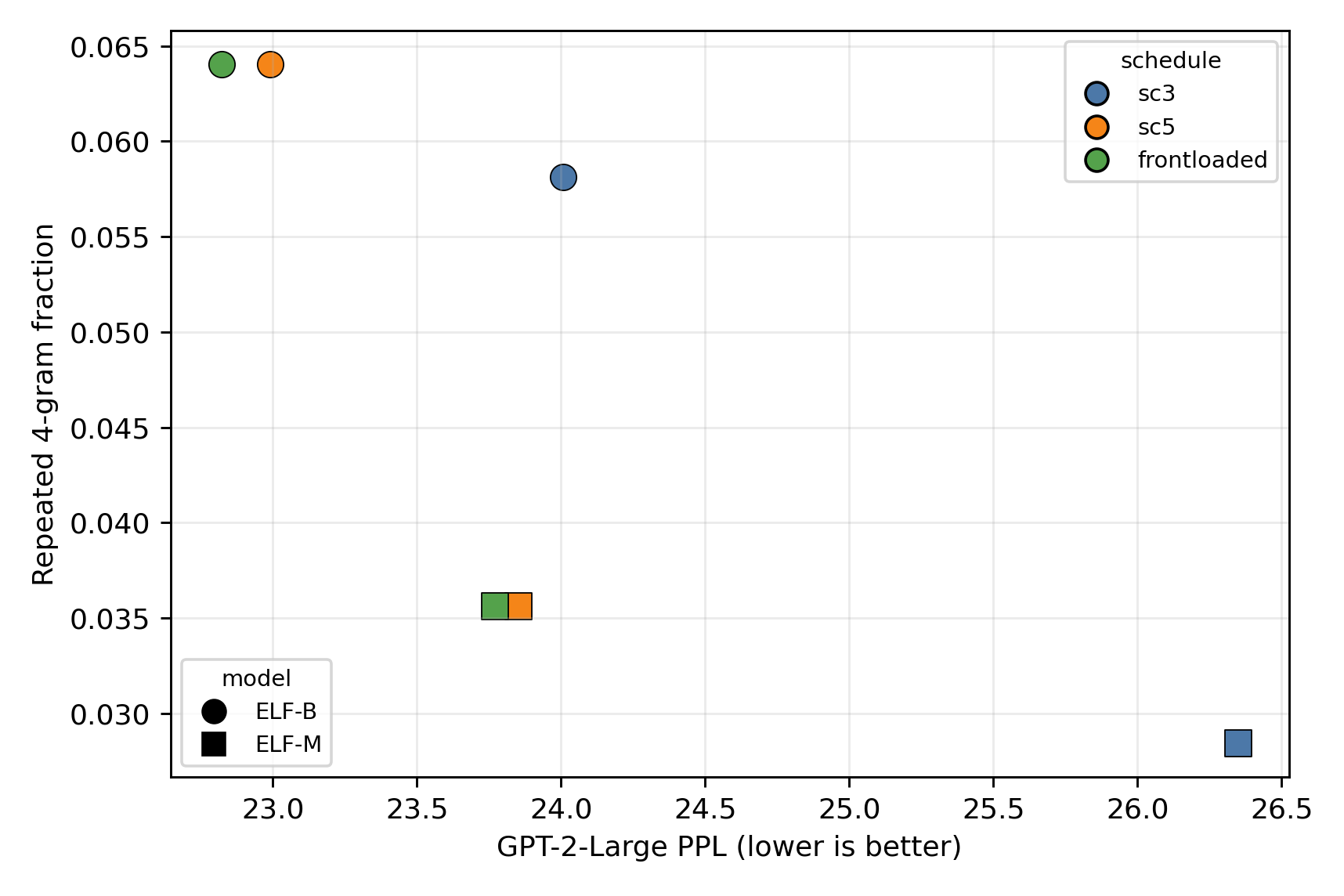}
  \caption{PPL--repetition trade-off. Each point is a sampler setting, with marker shape indicating model size and color indicating the self-conditioning schedule. The plot pairs GPT-2-Large PPL with repeated 4-gram fraction, showing that lower PPL is not automatically better if it is obtained by moving toward repetitive or low-diversity text.}
  \label{fig:repetition}
\end{figure}

\begin{figure}[!htbp]
  \centering
  \includegraphics[width=.72\linewidth]{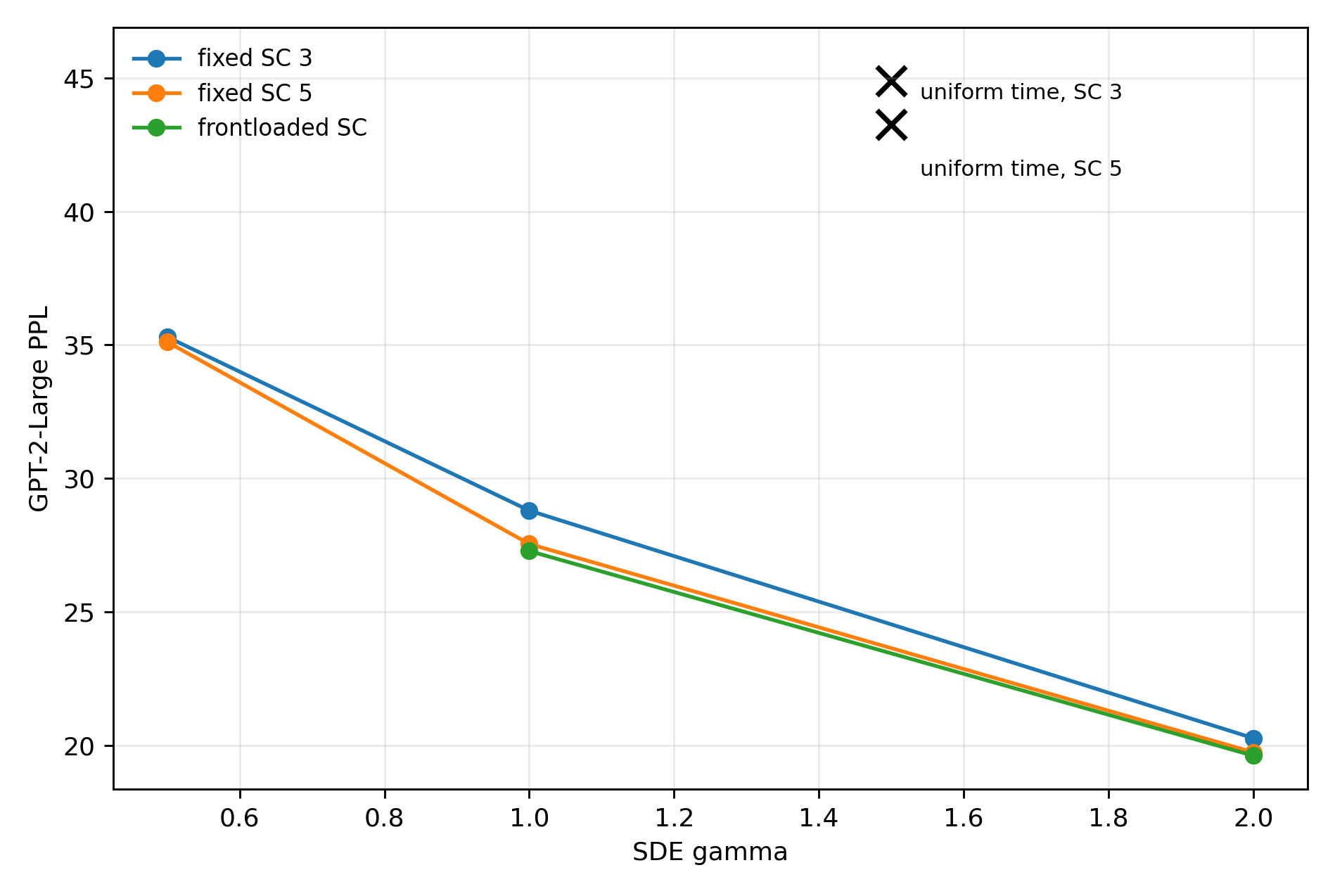}
  \caption{SDE gamma and time-schedule sweep. The curves vary sampler noise strength under fixed self-conditioning schedules, while the black crosses mark uniform-time controls. Gamma and schedule jointly control PPL rather than acting as a single monotone quality knob.}
  \label{fig:gamma}
\end{figure}

\begin{figure}[!htbp]
  \centering
  \includegraphics[width=.72\linewidth]{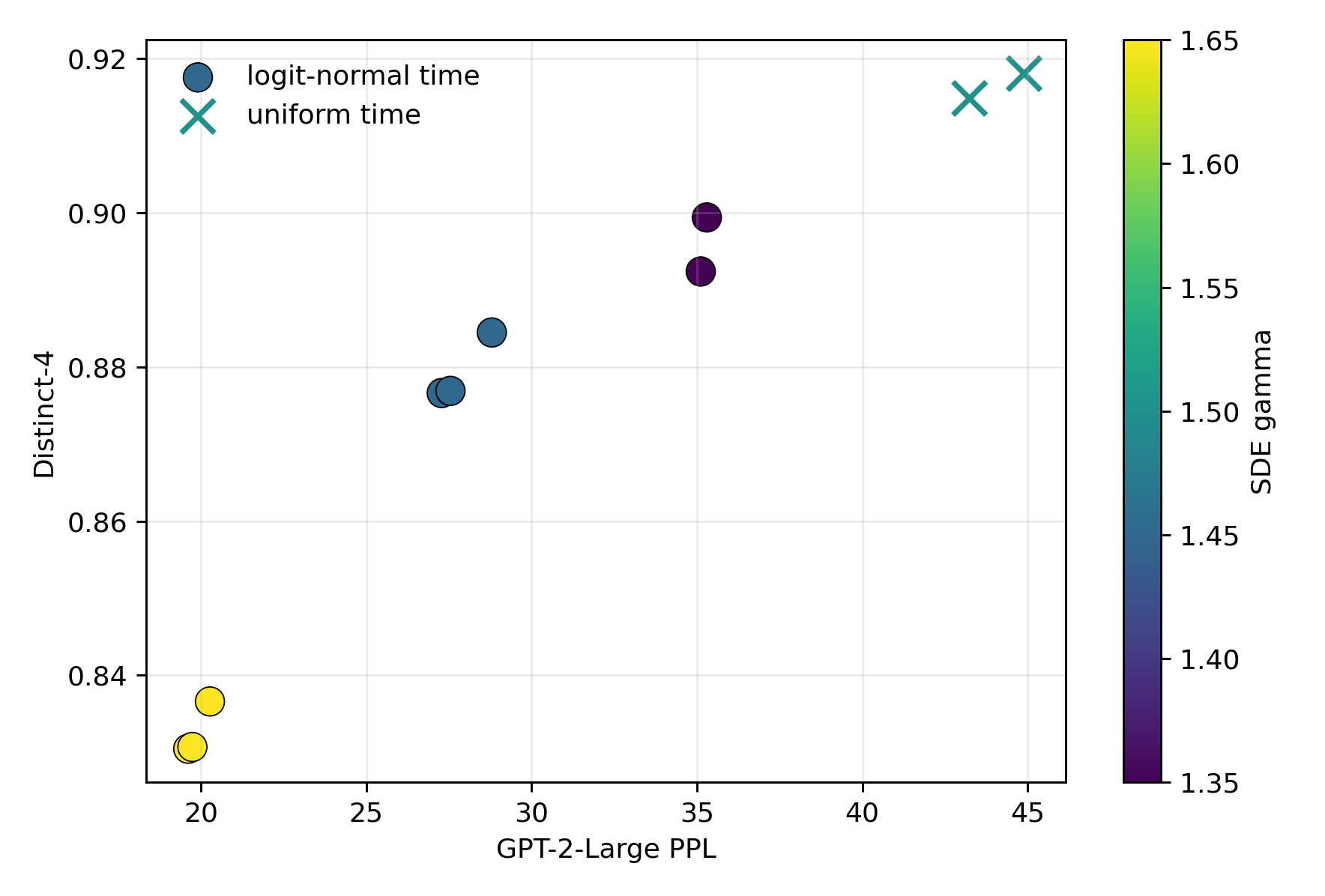}
  \caption{Gamma diversity frontier. Each point is a sampler configuration, colored by SDE gamma and marked by time schedule. Higher gamma can lower PPL while moving distinctness and repetition; the improvement should therefore be read as a frontier movement, not a free quality gain.}
  \label{fig:gamma-diversity}
\end{figure}

\begin{figure}[!htbp]
  \centering
  \includegraphics[width=.70\linewidth]{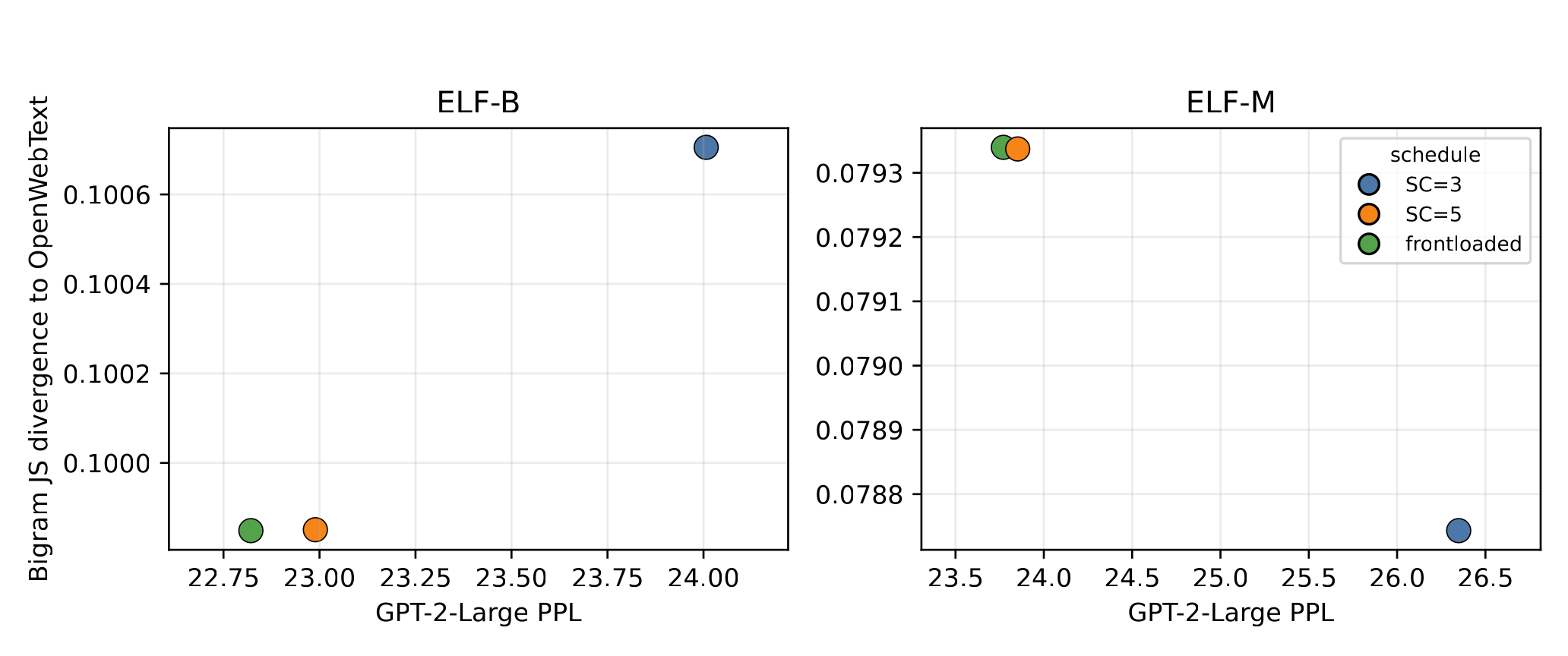}
  \caption{Reference-distribution distance. Left: ELF-B schedules plotted by GPT-2-Large PPL and bigram JS divergence to OpenWebText references. Right: the same diagnostic for ELF-M. This adds another objective frontier axis beyond fluency and diversity.}
  \label{fig:refdist}
\end{figure}

\begin{figure}[!htbp]
  \centering
  \includegraphics[width=.70\linewidth]{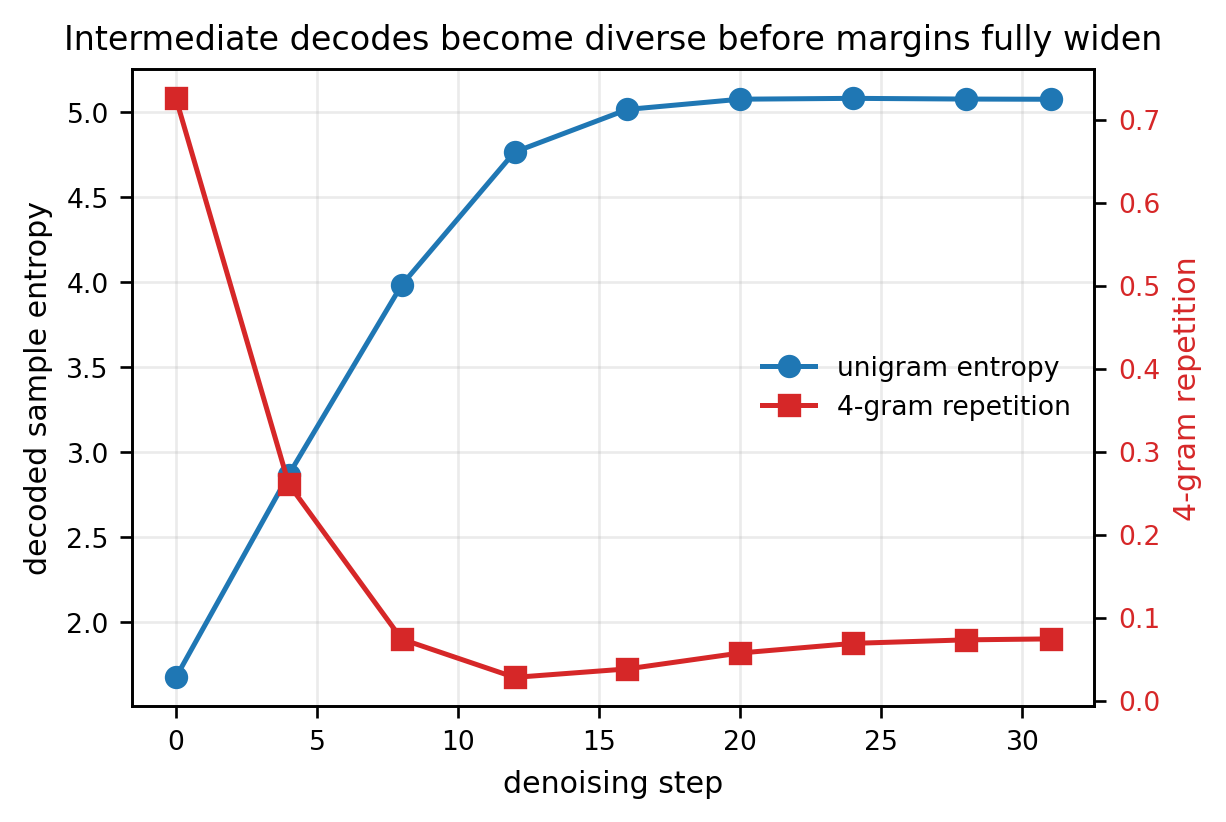}
  \caption{Intermediate decode readability. Decoded sample entropy rises and repetition collapses before the trajectory fully enters the final-token margin basin, showing that readability and basin compatibility are distinct.}
  \label{fig:intermediate-readability}
\end{figure}

\parhead{Training-side basin checks}
The final group supports the training-side discussion in \secref{sec:causal-validation}. \figref{fig:decoder-noise-training} shows that decoder-input noise widens the noisy-latent basin; \figref{fig:decoder-prob-training} separates decoder exposure from transport learning; \figref{fig:margin-widening-training}, \figref{fig:margin-10k-extension}, and \figref{fig:margin-10k-basin} show that our tested scalar margin penalty does not replace noisy decoder training in this short-run setting, and the baseline noisy-CE recipe remains the more effective practical choice. The insight is not that margin is irrelevant, but that basin widening is a neighborhood-training problem: the decoder must see the kind of corrupted states the denoiser will visit, whereas a blunt per-token margin penalty can optimize a visible logit gap without producing the widest usable off-clean basin.

\begin{figure}[!htbp]
  \centering
  \includegraphics[width=.92\linewidth]{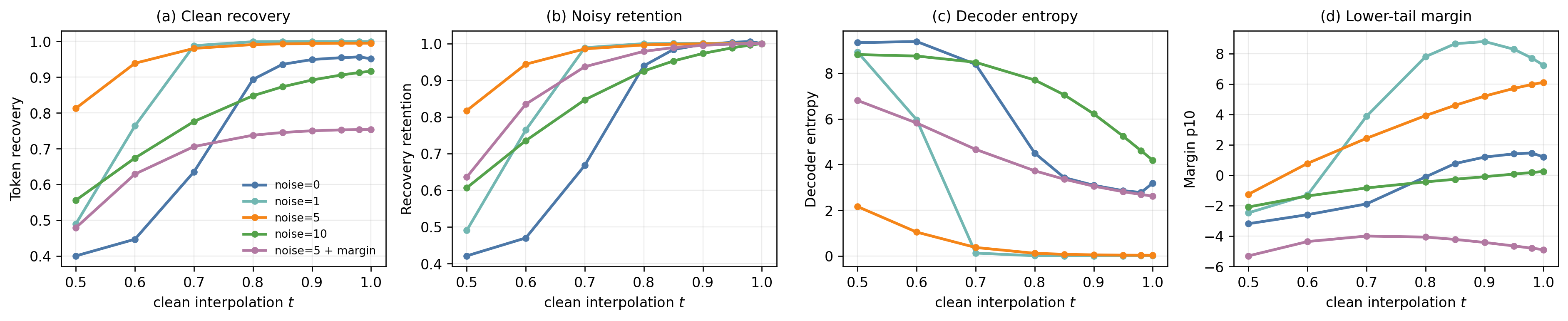}
  \caption{Official implementation decoder-noise training ablation on four RTX 3090 GPUs. The panels report (a) clean token recovery, (b) noisy-latent retention relative to each checkpoint's clean recovery, (c) decoder entropy, and (d) lower-tail native margin as the decoder-input noise scale changes. Clean-only decoder training attains high clean recovery but a narrow noisy-latent basin. An official-like decoder-input noise scale preserves clean recovery while greatly widening mid-noise retention; too much noise is non-monotonic and weakens the interface again.}
  \label{fig:decoder-noise-training}
\end{figure}

\begin{figure}[!htbp]
  \centering
  \includegraphics[width=.92\linewidth]{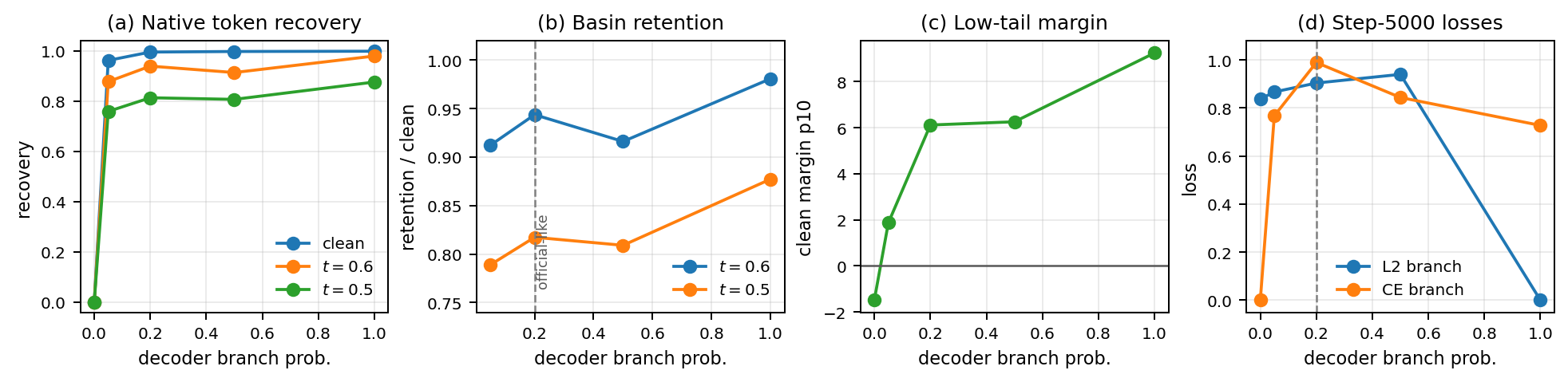}
  \caption{Decoder-branch frequency ablation with fixed decoder-input noise. From left to right, the panels report native token recovery, noisy-basin retention, lower-tail margin, and L2/CE branch losses. Pure denoising does not learn a token interface, while decoder-only training creates a broad synthetic basin but removes the flow objective. The official-like mixed objective balances transport competence and decoder-basin widening.}
  \label{fig:decoder-prob-training}
\end{figure}

\begin{figure}[!htbp]
  \centering
  \includegraphics[width=.82\linewidth]{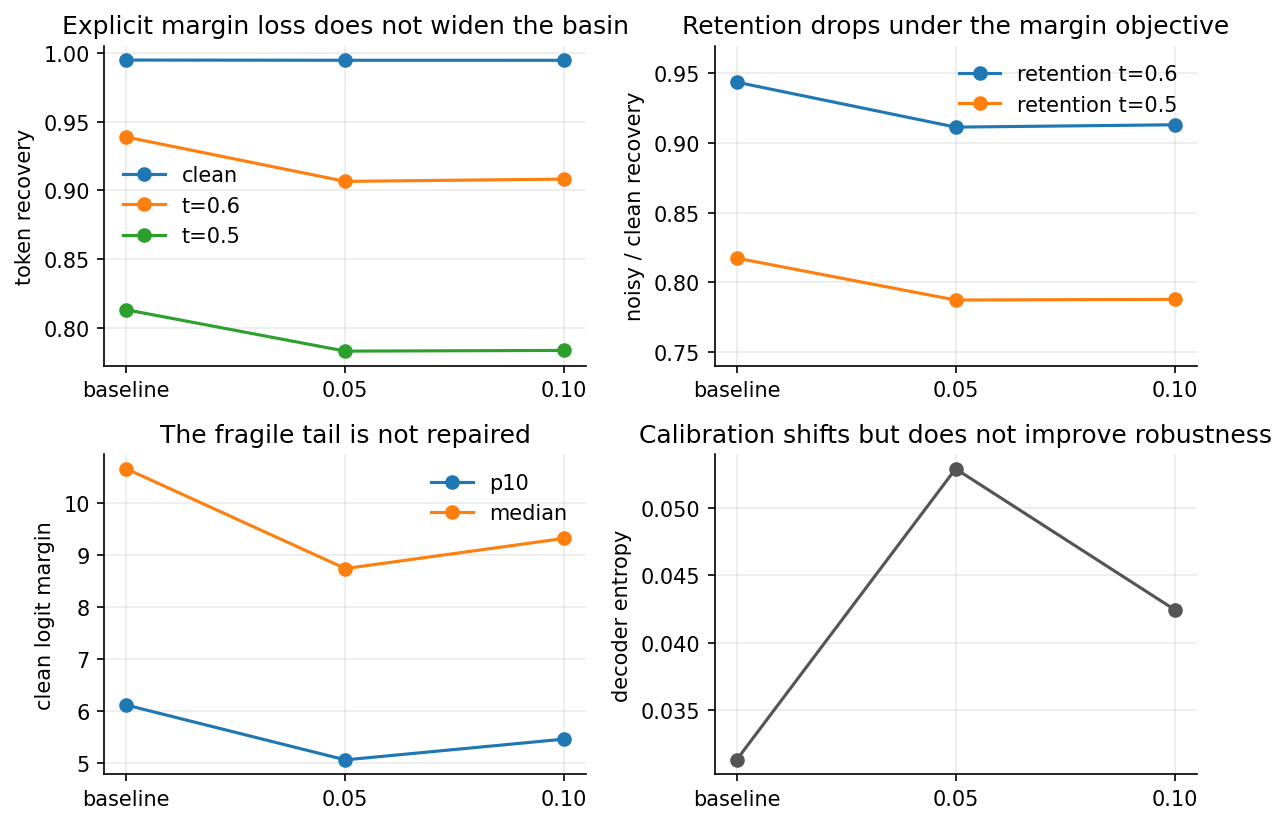}
  \caption{Explicit margin-loss ablation. Top-left: clean and corrupted token recovery. Top-right: noisy-latent retention normalized by clean recovery. Bottom-left: clean lower-tail and median margins. Bottom-right: decoder entropy. In this short ELF-B run with the official implementation, the tested scalar margin penalty preserves clean recovery but lowers retention under corruption and reduces the margin tail relative to the noisy-CE baseline.}
  \label{fig:margin-widening-training}
\end{figure}

\begin{figure}[!htbp]
  \centering
  \includegraphics[width=.82\linewidth]{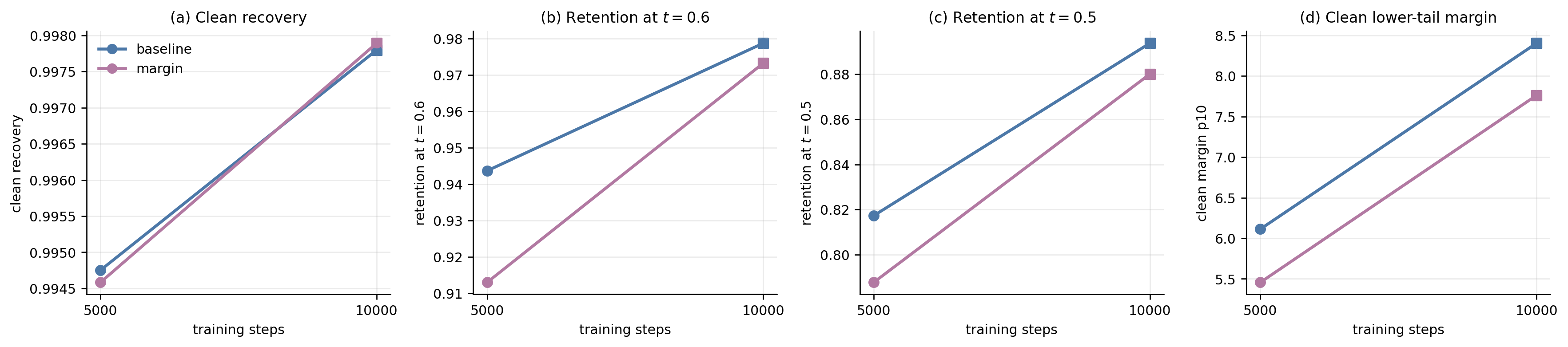}
  \caption{10k extension of the explicit margin-loss ablation. From left to right, the panels track clean recovery, retention at $t=0.6$, retention at $t=0.5$, and clean lower-tail margin as the short-run margin experiment is extended. Longer training narrows the 5k gap, but the margin-loss run still does not beat the noisy-CE baseline on noisy-latent retention or the clean margin tail. In this limited setting, the tested scalar margin penalty is not a drop-in substitute for decoder-input-noise basin training.}
  \label{fig:margin-10k-extension}
\end{figure}

\begin{figure}[!htbp]
  \centering
  \includegraphics[width=.78\linewidth]{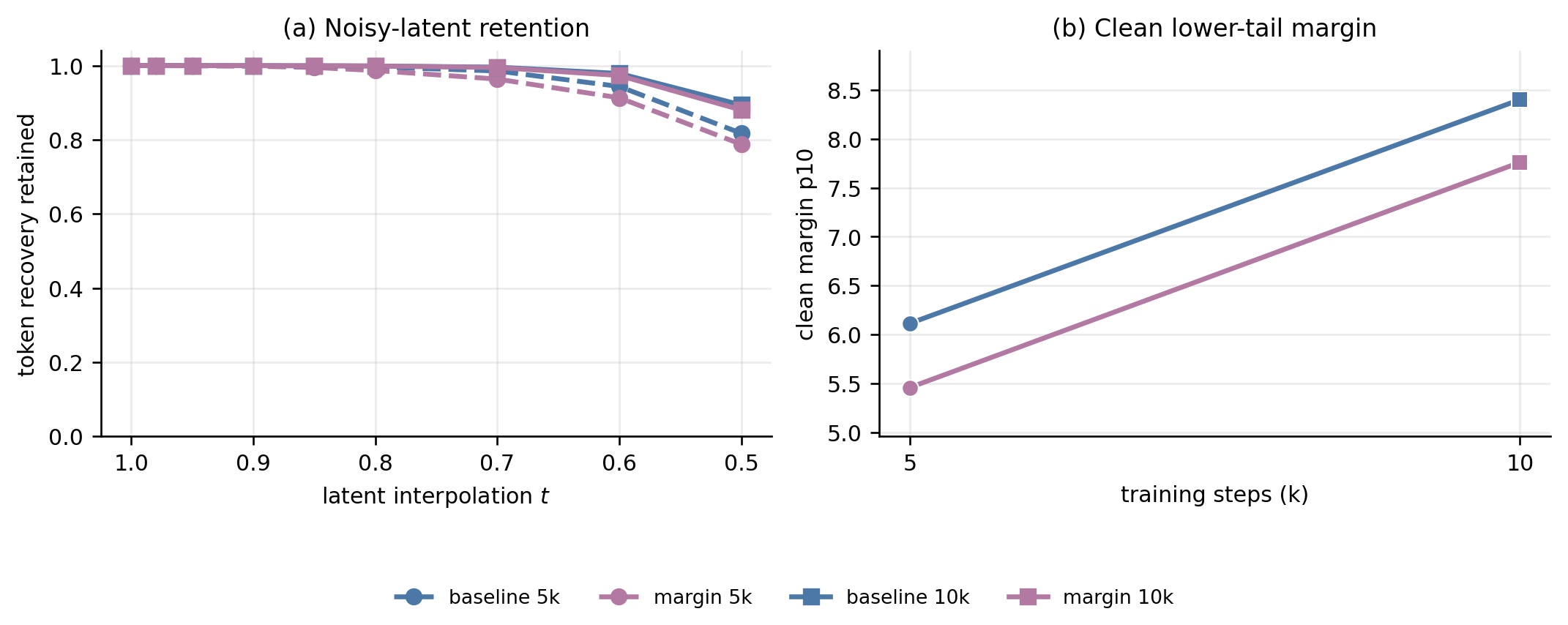}
  \caption{Zoomed basin-retention and margin check for the same 5k and 10k margin-loss extension shown in \figref{fig:margin-10k-extension}. Left: token-recovery retention under the full latent-corruption sweep, normalized by each checkpoint's clean recovery; this panel expands the selected $t=0.6$ and $t=0.5$ snapshots in the previous figure. Right: clean lower-tail decoder margin for the same objectives. Extending the margin-loss run improves its noisy-latent retention over the 5k version, but the official-like noisy-CE baseline remains the wider decoder-compatible basin.}
  \label{fig:margin-10k-basin}
\end{figure}

\section{Full Diagnostic Heatmap}
\label{app:diagnostic-heatmap}

\tabref{tab:encoder-readiness-decomposition} condenses the IRS, LDT, and encoder-diagnostic results into the four decompositions most relevant to ELF's embedding ablations: contextuality, lexical geometry, scale mismatch, and decoder compatibility. This table is not an encoder leaderboard, and it does not prove the origin of T5-small's favorable basin. It asks which observed properties make the frozen T5-small interface unusually diffusion-ready under this audit. The answer is not simply ``T5 is large'' or ``token embeddings are readable.'' T5 token embeddings have very deep lexical cells and a high linear readout score, but they lose order sensitivity. BERT and RoBERTa preserve contextual order and semantic probe accuracy, but their paired decoder-facing margins are much shallower. T5-base and T5-large are stronger encoders in the usual sense, yet they are dimension- and decoder-mismatched under this audit and their interface scores collapse. Random T5 controls show that architecture and dimensionality alone do not create a readable basin. Thus our diagnostic operationally decomposes the ELF embedding ablation rather than restating it: the successful setting is a matched contextual representation--decoder pair with lexical geometry, order sensitivity, and decoder margin all present at once.

\begin{table}[!htbp]
\centering
\scriptsize
\setlength{\tabcolsep}{3pt}
\caption{Compact decomposition of representation readiness. LDT NMSE is the held-out linear-denoising error at $t=0.5$. IRS rec. is the decoder-facing token-recovery axis in the calibrated readiness score. The table separates contextuality, lexical geometry, scale mismatch, and decoder compatibility; it is diagnostic, not a ranking of general encoder quality.}
\label{tab:encoder-readiness-decomposition}
\begin{tabular}{>{\raggedright\arraybackslash}p{0.16\linewidth}>{\raggedright\arraybackslash}p{0.23\linewidth}rrrrrr}
\toprule
State/control & Isolates & LDT NMSE$\downarrow$ & IRS rec.$\uparrow$ & Order$\uparrow$ & IRS$\uparrow$ & Margin p10$\uparrow$ & Readout$\uparrow$ \\
\midrule
T5-small contextual & Matched ELF interface & .403 & 1.000 & 1.00 & 77.3 & 15.42 & 89.8\% \\
T5 token table & Lexical geometry without contextual order & .401 & .987 & .61 & 70.1 & 88.34 & 94.1\% \\
RoBERTa-base & Contextual order with a non-ELF pair & .402 & .957 & 1.00 & 61.8 & 5.16 & 62.0\% \\
BERT-base & MLM contextual geometry & .387 & .934 & 1.00 & 54.0 & 1.55 & 59.9\% \\
T5-base & Larger T5, mismatched interface & .417 & .013 & 1.00 & 35.7 & -18.36 & 26.0\% \\
T5-large & Larger T5, mismatched interface & .414 & .018 & 1.00 & 38.1 & -19.35 & 35.1\% \\
GPT-2 & Causal-LM hidden states & .430 & .393 & .99 & 35.8 & -6.00 & 29.3\% \\
Random T5 & Architecture and dimension only & .351 & .006 & .52 & 27.1 & 0.00 & 25.2\% \\
\bottomrule
\end{tabular}
\end{table}

\tabref{tab:negative-control-summary} summarizes the covariance, shuffle, and random-initialization controls used by the heatmap. These controls are deliberately narrow: each preserves an easy explanation while breaking the property that should matter if decoder-interface compatibility is the real mechanism. The pattern rules out three tempting shortcuts. Second-order covariance can make MSE denoising look good without preserving token recoverability. Lexical lookup can make tokens recoverable without preserving order. Strong contextual encoders can preserve semantic and order probes while still failing the paired decoder margin. These are the control results that make the ELF ablation interpretable as an interface-pair result rather than a generic pretrained-encoder result.

\begin{table}[!htbp]
\centering
\footnotesize
\setlength{\tabcolsep}{3pt}
\caption{Negative controls used to separate covariance, lexical geometry, order, pairing, and pretrained decoder compatibility.}
\label{tab:negative-control-summary}
\begin{tabular}{>{\raggedright\arraybackslash}p{0.19\linewidth}>{\raggedright\arraybackslash}p{0.25\linewidth}>{\raggedright\arraybackslash}p{0.22\linewidth}>{\raggedright\arraybackslash}p{0.16\linewidth}}
\toprule
Control & Preserves & Breaks & Diagnostic role \\
\midrule
Covariance-matched Gaussian & First- and second-order statistics of the state space & Linguistic content and decoder alignment & Tests whether low MSE is only covariance fitting. \\
Whitened contextual states & Text pairing and contextual labels & Native anisotropy and covariance spectrum & Tests whether anisotropy alone explains recovery. \\
Token-shuffled contextual states & Token multiset and lexical identity & Positional order and compositional structure & Separates lexical lookup from contextual order. \\
Sequence-shuffled states & Marginal embedding distribution & Text--representation pairing & Tests whether semantic recovery depends on the correct sentence state. \\
Random T5 controls & Architecture, dimensionality, and tokenizer family & Pretrained geometry and decoder basin & Tests whether the interface comes from weights rather than shape. \\
\bottomrule
\end{tabular}
\end{table}

\FloatBarrier

\figref{fig:readiness-heatmap} is the full diagnostic heatmap behind the main-text readiness discussion and the decomposition in \tabref{tab:encoder-readiness-decomposition}. It shows all candidate state spaces and controls under the same five axes, making clear which failures are denoising failures, which are linguistic-recovery failures, which are order failures, which are decoder-interface failures, and which are trajectory-reliability failures. \figref{fig:cola-dit-guidance-boundary} gives the companion Cola-DLM trajectory boundary audit: stronger guidance enters a low-entropy decoder-confident basin, but this basin is not the teacher target basin measured by held-out token recovery, highlighting the gap between decoder confidence and ground-truth alignment. Taken together, these two appendix figures state the scope boundary most explicitly: the diagnostic is not an ELF-only scorecard, but it also does not turn every external system into independent proof of ELF's mechanism.

\begin{figure}[!htbp]
  \centering
  \includegraphics[width=.78\linewidth]{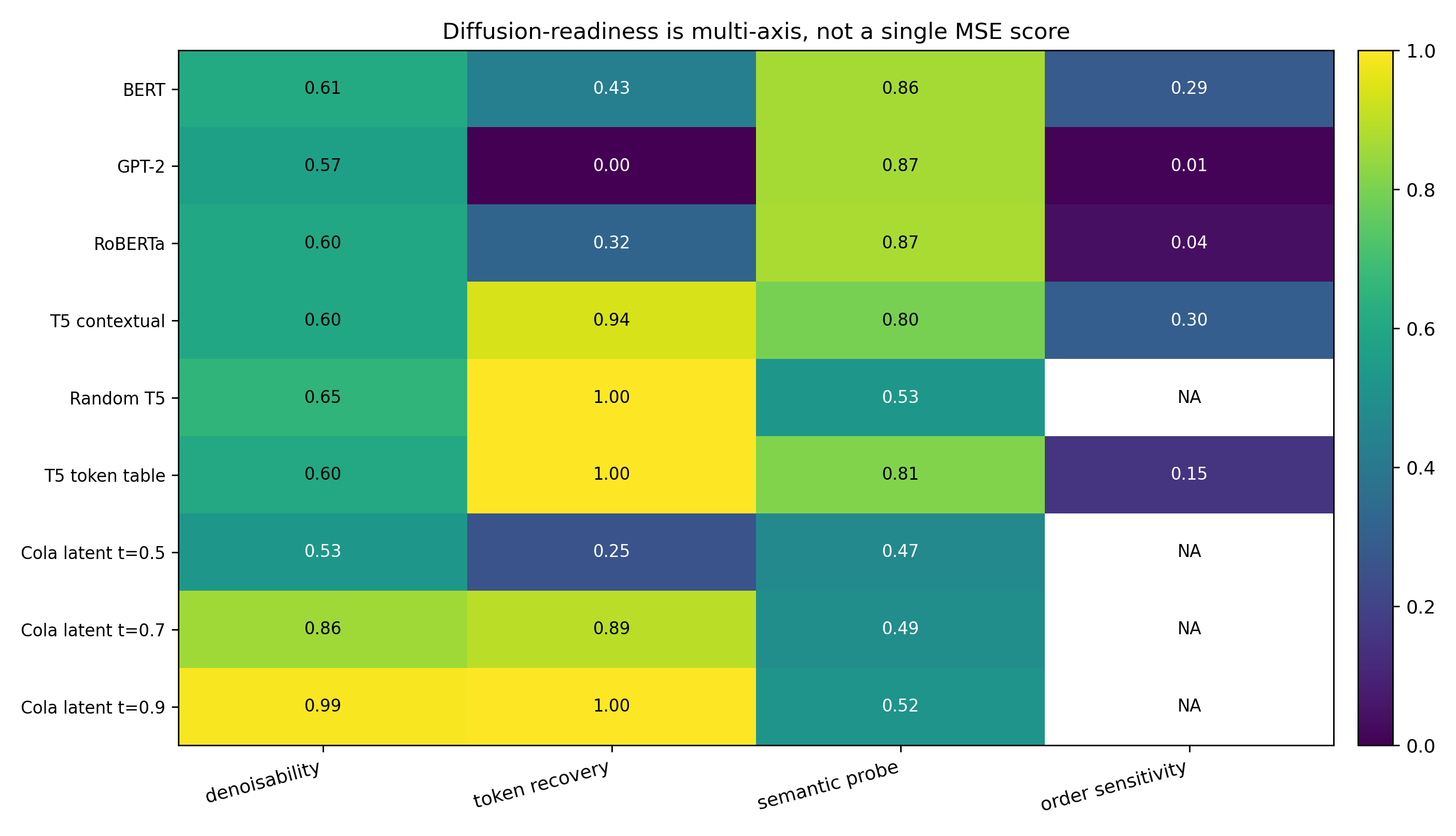}
  \caption{Detailed readiness heatmap across encoders and Cola latents. Rows correspond to candidate state spaces or controls; columns report denoisability, semantic recovery, order sensitivity, decoder-facing compatibility, and trajectory reliability. The heatmap is the appendix version of the readiness protocol summarized in the main text.}
  \label{fig:readiness-heatmap}
\end{figure}

\begin{figure}[!htbp]
  \centering
  \includegraphics[width=\linewidth]{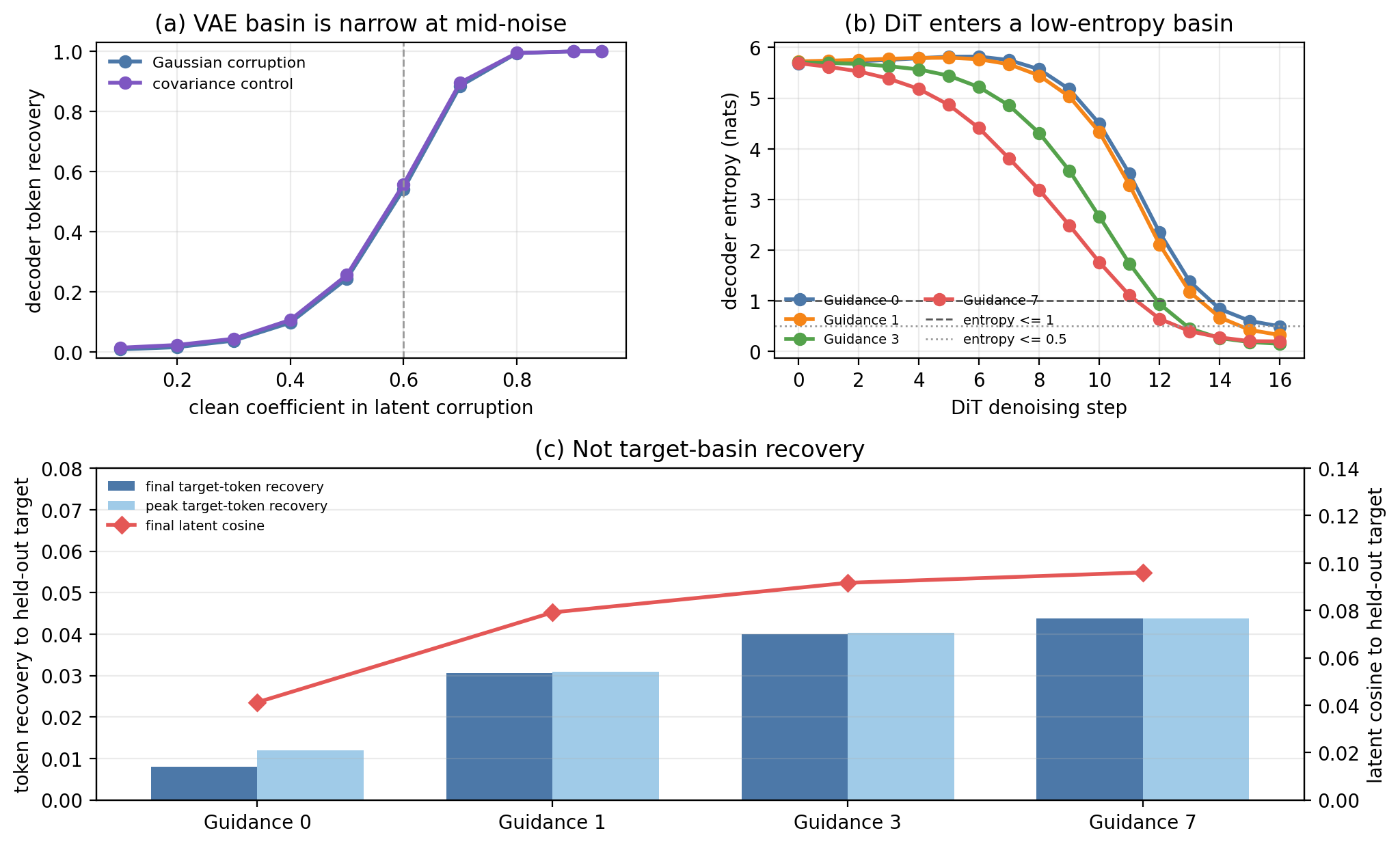}
  \caption{Balanced Cola-DLM boundary audit. Left: clean VAE latents decode reliably, but token recovery collapses under mid-noise corruption. Middle: the DiT prior transports noisy block latents into a low-entropy decoder-confident basin, with stronger guidance entering earlier. Right: final and peak recovery to the held-out target block remain below $5\%$, while latent cosine to the held-out target stays small; the trajectory reaches a decoder-confident basin, not the teacher target basin.}
  \label{fig:cola-dit-guidance-boundary}
\end{figure}

\FloatBarrier

\end{document}